\DeclareMathSymbol{\shortminus}{\mathbin}{AMSa}{"39}
\newcommand{\Ptime}{\mathsf{P}}
\newcommand{\NPtime}{\mathsf{NP}}
\numberwithin{equation}{section}
\Crefname{equation}{Eq.}{Eqs.}
\Crefname{assumption}{Assumption}{Assumptions}
\Crefname{condition}{Condition}{Conditions}
\Crefname{oracle}{Oracle}{Oracles}
\def\X{{\mathcal X}}
\DeclareFontFamily{U}{mathx}{\hyphenchar\font45}
\DeclareFontShape{U}{mathx}{m}{n}{
      <5> <6> <7> <8> <9> <10>
      <10.95> <12> <14.4> <17.28> <20.74> <24.88>
      mathx10
      }{}
\DeclareSymbolFont{mathx}{U}{mathx}{m}{n}
\DeclareMathAccent{\widecheck}{0}{mathx}{"71}
\DeclareMathAccent{\wideparen}{0}{mathx}{"75}
\newcommand{\ignore}[1]{}
	\theoremstyle{plain}
	\newtheorem{theorem}{Theorem}[section]
	\newtheorem{lemma}{Lemma}[section]
	\newtheorem{claim}[lemma]{Claim}
	\newtheorem{fact}[lemma]{Fact}
	\newtheorem{corollary}{Corollary}[section]
	\newtheorem{proposition}[lemma]{Proposition}
	\newtheorem{observation}[lemma]{Observation}
	\theoremstyle{definition}
	\newtheorem{definition}{Definition}[section]
	\newtheorem{example}{Example}[section]
	\newtheorem{assumption}{Assumption}
	\newtheorem{condition}{Condition}[section]
	\newtheorem{remark}{Remark}[section]
  \newtheorem{oracle}{Oracle}
\newcommand{\neutralize}[1]{\expandafter\let\csname c@#1\endcsname\count@}
\newtheorem*{theorem*}{Theorem}
\newtheorem*{lemma*}{Lemma}
\newtheorem*{corollary*}{Corollary}
\newtheorem*{proposition*}{Proposition}
\newtheorem*{claim*}{Claim}
\newtheorem*{fact*}{Fact}
\newtheorem*{observation*}{Observation}
\newtheorem*{definition*}{Definition}
\newtheorem*{remark*}{Remark}
\newtheorem*{example*}{Example}
\newtheoremstyle{named}{}{}{\itshape}{}{\bfseries}{}{.5em}{\Cref{#3} {\normalfont (informal)} }
{}
\theoremstyle{named}
\theoremstyle{plain}
\DeclareMathAlphabet{\mathbfsf}{\encodingdefault}{\sfdefault}{bx}{n}
\newcommand{\norm}[1]{\|#1\|}
\newcommand{\ceil}[1]{\lceil #1 \rceil}
\newcommand{\floor}[1]{\lfloor #1 \rfloor}
\newcommand{\E}{\mathbb{E}}
\newcommand{\var}{\mathrm{Var}}
\newcommand{\nipsfinal}[2]{\iftoggle{nips_final}{#1 }{ #2}}
\newcommand{\arxivfinal}[1]{\iftoggle{nips}{}{ #1}}
\DeclareMathOperator*{\argmin}{arg\,min}
\let\Pr\relax
\DeclareMathOperator{\Pr}{\mathbb{P}}
\newcommand{\Exp}{\mathbb{E}}
\newcommand{\N}{\mathbb{N}}
\newcommand{\R}{\mathbb{R}}
\DeclareMathOperator{\BigOm}{\mathcal{O}}
\newcommand{\BigOh}[1]{\BigOm\left({#1}\right)}
\newcommand{\iidsim}{\overset{\mathrm{i.i.d}}{\sim}}
\newcommand{\Regret}{\mathrm{Regret}}
\newcommand{\dimy}{d_{y}}
\newcommand{\dimx}{d_{x}}
\newcommand{\dimu}{d_{u}}
\newcommand{\radM}{R_{\Mclass}}
\newcommand{\I}{\mathbb{I}}
\newcommand{\op}{\mathrm{op}}
\newcommand{\fro}{\mathrm{F}}
\newcommand{\calK}{\mathcal{K}}
\newcommand{\calF}{\mathcal{F}}
\newcommand{\Proj}{\mathrm{Proj}}
\newcommand{\A}{\mathcal{A}}
\def\bb{\mathbf{b}}
\def\bB{\mathbf{B}}
\def\bC{\mathbf{C}}
\def\etil{\tilde{\mathbf{e}}}
\def\bu{\mathbf{u}}
\def\X{\mathbf{X}}
\def\x{\mathbf{x}}
\def\w{\mathbf{w}}
\def\bu{\mathbf{u}}
\def\bA{\mathbf{A}}
\def\what{\hat{{w}}}
\def\xbar{\bar{\mathbf{x}}}
\newcommand{\veps}{\varepsilon}
\newcommand{\alg}{\mathsf{alg}}
\newcommand{\Reg}{\mathrm{Reg}}
\newcommand{\fhat}{\widehat{f}}
\newcommand{\drc}{\textsc{Drc}}
\newcommand{\dac}{\textsc{Dac}}
\newcommand{\errtext}[2]{\underbrace{#1}_{\text{{\color{blue} (#2) }}}}
\newcommand{\Ghat}{\widehat{G}}
\newcommand{\uex}{\mathbf{v}}
\newcommand{\bx}{b_x}
\newcommand{\calM}{\mathcal{M}}
\newcommand{\matz}{\mathbf{z}}
\newcommand{\matzst}{\matz^\star}
\newcommand{\matzhat}{\hat{\matz}}
\newcommand{\matztil}{\tilde{\matz}}
\newcommand{\matzbar}{\bar{\matz}}
\newcommand{\matzstt}[1][t]{\matz^{\star}_{#1}}
\newcommand{\nabtil}{\tilde{\nabla}}
\newcommand{\Bernoulli}{\mathrm{Bernoulli}}
\newcommand{\Ahat}{\hat{A}}
\newcommand{\Bhat}{\hat{B}}
\newcommand{\calD}{\mathcal{D}}
\newcommand{\mscomment}[1]{\noindent{\textcolor{red}{\{{\bf MS:} \em #1\}}}}
\newcommand{\msedit}[1]{{\textcolor{red}{#1}}}
\newcommand{\eh}[1]{$\ll$\textsf{\color{blue} EH : #1}$\gg$}
\newcommand{\pg}[1]{$\ll$\textsf{\color{magenta} PG : #1}$\gg$}
\title{Online Control of Unknown Time-Varying Dynamical Systems}
\author{Edgar Minasyan$^{1, 4}$, Paula Gradu$^{2}$, Max Simchowitz$^{3}$, Elad Hazan$^{1, 4}$ \\ \\
  $^1$ Computer Science, Princeton University \\
  $^2$ EECS, UC Berkeley \\
  $^3$ EECS, MIT \\
  $^4$ Google AI Princeton \\ \\
  \texttt{minasyan@princeton.edu, pgradu@berkeley.edu} \\
  \texttt{msimchow@mit.edu, ehazan@princeton.edu }  \\
  }
\begin{document}

\maketitle

\begin{abstract}

We study online control of time-varying linear systems with unknown dynamics in the nonstochastic control model. At a high level, we demonstrate that this setting is \emph{qualitatively harder} than that of either unknown time-invariant or known time-varying dynamics, and complement our negative results with algorithmic upper bounds in regimes where sublinear regret is possible. More specifically, we study regret bounds with respect to common classes of policies: Disturbance Action (SLS), Disturbance Response (Youla), and linear feedback policies. While these three classes are essentially equivalent for LTI systems, we demonstrate that these equivalences break down for time-varying systems. 

We prove a lower bound that no algorithm can obtain sublinear regret with respect to the first two classes unless a certain measure of system variability also scales sublinearly in the horizon. Furthermore, we show that offline planning over the state linear feedback policies is NP-hard, suggesting hardness of the online learning problem. 

On the positive side, we give an efficient algorithm that attains a sublinear regret bound against the class of Disturbance Response policies up to the aforementioned system variability term. In fact, our algorithm enjoys sublinear \emph{adaptive} regret bounds, which is a strictly stronger metric than standard regret and is more appropriate for time-varying systems. We sketch extensions to Disturbance Action policies and partial observation, and propose an inefficient algorithm for regret against linear state feedback policies. 

\ignore{
\mscomment{\textbf{OLD Abstract:}
We consider the problem of online control of unknown and partially observable time-varying linear dynamical systems. 
In this setting we study regret bounds with respect to common classes of policies, including linear and disturbance-action policies. 
We first give an efficient algorithm that attains a sublinear adaptive regret bound vs. the class of disturbance-action policies up to an additional system-variability term. We note that adaptive regret, a strictly stronger metric than standard regret, is the more appropriate performance measure for time-varying systems as introduced by recent work in this direction.
Next, we study whether the additional system-variability term, which can be linear in the worst case, is necessary. We answer this positively for two different reference policy classes. 
First, we prove that it is statistically impossible to obtain sublinear regret bounds against the class of disturbance-action controllers. This shows that the system-variability term is inherently necessary by any algorithm.
 Finally, we prove that it is computationally hard to obtain sublinear regret bounds vs. the class of time-invariant linear policies. This is in contrast to recent results in non-stochastic control for time-invariant linear systems. 
 }
}

\end{abstract}

\section{Introduction}

The control of linear time-invariant (LTI) dynamical systems is well-studied and understood. This includes classical methods from optimal control such as LQR and LQG, as well as robust $H_\infty$ control. Recent advances study regret minimization and statistical complexity for online linear control, in both stochastic and adversarial perturbation models. 
Despite this progress, 
rigorous mathematical guarantees for nonlinear control remain elusive: {nonlinear control is both statistically and computationally} intractable in general. 


In the face of these limitations, recent research has begun to study the rich continuum of settings which lie between LTI systems and generic nonlinear ones. The hope is to provide efficient and robust algorithms to solve the most general control problems that are tractable, and at the same time, to characterize precisely at which degree of nonlinearity no further progress can be made.

This paper studies the control of linear, time-varying (LTV) dynamical systems as one such point along this continuum. This is because the first-order Taylor approximation to the dynamics of any smooth nonlinear system about a given trajectory is an LTV system.
These approximations are widely popular because they allow for efficient planning,  as demonstrated by the success of iLQR and iLQG methods for nonlinear receding horizon control.
We study online control of discrete-time LTV systems, with dynamics and time-varying costs 
\begin{align}\label{eq:system}
 x_{t+1} &= A_t x_t + B_t u_t + w_t, \quad c_t(x_t,u_t):(x,u) \to \R.
\end{align}
Above,  $x_t$ is the state of the system, $u_t$ the control input, $w_t$ the disturbances, and $A_t,B_t$ the system matrices. Our results extend naturally to partial-state observation, where the controller observes linear projections of the state $y_t = C_t x_t$. 
We focus on the challenges introduced when the system matrices $A_t,B_t$ and perturbations $w_t$ are not known to the learner in advance, and can only be determined by live interaction with the changing systems.  

In this setting, we find that the overall change in system dynamics across time characterizes the difficulty of controlling the unknown LTV system. We define a measure, called system variability, which  quantifies this.  
We show both statistical and computational lower bounds as well as algorithmic upper bounds in terms of the system variabilility. Surprisingly, system variability does not impede the complexity of control when the dynamics are known \cite{gradu2020adaptive}.

\subsection{Contributions}

We consider the recently popularized nonstochastic model of online control, and study regret bounds with respect to common classes of policies: Disturbance Action (\textsc{Dac}/SLS \cite{wang2019system}), Disturbance Response (\textsc{Drc}/Youla \cite{youla1976modern}), and linear feedback policies. Planning over the third class of feedback policies in LTI systems admits efficient convex relaxations via the the first two parametrizations, \textsc{Dac} and \textsc{Drc}. This insight has been the cornerstone of both robust \cite{zhou1996robust,wang2019system} and online \cite{agarwal2019online,simchowitz2020improper} control. 

\textbf{Separation of parametrizations.} For linear time-varying systems, however, we find that equivalences between linear feedback, \dac{} and \drc{} fail to hold: we show that there are cases where any one of the three parametrizations exhibits strictly better control performance than the other two.

\textbf{Regret against convex parametrizations.} Our first set of results pertain to  \textsc{Dac} and \textsc{Drc} parametrizations, which are convex and  admit efficient optimization.   
We demonstrate that no algorithm can obtain sublinear regret with respect to these classes when faced with \emph{unknown}, LTV dynamics unless a certain measure of \emph{system variability} also scales sublinearly in the horizon. This is true even under full observation, controllable dynamics, and fixed control cost.
This finding is in direct contrast to recent work which shows sublinear regret \emph{is} attainable over LTV system dynamics if they are known  \cite{gradu2020adaptive}. 

We give an efficient algorithm that attains sublinear regret against these policy classes up to an additive penalty for the aforementioned system variability term found in our lower bound. When the system variability is sufficiently small, our algorithm recovers state-of-the-art results for unknown LTI system dynamics up to logarithmic factors.

In fact, our algorithm enjoys sublinear \emph{adaptive} regret \cite{hazan2009efficient}, a strictly stronger metric than standard regret which is more appropriate for time-varying systems. We also show that the stronger notion of adaptivity called strongly adaptive regret \cite{daniely15strongly} is out of reach in the partial information setting. 

\textbf{Regret against state feedback.}\label{paragraph:linear} Finally, we consider the  class of state feedback policies, which are linear feedback with memory length one. 
We show that full-information optimization over state feedback policies is computationally hard. This suggests that obtaining sublinear regret relative to these policies may be computationally prohibitive, though does not entirely rule out the possibility of improper learning.  However, improper learning cannot be done via the \drc{} or \dac{} relaxations in light of our policy class separation results.
Finally, we include an inefficient algorithm which attains sublinear (albeit nonparametric-rate) regret against state feedback control policies. 

\nipsfinal{
\subsubsection*{Paper Structure}
Discussion of relevant literature and relation to our work can be found in \Cref{sec:related}. In \Cref{sec:setting}, we formally introduce the setting of LTV nonstochastic control, the policy classes we study and our key result regarding their non-equivalence in the LTV setting (\Cref{thm:informal_separation}). Motivated by this non-equivalence, the remainder of the paper is split into the study of convex policies (\Cref{sec:dac_drc_body}) and of state feedback policies (\Cref{sec:linear_main}). In \Cref{sec:dac_drc_body}, we show that regret against the \dac{} and \drc{} classes cannot be sublinear unless the metric system variability (\Cref{def:variability}) itself is sublinear (\Cref{thm:main_lb}), and also propose \Cref{alg:control_unknown_main} whose adaptive regret scales at the rate of our lower bound plus a $T^{2/3}$ term (\Cref{thm:control_unknown_regret}). On the other hand, in \Cref{sec:linear_main} we show sublinear regret against state feedback policies is technically possible (\Cref{thm:K_regret_body}) with a computationally inefficient algorithm, but also provide a computational lower bound (\Cref{thm:body_hard_ness_lb}) {\em for planning} which reveals significant difficulties imposed by the LTV dynamics in this scenario as well. Finally, in \Cref{sec:conclusions} we pose several future directions, concerning both questions in LTV control, as well as the extension to nonlinear control.
}{}

\subsection{Related Work}\label{sec:related}

Our study of LTV systems is motivated by the widespread practical popularity of iterative linearization for nonlinear receding horizon control; e.g., the iLQR \citep{iLQR}, iLC \citep{moore2012iterative}, and iLQG \cite{todorov2005generalized} algorithms. Recent research has further demonstrated that near-optimal solutions to LTV approximations of dynamics confer stability guarantees onto the original nonlinear system of interest \cite{westenbroek2021stability}.

\textbf{Low-Regret Control:}  We study algorithms which enjoy sublinear regret for online control of LTV systems; that is, whose performance tracks a given benchmark of policies up to a term which is vanishing relative to the problem horizon.  \cite{abbasi2011regret} initiated the study of online control under the regret benchmark by introducing the online LQR problem:  where a learner is faced with an unknown LTI system, fixed costs and i.i.d. Gaussian disturbances, and must attain performance relative to the LQR-optimal policy. Bounds for this setting were later improved and refined in \cite{dean2018regret,mania2019certainty,cohen2019learning,simchowitz2020naive}, and extended to partial-state observation in \cite{anima3,anima2}. 
Our work instead adopts the \emph{nonstochastic control setting} \cite{agarwal2019online}, where the adversarially chosen (i.e. non-Gaussian) noise is considered to model the drift terms that arise in linearizations of nonlinear terms, and where costs may vary with time. \cite{agarwal2019online} consider known system dynamics, later extended to unknown systems under both full-state \cite{hazan2019nonstochastic} and partial-state observation \cite{simchowitz2020improper,simchowitz2020making}. The study of nonstochastic control of known LTV dynamics was taken up in \cite{gradu2020adaptive}, with  parallel work by \cite{qu2021stable} considering known LTV dynamics under stochastic noise.

\textbf{Unknown LTV dynamics:} Our work is the first to consider online (low-regret) control of unknown LTV systems in \emph{any} model. There is, however, a rich body of classical work on adaptive control of LTV systems \cite{middleton1988adaptive,tsakalis1993linear}. These guarantees focus more heavily on error sensitivity and stability; they only permit dynamical recovery up to error that scales linearly in system noise,  and thus guarantee only (vacuous) linear-in-horizon regret. More recent work has studied identification (but not online control) of an important LTV class called switching systems \cite{ozay2015set,sarkar2019nonparametric}.

    

\textbf{Online Convex Optimization:} We make extensive use of techniques from the field of online convex optimization \citep{cesa2006prediction,hazan2016introduction}. Most relevant to our work is the literature on adapting to changing environments in online learning, which starts from the works of \cite{herbster1998tracking,bousquet2002tracking}. The notion of adaptive regret was introduced in \cite{hazan2009efficient} and significantly studied since as a metric for adaptive learning in OCO \citep{adamskiy2016closer,zhang2019adaptive}. \cite{daniely15strongly} proposed to strengthen adaptive regret and the stronger metric has been shown to imply results over dynamic regret \citep{zhang18dynamic}.




\textbf{Recent nonlinear control literature:} 
 Recent research has also studied provably guarantees in various complementary (but incomparable) models: planning regret in nonlinear control \cite{agarwal2021regret}, adaptive nonlinear control under linearly-parameterized uncertainty \cite{boffi2020regret}, online model-based control with access to non-convex planning oracles \cite{kakade2020information}, and control with nonlinear observation models \cite{mhammedi2020learning,dean2020certainty}.

\newcommand{\Htwo}{\mathcal{H}_2}
\newcommand{\Hinf}{\mathcal{H}_{\infty}}
\newcommand{\Id}{\mathbb{I}}
\newcommand{\boldK}{\mathbf{K}} %
\newcommand{\Os}{\mathcal{O}}
\newcommand{\Otil}{\widetilde{\mathcal{O}}}
\newcommand{\Otilst}{\widetilde{\mathcal{O}}^{\star}}
\newcommand{\oco}{\textsc{Oco}}

\newcommand{\clip}{\mathrm{clip}}
\newcommand{\Gbar}{\bar{G}}
\renewcommand{\uex}{u^{\mathrm{ex}}}
\newcommand{\xnat}{x^{\mathrm{nat}}}
\newcommand{\xnathat}{\hat{x}^{\mathrm{nat}}}
\newcommand{\unathat}{\hat{u}^{\mathrm{nat}}}
\newcommand{\Gstt}[1][t]{G_{\star,#1}}
\newcommand{\unat}{u^{\mathrm{nat}}}
\newcommand{\vnat}{v^{\mathrm{nat}}}
\newcommand{\uhat}{\hat{u}}
\newcommand{\calG}{\mathcal{G}}
\newcommand{\Pidrc}{\Pi_{\mathrm{drc}}}
\newcommand{\Pidac}{\Pi_{\mathrm{dac}}}
\newcommand{\Pifeed}{\Pi_{\mathrm{feed}}}
\newcommand{\Pistate}{\Pi_{\mathrm{state}}}
\renewcommand{\radM}{R_M}
\newcommand{\calA}{\mathcal{A}}
\newcommand{\radnat}{R_{\mathrm{nat}}}
\section{Problem Setting}\label{sec:setting}
We study control of a linear time-varying (LTV) system \Cref{eq:system} with state $x_t \in \R^{d_x}$, control input $u_t \in \R^{d_u}$ chosen by the learner, and the external disturbance $w_t \in \R^{d_x}$ chosen by Nature.  The system is characterized by time-varying matrices $A_t \in \R^{d_x \times d_x}, B_t \in \R^{d_x \times d_u}$. For simplicity, the initial state is $x_1=0$.  At each time $t$,  \emph{oblivious}\footnote{An oblivious adversary chooses the matrices, costs and perturbations  prior to the control trajectory.} 
adversary picks the system matrices $(A_t, B_t)$, disturbances $w_t$ and cost functions $c_t : \R^{d_x} \times \R^{d_u} \to \R$. The dynamics $(A_t, B_t)$ are \emph{unknown} to the learner: one observes only the next state $x_{t+1}$ and current cost $c_t(\cdot, \cdot)$ after playing control $u_t$. 

\textbf{Adaptive Regret.}
The goal of the learner is to minimize regret w.r.t. a policy class $\Pi$,  i.e. the difference between the cumulative cost of the learner and the best policy $\pi^{\star} \in \Pi$ in hindsight. Formally, the regret of an algorithm $\calA$ with control inputs $u_{1:T}$ and corresponding states $x_{1:T}$, over an interval $I = [r, s] \subseteq [T]$, is defined as
\begin{equation}\label{eq:regret_def}
    \Regret_I(\calA; \Pi) = \sum_{t \in I} c_t(x_t, u_t) - \inf_{\pi \in \Pi} \sum_{t\in I} c_t(x_t^{\pi}, u_t^{\pi}) ~.
\end{equation}
Here $u_t^{\pi}, x_t^{\pi}$ indicate the control input and the corresponding state when following policy $\pi$. For a randomized algorithm $\calA$, we consider the expected regret.  In this work, we focus on designing control algorithms that minimize \emph{adaptive} regret, i.e. guarantee a low regret relative to the best-in-hindsight policy $\pi^{\star}_I \in \Pi$  on \emph{any} interval $I \subseteq [T]$. This performance metric of adaptive regret is more suitable for control over LTV dynamics given its agility to compete against different local optimal policies $\pi^{\star}_I \in \Pi$ at different times \citep{gradu2020adaptive}. To illustrate this point, we describe the implications of standard vs. adaptive regret for $k$-switching LQR.

\begin{example}[$k$-switching LQR.]\label{ex:k_switch_lqr} Consider the problem of $k$-switching LQR in which the system evolves according to the fixed $(A_j, B_j)$ over each time interval $I_j = \left[\ceil{(j-1) \cdot T/k}, \floor{j \cdot T/k}\right]$ for $j \in [1, k]$. An adaptive regret guarantee ensures good performance against $\pi^\star_j = \argmin_{\pi\in\Pi} \sum_{t\in I_j} c_t(x_t^\pi, u_t^\pi)$ on every interval $I_j$, in contrast to standard regret which only ensures good performance against a single, joint comparator $\pi^\star = \argmin_{\pi\in\Pi} \sum_{t=1}^T c_t(x_t^\pi, u_t^\pi)$. Clearly over every interval $I_j$ the policy $\pi^\star_j$ is a suitable comparator while $\pi^\star$ is not.
\end{example}



\textbf{Key objects.} A central object in our study is the sequence of Nature's x's $\xnat_{1:T}$  that arises from playing zero control input $u_t = 0$ at each $t \in [T]$, i.e. $\xnat_{t+1} = A_t \xnat_t + w_t$ \cite{simchowitz2020improper}. \arxivfinal{This object allows us to split any state into a component independent of the algorithm's actions and a component that is the direct effect of the chosen actions. To capture this intuition in equation form, we} define the following operators for all $t$,
\begin{align*}
    \Phi_t^{[0]} = \Id, \quad \forall h \in [1, t), \, \Phi_t^{[h]} = \prod_{k=t}^{t-h+1} A_k, \quad \forall i \in [0, t), \, G_t^{[i]} = \Phi_t^{[i]} B_{t-i},
\end{align*}
where the matrix product $\prod_{s}^r$ with $s\geq r$ is taken in the indicated order $k = s, \dots, r$. The following identities give an alternative representation for the Nature's x's $\xnat_t$ and state $x_t$ with control input $u_t$ in terms of the  \emph{Markov operator} at time $t$, $G_t = [G_t^{[i]}]_{i \geq 0}$:
\begin{equation*}
    \xnat_{t+1} = \sum_{i=0}^{t-1} \Phi_t^{[i]} w_{t-i}, \quad x_{t+1} = \xnat_{t+1} + \sum_{i=0}^{t-1} G_t^{[i]} u_{t-i} ~.
\end{equation*}
\nipsfinal{These operators and the alternative representation capture the dynamics by decoupling the disturbance and the control action effects. Observe that the operators $[\Phi_t^{[i]}]_{i \in [0, t)}$ capture the contribution of the perturbations on the state $x_{t+1}$ and the Markov operators $[G_t^{[i]}]_{i \in [0, t)}$ that of the controls.}{}

\textbf{Assumptions.} We make the three basic assumptions:  we require from (i) the disturbances to not blow up the system with no control input, (ii) the system to have decaying effect over time, and (iii) the costs to be well-behaved and admit efficient optimization. Formally, these assumptions are: 
\begin{assumption}\label{asm:radnat} For all $t \in [T]$, assume $\|\xnat_t\| \le \radnat$.
\end{assumption}
\begin{assumption} \label{asm:Gdecay} Assume there exist $R_G \ge 1$ and $\rho \in (0,1)$ s.t. for any $h \geq 0$ and for all $t \in [T]$
\begin{align*}
    \sum_{i \geq h} \|G_t^{[i]}\|_{\op} \leq R_G \cdot \rho^{h} := \psi(h) ~.
\end{align*}
\end{assumption}
\begin{assumption} \label{asm:cost} Assume the costs $c_t : \R^{d_x} \times \R^{d_u} \to \R$ are general convex functions that satisfy the conditions $0 \le c_t(x,u) \le L\max\{1,\|x\|^2 +\|u\|^2\}$, and $\|\nabla c_t(x,u)\| \le L \max\{1,\|x\| + \|u\|\}$ for some constant $L>0$, where $\nabla$ denotes any subgradient \cite{boyd2004convex}. 
\end{assumption}
The conditions in \Cref{asm:cost} allow for functions whose values and gradient grow as quickly as quadratics (e.g. the costs in LQR) , and the $\max\{1, \cdot\}$ term ensures the inclusion of standard bounded and Lipschitz functions as well. \Cref{asm:radnat,asm:Gdecay} arise from the assumption our LTV system is \emph{open-loop stable}; \Cref{sec:app:stabilizable} extends to the case where a nominal stabilizing controller is known, as in prior work \cite{agarwal2019online,simchowitz2020improper}. While these two assumptions may seem unnatural at first, they can be derived from the basic conditions of disturbance norm bound and sequential stability. 
\begin{lemma} Suppose that there exist $C_1 \ge 1, \rho_1 \in (0,1)$ such that $\|\Phi_{t}^{[h]}\|_{\op} \le C_1 \rho_1^{h}$ for any $h \geq 0$ and all $t \in [T]$, and suppose that $\max_t \|w_t\| \le R_w$. Then, \Cref{asm:radnat} holds with $\radnat = \frac{C_1}{1-\rho_1} R_w$, and \Cref{asm:Gdecay} holds with $\rho = \rho_1$ and $R_G = \max\{1,\max_t\|B_t\|_{\op} \cdot \frac{C_1}{1-\rho_1} \}$.
\end{lemma}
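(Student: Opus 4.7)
The plan is to verify each assumption directly from the alternative representations introduced just before the statement, namely $\xnat_{t+1} = \sum_{i=0}^{t-1} \Phi_t^{[i]} w_{t-i}$ and $G_t^{[i]} = \Phi_t^{[i]} B_{t-i}$. Both parts reduce to applying the triangle inequality, the sub-multiplicativity of the operator norm, and summing a geometric series, so the argument is short and essentially a bookkeeping exercise.

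For \Cref{asm:radnat}, I would start from the identity $\xnat_{t+1} = \sum_{i=0}^{t-1} \Phi_t^{[i]} w_{t-i}$ and write
\begin{align*}
\|\xnat_{t+1}\| \;\le\; \sum_{i=0}^{t-1} \|\Phi_t^{[i]}\|_{\op}\, \|w_{t-i}\| \;\le\; R_w \sum_{i=0}^{t-1} C_1 \rho_1^{i} \;\le\; \frac{C_1}{1-\rho_1}\, R_w,
\end{align*}
using the hypothesis $\|\Phi_t^{[h]}\|_{\op} \le C_1 \rho_1^h$ and the bound $\|w_s\| \le R_w$. Since $x_1 = 0$, this gives $\|\xnat_t\| \le \radnat$ for every $t \in [T]$ with the claimed constant.

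For \Cref{asm:Gdecay}, I would bound the Markov operator factor by factor: $\|G_t^{[i]}\|_{\op} \le \|\Phi_t^{[i]}\|_{\op}\, \|B_{t-i}\|_{\op} \le C_1 \rho_1^{i}\, \max_s \|B_s\|_{\op}$. Summing the tail,
\begin{align*}
\sum_{i \ge h} \|G_t^{[i]}\|_{\op} \;\le\; \max_s \|B_s\|_{\op} \cdot C_1 \sum_{i \ge h} \rho_1^{i} \;=\; \Big(\max_s \|B_s\|_{\op} \cdot \tfrac{C_1}{1-\rho_1}\Big)\, \rho_1^{h}.
\end{align*}
This is of the form $R_G \cdot \rho^h$ with $\rho = \rho_1$ and the prefactor $\max_s\|B_s\|_{\op} \cdot C_1/(1-\rho_1)$; taking the max with $1$ only enforces the $R_G \ge 1$ requirement stated in \Cref{asm:Gdecay}.

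There is essentially no obstacle here: the only thing to be slightly careful about is that the product defining $\Phi_t^{[h]}$ is taken in the order prescribed in the text, which makes sub-multiplicativity of the operator norm apply verbatim, and that the tail sum geometric factor correctly comes out to $\rho_1^h$ rather than $\rho_1^{h+1}$ given that the tail starts at index $i = h$. Both are handled by the computation above.
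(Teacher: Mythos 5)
Your proposal is correct and is exactly the expected verification: the paper states this lemma without proof, and your argument—applying sub-multiplicativity to $G_t^{[i]} = \Phi_t^{[i]} B_{t-i}$, the triangle inequality to $\xnat_{t+1} = \sum_{i=0}^{t-1} \Phi_t^{[i]} w_{t-i}$, and summing the geometric series from index $h$ to get the factor $\rho_1^h/(1-\rho_1)$—is the straightforward computation one would perform. The only remark I would add for completeness is that the hypothesis at $h=0$ gives $\|\Phi_t^{[0]}\|_{\op} = \|\Id\|_{\op} = 1 \le C_1$, consistent with the assumption $C_1 \ge 1$, so the constants are coherent throughout.
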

Note that \Cref{asm:Gdecay} implies that $\|G_t\|_{\ell_1, \op} = \sum_{i \geq 0} \| G_t^{[i]} \|_{\op} \leq R_G$. It also suggests that for a sufficiently large $h$ the effect of iterations before $t-h$ are negligible at round $t$. This prompts introducing a truncated Markov operator: denote $\Gbar^h_{t} = [G_t^{[i]}]_{i < h}$ to be the $h$-truncation of the true Markov operator $G_t$. It follows that their difference is $\| \Gbar_t^h - G_t \|_{\ell_1, \op} = \sum_{i \geq h} \| G_t^{[i]} \|_{\op} \leq \psi(h)$ negligible in operator norm for a sufficiently large $h$. Define the bounded set of $h$-truncated Markov operators to be $\calG(h, R_G) = \{ G = [G^{[i]}]_{0 \le i < h} : \|G\|_{\ell_1, \op} \leq R_G \}$ with $\Gbar_t^h \in \calG(h, R_G)$ for all $t$.
\subsection{Benchmarks and Policy Classes}
The performance of an algorithm, measured by \Cref{eq:regret_def}, directly depends on the policy class $\Pi$ that is chosen as a benchmark to compete against. In this work, we consider the following three policy classes: \drc, \dac, and linear feedback. \drc{} parameterizes control inputs in terms of Nature's x's $\xnat_t$, \dac{} does so in terms of the disturbances $w_t$ and linear feedback in terms of the states $x_t$. We express all three in terms of a length-$m$ parameter $M = [M^{[i]}]_{i < m}$ in a bounded ball $\calM(m, R_M)$:
\begin{equation*}
    \calM(m,R_M) = \{(M^{[0]},\dots,M^{[m-1]}): \textstyle \sum_{i=0}^{m-1}\|M^{[i]}\|_{\op} \le \radM\} ~.
\end{equation*}
\begin{definition}[\drc{} policy class]\label{defn:policy_drc}
A \drc{} control policy $\pi_{\mathrm{drc}}^M$ of length $m$ is given by $u_t^M = \sum_{i=0}^{m-1} M^{[i]} \xnat_{t-i}$ where $M = [M^{[i]}]_{i < m}$ is the parameter of the policy. Define the bounded \drc{} policy class as $\Pidrc(m, R_M) = \{ \pi_{\mathrm{drc}}^M : M \in \calM(m, R_M) \}$.
\end{definition}
\begin{definition}[\dac{} policy class]\label{defn:policy_dac}
A \dac{} control policy $\pi_{\mathrm{dac}}^M$ of length $m$ is given by $u_t^M = \sum_{i=0}^{m-1} M^{[i]} w_{t-i}$ where $M = [M^{[i]}]_{i < m}$ is the parameter of the policy. Define the bounded \dac{} policy class as $\Pidac(m, R_M) = \{ \pi_{\mathrm{dac}}^M : M \in \calM(m, R_M) \}$.
\end{definition}
\begin{definition}[Feedback policy class]\label{defn:policy_feedback}
A feedback control policy $\pi_{\mathrm{feed}}^M$ of length $m$ is given by $u_t^M = \sum_{i=0}^{m-1} M^{[i]} x_{t-i}$ where $M = [M^{[i]}]_{i < m}$ is the parameter of the policy. Define the bounded feedback policy class as $\Pifeed(m, R_M) = \{ \pi_{\mathrm{feed}}^M : M \in \calM(m, R_M) \}$. In the special case of memory $m=1$, denote the \emph{state feedback} policy class as $\Pistate = \Pifeed(m=1)$.
\end{definition}
\textbf{Convexity. }Both the \drc{} and \dac{} policy classes are \emph{convex parametrizations}: a policy $\pi \in \Pidrc \cup \Pidac$ outputs controls $u_t$ that are linear in the \emph{policy-independent} sequences $\xnat_{1:T}$ and $w_{1:T}$, and thus the mapping from parameter $M$ to resulting states and inputs (resp. costs) is affine  (resp. convex).  Hence, we refer to these as the \emph{convex classes}. In contrast, feedback policies select inputs based on \emph{policy-dependent} states, and are therefore non-convex \cite{fazel2018global}.

We drop the arguments $m, R_M$ when they are clear from the context. The state feedback policies $\Pistate$ encompass the $\Htwo$ and $\Hinf$ optimal control laws under full observation. For LTI systems, \drc{} and \dac{} are equivalent \cite{youla1976modern, wang2019system} and approximate all linear feedback policies to arbitrarily high precision \cite{agarwal2019online, simchowitz2020improper}. However, we show that these relationships between the classes break down for LTV systems: there exist scenarios where any one of the three classes strictly outperforms the other two.

\begin{theorem}[Informal]\label{thm:informal_separation}
For each class $\Pi$ in $ \{ \Pidrc, \Pidac, \Pifeed \}$ there exists a sequence of well-behaved $(A_t,B_t,w_t,c_t)$ such that a policy $\pi^\star \in \Pi$ suffers $0$ cumulative cost, but each of the other two classes $\Pi' \in \{ \Pidrc, \Pidac, \Pifeed \} \setminus \Pi$ suffers $\Omega(T)$ cost on all their constituent policies $\pi \in \Pi'$.
\end{theorem}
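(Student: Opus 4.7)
The plan is to prove the separation by constructing, for each target class $\Pi^\star \in \{\Pidrc, \Pidac, \Pifeed\}$, an LTV instance $(A_t, B_t, w_t, c_t)$ in which a specific $\pi^\star \in \Pi^\star$ achieves zero cumulative cost while every policy in either competing class suffers $\Omega(T)$. All three constructions use $d_x = d_u = 1$ and adversarial Rademacher disturbances $w_t \in \{\pm 1\}$; the distinguishing feature is the choice of $(A_t, B_t)$ and cost function, each tailored so that the target parametrization expresses the zero-cost policy exactly while the competing parametrizations are forced into an infeasible linear system.

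\textbf{\dac winning.} Use $A_t \equiv 0$, $B_t \equiv 1$, cost $c_t(x,u) = x^2$. Since $x_{t+1} = w_t + u_t$, the length-$1$ \dac policy $u_t = -w_t$ (taking $M^{[0]} = -1$) gives $x_t \equiv 0$. Every \drc policy, by contrast, depends only on $\xnat_t = w_{t-1}$ and past, so $x_{t+1} = w_t + L(w_{<t})$ for some $M$-dependent linear $L$; under Rademacher $w$, $\E[x_{t+1}^2] \ge 1$ uniformly in $M$, and a covering argument on the bounded ball $\calM$ derandomizes this to $\inf_M \sum_t x_{t+1}^2 = \Omega(T)$ for an oblivious adversarial $w$. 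State feedback accesses $w_{<t}$ only through $x_t$ and fails by the same argument.

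\textbf{\pifeed winning.} Keep $A_t \equiv 0$, $B_t \equiv 1$ but change the cost to $c_t(x,u) = (x - u)^2$. Then the memoryless state-feedback policy $u_t = x_t$ (i.e., $M^{[0]} = 1$) tautologically achieves zero cost. For any \dac $M \in \calM$, expanding $x_t - u_t = u_{t-1} + w_{t-1} - u_t$ in the $w$-basis yields coefficients that satisfy a telescoping identity summing to $1$; Cauchy--Schwarz then gives $\E[(x_t - u_t)^2] \ge 1/(m+2)$ uniformly in $M$, so cost is $\Omega(T)$ for any bounded-memory $\Pidac(m, R_M)$. The argument for \drc is identical up to a one-step reindex.

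\textbf{\drc winning and main obstacle.} This case requires genuine LTV: take $B_t \equiv 1$ and $A_t$ alternating between distinct values $a_1 \neq a_2$ with $|a_1|, |a_2| < 1$, and cost $c_t(x,u) = (u - \xnat_t)^2$. The length-$1$ \drc policy $u_t = \xnat_t$ (i.e., $M^{[0]} = 1$) achieves zero cost. For \dac, the target $\xnat_t = \sum_{i \ge 1} \Phi_{t-1}^{[i-1]} w_{t-i}$ has time-varying coefficients $\Phi_{t-1}^{[k]} = A_{t-1} \cdots A_{t-k}$ that depend on the parity of $t$ (the product switches between $a_1$- and $a_2$-starting prefixes), so no time-invariant $M^{[i]}$ matches; averaging over parities, the best fixed $M$ incurs per-step squared residual $\ge (a_1 - a_2)^2/4$, summing to $\Omega(T)$. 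State feedback fails similarly. The main technical obstacle, most severe here, is to establish the $\Omega(T)$ lower bound \emph{uniformly} over every $M \in \calM(m, R_M)$ in the competing class, not merely for the best matching candidate. This is handled by a per-step Parseval/linear-algebra bound on the minimum residual combined with a covering net over $\calM$; one must additionally verify that the constructed $(A_t, B_t, w_t)$ satisfy \Cref{asm:radnat,asm:Gdecay} (automatic here since the $A_t$'s are contractive and $|w_t| \le 1$) and that $\pi^\star$ sits inside the admissible ball.
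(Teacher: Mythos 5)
Your overall strategy of three tailored LTV instances matches the paper at a high level, and your feedback-winning case is essentially workable, but two of the three constructions have real gaps.

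\textbf{The \dac-winning case does not work under the paper's conventions.} You use $A_t\equiv 0$, $B_t\equiv 1$, $c_t=x^2$, and the policy $u_t = -w_t$. But under $A_t\equiv 0$ we have $\xnat_{t+1}=w_t$, so the \drc{} and \dac{} parametrizations become the same class up to a one-step shift (this is exactly why the paper's lower bound in \Cref{thm:main_lb} covers both classes with a single $A_t=0$ instance). Your argument silently grants \dac{} non-causal access to $w_t$ at time $t$ while denying this to \drc{}; the appendix definitions use $u^\pi_t = u_0 + \sum_i M^{[i]} w_{t-i-1}$, under which your $u_t=-w_t$ is not in the class and \dac{}$=$\drc{}. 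The paper's actual \dac-winning construction is quite different: it sets $B_t\equiv 0$ (so the controls never affect the state and $\Pifeed \equiv \Pidrc$), keeps $\xnat_t$ identically constant by choosing $(A_t,w_t)$ so that $A_t\xnat_t + w_t = 1$ always, yet makes $w_t$ itself vary periodically; then the cost $(u-w_{t-1})^2$ can be zeroed by a \dac{} policy but any \drc{}/feedback policy must select a constant input and is off by $\Omega(1)$ per step. The key idea you are missing is to make $\xnat_t$ (and $x_t$) carry strictly less information than $w_t$ by using a non-trivial $A_t$; setting $A_t=0$ destroys exactly this possibility.

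\textbf{The \drc-winning case has a gap in the feedback lower bound.} With cost $c_t(x,u)=(u-\xnat_t)^2$ there is no penalty on the state, so a feedback policy $u_t=\sum_i M^{[i]}x_{t-i}$ is free to drive $x_t$ wherever it likes in an attempt to realize $u_t\approx\xnat_t$. Unlike \dac{} and \drc{}, feedback controls are a \emph{policy-dependent} map of the state trajectory, so the Parseval/linear-algebra argument you use for the convex classes does not apply, and a covering net over $\calM$ does not resolve the issue — the problem is the dependence of $x_t$ on $K$, not the size of the class. The paper handles this by including a state-tracking term $\propto |x-x^\star_t|$ in the cost, enabling a dichotomy: either the feedback inputs are nearly constant (and then fail on the periodic $\xnat$-target just as \dac{} does), or the inputs move, in which case $x_t$ drifts from $x_t^\star$ and the state penalty kicks in. Your cost has no such term and your proposal does not supply an alternative argument; "state feedback fails similarly" is not justified for this case. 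Your feedback-winning case is the most solid of the three (the telescoping-to-$1$ coefficient argument for \dac{}/\drc{} is sound), although you still owe the derandomization step you gesture at; the paper avoids this entirely by using deterministic $w_t\equiv 1$ and a time-varying sign pattern in $B_t$.
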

The formal theorem that includes the definition of a well-behaved instance sequence and the final statement dependence on $m, \radM$ along with its proof can be found in \Cref{sec:app:separation}.

\paragraph{Notation.} The norm $\| \cdot \|$ refers to Euclidean norm unless otherwise stated, $[n]$ is used as a shorthand for $[1, n]$, $T$ is used as a subscript shorthand for $[T]$. The asymptotic notation $\Os(\cdot), \Omega(\cdot)$ suppress all terms independent of $T$, $\Otil(\cdot)$ additionally suppresses terms logarithmic in $T$. We define $\Otilst(\cdot)$ to suppress absolute constants, polynomials in $\radnat, R_G, \radM$ and logarithms in $T$.

\newcommand{\Gseq}{\mathbf{G}}
\newcommand{\predalg}{\mathcal{A}_{\mathrm{pred}}}
\newcommand{\controlalg}{\mathcal{A}_{\mathrm{ctrl}}}
\newcommand{\adapred}{\textsc{Ada}\text{-}\textsc{Pred}}
\newcommand{\adacontrol}{\textsc{Ada}\text{-}\textsc{Ctrl}}
\newcommand{\calS}{\mathcal{S}}
\newcommand{\drcogd}{\textsc{Drc}\text{-}\textsc{Ogd}}

\section{Online Control over Convex Policies}\label{sec:dac_drc_body}

This section considers online control of unknown LTV systems so as to compete with the convex \drc{} and \dac{} policy classes. The fundamental quantity which appears throughout our results is the \emph{system variability}, which measures the variation of the time-varying Markov operators $G_t$ over intervals $I$.
\begin{definition}\label{def:variability}
Define the system variability of an LTV dynamical system with Markov operators $\Gseq = G_{1:T}$ over a contiguous interval $I \subseteq [T]$ to be
\begin{equation*}
    \var_I(\Gseq) = \min_{G} \frac{1}{|I|}\sum_{t \in I} \| G - G_t \|_{\ell_2, F}^2 = \frac{1}{|I|}\sum_{t\in I} \|G_I - G_t\|_{\ell_2, F}^2,
\end{equation*}
where $\| \cdot \|_{\ell_2, F}$ indicates the $\ell_2$ norm of the fully vectorized operator and $G_I = |I|^{-1} \sum_{t\in I} G_t$ is the empirical average of the operators that correspond to $I$. Recall that $\var_T(\Gseq)$ corresponds to $I = [T]$.
\end{definition}
Our results in this section for both upper and lower bounds focus on expected regret: high probability results are possible as well with more technical effort using standard techniques.

\subsection{A Linear Regret Lower Bound}
Our first contribution is a negative one: that the regret against the class of either \dac{} or \drc{} policies cannot scale sublinearly in the time horizon. Informally, our result shows that the regret against these classes scales as $T \sigma$, where $\sigma^2$ is the system variability. 

 More precisely, for any $\sigma^2 \in (0,1/8]$, we construct a distribution $\mathcal{D}_{\sigma}$ over sequences $(A_t,B_t,c_t,w_t)$, formally specified in \Cref{sec:app:drc_lower}. Here, we list the essential properties of  $\mathcal{D}_{\sigma}$: \emph{(i)} $A_t \equiv 0$, \emph{(ii)} $c_t \equiv c$ is a fixed cost satisfying \Cref{asm:cost} with $L \le 4$, \emph{(iii)} the matrices $(B_t)$ are i.i.d., with $\|B_t\|_{\op} \le 2$ almost surely, and $\E[\|B_t - \E[B_t]\|_{\fro}^2] = \sigma^2$, and \emph{(iv)} $\|w_t\| \le 4$ for all $t$. These conditions imply that \Cref{asm:Gdecay,asm:radnat} hold for $R_G = 2,\rho =0,\radnat = 4$. The condition $A_t \equiv 0$ implies that $\xnat_t = w_t$ for all $t$, so the classes \drc{} and \dac{} are equivalent and the lower bound holds over both. Moreover, by Jensen's inequality, this construction ensures that
\begin{align*}
\E[\var_I(\Gseq)] &= |I|^{-1} \E[\min_{G} \textstyle \sum_{t \in I} \| G - G_t \|_{\ell_2, F}^2 ] \\
&= |I|^{-1} \E[\min_{B} \textstyle \sum_{t \in I} \| B - B_t \|_{\fro}^2 ] \le \E[\|B_t - \E[B_t]\|_{\fro}^2] = \sigma^2. 
\end{align*} 
In particular,  $\E[\var_T(\Gseq)] \le \sigma^2$. For the described construction, we show the following lower bound:
 \begin{theorem}\label{thm:main_lb} Let $C$ be a universal, positive constant. For any $\sigma \in (0,1/8]$ and any online control algorithm $\calA$, there exists a \drc{} policy $\pi^{\star} \in \Pidrc(1,1)$ s.t. expected regret incurred by $\calA$ under the distribution $\mathcal{D}_{\sigma}$ and cost $c(x,u)$ is at least
\begin{align*}
\Exp_{\mathcal{D}_{\sigma},\calA}  [\Regret_T(\calA; \{\pi^\star\})] \ge C \cdot T \sigma \ge C\cdot T\cdot \sqrt{\E[\var_T(\Gseq)]}, \end{align*}
\end{theorem}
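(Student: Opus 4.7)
The plan is to prove the lower bound via an explicit adversarial construction together with a Bayesian/MAD-based lower bound on the algorithm's per-round cost. First, I would specify $\mathcal{D}_\sigma$ as follows: set $A_t \equiv 0$ (which makes $\xnat_t = w_{t-1}$ and collapses \drc{} and \dac{}), fix a Lipschitz convex cost such as $c(x,u) = |x|$ (satisfying \Cref{asm:cost} with $L\le 4$), draw $B_t$ iid around a mean $B_0$ with $\|B_0\|_{\op} \approx 1$, operator norm $\|B_t\|_{\op}\le 2$, and Frobenius fluctuation $\|B_t - B_0\|_{\fro} = \sigma$ almost surely, choose $M^*\in\calM(1,1)$ with $M^*\approx -B_0^{-1}$ so that $\|B_tM^*\|_{\op}\approx 1$, and couple the disturbances by the recursion $w_t = -B_tM^*w_{t-1}$ from a (randomized) initial $w_0$. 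The coupling ensures that along the trajectory of $\pi^* = \pi^{M^*}$ one has $x_{t+1}^{\pi^*} = B_t M^* w_{t-1} + w_t = 0$ identically, so $\pi^*$ incurs total cost $0$.

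The core of the argument is a per-round cost lower bound. Let $\Delta_s = u_s - M^*w_{s-1}$ be the algorithm's deviation from the oracle action; then by the coupling $x_{s+1} = B_s u_s + w_s = B_s\Delta_s$, and the per-round cost equals $|B_s|\cdot|\Delta_s|$. Since $B_s$ is independent of the past filtration $\calF_s = \sigma(x_{1:s}, u_{1:s-1})$ (while $\Delta_s$ is measurable w.r.t.\ $\sigma(\calF_s, w_{s-1})$), applying $\E[\cdot\mid\calF_s]$ and using independence of $B_s$ gives $\E[|x_{s+1}|\mid\calF_s] = \E|B_s|\cdot\E[|\Delta_s|\mid\calF_s]$. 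By the $L^1$-optimality of the posterior median, $\E[|\Delta_s|\mid\calF_s]$ is at least the conditional median absolute deviation of $M^*w_{s-1}$ given $\calF_s$. Since the only randomness in $w_{s-1}$ given $\calF_s$ is the unobserved $B_{s-1}$ (through $w_{s-1} = -B_{s-1}M^*w_{s-2}$), a Bayesian computation — with $w_0$ randomized so the posterior on $w_0$ never collapses — yields $\E[|\Delta_s|\mid\calF_s] \ge c\sigma\,|u_{s-1}|$ for an absolute constant $c>0$.

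Summing, $\sum_s\E|x_{s+1}| \ge c(1-\sigma)\sigma\sum_s\E|u_{s-1}|$. I would then combine this with the triangle identity $|u_{s-1}|\ge |M^*w_{s-2}| - |\Delta_{s-1}|$: either $\sum_s \E|\Delta_{s-1}| = \Omega(T)$ (whence the algorithm's cost already exceeds $T\sigma$ by a direct bound on $\sum|B_s\Delta_s|$), or $\sum_s\E|u_{s-1}| = \Omega(T)$, in which case $\sum_s\E|x_{s+1}| \ge \Omega(T\sigma)$. Either way the algorithm's expected cumulative cost is $\Omega(T\sigma)$ while $\pi^*$'s cost is identically zero, giving the claimed lower bound $\E[\Regret_T(\calA;\{\pi^*\})] \ge CT\sigma$; the second inequality $CT\sigma \ge CT\sqrt{\E[\var_T(\Gseq)]}$ then follows from Jensen applied to the variance definition, as noted immediately before the theorem. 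The main obstacle will be making the Bayesian MAD bound precise — specifically, arranging that the posterior spread of $M^*w_{s-1}$ given $\calF_s$ does not degenerate over time (which I would handle by randomizing $w_0$ and arguing that the algorithm's $T-1$ noisy observations under-determine the $T$ latent parameters $(B_{1:T-1}, w_0)$, keeping residual $\Omega(\sigma)$ uncertainty per unit of $|u_{s-1}|$) — along with verifying $\|w_t\|\le 4$ uniformly under the coupling recursion, which likely requires a matrix-valued construction keeping $\|B_tM^*\|_{\op}$ tightly concentrated at $1$.
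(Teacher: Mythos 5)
Your construction is genuinely different from the paper's, and your high-level mechanism — ``the random $B_t$ hides exactly the information the algorithm needs to match the comparator'' — is also what drives the paper's lower bound, but the two implementations diverge in important ways that are worth understanding.

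The paper's $\mathcal{D}_\sigma$ keeps the disturbances $w_t$ and the random part of $B_t$ \emph{decoupled}. It works in (effectively) three scalar coordinates: $B_t$ has an i.i.d.\ scalar $\beta_t$ on one coordinate, and $w_t$ carries an independent i.i.d.\ sign $\omega_t$ shifted across two coordinates. The cost has a quadratic ``forcing'' piece $x[2]^2 + u[2]^2$, which makes any low-cost algorithm play $u_{t;2}$ close to a known nonzero constant $\bar{u}$; the remaining piece $f(x[1]) = f(u_{t-1;1} - \omega_{t-2})$ then becomes a pure prediction game for $\omega_{t-2}$, whose only observation channel is $x_{t-1;2} = \beta_{t-2}u_{t-2;2} - \omega_{t-2}$, which the uniform $\beta_{t-2}$ corrupts by $\Omega(\sigma)$ since $u_{t-2;2}\approx\bar u = \Theta(1)$. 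The resulting per-round $\Omega(f(\alpha\sigma))$ bound is a clean two-point Bayesian argument conditional on a ``good'' event, and $w_t$ is trivially bounded because it is defined directly in terms of the i.i.d.\ $\omega$'s, with no recursion.

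Your construction instead couples $w_t = -B_tM^*w_{t-1}$, so that $\pi^*$ is exactly optimal and all of the algorithm's cost is $|B_s\Delta_s|$. The decomposition at the end (``either $\sum\E|\Delta_s| = \Omega(T)$, so cost is $\Omega(T)$ directly; or $\sum\E|u_s| = \Omega(T)$, so the MAD bound gives cost $\Omega(T\sigma)$'') is a nice dichotomy, and I think a version of it can be made to work. However, there are two concrete gaps, only the first of which you flag:
\begin{enumerate}
\item \emph{Boundedness of $w_t$.} You note this but understate the difficulty. Having $\|B_tM^*\|_{\op}$ ``tightly concentrated near $1$'' is not sufficient: the product of $T$ independent matrices each with operator norm $1 \pm \Theta(\sigma)$ typically drifts by a factor $\exp(\Theta(\sigma\sqrt{T}))$, which is unbounded for constant $\sigma$. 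To bound $w_t$ you essentially need $B_tM^*$ to be (a scalar multiple of) an orthogonal matrix almost surely. This rules out the scalar case entirely and forces a carefully engineered random-rotation construction in $d_x \ge 2$ in which $\|B_t - \bar B\|_F = \Theta(\sigma)$, $\|B_t\|_{\op}\le 2$, and $\|M^*\|_{\op}\le 1$ all hold simultaneously. That is doable but nontrivial, and it is exactly the complication the paper sidesteps by not coupling $w_t$ to $B_t$ recursively.
\item \emph{The per-round MAD bound.} Your claim $\E[|\Delta_s|\mid\calF_s]\ge c\sigma|u_{s-1}|$ requires that the observation $x_s = B_{s-1}\Delta_{s-1}$ leave $B_{s-1}u_{s-1}$ undetermined by $\Omega(\sigma|u_{s-1}|)$. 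In $d_x=1$ this is false: a single nonzero observation pins down $B_{s-1}$. In $d_x\ge 2$ with orthogonal $B_{s-1}$, one observed image vector does not determine the rotation, so the claim is plausible, but the posterior-spread argument genuinely needs to be carried out (including the self-reinforcing issue you mention: if the algorithm hits $\Delta_{s-1}\approx 0$, it pays nothing but also learns nothing about $B_{s-1}$, so the uncertainty about $w_{s-1}$ is the full prior, whereas if it misses, it pays). The paper's analogue of this step (\Cref{lem:Ef_lb}) is elementary precisely because the observation channel is a single scalar $\beta_{t-2}u_{t-2;2}-\omega_{t-2}$ with $\beta$ uniform and $\omega$ two-valued, so the residual posterior of $\omega$ is literally uniform on two points — no matrix or random-product reasoning is needed.
\end{enumerate}
So: your route is different, elegant in that $\pi^*$ has exactly zero cost and the whole regret collapses to $\sum\E|B_s\Delta_s|$, but the random-matrix-product boundedness is a real obstacle you haven't solved, and the MAD step needs a genuine multi-dimensional Bayesian calculation. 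The paper's construction pays the modest price of a less tidy three-coordinate cost and a separate ``forcing'' argument, in exchange for disturbances that are trivially bounded and a Bayesian step that reduces to a two-point posterior.
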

A full construction and proof of \Cref{thm:main_lb} is given in  \Cref{sec:app:drc_lower}. In particular, for $\sigma = 1/8$, we find that no algorithm can attain less than $\Omega(T)$ expected regret; a stark distinction from either unknown LTI \cite{simchowitz2020improper, hazan2019nonstochastic} or known LTV \cite{gradu2020adaptive} systems.

\subsection{Estimation of Time-Varying Vector Sequences}\label{sec:sys_est_main}
To devise an algorithmic upper bound that complements the result in \Cref{{thm:main_lb}}, we first consider the setting of online prediction under a partial information model. This setting captures the \emph{system identification} phase of LTV system control and is used to derive the final control guarantees.
Formally, consider the following repeated game between a learner and an \emph{oblivious} adversary: at each round $t \in [T]$, the adversary picks a target vector $\matzst_t \in \calK$ from a convex decision set $\calK$ contained in a $0$-centered ball of radius $R_z$; simultaneously, the learner selects an estimate $\matzhat_t \in \calK$ and suffers quadratic loss $\ell_t(\matzhat_t) = \| \matzhat_t - \matzst_t \|^2$. The only feedback the learner has access to is via the following noisy and costly oracle.

\begin{oracle}[Noisy Costly Oracle]\label{orac:noisy_main} At each time $t \in [T]$, the learner selects a decision $b_t \in \{0,1\}$ indicating whether a query is sent to the oracle. If $b_t = 1$, the learner receives an unbiased estimate $\matztil_t$ as response such that $\| \matztil_t \| \leq \tilde{R}_z$ and $\E[\matztil_t \mid \calF_t, b_t=1] = \matzst_t$. The filtration $\calF_t$ is the sigma algebra generated by $\matztil_{1:t-1}, b_{1:t-1}$ and the choices of the oblivious adversary $\matz_{1:T}^{\star}$. A completed query results in a unit cost $\lambda > 0$ for the learner.
\end{oracle}

The performance metric of an online prediction algorithm $\predalg$ is expected quadratic loss regret along with the extra cumulative oracle query cost. It is defined over each interval $I = [r, s] \subseteq [T]$ as

\begin{equation}\label{eq:est_regret_main}
    \mathrm{Regret}_I(\predalg; \lambda) = \Exp_{\calF_{1:T}}\left[ \sum_{t \in I} \ell_t(\matzhat_t) - \min_{\matz \in \calK} \sum_{t \in I} \ell_t(\matz) + \lambda \sum_{t \in I} b_t \right] ~.
\end{equation}


\begin{algorithm}
\caption{Adaptive Estimation Algorithm (\adapred)}\label{alg:est_combined}
\begin{algorithmic}[1]
    \State \textbf{Input:} parameter $p$, decision set $\calK$
    \State \textbf{Initialize:}  $\matzhat_1^{(1)} \in \calK$, working dictionary $\calS_1 = \{(1:\matzhat_1^{(1)})\}$, $q_1^{(1)} = 1$, parameter $\alpha = \frac{p}{(R_z+\tilde{R}_z)^2}$
    \For{$t=1, \ldots, T$}
    \State{}{Play} iterate $\matzhat_t = \sum_{(i,\matzhat_t^{(i)}) \in \calS_t} q_t^{(i)} \matzhat_t^{(i)}$
    \State{}{Draw/Receive} $b_t \sim \Bernoulli(p)$
    \If{$b_{t} = 1$}
    \State{}{Request} estimate $\matztil_t$ from \Cref{orac:noisy_main}
    \State{}{Let} $\tilde{\ell}_t(\matz) = \frac{1}{2p}||\matz - \matztil_t||^2$ and $\tilde{\nabla}_t = \frac{1}{p}(\matz_t -\matztil)$
    \Else
    \State{}{Let} $\matztil_t \leftarrow \emptyset$ and $\tilde{\ell}_t(\matz) = 0$ and $\tilde{\nabla}_t = 0$
    \EndIf
    \State{}{Update} predictions $\matzhat^{(i)}_{t+1} \leftarrow \Proj_{\calK} (\matzhat^{(i)}_{t} - \eta_t^{(i)} \tilde{\nabla}_t)$ for all $(i,\matzhat^{(i)}_t) \in S_{t}$
    \State{}{Form} new dictionary $\tilde{\calS}_{t+1} = (i,\matzhat^{(i)}_{t+1})_{i \in \mathrm{keys}(\calS_t)}$
    \State{}{Construct} proxy new weights $\bar{q}_{t+1}^{(i)} =  \tfrac{t}{t+1} \cdot \tfrac{q_t^{(i)} e^{-\alpha \tilde{\ell}_t(\matzhat_t^{(i)})}}{\sum_{j\in \mathrm{keys}(\calS_t)} q_t^{(j)} e^{-\alpha \tilde{\ell}_t(\matzhat_t^{(j)})}}$ for all $i\in \mathrm{keys}(\calS_t)$ \label{line:weight_update_main}
    \State{}{Add} new instance $\tilde{\calS}_{t+1} \leftarrow \tilde{\calS}_{t+1} \cup (t+1, \matzhat_{t+1}^{(t+1)})$ for arbitrary $\matzhat_{t+1}^{(t+1)} \in \calK$ with $\bar{q}_{t+1}^{(t+1)} = \frac{1}{t+1}$
    \State{}{Prune}  $\tilde{\calS}_{t+1}$ to form $\calS_{t+1}$ (see \Cref{sec:app:est_adaptive})
    \State{}{Normalize} $q_{t+1}^{(i)} = \frac{\bar{q}_{t+1}^{(i)}}{\sum_{j\in \mathrm{keys}(\calS_{t+1})}\bar{q}_{t+1}^{(j)}}$
    \EndFor
\end{algorithmic}
\end{algorithm}

To attain \emph{adaptive} regret, i.e. bound \Cref{eq:est_regret_main} for each interval $I$, we propose \Cref{alg:est_combined} constructed as follows. First, suppose we wanted non-adaptive (i.e. just $I = [T]$) guarantees. In this special case, we propose to sample $b_t \sim \mathrm{Bernoulli}(p)$ for an appropriate parameter $p \in (0,1)$, and perform a gradient descent update on the importance-weighted square loss $\tilde{\ell}_t(\matz) = \frac{1}{2p}\I\{b_t = 1\}\|\matz - \matztil_t\|^2$. 
To extend this method to enjoy adaptive regret guarantees, we adopt the approach of \cite{hazan2009efficient}: the core idea in this approach is to initiate an instance of the base method at each round $t$ and use a weighted average of the instance predictions as the final prediction (Line 4). The instance weights are multiplicatively updated according to their performance (Line 13). To ensure computational efficiency, the algorithm only updates instances from a working dictionary $\calS_t$ (Line 11). These dictionaries are pruned each round (Line 15) such that $|\calS_t| = O(\log T)$ (see \Cref{sec:app:est_adaptive} for details).
\begin{theorem}\label{thm:adaptive_est_main} Given access to queries from \Cref{orac:noisy_main} and with stepsizes $\eta_t^{(i)} = \frac{1}{t-i+1}$, \Cref{alg:est_combined} enjoys the following adaptive regret guarantee: for all $I = [r, s] \subseteq [T]$,
\begin{equation}\label{eq:est_adaptive_bound_main}\Regret_I(\adapred; \lambda) \leq \frac{2 (R_z + \tilde{R}_z)^2 (1 + \log{s} \cdot \log |I| )}{p} + \lambda p |I| ~.
\end{equation}
\end{theorem}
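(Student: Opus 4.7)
The plan is to split the expected regret in \eqref{eq:est_adaptive_bound_main} into the oracle query cost, which is immediate, and the prediction regret, which I would control via the standard two-layer analysis underlying follow-the-leading-history. The oracle term follows because the draws $b_t \sim \Bernoulli(p)$ are i.i.d., giving $\lambda \E[\sum_{t \in I} b_t] = \lambda p |I|$. For the prediction regret, I would first reduce $\ell_t$ to the importance-weighted proxy $\tilde{\ell}_t$ used inside the algorithm: because $\matztil_t$ is unbiased for $\matzst_t$ on $\{b_t=1\}$, a direct conditional-expectation computation gives $\E[\tilde{\ell}_t(\matz)] = \tfrac{1}{2}\ell_t(\matz) + V_t$ with $V_t$ independent of $\matz$, so $\E[\ell_t(\matzhat_t) - \ell_t(\matz)] = 2\,\E[\tilde{\ell}_t(\matzhat_t) - \tilde{\ell}_t(\matz)]$ for every fixed $\matz \in \calK$. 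It therefore suffices to bound the $\tilde{\ell}_t$-regret of the aggregator on $I$ by $(R_z+\tilde{R}_z)^2(1+\log s \cdot \log|I|)/p$.

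For the base layer, I would analyze the instance initialized at time $i$ as projected stochastic OGD on $\tilde{\ell}_t$ with stepsize $\eta_t^{(i)} = 1/(t-i+1)$. The conditional expected loss $L_t(\matz) = \E[\tilde{\ell}_t(\matz) \mid \calF_t]$ is a $1$-strongly convex quadratic in $\matz$, and $\tilde{\nabla}_t$ is an unbiased stochastic subgradient with $\E[\|\tilde{\nabla}_t\|^2] \le (R_z+\tilde{R}_z)^2/p$ (nonzero with probability $p$, and then of norm at most $(R_z+\tilde{R}_z)/p$). The standard stochastic-OGD bound for $1$-strongly convex losses with $1/t$ stepsizes then yields, for any instance $i$ alive in $\calS_t$ throughout $[i, s]$ and any comparator $\matz \in \calK$,
\[
\textstyle \E \sum_{t=i}^{s} \tilde{\ell}_t(\matzhat_t^{(i)}) - \tilde{\ell}_t(\matz) \,\le\, \frac{(R_z+\tilde{R}_z)^2\,(1+\log(s-i+1))}{2p}.
\]

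For the outer layer, I would exploit that on $\calK$ the quadratic $\tilde{\ell}_t$ is exactly $\alpha$-exp-concave with $\alpha = p/(R_z+\tilde{R}_z)^2$ — the value chosen in \Cref{alg:est_combined} — since on $\{b_t=1\}$ the Hessian is $(1/p) I$ and $\|\nabla \tilde{\ell}_t\| \le (R_z+\tilde{R}_z)/p$, so $\nabla^2 \tilde{\ell}_t \succeq \alpha \nabla \tilde{\ell}_t \nabla \tilde{\ell}_t^\top$ holds with exactly this $\alpha$ (and trivially on $\{b_t=0\}$). Combined with the streaming weight update on \Cref{line:weight_update_main}, which assigns every fresh expert initial weight $1/(t+1)$, a standard multiplicative-weights-for-exp-concave-losses argument gives an aggregator regret of at most $(\log s)/\alpha = (R_z+\tilde{R}_z)^2(\log s)/p$ against any instance alive in $\calS_t$ throughout $[r, s]$. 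To pass from ``best alive'' to the best fixed comparator, I would invoke the guarantee of the pruning step (detailed in \Cref{sec:app:est_adaptive}): every interval $I$ decomposes as $I = \bigsqcup_{j=1}^{k} I_j$ with $k = O(\log|I|)$, and on each $I_j$ some instance $i_j \le \min I_j$ is alive in $\calS_t$ throughout $I_j$. Summing the aggregator $(\log s)/\alpha$ bound and the base $(1+\log|I_j|)/(2p)$ bound over the $k$ pieces, using $\log|I| \le \log s$, produces the target $(R_z+\tilde{R}_z)^2 \log s \cdot \log|I|/p$; doubling to convert $\tilde{\ell}_t \to \ell_t$ and adding the oracle cost recovers \eqref{eq:est_adaptive_bound_main}.

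The main technical obstacle is the pruning bookkeeping — verifying that the scheme in \Cref{alg:est_combined} admits the advertised geometric cover of every interval by surviving experts with only $O(\log|I|)$ pieces — and the careful combination of the stochastic-OGD and exp-concave MW bounds under the joint expectation over the Bernoulli $b_t$ and the oracle noise $\matztil_t$, so that neither the $1/t$-stepsize step nor the exp-concavity step picks up an extra factor of $1/p$.
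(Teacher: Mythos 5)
Your proposal follows the same route as the paper: split the regret into the oracle cost $\lambda p|I|$ and the prediction regret; bound the base learner by stochastic OGD on the importance-weighted losses with $1/t$ stepsizes under the conditional strong convexity of $\E[\tilde{\ell}_t\,|\,\calF_t]$; aggregate the instances via exp-concave multiplicative weights using exactly the $\alpha = p/(R_z+\tilde{R}_z)^2$ exp-concavity constant; appeal to the pruning/working-set structure to cover any interval with $O(\log|I|)$ surviving experts; and convert from $\tilde{\ell}_t$-regret to $\ell_t$-regret in expectation (the paper carries this out via its Observation relating the two, and your factor-of-two bookkeeping is the correct one given that $\E[\tilde{\ell}_t(\matz)\,|\,\calF_t] = \tfrac12\ell_t(\matz) + V_t$). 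No substantive gap.
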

When $I = [T]$, the optimal choice of parameter $p = \log T/\sqrt{\lambda T}$ yields regret scaling roughly as $\sqrt{\lambda T \log^2 T}$. Unfortunatelly, this gives regret scaling as $\sqrt{T}$ for all interval sizes: to attain $\sqrt{|I|}$ regret on interval $I$, the optimal choice of $p$ would yield $\sim T/\sqrt{|I|}$ regret on $[T]$, which is considerably worse for small $|I|$.  One may ask if there exists a \emph{strongly adaptive} algorithm which adapts $p$ as well, so as to enjoy regret polynomial in $|I|$ for all intervals $I$ \emph{simultaneously} \cite{daniely15strongly}. The following result shows this is not possible:


\begin{theorem}[Informal]\label{thm:no_strong_adap}
For all $\gamma>0$ and $\lambda > 0$, there exists no online algorithm $\calA$ with feedback access to \Cref{orac:noisy_main} that enjoys strongly adaptive regret of $\Regret_I(\calA; \lambda) = \tilde{O}(|I|^{1-\gamma})$.
\end{theorem}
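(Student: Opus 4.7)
The plan is to exhibit an oblivious randomized adversary under which every online algorithm $\calA$ with access to \Cref{orac:noisy_main} incurs $\Omega(\min(|I|,\lambda))$ expected regret on well-chosen intervals $I$, which contradicts any purported bound $\Regret_I(\calA;\lambda) \le C(\log T)^c |I|^{1-\gamma}$ (with $C,c$ independent of $T$) once the horizon $T$ is taken in an appropriate range. Concretely: take $\calK \supseteq [-1,+1]$ and a deterministic oracle $\matztil_t = \matzst_t$ (so $\tilde R_z = 1$); for a block length $L\in\N$ dividing $T$, partition $[T]$ into $T/L$ blocks $I_1,\ldots,I_{T/L}$ of length $L$; sample $\epsilon_1,\ldots,\epsilon_{T/L} \iidsim \mathrm{Unif}\{-1,+1\}$ independently; and set $\matzst_t = \epsilon_j$ for every $t\in I_j$. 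All randomness is drawn before $\calA$ runs, so the adversary is oblivious.

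The central step is a per-block lower bound. Fix $j$, denote by $r_j$ the start of $I_j$, and let $\tau_j\in\{r_j,\ldots,r_j+L\}$ be the first round in $I_j$ at which $\calA$ queries the oracle (with $\tau_j = r_j+L$ if no in-block query occurs). Since the $\epsilon_j$ are independent across blocks, queries outside $I_j$ return samples $\epsilon_{j'}$ for $j'\neq j$ that are independent of $\epsilon_j$, and thus reveal nothing about it. Consequently, for every $t\in[r_j,\tau_j-1]$ the prediction $\matzhat_t$ is independent of $\epsilon_j$, yielding $\E[(\matzhat_t - \epsilon_j)^2\mid \matzhat_t] = \matzhat_t^2 + 1 \ge 1$. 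Since the constant comparator $z = \epsilon_j$ attains zero loss on $I_j$, combining with the query-cost contribution gives
\[
\E[\Regret_{I_j}(\calA;\lambda)] \;\ge\; \E[\min(\tau_j - r_j,\,L)] + \lambda\cdot \Pr[\tau_j < r_j+L] \;\ge\; \min(L,\lambda).
\]

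To close the argument, plug $|I_j| = L$ into the strongly-adaptive hypothesis to obtain $\min(L,\lambda)\le C(\log T)^c L^{1-\gamma}$. Choosing $L = \max(1,\lfloor\lambda\rfloor)$ so that $\min(L,\lambda) \asymp \min(1,\lambda)$ and $L^{1-\gamma} \asymp \max(1,\lambda)^{1-\gamma}$ reduces this to a bound of the form $\lambda^{\min(1,\gamma)} \lesssim C(\log T)^c$, with constants depending only on $\calA,\lambda,\gamma$. Since these constants are $T$-independent, the inequality is violated for every horizon $T$ below a threshold $\exp(\Omega(\lambda^{\min(1,\gamma)/c}))$, a nonempty range for every fixed $\lambda,\gamma>0$; running the above construction at any such $T$ yields a counterexample. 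The main technical obstacle is quantifying the $\tilde O$-hidden polylogarithmic factors so that the contradiction is robust; the plan for robustness is to aggregate per-block lower bounds into a summed bound $\sum_j \E[\Regret_{I_j}] = \Omega(T)$ against the summed strongly-adaptive upper bound $C(\log T)^c T L^{-\gamma}$ and optimize $L\in[1,\lambda]$, producing a bad horizon for every candidate algorithm's implicit constants.
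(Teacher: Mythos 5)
Your construction (blocked, i.i.d.\ Rademacher targets) is in the same spirit as the paper's, and your per-block lower bound $\E[\Regret_{I_j}] \ge \min(L,\lambda)$ is correct. But there is a genuine gap in the closing step, and it traces to your choice of block length. You fix $L = \max(1,\lfloor\lambda\rfloor)$ independently of $T$, so your per-block lower bound is a $T$-independent constant (at most $\lambda$). The purported upper bound $C(\log T)^c L^{1-\gamma}$, however, grows without bound in $T$ because of the $\tilde{O}$-hidden polylog. So for any fixed $\lambda,\gamma$, once $T$ is large enough the two sides are consistent and you obtain no contradiction. You try to rescue this by running the adversary at small $T$, but the window $L \le T < \exp\bigl((\lambda^{\min(1,\gamma)}/C)^{1/c}\bigr)$ need not be nonempty: the hidden constants $C,c$ are allowed to depend on $\lambda$ (only $T$-dependence is forbidden by $\tilde O$), so an algorithm with, say, $C \ge \lambda$ is immune to this argument. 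More fundamentally, a per-block argument can never work here: any algorithm can query once at the start of each block, paying regret $\approx 1+\lambda$ per block, which is constant and therefore compatible with any bound that grows with $T$.

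The paper avoids this by letting the block length \emph{grow with the horizon}, $|I_j| = T^{\gamma/2}$, with $k=T^{1-\gamma/2}$ blocks, and by arguing not per block but via a pigeonhole/budget argument: the regret guarantee caps the expected total number of oracle calls, and when $T$ is large this budget is smaller than the number of blocks, so some block receives \emph{zero} queries. On that unqueried block the learner is fully independent of the block's Rademacher bit, giving regret $\Omega(|I_j|) = \Omega(T^{\gamma/2})$, which eventually dominates the claimed $\tilde O\bigl(|I_j|^{1-\gamma}\bigr) = \tilde O\bigl(T^{\gamma(1-\gamma)/2}\bigr)$ since $\gamma/2 > \gamma(1-\gamma)/2$. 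The two ingredients you are missing are: (i) $T$-dependent block length so the lower bound is polynomial in $T$ and beats the polylog; and (ii) the global (budget) argument rather than a per-block one, since the latter is defeated by a query-once-per-block strategy.
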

Hence, in a sense, \Cref{alg:est_combined} is as adaptive as one could hope for: it ensures a regret bound for all intervals $I$, but not a strongly adaptive one. The lower bound construction, formal statement, and proof of \Cref{thm:no_strong_adap} are given in \Cref{sec:app:lower_strongly}.

\subsection{Adaptive Regret for Control of Unknown Time-Varying Dynamics}

We now apply our adaptive estimation algorithm (\Cref{alg:est_combined}) to the online control problem. Our proposed algorithm, \Cref{alg:control_unknown_main}, takes in two sub-routines: a prediction algorithm $\predalg$ which enjoys low prediction regret in the sense of the previous section, and a control algorithm $\controlalg$ which has low regret for control of \emph{known} systems. Our master algorithm trades off between the two methods in epochs $\tau = 1,2,\dots$ of length $h$: each epoch corresponds to one step of $\predalg$ indexed by $[\tau]$.

\begin{algorithm}
\caption{DRC-OGD with Adaptive Exploration (\adacontrol)}\label{alg:control_unknown_main}
\begin{algorithmic}[1]
    \State \textbf{Input:} parameters $p, h$, prediction algorithm $\predalg \leftarrow \adapred(p, \hat{G}_0, \mathcal{G}(h, R_G))$, control algorithm $\controlalg \leftarrow \drcogd(m, R_\calM)$
    \For{$\tau = 1, \ldots, T/h$} 
    \Comment{let $t_{\tau} = (\tau-1)h + 1$}
    \State Set $\hat{G}_{t_\tau},\hat{G}_{t_\tau+1}, \ldots, \hat{G}_{t_{\tau}+h-1} $ equal to $\tau$-th iterate $\hat{G}_{[\tau]}$ from $\predalg$ \label{line:setting_G_hat1}
    \State Draw $b_{[\tau]} \sim \text{Bernoulli}(p)$
    \For{$t = t_{\tau}, \ldots, t_{\tau} + h - 1$}
    \If{$b_{[\tau]} = 1$}
    \State Play control $u_{t} \sim \{\pm 1\}^{d_u}$ \label{line:random_control1}
    \Else 
    \State Play control $u_{t}$ according to the $t$-th input chosen by $\controlalg$ \label{line:random_control2}
    \EndIf
    \State Suffer cost $c_t(x_{t}, u_{t})$ , observe new state $x_{t+1}$ 
    \State Extract $\hat{x}_{t+1}^{\mathrm{nat}} = \Proj_{\mathbb{B}_{R_{\mathrm{nat}}}}\left(x_{t+1} - \sum_{i=0}^{h-1} \hat{G}_t^{[i]} u_{t-i}\right)$
    \State Feed cost, Markov operator and Nature's x estimates $(c_t, \hat{G}_t, \hat{x}^{\mathrm{nat}}_{t+1})$ to $\controlalg$.
    \EndFor
    \If{$b_{[\tau]} = 1$}
    \State Feed  $(b_{[\tau]}, \tilde{G}_{[\tau]})$ to $\predalg$, where  $\tilde{G}_{[\tau]}^{[i]} = x_{t_\tau+h} u_{t_\tau + h - i}^\top,\, i = 0,1,\dots,h-1$.
    \Else
    \State Feed  $(b_{[\tau]}, \tilde{G}_{[\tau]})$ to $\predalg$, where $\tilde{G}_{[\tau]} \leftarrow \emptyset$.
    \EndIf
    \EndFor
\end{algorithmic}
\end{algorithm}

At each epoch, the algorithm receives Markov operator estimates from $\predalg$ (Line 3) and makes a binary decision $b_{[\tau]} \sim \mathrm{Bernoulli}(p)$. If $b_{[\tau]} = 1$, then it explores using i.i.d. Rademacher inputs (Line 7), and sends the resulting estimator to $\predalg$ (Line 14). This corresponds to one query from \Cref{orac:noisy_main}. Otherwise, it selects inputs in line with $\controlalg$ (Line 9), and does not give a query to $\predalg$ (Line 16). Regardless of exploration decision, the algorithm feeds costs, current estimates of the Markov operator and Nature's x's based on the Markov operator estimates to $\controlalg$ (Lines 10-12), which it uses to select inputs and update its internal parameter.

The prediction algorithm $\predalg$ is taken to be $\adapred$ with the decision set $\calK = \mathcal{G}(h, R_G)$: the projection operation onto it and the ball $\mathbb{B}_{\radnat}$ is done by clipping when the norm of the argument exceeds the indicated bound. The control algorithm $\controlalg$ is taken to be $\drcogd$ \cite{simchowitz2020improper} for \emph{known} systems. \arxivfinal{The core technique behind $\drcogd$ is running online gradient descent on the \drc{} parameterization (\Cref{defn:policy_drc}). In \Cref{sec:app:drc-ogd} we spell out the algorithm and extend previous analyses to both LTV systems and adaptive regret guarantees. The final result guarantees low \emph{adaptive} regret as long as the system variability is sublinear.} 


\begin{theorem}\label{thm:control_unknown_regret}
For $h = \dfrac{\log{T}}{\log{\rho^{-1}}}$, $p=T^{-1/3}$ and $m\leq \sqrt{T}$, on any contiguous interval $I \subseteq [T]$, \Cref{alg:control_unknown_main} enjoys the following adaptive regret guarantee:
\begin{align*}\label{eq:control_reg_bound_main} \E\left[\Regret_I(\adacontrol); \Pidrc(m, R_M)\right] \leq \widetilde{\mathcal{O}}^{\star}\left(Lm\left(|I| \sqrt{ \Exp[\var_{I}(\Gseq)]} + d_u T^{2/3}\right)  \right) 
\end{align*}
\end{theorem}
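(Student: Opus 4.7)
The plan is to decompose $\Regret_I(\adacontrol; \Pidrc(m,R_M))$ into three contributions: (i) the cost paid during Rademacher exploration epochs, (ii) the approximation error incurred by $\controlalg$ from using estimated Markov operators $\hat G_t$ rather than the true ones, and (iii) the intrinsic adaptive regret of $\controlalg = \drcogd$ against $\Pidrc$ on the estimated system. Contribution (i) is immediate from \Cref{asm:Gdecay,asm:radnat}: during an exploration epoch, $\|u_t\|^2 = d_u$ and $\|x_t\| = \Otilst(\sqrt{d_u})$, so the per-round cost is $\Otilst(Ld_u)$ and the total expected exploration overhead over $I$ is $\Otilst(Lpd_u|I|)$.

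For (ii) the first step is to verify that when $b_{[\tau]} = 1$, the per-epoch estimator $\tilde G_{[\tau]}^{[i]} = x_{t_\tau+h}\,u_{t_\tau+h-i}^{\top}$ is unbiased for $G_{t_\tau+h}^{[i]}$ up to a $\psi(h)$ tail: expanding $x_{t_\tau+h} = \xnat_{t_\tau+h} + \sum_j G_{t_\tau+h}^{[j]} u_{t_\tau+h-j}$ and using $\E[u_s u_{s'}^{\top}] = \delta_{s,s'} I_{d_u}$ for the i.i.d.~Rademacher inputs (independent of $\xnat$) delivers the claim, with norm bound $\tilde R_z = \Otilst(h\sqrt{d_u})$. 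Now apply \Cref{thm:adaptive_est_main} to the reduced one-prediction-per-epoch game with decision set $\calG(h,R_G)$ and oracle query cost $\lambda = \Otilst(Lhd_u)$. Since the constant comparator $G_I$ attains loss $(|I|/h)\,\var_I(\Gseq)$ by \Cref{def:variability}, the resulting prediction regret bound yields
\begin{align*}
\sum_{\tau:[\tau]\subseteq I}\E\bigl\|\hat G_{[\tau]} - G_{t_\tau+h}\bigr\|_{\ell_2,F}^2 \;\le\; \tfrac{|I|}{h}\,\E[\var_I(\Gseq)] + \Otilst\!\lr{\tfrac{1}{p}}.
\end{align*}

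Next I convert this squared-error guarantee into a control-cost penalty. An $h$-step unrolling of the \drc{} dynamics (using \Cref{asm:Gdecay} to geometrically absorb the truncation tail) shows the surrogate \drc{} loss is $\Otilst(Lm)$-Lipschitz in the Markov operator; since $\hat G_t$ is held fixed across each epoch, each unit of per-epoch estimation error propagates into $\Otilst(Lmh)$ of excess cost over the corresponding $h$ rounds. Summing and applying Cauchy--Schwarz,
\begin{align*}
\sum_{\tau:[\tau]\subseteq I}\bigl\|\hat G_{[\tau]} - G_{t_\tau+h}\bigr\|_{\ell_2,F} \;\le\; \sqrt{\tfrac{|I|}{h}\cdot\lr{\tfrac{|I|}{h}\E[\var_I(\Gseq)] + \Otilst(1/p)}} \;\le\; \tfrac{|I|}{h}\sqrt{\E[\var_I(\Gseq)]} + \Otilst\!\lr{\sqrt{\tfrac{|I|}{hp}}},
\end{align*}
so the control-side approximation overhead is $\Otilst(Lm(|I|\sqrt{\E[\var_I(\Gseq)]} + \sqrt{h|I|/p}))$. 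Combining with the baseline adaptive regret $\Otilst(Lm\sqrt{|I|})$ of $\drcogd$ on known LTV dynamics (established in \Cref{sec:app:drc-ogd}) and the exploration cost from step (i), then substituting $h = \log T / \log\rho^{-1}$ and $p = T^{-1/3}$, balances the exploration cost $\Otilst(Ld_u T^{2/3})$ against the propagated estimation cost $\Otilst(Lm T^{2/3})$ and produces the stated bound.

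The principal obstacle is the Lipschitz propagation step: I must ensure both that the iterates produced by $\drcogd$ remain in $\calM(m,R_M)$ when fed nonstationary estimates $\hat G_t$, and that the state/input trajectory induced by $\hat G_t$ tracks what a benchmark $\pi \in \Pidrc$ would produce on the true system with an error controlled by $\sum_t \|\hat G_t - G_t\|_{\ell_1,\op}$. The exponential decay in \Cref{asm:Gdecay} is what makes this tractable, converting the unrolling error from proportional-to-history into a geometric sum; it is also what forces $h = \Theta(\log T)$, rendering the truncation bias $\psi(h) = O(1/T)$ negligible relative to the other terms.
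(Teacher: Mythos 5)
Your decomposition matches the paper's proof (\Cref{sec:ctrl_red}): exploration penalty, system-misspecification error, and known-system \textsc{Drc}-\textsc{Ogd} regret, with the estimation regret from \Cref{thm:adaptive_est_main} feeding into the misspecification term via Cauchy--Schwarz. However, several intermediate claims as written do not hold and need to be repaired (all happen to be absorbed into the $\Otilst$ anyway, so the final rate survives).

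First, your claim that the constant comparator $G_I$ attains loss $(|I|/h)\,\var_I(\Gseq)$ on the subsampled per-epoch game is not valid for an arbitrary adversarial sequence. The subsample $\{t_\tau + h\}$ is a fixed set of $|I|/h$ indices, and nothing prevents the variation of $G_t$ from being concentrated exactly there, in which case the subsampled total variability is as large as $\var^{\mathrm{tot}}_I(\Gseq) = |I|\,\var_I(\Gseq)$, not the factor-$h$ smaller quantity you write. The paper instead only uses the inequality $\var^{\mathrm{tot}}_{J\cdot h}(\Gbar) \le \var^{\mathrm{tot}}_I(\Gseq)$ (\Cref{obs:reduction_props}, item 2). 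Your tighter intermediate bound costs you a $\sqrt{h}$ in the eventual misspecification term, which is why you get the slick cancellation $h \cdot |I|/h = |I|$; with the correct bound you instead get an extra $\sqrt{h} = \polylog(T)$ factor, which is still fine under $\Otilst$ but needs to be said.

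Second, the baseline adaptive regret of \drcogd{} on the \emph{known} system is $\Otilst(Lm\sqrt{T})$, not $\Otilst(Lm\sqrt{|I|})$ as you claim. The stepsize must be chosen as a function of $T$ ahead of time, so the bound from \Cref{thm:drc_ogd_ltv} scales as $\sqrt{T}$ on every interval. This is dominated by the $T^{2/3}$ term so the final rate is unaffected, but the claim as stated is wrong and, if you tried to strengthen the theorem, would mislead you.

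Third, two smaller technical gaps. Your Lipschitz propagation step moves from a bound in $\|\cdot\|_{\ell_2,F}$ to a control-cost penalty, but the quantity that actually controls the state and input deviations under LTV unrolling is $\|\hat G_t - \bar G^h_t\|_{\ell_1,\op}$; the conversion from Frobenius to $\ell_1,\op$ over $h$ blocks costs an additional $\sqrt{h}$ (see \Cref{lem:l1_op_sq_err}), again polylog but not free. And your exploration overhead of $\Otilst(Lpd_u|I|)$ should account not only for the epoch on which $b_{[\tau]}=1$, but also for the immediately following epoch, whose initial state is still corrupted by the random Rademacher inputs; the paper handles this with an explicit case split on $(b_{\tau}, b_{\tau-1})$ in \Cref{lem:realized_iterate_err}. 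None of this changes the order of the bound, but a fully rigorous argument needs all four repairs.
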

\begin{proof}[Proof Sketch]
The analysis proceeds by reducing the regret incurred to that over a known system, accounting for: 1) the additional exploration penalty $(O(p|I|))$, 2) the system misspecification induced error $(\sim \sum_{t\in I} \|\hat{G}_t - \bar{G}_t^h\|_{\ell_1,\mathrm{op}})$, and 3) truncation errors ($\sim \psi(h) |I|$). Via straightforward computations, the system misspecification error can be expressed in terms of the result in \Cref{thm:adaptive_est_main}, ultimately leading to an error contribution $\sim |I|\sqrt{\Exp[\var_I(\Gseq)]} + p^{-1/2} |I|^{1/2}$. The analysis is finalized by noting that the chosen $p$ ideally balances $p|I|$ and $p^{-1/2}|I|^{1/2}$, and that the chosen $h$ ensures that the truncation error is negligible. The full proof can be found in \Cref{sec:ctrl_red}.
\end{proof}
The adaptive regret bound in \Cref{thm:control_unknown_regret} has two notable terms. Note that the first term $|I|\sqrt{\Exp[\var_I(\Gseq)]}$ for $I=[T]$ matches the regret lower bound in \Cref{thm:main_lb}. Furthermore, our algorithm is adaptive in this term for all intervals $I$. On the other hand, for unknown LTI systems with $\var_I(\Gseq) = 0$, the algorithm recovers the state-of-the-art bound of $T^{2/3}$ \cite{hazan2019nonstochastic}. However, the $T^{2/3}$ term is not adaptive to the intervals $I$ consistent with the lower bound against strongly adaptive algorithms in \Cref{thm:no_strong_adap}.

\newcommand{\maxsat}{\textsc{Max}\text{-}\textsc{3Sat}}
\section{Online Control over State Feedback}\label{sec:linear_main}
Given the impossibility of sublinear regret against \drc/\dac{}  without further restrictions on system variability, this section studies whether sublinear regret is possible against the class of linear feedback policies. For simplicity, we focus on the \emph{state feedback} policies $u_t = Kx_t$, that is, linear feedback policies with memory $m = 1$ (\Cref{defn:policy_feedback}). We note that state feedback policies were the class which motivated the relaxation to \dac{} policies in the first study of nonstochastic control \cite{agarwal2019online}. 

We present two results, rather qualitative in nature. First, we show that obtaining \emph{sublinear regret is, in the most literal sense, possible}. The following result considers regret relative to a class $\calK$ of static feedback controllers which satisfy the restrictive assumption that each $K \in \calK$ stabilizes the time varying dynamics $(A_t,B_t)$; see \Cref{sec:sublinear_reg_state_feedback} for the formal algorithm, assumptions, and guarantees. We measure the regret against this class $\calK$:
\begin{align*}
\Regret_T(\calK) := \sum_{t=1}^T c_t(x_t,u_t) - \inf_{K \in \calK}\sum_{t=1}^Tc_t(x_t^K, u_t^K),
\end{align*}
where $(x_t^K, u_t^K)$ are the iterates arising under the control law $u_t = K x_t$. 
\begin{theorem}[Sublinear regret against state-feedback]\label{thm:K_regret_body} Under a suitable stabilization assumption, there exists a computationally inefficient control algorithm which attains sublinear expected regret:
\begin{align*}
\E[\Regret_T(\calK) ] \le e^{\Omega(\dimx \dimu/2)} \cdot T^{1-\frac{1}{2(\dimx \dimu +3)}}.
\end{align*}
\end{theorem}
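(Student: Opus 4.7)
Because the state-feedback parameterization is non-convex and dynamics are unknown, we cannot apply gradient methods nor simulate alternative trajectories from observed data. The natural strategy is therefore a two-step reduction: (i) discretize $\calK$ into a finite $\epsilon$-net $\calN_\epsilon$, and (ii) treat each element of $\calN_\epsilon$ as an arm in a stochastic/adversarial bandit problem with per-round feedback equal to an epoch-aggregated cost. The resulting algorithm is computationally inefficient because $|\calN_\epsilon|$ is exponential in $\dimx \dimu$, but it suffices for the information-theoretic claim of sublinear regret.

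\textbf{Step 1: epsilon-net and discretization error.} The set $\calK$ lies in a bounded region of $\R^{\dimu \times \dimx}$, so a standard volume argument yields an $\epsilon$-net $\calN_\epsilon \subset \calK$ with $|\calN_\epsilon| \le (C/\epsilon)^{\dimx \dimu}$ for a constant $C$ depending only on the diameter of $\calK$. Under the stabilization hypothesis (every $K \in \calK$ stabilizes the sequence $(A_t, B_t)$ with a uniform rate, analogous to \Cref{asm:Gdecay}), one obtains a uniform bound $\|x_t^K\| \le R$ and hence a per-step Lipschitz bound: for $K, K'$ with $\|K-K'\|_{\op} \le \epsilon$, the induced costs differ by at most $O(L R^2 \epsilon)$ per round. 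This gives total discretization error $O(L R^2 \epsilon T)$ when comparing the best $K \in \calK$ to its nearest neighbor in $\calN_\epsilon$.

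\textbf{Step 2: epoch-based bandit reduction.} Partition $[T]$ into epochs of length $H = \Theta(\log T / \log(1/\rho))$, where $\rho$ is the uniform contraction rate from the stabilization assumption; this is the mixing time needed to forget the previous epoch's state. Within each epoch $\tau$, commit to a single arm $K_\tau \in \calN_\epsilon$ selected by an adversarial bandit subroutine (EXP3) over $N := |\calN_\epsilon|$ arms, using the total epoch cost $\sum_{t \in I_\tau} c_t(x_t, K_\tau x_t)$ as the observed loss. Stability bounds each epoch loss by $O(L R^2 H)$, and the burn-in cost from switching policies is $O(L R^2)$ per epoch (a telescoping transient of geometric decay), contributing at most $O(L R^2 \cdot T/H)$ overall. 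EXP3 over $T/H$ rounds with loss range $O(H)$ and $N$ arms yields bandit regret $O(L R^2 H \sqrt{(T/H) \, N \log N}) = O(L R^2 \sqrt{T H N \log N})$.

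\textbf{Step 3: balancing.} Combining the three error sources and using $\log N = \Theta(\dimx \dimu \log(1/\epsilon))$ and $N \le (C/\epsilon)^{\dimx \dimu}$, the expected regret is bounded (up to logarithmic and constant factors) by
\[
\epsilon T \;+\; (C/\epsilon)^{\dimx \dimu / 2} \sqrt{T H} \;+\; T/H.
\]
Choosing $\epsilon \asymp T^{-1/(2(\dimx \dimu + 3))}$ (so that the first two terms roughly match after accounting for the $\log$ factors and the extra polynomial overhead $H$) yields expected regret
\[
e^{\Omega(\dimx \dimu / 2)} \cdot T^{1 - 1/(2(\dimx \dimu + 3))},
\]
matching the claim; the constant $C$ of the cover and the $H \asymp \log T$ overhead are absorbed into the exponential prefactor.

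\textbf{Main obstacle.} The central difficulty is not the bandit step but establishing a \emph{uniform} discretization bound under unknown, time-varying $(A_t, B_t)$. One must show that a perturbation of $K$ leads to a perturbation of the induced state trajectory that is bounded uniformly in $t$, \emph{despite} the time-variation of the system. This relies crucially on the stabilization assumption holding uniformly across $\calK$ and on showing a perturbation calculus for the closed-loop maps $\prod_s (A_s + B_s K)$ versus $\prod_s (A_s + B_s K')$. Absent such a uniform bound, the discretization error would compound over $T$ rather than scaling linearly in $\epsilon T$, and the regret argument would break.
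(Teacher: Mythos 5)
Your proposal is essentially the paper's proof: discretize $\calK$ by an $\epsilon$-cover, run Exp3 over the cover with each arm held fixed on an epoch of length $H$, and bound the regret by the sum of (a) Exp3 bandit regret, (b) per-epoch burn-in transients, and (c) cover approximation error. The paper's Algorithm~\ref{alg:K_exp_weights} and the decomposition into $R_1$, $R_2$, $R_3$ in \Cref{sec:thmKstab_reg_proof} match your three error sources precisely, and the uniform perturbation bound you flag as the ``main obstacle'' is exactly what \Cref{lem:approx_quality_K} establishes via a telescoping-product argument under \Cref{asm:Kstab}.

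However, there is a concrete error in your choice of epoch length. You set $H = \Theta(\log T/\log(1/\rho))$ on mixing-time grounds. Mixing-time considerations only impose $\cstar\rho^H \le 1/2$, which is satisfied by $H = O(1)$. But the per-epoch burn-in cost is $O(1)$ independent of $H$ (the geometric transient sums to a constant $\Rstar$), so the total burn-in is $\Theta(T/H)$, and with $H = \Theta(\log T)$ this term is $\Theta(T/\log T)$, which exceeds the claimed rate $T^{1-\frac{1}{2(\dimx\dimu+3)}}$ for every fixed dimension. Your Step 3 quietly refers to a ``polynomial overhead $H$'' when balancing, which contradicts your Step 2 choice. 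The fix is to take $H$ polynomial in $T$ (the paper uses $H = \lceil T^{1/4} \rceil$), so that $T/H = T^{3/4}$ is subdominant; then $\epsilon \asymp T^{-\frac{1}{2(\dimx\dimu+3)}}$ balances the remaining two terms and yields the stated rate. As written, your argument only delivers $\tilde O(T/\log T)$, not the polynomial improvement the theorem asserts.
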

Above, $\Omega(\cdot)$ suppresses a universal constant and exponent base, both of which are made explicit in a formal theorem statement in \Cref{sec:sublinear_reg_state_feedback}. The bound follows by running the \textsc{Exp3} bandit algorithm on a discretization of the set $\mathcal{K}$ (high probability regret can be obtained by instead using  \textsc{Exp3.P}  \cite{bubeck2012regret}). The guarantee in \Cref{thm:K_regret_body} is neither practical nor sharp; its sole purpose is to confirm the possibility of sublinear regret. Due to the bandit reduction and exponential size of the cover of $\calK \subset \R^{\dimu \times \dimx}$, the algorithm is computationally inefficient and suffers a \emph{nonparametric} rate of regret \citep{rakhlin2014online}:  $\epsilon$-regret requires $T = \epsilon^{-\Omega(\mathrm{dimension})}$. 

One may wonder if one can do much better than this naive bandit reduction. For example, is there structure that can be leveraged? For LTV systems, we show that there is strong evidence to suggest that, at least from a computational standpoint, attaining polynomial regret (e.g. $T^{1-\alpha}$ for $\alpha > 0$ independent of dimension) is computationally prohibitive. 
\begin{theorem}\label{thm:body_hard_ness_lb} There exists a reduction from \maxsat{} on $m$-clauses and $n$-literals to the problem of finding a state-feedback controler $K$ which is within a small constant factor of optimal for the cost $\textstyle \sum_{t=1}^T c_t(x^K_t,x^K_t)$ on a sequence of sequentially stable LTV systems and convex costs $(A_t,B_t,c_t)$ with no disturbance ($w_t \equiv 0$), with state dimension $n+1$, input dimension $2$, and horizon $T = \Theta(mn)$. Therefore, unless $\Ptime = \NPtime$, the latter cannot be solved in time polynomial in $n$ \cite{haastad2001some}. 
\end{theorem}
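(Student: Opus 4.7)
The plan is to build a polynomial-time, gap-preserving reduction from MAX-3SAT: given a 3-CNF formula $\phi$ with $n$ variables and $m$ clauses, I will construct an LTV instance of horizon $T = \Theta(mn)$, state dimension $\dimx = n+1$, and input dimension $\dimu = 2$, whose optimal state-feedback cost equals the minimum number of violated clauses of $\phi$ up to a constant factor. Invoking H{\aa}stad's theorem \cite{haastad2001some}, which implies that approximating MIN-UNSAT-3SAT within any constant factor is NP-hard, then yields the claim.

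\textbf{Construction.} Interpret the $2\times(n+1)$ feedback matrix $K$ as a continuous relaxation of an assignment: $K_{1,i}\in[0,1]$ encodes variable $z_i$ being true, $K_{2,i}\in[0,1]$ encodes it being false, and the $(n+1)$-st column together with an auxiliary ``pilot'' state coordinate bootstraps a nonzero trajectory from a freely chosen $x_1$ (the disturbances $w_t$ are identically zero throughout). The horizon splits into $m$ blocks, one per clause. Within the block for $C_j=\ell^j_1\vee\ell^j_2\vee\ell^j_3$ involving variables $i_1,i_2,i_3$, I choose selector matrices $A_t,B_t$ so that the closed-loop trajectory $x_{t+1}=(A_t+B_tK)x_t$ produces, on three consecutive steps, states of the form $v_1 e_{i_2}$, $v_1 v_2 e_{i_3}$, and $v_1 v_2 v_3 e_{\mathrm{out}}$, where $v_r\in\{K_{1,i_r},K_{2,i_r}\}$ is the ``falsification value'' of the $r$-th literal (selected by a single nonzero row of $B_t$). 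A convex quadratic cost $\|x_{t+3}\|^2$ at the end of the block then evaluates to $(v_1 v_2 v_3)^2$, which equals $1$ iff every literal evaluates false (clause violated) and $0$ if any literal is true. The remaining $\Theta(n)$ steps of the block cycle the state through $e_1,\dots,e_n$ and accumulate convex quadratic penalties of the form $\|u_t-\text{target}\|^2$ that enforce $K_{1,i},K_{2,i}\in[0,1]$ and $K_{1,i}+K_{2,i}=1$ for every $i$. Each $c_t$ is convex in $(x_t,u_t)$; non-convexity in $K$ arises only from composition with the bilinear closed-loop map $K\mapsto A_t+B_t K$.

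\textbf{Gap preservation.} I will verify that (i) every boolean assignment violating $k$ clauses of $\phi$ lifts to a $\{0,1\}$-valued $K$ with total cost exactly $k$, and (ii) any $K$ with total cost $C$ can be rounded---thresholding each $K_{1,i}$ at $1/2$---into a boolean assignment violating at most $O(C)$ clauses, since any clause violated after rounding must have $v_1,v_2,v_3\geq 1/2$ and hence contribute $\geq 1/64$ to $C$ (the consistency penalties reduce any non-consistent $K$ to this regime up to negligible additive slack). Together, (i) and (ii) imply that a $\rho$-approximation for the control problem produces an $O(\rho)$-approximation for MIN-UNSAT-3SAT, making constant-factor approximation of the control problem NP-hard.

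\textbf{Main obstacle.} The chief difficulty is engineering the dynamics so that the cubic-in-$K$ dependence appears cleanly, clause by clause: intermediate time steps must not contribute spurious $K$-dependent costs (otherwise the sum-over-clauses structure breaks), and the closed-loop operator $A_t+B_tK$ must remain uniformly bounded across all admissible $K$ so the system satisfies \Cref{asm:Gdecay}. I address both by (a) setting $c_t\equiv 0$ outside the designated evaluation and consistency steps, which \Cref{asm:cost} permits, and (b) scaling $B_t$ by a small absolute constant so that $\|A_t+B_t K\|_{\op}<1$ uniformly over the admissible ball of $K$ regardless of which variable the block is processing.
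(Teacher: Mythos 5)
Your reduction is structurally parallel to the paper's (clause-by-clause blocks of length $\Theta(n)$, state dimension $n+1$ and input dimension $2$, consistency penalties tying the columns of $K$ to the simplex, a soundness direction lifting a boolean assignment to an integral $K$, and a completeness direction via rounding), but the clause gadget is genuinely different: you thread a \emph{product} of falsification values $v_1v_2v_3$ through three consecutive steps and detect violation with a terminal quadratic cost, whereas the paper rewards the \emph{first} satisfying literal along a shift chain $e_t\mapsto e_{t+1}$ and routes the state into a sink coordinate $e_{n+1}$ to prevent double-rewarding. Your $(1/2)^6$ lower bound on the per-clause cost of a clause that remains violated after thresholding is the right ingredient for gap preservation, analogous to the paper's completeness lemma.

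There is, however, a concrete gap in your sequential-stability argument. You propose to scale $B_t$ by a small constant so that $\|A_t+B_tK\|_{\mathrm{op}}<1$, but \Cref{asm:Gdecay} is a property of the \emph{open-loop} Markov operator $G_t^{[i]}=\Phi_t^{[i]}B_{t-i}$ with $\Phi_t^{[i]}=\prod_k A_k$ --- there is no $K$ in it. Your construction requires the $A_t$'s to shift the state among unit basis vectors, so $\|A_t\|_{\mathrm{op}}\ge 1$ and $\|\Phi_t^{[i]}\|_{\mathrm{op}}$ stays of order one along the chain; replacing $B_t$ by $\epsilon B_t$ bounds each $\|G_t^{[i]}\|_{\mathrm{op}}$ by $O(\epsilon)$ but leaves the tail sum $\sum_{i\ge h}\|G_t^{[i]}\|_{\mathrm{op}}$ of order $\epsilon(t-h)$, which does \emph{not} decay like $R_G\rho^{h}$. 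The paper obtains sequential stability by inserting a step with $A_t=0$ at the end of each block, so $\Phi_t^{[n+2]}\equiv 0$ (nilpotency), and then re-igniting the trajectory through a designated coordinate via $B_t$; your construction needs the same kind of hard reset, not contraction. A secondary omission: with $w_t\equiv 0$, your three-step chain leaves the state equal to $v_1v_2v_3\,e_{\mathrm{out}}$, which vanishes whenever the clause is satisfied, so the pilot coordinate must be engineered to pass untouched through the evaluation steps so that the trajectory can be regenerated for the next block --- this bookkeeping is asserted but not actually done.
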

A more precise statement, construction, and proof are given in \Cref{sec:NP_hardness}. \Cref{thm:body_hard_ness_lb} demonstrates that solving the \emph{offline optimization} problem over state feedback controllers $K$ to within constant precision is $\NPtime$-Hard. In particular, this means that any sublinear regret algorithm which is \emph{proper and convergent}, in the sense that $u_t = K_t x_t$ for some sequence $K_t$ converges to a limit as $T \to \infty$, must be computationally inefficient. This is true \emph{even if the costs and dynamics are known in advance}. Our result suggests it is computationally hard to obtain sublinear regret, but it does not rigorously imply it. For example, there may be more clever convex relaxations (other than \drc{} and \dac{}, which provably cannot work) that yield efficient and sublinear regret. Secondly, this lower bound does not rule out the possibility of an computationally inefficient algorithm which nevertheless attains polynomial regret. 


\section{Discussion and Future Work}\label{sec:conclusions}

This paper provided guarantees for and studied the limitations of sublinear additive regret in online control of an unknown, linear time-varying (LTV) dynamical system. 

Our setting was motivated by the fact that the first-order Taylor approximation (Jacobian linearization) of smooth, nonlinear systems about any smooth trajectory is LTV. One would therefore hope that low-regret guarantees against LTV systems may imply convergence to first-order stationary points of general nonlinear control objectives \cite{roulet2019iterative}, which in turn may enjoy  stability properties \cite{westenbroek2021stability}. Making this connection rigorous poses several challenges. Among them, one would need to extend our low-regret guarantees against oblivious adversaries to hold against adaptive adversaries, the latter modeling how nonlinear system dynamics evolve in response to the learner's control inputs. This may require parting from our current analysis, which leverages the independence between exploratory inputs and changes in system dynamics.


Because we show that linear-in-$T$ regret is unavoidable for changing systems with large system variability, at least for the main convex policy parametrizations, it would be interesting to study our online setting under other measures of performance. In particular, the \emph{competive ratio}, or the \emph{ratio} of total algorithm cost to optimal cost in hindsight (as opposed to the \emph{difference} between the two measured by regret) may yield a complementary set of tradeoffs, or lead to new and exciting principles for adaptive controller design. Does system variability play the same deciding roles in competive analysis as it does in regret? And, in either competitive or regret analyses, what is the correct measure of system variability (e.g. variability in which norm/geometry, or of which system parameters) which best captures sensitivity of online cost to system changes?


\section*{Acknowledgments}
Elad Hazan and Edgar Minasyan have been  supported in part by NSF grant \#1704860. This work was done in part when Paula Gradu was at Google AI Princeton and Princeton University. Max Simchowitz is generously supported by an Open Philanthropy AI fellowship.
\newpage
\bibliography{ref}

\begin{thebibliography}{10}

\bibitem{abbasi2011regret}
Yasin Abbasi-Yadkori and Csaba Szepesv{\'a}ri.
\newblock Regret bounds for the adaptive control of linear quadratic systems.
\newblock In {\em Proceedings of the 24th Annual Conference on Learning
  Theory}, pages 1--26, 2011.

\bibitem{adamskiy2016closer}
Dmitry Adamskiy, Wouter~M Koolen, Alexey Chernov, and Vladimir Vovk.
\newblock A closer look at adaptive regret.
\newblock {\em The Journal of Machine Learning Research}, 17(1):706--726, 2016.

\bibitem{agarwal2019online}
Naman Agarwal, Brian Bullins, Elad Hazan, Sham Kakade, and Karan Singh.
\newblock Online control with adversarial disturbances.
\newblock In {\em International Conference on Machine Learning}, pages
  111--119, 2019.

\bibitem{agarwal2021regret}
Naman Agarwal, Elad Hazan, Anirudha Majumdar, and Karan Singh.
\newblock A regret minimization approach to iterative learning control.
\newblock {\em arXiv preprint arXiv:2102.13478}, 2021.

\bibitem{boffi2020regret}
Nicholas~M. Boffi, Stephen Tu, and Jean-Jacques~E. Slotine.
\newblock Regret bounds for adaptive nonlinear control, 2020.

\bibitem{bousquet2002tracking}
Olivier Bousquet and Manfred~K Warmuth.
\newblock Tracking a small set of experts by mixing past posteriors.
\newblock {\em Journal of Machine Learning Research}, 3(Nov):363--396, 2002.

\bibitem{boyd2004convex}
Stephen Boyd, Stephen~P Boyd, and Lieven Vandenberghe.
\newblock {\em Convex optimization}.
\newblock Cambridge university press, 2004.

\bibitem{bubeck2012regret}
S{\'e}bastien Bubeck and Nicolo Cesa-Bianchi.
\newblock Regret analysis of stochastic and nonstochastic multi-armed bandit
  problems.
\newblock {\em arXiv preprint arXiv:1204.5721}, 2012.

\bibitem{cesa2006prediction}
Nicolo Cesa-Bianchi and G{\'a}bor Lugosi.
\newblock {\em Prediction, learning, and games}.
\newblock Cambridge university press, 2006.

\bibitem{cohen2019learning}
Alon Cohen, Tomer Koren, and Yishay Mansour.
\newblock Learning linear-quadratic regulators efficiently with only $\sqrt{T}$
  regret.
\newblock In {\em International Conference on Machine Learning}, pages
  1300--1309, 2019.

\bibitem{daniely15strongly}
Amit Daniely, Alon Gonen, and Shai Shalev-Shwartz.
\newblock Strongly adaptive online learning.
\newblock In Francis Bach and David Blei, editors, {\em Proceedings of the 32nd
  International Conference on Machine Learning}, volume~37 of {\em Proceedings
  of Machine Learning Research}, pages 1405--1411, Lille, France, 07--09 Jul
  2015. PMLR.

\bibitem{dean2018regret}
Sarah Dean, Horia Mania, Nikolai Matni, Benjamin Recht, and Stephen Tu.
\newblock Regret bounds for robust adaptive control of the linear quadratic
  regulator.
\newblock In {\em Advances in Neural Information Processing Systems}, pages
  4188--4197, 2018.

\bibitem{dean2020certainty}
Sarah Dean and Benjamin Recht.
\newblock Certainty equivalent perception-based control.
\newblock {\em arXiv preprint arXiv:2008.12332}, 2020.

\bibitem{even2005experts}
Eyal Even-Dar, Sham~M Kakade, and Yishay Mansour.
\newblock Experts in a markov decision process.
\newblock In {\em Advances in neural information processing systems}, pages
  401--408, 2005.

\bibitem{fazel2018global}
Maryam Fazel, Rong Ge, Sham Kakade, and Mehran Mesbahi.
\newblock Global convergence of policy gradient methods for the linear
  quadratic regulator.
\newblock In {\em International Conference on Machine Learning}, pages
  1467--1476, 2018.

\bibitem{gradu2020adaptive}
Paula Gradu, Elad Hazan, and Edgar Minasyan.
\newblock Adaptive regret for control of time-varying dynamics.
\newblock {\em arXiv preprint arXiv:2007.04393}, 2020.

\bibitem{haastad2001some}
Johan H{\aa}stad.
\newblock Some optimal inapproximability results.
\newblock {\em Journal of the ACM (JACM)}, 48(4):798--859, 2001.

\bibitem{hazan2016introduction}
Elad Hazan.
\newblock Introduction to online convex optimization.
\newblock {\em Foundations and Trends{\textregistered} in Optimization},
  2(3-4):157--325, 2016.

\bibitem{hazan2019introduction}
Elad Hazan.
\newblock Introduction to online convex optimization, 2019.

\bibitem{hazan2019nonstochastic}
Elad Hazan, Sham Kakade, and Karan Singh.
\newblock The nonstochastic control problem.
\newblock In {\em Proceedings of the 31st International Conference on
  Algorithmic Learning Theory}, pages 408--421. PMLR, 2020.

\bibitem{hazan2009efficient}
Elad Hazan and Comandur Seshadhri.
\newblock Efficient learning algorithms for changing environments.
\newblock In {\em Proceedings of the 26th annual international conference on
  machine learning}, pages 393--400. ACM, 2009.

\bibitem{herbster1998tracking}
Mark Herbster and Manfred~K Warmuth.
\newblock Tracking the best expert.
\newblock {\em Machine learning}, 32(2):151--178, 1998.

\bibitem{kakade2020information}
Sham Kakade, Akshay Krishnamurthy, Kendall Lowrey, Motoya Ohnishi, and Wen Sun.
\newblock Information theoretic regret bounds for online nonlinear control.
\newblock {\em arXiv preprint arXiv:2006.12466}, 2020.

\bibitem{anima2}
Sahin Lale, Kamyar Azizzadenesheli, Babak Hassibi, and Anima Anandkumar.
\newblock Logarithmic regret bound in partially observable linear dynamical
  systems, 2020.

\bibitem{anima3}
Sahin Lale, Kamyar Azizzadenesheli, Babak Hassibi, and Anima Anandkumar.
\newblock Regret minimization in partially observable linear quadratic control,
  2020.

\bibitem{mania2019certainty}
Horia Mania, Stephen Tu, and Benjamin Recht.
\newblock Certainty equivalence is efficient for linear quadratic control.
\newblock In {\em Advances in Neural Information Processing Systems}, pages
  10154--10164, 2019.

\bibitem{mhammedi2020learning}
Zakaria Mhammedi, Dylan~J Foster, Max Simchowitz, Dipendra Misra, Wen Sun,
  Akshay Krishnamurthy, Alexander Rakhlin, and John Langford.
\newblock Learning the linear quadratic regulator from nonlinear observations.
\newblock {\em arXiv preprint arXiv:2010.03799}, 2020.

\bibitem{middleton1988adaptive}
Richard~H Middleton and Graham~C Goodwin.
\newblock Adaptive control of time-varying linear systems.
\newblock {\em IEEE Transactions on Automatic Control}, 33(2):150--155, 1988.

\bibitem{moore2012iterative}
Kevin~L Moore.
\newblock {\em Iterative learning control for deterministic systems}.
\newblock Springer Science \& Business Media, 2012.

\bibitem{oymak2019non}
Samet Oymak and Necmiye Ozay.
\newblock Non-asymptotic identification of lti systems from a single
  trajectory.
\newblock In {\em 2019 American control conference (ACC)}, pages 5655--5661.
  IEEE, 2019.

\bibitem{ozay2015set}
Necmiye Ozay, Constantino Lagoa, and Mario Sznaier.
\newblock Set membership identification of switched linear systems with known
  number of subsystems.
\newblock {\em Automatica}, 51:180--191, 2015.

\bibitem{qu2021stable}
Guannan Qu, Yuanyuan Shi, Sahin Lale, Anima Anandkumar, and Adam Wierman.
\newblock Stable online control of ltv systems stable online control of linear
  time-varying systems.
\newblock {\em arXiv preprint arXiv:2104.14134}, 2021.

\bibitem{rakhlin2014online}
Alexander Rakhlin and Karthik Sridharan.
\newblock Online non-parametric regression.
\newblock In {\em Conference on Learning Theory}, pages 1232--1264. PMLR, 2014.

\bibitem{roulet2019iterative}
Vincent Roulet, Siddhartha Srinivasa, Dmitriy Drusvyatskiy, and Zaid Harchaoui.
\newblock Iterative linearized control: stable algorithms and complexity
  guarantees.
\newblock In {\em International Conference on Machine Learning}, pages
  5518--5527. PMLR, 2019.

\bibitem{sarkar2019nonparametric}
Tuhin Sarkar, Alexander Rakhlin, and Munther Dahleh.
\newblock Nonparametric system identification of stochastic switched linear
  systems.
\newblock In {\em 2019 IEEE 58th Conference on Decision and Control (CDC)},
  pages 3623--3628. IEEE, 2019.

\bibitem{sarkar2019finite}
Tuhin Sarkar, Alexander Rakhlin, and Munther~A Dahleh.
\newblock Finite-time system identification for partially observed lti systems
  of unknown order.
\newblock {\em arXiv preprint arXiv:1902.01848}, 2019.

\bibitem{simchowitz2020making}
Max Simchowitz.
\newblock Making non-stochastic control (almost) as easy as stochastic, 2020.

\bibitem{simchowitz2020naive}
Max Simchowitz and Dylan Foster.
\newblock Naive exploration is optimal for online lqr.
\newblock In {\em International Conference on Machine Learning}, pages
  8937--8948. PMLR, 2020.

\bibitem{simchowitz2020improper}
Max Simchowitz, Karan Singh, and Elad Hazan.
\newblock Improper learning for non-stochastic control, 2020.

\bibitem{iLQR}
Y.~{Tassa}, T.~{Erez}, and E.~{Todorov}.
\newblock Synthesis and stabilization of complex behaviors through online
  trajectory optimization.
\newblock In {\em 2012 IEEE/RSJ International Conference on Intelligent Robots
  and Systems}, pages 4906--4913, 2012.

\bibitem{todorov2005generalized}
Emanuel Todorov and Weiwei Li.
\newblock A generalized iterative lqg method for locally-optimal feedback
  control of constrained nonlinear stochastic systems.
\newblock In {\em Proceedings of the 2005, American Control Conference, 2005.},
  pages 300--306. IEEE, 2005.

\bibitem{tsakalis1993linear}
Kostas~S Tsakalis and Petros~A Ioannou.
\newblock {\em Linear time-varying systems: control and adaptation}.
\newblock Prentice-Hall, Inc., 1993.

\bibitem{vershynin2018high}
Roman Vershynin.
\newblock {\em High-dimensional probability: An introduction with applications
  in data science}, volume~47.
\newblock Cambridge university press, 2018.

\bibitem{wang2019system}
Yuh-Shyang Wang, Nikolai Matni, and John~C Doyle.
\newblock A system-level approach to controller synthesis.
\newblock {\em IEEE Transactions on Automatic Control}, 64(10):4079--4093,
  2019.

\bibitem{westenbroek2021stability}
Tyler Westenbroek, Max Simchowitz, Michael~I Jordan, and S~Shankar Sastry.
\newblock On the stability of nonlinear receding horizon control: a geometric
  perspective.
\newblock {\em arXiv preprint arXiv:2103.15010}, 2021.

\bibitem{youla1976modern}
Dante Youla, Hamid Jabr, and Jr~Bongiorno.
\newblock Modern wiener-hopf design of optimal controllers--part ii: The
  multivariable case.
\newblock {\em IEEE Transactions on Automatic Control}, 21(3):319--338, 1976.

\bibitem{zhang2019adaptive}
Lijun Zhang, Tie-Yan Liu, and Zhi-Hua Zhou.
\newblock Adaptive regret of convex and smooth functions.
\newblock {\em arXiv preprint arXiv:1904.11681}, 2019.

\bibitem{zhang18dynamic}
Lijun Zhang, Tianbao Yang, rong jin, and Zhi-Hua Zhou.
\newblock Dynamic regret of strongly adaptive methods.
\newblock In Jennifer Dy and Andreas Krause, editors, {\em Proceedings of the
  35th International Conference on Machine Learning}, volume~80 of {\em
  Proceedings of Machine Learning Research}, pages 5882--5891. PMLR, 10--15 Jul
  2018.

\bibitem{zhou1996robust}
Kemin Zhou, John~Comstock Doyle, Keith Glover, et~al.
\newblock {\em Robust and optimal control}, volume~40.
\newblock Prentice hall New Jersey, 1996.

\end{thebibliography}
\bibliographystyle{plain}
\newpage
\tableofcontents
\appendix
\newcommand{\Kstab}{K^{\mathrm{stb}}}
\newcommand{\Phistab}{\Phi^{\mathrm{stb}}}
\newcommand{\ynat}{y^{\mathrm{nat}}}
\newcommand{\ynathat}{\hat{y}^{\mathrm{nat}}}
\section{Extensions \label{app:extensions}}
\subsection{Affine Offsets}
For many systems, performance improves dramatically for controllers with constant affine terms, that is
\begin{align*}
u^M_t = \bar{u}^M + \sum_{t=0}^{h-1}M^{[i]}\xnat_{t-i},
\end{align*}
where $\bar{u}^M$ is a constant affine term encoded by $M = (M^{[0]},\dots,M^{[h-1]},\bar{u}^{M})$. All our arguments apply more generally to control policies of this form. Moreover, we can even allow linear combinations of time varying terms:
\begin{align*}
u^M_t &= \sum_{t=0}^{h-1}M^{[i]}\xnat_{t-i} + \sum_{t=0}^{h'-1}M^{[i]'} \psi_i(t)\\
&\quad M = (M^{[0]},\dots,M^{[h-1]},M^{[0]'},\dots,M^{[h'-1]'}),
\end{align*}
where now $\psi_i(t)$ are fixed, possibly time varying basis functions (which do not depend on $M$). The case of constant affine terms corresponds to $h' = 1$, and $\psi_i(t) = 1$ for all $t$.

\subsection{Changing Stabilizing Controllers}\label{sec:app:stabilizable}
Our results extend naturally to the following setting: for each time $t = 1,2,\dots,T$, the algorithm has access to a static feedback control policy $\Kstab_t$ such that the closed loop matrices $(A_t + B_t \Kstab_t)$ are sequentially stable, that is
\begin{align*}
\Phistab_{s+h,s} := \prod_{i=s}^{s+h} (A_t + B_t \Kstab_t)
\end{align*} 
has geometric decay. We let $\xnat_t$ denote the iterates produced by the updates
\begin{align*}
\xnat_{t+1} = (A_t + B_t \Kstab_t)\xnat_t, \unat_t = K\xnat_t, \quad \xnat_1 = 0.
\end{align*}
We compute the stabilized policies of the form
\begin{align*}
u_t^M = \Kstab_t x_t + \sum_{i=0}^{h-1}M^{[i]}\xnat_{t-i}.
\end{align*}
To facillicate the extension, we define the stabilized Markov operator
\begin{align*}
\Gstt^{[0]} = \begin{bmatrix} 0\\
I_{\dimu}
\end{bmatrix}, \quad \Gstt^{[i]} := \begin{bmatrix} I_{\dimx}\\
K_t
\end{bmatrix}\Phistab_{t,t-i+1} B_{t-i},
\end{align*}
\newcommand{\util}{\tilde{u}}
This Markov operator satisfies
\begin{align*}
\begin{bmatrix}x_t \\ u_t \end{bmatrix} &= \begin{bmatrix}\xnat_t \\\unat \end{bmatrix} + \sum_{i=0}^{t-1} G_{\star,t}^{[i]}\util_t, \quad \text{s.t. } u_t = K_t x_t + \util_t.
\end{align*}
With similar techniques, we obtain estimates $\Ghat_{t}$, back out estimates of the Nature's sequence $(\xnat,\unat)$ via 
\begin{align*}
\begin{bmatrix}\xnathat_t \\\unathat \end{bmatrix} = \clip_{r}\left(\begin{bmatrix}x_t \\ u_t \end{bmatrix} - \sum_{i=0}^{h} \Ghat_t^{[i]}(u_{t-i} - \Kstab_{t-i} x_{t-i})\right),
\end{align*}
for a truncation radius $r > 0$ suitably chosen. Recall that we apply this $\clip_r(\cdot)$ operator to ensure the estimates of the Nature's sequenc does not grow unbounded and exert feedback. We then select inputs
\begin{align*}
\uhat_t(M) = \Kstab_t x_t + \sum_{i=0}^{h-1}M^{[i]}\xnathat_{t-i},
\end{align*}
 Markov operator with low adaptive regret, and apply our \oco{}-with-memory algorithm to losses
\begin{align*}
\fhat_t(M) = c_t\left(\begin{bmatrix}\xnathat_t \\ \unathat_t \end{bmatrix} -   \sum_{i=0}^h\Ghat_t^{[i]}\uhat_{t-i}(M)\right).
\end{align*}

\subsection{Partial Observation}
Our results further extend to partially observed systems. We explain this extension for sequentially stable systems; extensions to sequentialy stabilized systems by time varying linear dynamic controllers follows from the exposition in \cite{simchowitz2020improper}, Appendix C. 

For partially observed systems, we have the same state transition dynamics $x_{t+1} =A_tx_t + B_tu_t + w_t$, but, for a time-varying observation matrix $C_t$ and process noise $e_t$, we observe outputs $y_t \in \R^{\dimy}$
\begin{align*}
y_t = C_tx_t + e_t.
\end{align*}
Costs $c_t(y_t,u_t)$ are suffered on input and outputs.
As for full observation, the Nature's sequence $\xnat$ and $\ynat$ correspond to states $x_t$ and outputs $y_t$ which arise under identially zero input $u_t \equiv 0$. The \drc{} parametrization selects linear conmbinations of Nature's y's:
\begin{align*}
u_t^M  = \sum_{i=0}^{m-1} M^{[i]} \ynat_{t-i}.
\end{align*}
Recalling
\begin{align*}
\Phi_{s+h,s} := \prod_{i=s}^{s+h} (A_t + B_t),
\end{align*} 
the relevant Markov operators $G_{\star t}$ are the ones mapping inputs to outputs:
\begin{align*}
\Gstt^{[0]} = 0, \quad \Gstt^{[i]} = C_t\Phi_{t,t-i+1} B_{t-i},
\end{align*}
With similar techniques, we obtain estimates $\Ghat_{t}$, back out estimates of the Nature's sequence $\ynat$ via 
\begin{align*}
\ynathat = \clip_r\left(y_t - \sum_{i=0}^{h} \Ghat_t^{[i]} u_{t-i}\right) 
\end{align*}
for trunctation radius $r > 0$ suitably chosen, we select inputs
\begin{align*}
\uhat_t(M) = \sum_{i=0}^{h-1}M^{[i]}\ynathat_{t-i}),
\end{align*}
update parameters with the \oco-with-memory losses
\begin{align*}
\fhat_t(M) = \ell_t(\ynathat_t - \sum_{i=0}^h\Ghat_t^{[i]} M^{[i]} \uhat_{t-i}(M), u_t).
\end{align*}
\subsection{The \dac{} parametrization}
Here we sketch an algorithm to compete with \dac{}-parametrized control policies \citep{agarwal2019online}. For simplicity, we focus on sequentially stable system, though the discussion extends to systems sequentially stabilized by sequences of controllers $(\Kstab_t)$. Note that \dac{} \emph{does not} apply under partial observation.

Recall that, in the \dac{} parametrization, the inputs are selected as linear combinations of past disturbances:
\begin{align*}
u_t^M = \sum_{i=0}^{h-1} M^{[i]}w_{t-i-1}.
\end{align*}
To implement \dac{}, we therefore need empiricals estimate $\what_t$ of $w_t$. As per \cite{hazan2019nonstochastic}, it suffices to construct estimates $(\Ahat_t,\Bhat_t)$ of $(A_t,B_t)$, and choose
\begin{align*}
\what_t = \clip_r\left(x_{t} - \Ahat_{t-1} x_{t-1} - \Bhat_{t-1}u_{t-1}\right),
\end{align*}
again clipped at a suitable radius $r > 0$ to block compounding feedback. Given these estimates, our algorithm extends to \dac{} control in the expected way.

How does one obtain the estimates $(\Ahat_t,\Bhat_t)$? First, we observe that since
\begin{align*}
\Gstt^{[0]} = 0, \quad \Gstt^{[i]} =   \Phi_{t,t-i+1} B_{t-i}, \quad \Phi_{s+h,s} = \prod_{i=s}^{s+h} A_i,
\end{align*}
we have $B_t = \Gstt^{[1]}$. Hence, we can select $\Bhat_t$ as $\Ghat_t^{[1]}$. 

The estimate of $A_t$ is more involved. For linear, \emph{time-invariant} systems, $A$ can be recovered from $G_\star$ via the Ho-Kalman procedure as is does in \cite{hazan2019nonstochastic} (see also \cite{oymak2019non,sarkar2019finite}). For time-varying systems, this become more challenging. Ommitting details in the interest of brevity, one can use the robustness properties of Ho-Kalman to argue that if the system matrices are slow moving (an assumption required for low regret), $\Gstt$ is close to a stationarized analogue $\bar{G}_{\star,t}$ given by
\begin{align*}
\bar{G}_{\star,t}^{[i]} := A_t^{i-1}B_t.
\end{align*}
Hence, we can view any estimate $\Ghat_t$ of $\Gstt$ as an estimate of $\bar{G}_{\star,t}$, and apply Ho-Kalman to the latter.
\newcommand{\ftilt}{\tilde{f}_t} 

\section{Adapive Regret for Time-Varying DRC-OGD}\label{sec:app:drc-ogd}

We first extend the DRC-OGD algorithm from \cite{simchowitz2020improper} to the setting of \emph{known} linear time-varying dynamics. We spell out \Cref{alg:DRC-OGD} and prove it attains $\tilde{\mathcal{O}}\left(\sqrt{T}\right)$ adaptive regret over general convex costs under fully adversarial noise (\Cref{thm:drc_ogd_ltv}) with respect to the DRC policy class. The main technique, using OGD over the DRC parametrization, remains unchanged from the original paper and we show it generalizes naturally to LTV systems. 

\begin{algorithm}[H]
\begin{algorithmic}[1] 
\State{} \textbf{Input:} stepsize $\eta$, memory $m$, radius $R_\mathcal{M}$
\State{} Initialize $M_1 \in \mathcal{M}(m, R_\mathcal{M})$ arbitrarily
\State{} Receive initial state $x_1$, set $\xnat_1 = x_1$ and $\xnat_{\leq 0} = 0$
\For{$t$ = $1 \ldots T$}
        \State{} Play control $u_t = \sum_{i=0}^{m-1} M_t^{[i]} x^{\text{nat}}_{t-i} $
        \State{} Suffer $c_t(x_t, u_t)$ and observe cost function $c_t(\cdot, \cdot)$
        \State{} Construct $f_t(M_{0}, \ldots, M_h) \doteq c_t(\hat{x}_t(M_{0:{h-1}}), u_t(M_h))$ and let $\ftilt(M) \doteq f_t(M, \ldots, M)$ \label{line:counterfact_loss_drc_ogd}
        \State{} Update $M_{t+1} \leftarrow \Pi_\mathcal{M}\left(M_t - \eta \nabla \tilde{f}_t(M_t)\right)$
        \State{} Receive new system $G_t$
        \State{} Receive new state $x_{t+1}$ and extract $\xnat_{t+1} = x_{t+1} - \sum_{i=0}^{t-1} G_t^{[i]} u_{t-i}$ \emph{or} receive $\xnat_{t+1}$
        
\EndFor
 \caption{Disturbance Response Control via Online Gradient Descent (DRC-OGD)}\label{alg:DRC-OGD}
\end{algorithmic}
\end{algorithm}
\begin{theorem}\label{thm:drc_ogd_ltv} Running \Cref{alg:DRC-OGD} with $\eta = \frac{\sqrt{d_{\mathrm{min}}} R_\calM^2}{2 L R^2_{\mathrm{sys}} (h+1)^{5/4} \sqrt{T}}$ guarantees the following regret bound on every interval $I=[r,s]$:
$$\sum_{t=r}^s c_t(x_t, u_t) - \min_{\pi\in \Pi_{\mathrm{drc}}} \sum_{t=r}^s c_t(x_t^{\pi}, u_t^{\pi}) \leq 6L R_{\mathrm{sys}}^2 \left(3 \sqrt{d_{\min}}m (h+1)^{5/4} \sqrt{T} + \psi(h) |I|\right)$$
where $R_{\mathrm{sys}} = R_G R_\calM R_{\mathrm{nat}}$ and $d_{\mathrm{min}} = \min\{d_x, d_u\}$.
\end{theorem}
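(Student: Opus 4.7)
The plan is to follow the standard template for analyzing OGD-based nonstochastic control algorithms (as in \cite{agarwal2019online,simchowitz2020improper}), but carefully tracking dependencies so that the resulting regret bound holds on \emph{every} subinterval $I=[r,s]$ rather than just on $[T]$. The four steps are: (i) pass from true trajectory costs to truncated surrogate costs, (ii) reduce OCO-with-memory to its unary counterpart, (iii) invoke the anytime OGD regret bound with a fixed stepsize, and (iv) control the movement penalty.

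\textbf{Step 1 (Truncation).} Using \Cref{asm:Gdecay}, for any parameter $M\in\calM(m,\radM)$, the true input $u_t(M) = \sum_{i=0}^{m-1} M^{[i]} \xnat_{t-i}$ and true state $x_t(M) = \xnat_t + \sum_{i \geq 0} G_t^{[i]} u_{t-i}(M)$ are each well-approximated by their $h$-truncated counterparts $\hat{x}_t(M_{0:h-1})$ (defined via truncating the Markov operator at horizon $h$) up to an additive $\ell_2$-error of order $\psi(h) \cdot R_G R_\calM R_\nat$. Combined with \Cref{asm:cost}, the per-step cost gap between $c_t(x_t,u_t)$ (resp.\ $c_t(x_t^\pi,u_t^\pi)$) and $f_t(M_{t-h:t})$ (resp.\ $f_t(M^\pi,\ldots,M^\pi)$) is bounded by $O(L R_{\mathrm{sys}}^2 \psi(h))$. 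Summing over $t\in I$ gives the $\psi(h)|I|$ term in the final bound.

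\textbf{Step 2 (OCO-with-memory to unary).} Applying the standard reduction (Anava et al.), for any comparator $M^\star$,
\begin{align*}
\sum_{t\in I}\!\!\bigl(f_t(M_{t-h:t}) - f_t(M^\star,\ldots,M^\star)\bigr) \leq \sum_{t\in I}\!\bigl(\tilde{f}_t(M_t) - \tilde{f}_t(M^\star)\bigr) + L_f \sum_{t\in I}\sum_{i=1}^{h}\|M_t - M_{t-i}\|_F,
\end{align*}
where $L_f$ is the Lipschitz constant of $f_t$ with respect to all of its arguments jointly. The counterfactual construction in Line~\ref{line:counterfact_loss_drc_ogd} ensures that $\tilde f_t$ is convex in $M$. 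A careful norm bound yields $L_f = O(L R_{\mathrm{sys}}^2 \sqrt{d_{\min}} (h+1)^{1/2})$: the $\sqrt{d_{\min}}$ factor comes from relating Frobenius and spectral norms when collapsing the $M^{[i]}$ blocks, and $(h+1)^{1/2}$ from summing over the $h$ lag arguments in $f_t$.

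\textbf{Step 3 (Anytime OGD regret).} Since $\tilde f_t$ is convex, $\|M-M'\|_F \leq 2\radM$ across $\calM(m,\radM)$, and $\|\nabla \tilde f_t(M)\|_F \leq G$ for some $G = O(L R_{\mathrm{sys}}^2 (h+1)^{1/2})$, OGD with constant stepsize $\eta$ satisfies, for every $I=[r,s]$,
\begin{align*}
\sum_{t\in I}\bigl(\tilde{f}_t(M_t) - \tilde{f}_t(M^\star)\bigr) \leq \frac{\|M_r-M^\star\|_F^2}{2\eta} + \frac{\eta}{2}\sum_{t\in I}\|\nabla\tilde f_t(M_t)\|_F^2 \leq \frac{2\radM^2}{\eta} + \frac{\eta G^2 |I|}{2}.
\end{align*}
This is the key point where an interval-adaptive bound emerges \emph{without} restarting: a constant stepsize yields regret $O(\radM^2/\eta + \eta G^2 T)$ on any subinterval.

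\textbf{Step 4 (Movement penalty).} The OGD update gives $\|M_{t+1}-M_t\|_F \leq \eta G$, hence $\sum_{t\in I}\sum_{i=1}^h\|M_t-M_{t-i}\|_F \leq h^2 \eta G |I|$. Combining with Step 2, the movement penalty contributes $L_f h^2 \eta G |I|$.

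Assembling Steps 1--4, the total regret on $I$ is at most
\begin{align*}
O\!\bigl(L R_{\mathrm{sys}}^2 \psi(h)|I|\bigr) + \frac{2\radM^2}{\eta} + \frac{\eta G^2 |I|}{2} + L_f h^2 \eta G |I|.
\end{align*}
Plugging in $\eta = \tfrac{\sqrt{d_{\min}}\radM^2}{2 L R_{\mathrm{sys}}^2 (h+1)^{5/4}\sqrt{T}}$ (chosen to balance the $\radM^2/\eta$ term with the movement-penalty term, using $|I|\leq T$) recovers the claimed $3\sqrt{d_{\min}} m (h+1)^{5/4}\sqrt{T}$ rate (the factor of $m$ appears from relating $\radM$ to the bound on comparator parameters and accounting for the parameter dimensionality). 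The main obstacle is Step~2: obtaining the sharp $(h+1)^{5/4}\sqrt{d_{\min}}$ scaling requires careful per-block Frobenius-versus-spectral norm bookkeeping when bounding $L_f$ and $G$, rather than a naive application of matrix norm inequalities which would produce looser polynomial factors in $h$ and $d$.
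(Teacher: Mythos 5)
Your four-step outline matches the paper's structure exactly: truncate the Markov operator, reduce OCO-with-memory to its unary counterpart, invoke an anytime OGD bound with constant stepsize, and control the movement penalty. However, two concrete errors in Steps 2--4 mean the argument as written does not deliver the stated rate.

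First, the diameter bound in Step 3 is wrong. For $M, M' \in \calM(m,\radM)$ one has $\|M\|_F = \sqrt{\sum_{i<m}\|M^{[i]}\|_F^2} \leq \sqrt{m\,d_{\min}}\,\radM$, since each block contributes up to $\sqrt{d_{\min}}\,\|M^{[i]}\|_{\op}$ and there are $m$ blocks; hence $\|M-M'\|_F \leq 2\sqrt{m\,d_{\min}}\,\radM$, not $2\radM$. This is precisely where the $\sqrt{d_{\min}}$ and the $m$ in the final bound originate. Your attempt to locate $\sqrt{d_{\min}}$ in the Lipschitz constant $L_f$ and to recover the $m$ "from relating $\radM$ to the comparator bound" is inconsistent: with $D=2\radM$ the term $D^2/\eta$ evaluates to $\Theta(L R_{\mathrm{sys}}^2 (h+1)^{5/4}\sqrt{T}/\sqrt{d_{\min}})$, with $\sqrt{d_{\min}}$ in the \emph{denominator} and no $m$ at all. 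Meanwhile the paper's coordinate-wise Lipschitz constant $L_f = 3L R_{\mathrm{nat}}^2 R_G^2 R_\calM\sqrt{m}$ carries a $\sqrt{m}$ (from a Cauchy--Schwarz over the $m$ blocks of $M$), not a $\sqrt{d_{\min}}$ or an $(h+1)^{1/2}$.

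Second, the movement-penalty bound in Steps 2 and 4 uses the block $\ell_1$ decomposition $f_t(M_{t-h:t}) - \tilde f_t(M_t) \leq L_{\mathrm{coord}}\sum_{i=1}^h\|M_t-M_{t-i}\|_F$, then $\|M_t-M_{t-i}\|_F \leq i\,\eta\,G$ with $G = L_{\mathrm{coord}}(h+1)$, giving $\sum_{i=1}^h i\cdot\eta G \cdot L_{\mathrm{coord}} = \Theta(\eta L_{\mathrm{coord}}^2 (h+1)^3)$. The paper instead bounds the same gap by $L_{\mathrm{coord}}\sqrt{\sum_{i=1}^h\|M_t-M_{t-i}\|_F^2} \leq \eta L_{\mathrm{coord}}^2(h+1)\sqrt{\sum_{i=1}^h i^2} = \Theta(\eta L_{\mathrm{coord}}^2(h+1)^{5/2})$. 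Balancing $D^2/\eta$ against $\eta L_{\mathrm{coord}}^2(h+1)^{\alpha}T$ yields a final rate $\propto(h+1)^{\alpha/2}\sqrt{T}$, so your decomposition produces $(h+1)^{3/2}\sqrt{T}$, a genuine factor $(h+1)^{1/4}$ worse than the theorem's $(h+1)^{5/4}\sqrt{T}$. This is not a Frobenius-versus-spectral bookkeeping issue, as you speculate; it is a choice between the $\ell_1$ and $\ell_2$ aggregation of per-lag movement, and only the $\ell_2$ version recovers the claimed exponent. With the $\ell_1$ version, the stated stepsize $\eta = \tfrac{\sqrt{d_{\min}}\radM^2}{2L R_{\mathrm{sys}}^2(h+1)^{5/4}\sqrt{T}}$ also no longer balances the two leading terms, so "plugging in" it would not recover the theorem's constant.

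Everything else (truncation error accounting, the observation that a constant-stepsize OGD gives anytime regret on every subinterval without restarts, and the convexity of the counterfactual losses) matches the paper's argument.
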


\subsection{Adaptive Regret of OGD for functions with memory}\label{sec:ada_ogd_mem}

We first prove that OGD with a fixed stepsize attains $O(\sqrt{T})$ adaptive regret for functions with memory.

\begin{algorithm}[H]
\begin{algorithmic}[1] 
\State{} \textbf{Input:} stepsize $\eta$, memory $m$, set $\mathcal{K}$
\State{} Initialize $x_{1} \in\mathcal{K}$ arbitrarily, set $x_{\leq 0} = x_1$ by convention
\For{$t$ = $1 \ldots T$}
        \State{} Play $x_t$
        \State{} Suffer $f_t(x_{t-h}, \ldots, x_t)$ and observe loss function $f_t(\cdot, \ldots, \cdot)$
        \State{} Construct proxy loss function $\tilde{f}_t(x) \doteq f_t(x, \ldots, x)$ \label{def:proxy_loss_oco_mem}
        \State{} Update $x_{t+1} = \Pi_\mathcal{K} \left(x_t - \eta \nabla \tilde{f}_t(x_t)\right)$
        
\EndFor
 \caption{Online Gradient Descent for OCOwMem (Mem-OGD)}\label{alg:Mem-OGD}
\end{algorithmic}
\end{algorithm}

\begin{theorem}\label{thm:ogd_oco_mem_adaptive}
Let $\{f_t : \mathcal{K}^{h+1}\rightarrow [0,1]\}_{t=1}^T$ be a sequence of L coordinate-wise Lipschitz loss functions with memory such that $\ftilt$ (Line 6) is convex. Then, on any interval $I=[r,s] \subseteq [T]$, \Cref{alg:Mem-OGD} enjoys the following adaptive policy regret guarantee:
$$\sum_{t=r}^s f_t(x_{t-h}, \ldots, x_t) - \min_{x\in\calK} \sum_{t=r}^s \ftilt(x) \leq \frac{D^2}{\eta} + 2 \eta L^2(h+1)^{5/2} |I|$$
where $D = \text{diam}(\calK)$.
\end{theorem}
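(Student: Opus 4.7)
The plan is to decompose the adaptive regret on any interval $I = [r,s]$ into a memory-truncation component and a base $\oco$ regret component for the proxy losses $\ftilt$. For any comparator $x^\star \in \argmin_{x\in\calK}\sum_{t\in I}\ftilt(x)$, one writes
\begin{align*}
\sum_{t=r}^s f_t(x_{t-h},\dots,x_t) - \sum_{t=r}^s \ftilt(x^\star)
&= \underbrace{\sum_{t=r}^s \bigl(f_t(x_{t-h:t}) - \ftilt(x_t)\bigr)}_{(A)\text{ memory error}} + \underbrace{\sum_{t=r}^s \bigl(\ftilt(x_t) - \ftilt(x^\star)\bigr)}_{(B)\text{ base \oco{} regret}}.
\end{align*}

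For term $(B)$, I would apply the standard per-step inequality for projected OGD, $\|x_{t+1}-x^\star\|^2 \leq \|x_t-x^\star\|^2 - 2\eta\langle \nabla\ftilt(x_t), x_t-x^\star\rangle + \eta^2\|\nabla\ftilt(x_t)\|^2$, and telescope over $t \in [r,s]$. The key observation that unlocks \emph{adaptivity} with a single fixed stepsize is that the telescoping boundary term $\|x_r-x^\star\|^2 \leq D^2$ holds regardless of when the interval starts, since $\calK$ has diameter $D$. Combined with convexity of $\ftilt$ this yields $(B) \leq D^2/(2\eta) + (\eta/2)\sum_{t\in I}\|\nabla\ftilt(x_t)\|^2$, and bounding $\|\nabla \ftilt(x)\|$ by chain rule together with coordinate-wise $L$-Lipschitzness of $f_t$ along the diagonal map $x\mapsto(x,\dots,x)$ controls this contribution by $O(\eta L^2 (h{+}1)^\alpha |I|)$ for a small exponent $\alpha$.

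For term $(A)$, coordinate-wise Lipschitzness gives $|f_t(x_{t-h:t}) - \ftilt(x_t)| \leq L\sum_{i=1}^h \|x_{t-i}-x_t\|$, and the OGD update rule together with the same gradient bound gives $\|x_{j+1}-x_j\| \leq \eta\|\nabla\ftilt(x_j)\|$, so by telescoping $\|x_{t-i}-x_t\| \leq i\cdot\eta\cdot G$ for the corresponding gradient bound $G$. Summing over $i\in[h]$ and $t\in I$ yields a memory-error bound of the form $O(\eta L^2 (h{+}1)^\beta |I|)$. Adding $(A)$ and $(B)$ and aggregating the $(h{+}1)$ powers recovers the claimed $D^2/\eta + 2\eta L^2 (h{+}1)^{5/2}|I|$.

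The main obstacle is the bookkeeping of constants: nailing the precise exponent $5/2$ on $(h{+}1)$ requires a careful choice of gradient bound for $\ftilt$ (either the product-norm estimate $\|\nabla\ftilt\| \lesssim L\sqrt{h{+}1}$ on the diagonal map, or a sharper aggregation of the memory cost in $\ell_2$ via Cauchy--Schwarz rather than in $\ell_1$ via the triangle inequality) and a tight summation of the movements $\|x_{t-i}-x_t\|$. Conceptually, however, the analysis is routine: it combines the well-known fact that fixed-stepsize OGD is automatically adaptive---because the initial distance from any comparator is at most the diameter---with the standard $\oco$-with-memory machinery that controls memory cost via the slow movement of OGD iterates.
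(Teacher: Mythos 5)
Your proposal is correct and mirrors the paper's own proof: the same split into a memory-truncation term and a base $\oco$ regret term for the proxy losses $\ftilt$, the same telescoping OGD argument for the latter (where the observation that $\|x_r - x^\star\|\le D$ at \emph{any} starting index $r$ is precisely what makes a fixed stepsize adaptive), and the same slow-iterate-movement control for the former. On the constant you flag: the paper lands on $(h{+}1)^{5/2}$ via the second of your two proposed fixes, aggregating the memory error in $\ell_2$ rather than $\ell_1$ --- it bounds $f_t(x_{t-h:t}) - \ftilt(x_t) \le L\bigl(\sum_{i=1}^h\|x_t - x_{t-i}\|^2\bigr)^{1/2} \le \eta L^2(h{+}1)\bigl(\sum_{i=1}^h i^2\bigr)^{1/2} \le \eta L^2(h{+}1)^{5/2}$ --- while keeping the gradient bound $\|\nabla\ftilt\|\le L(h{+}1)$ so the base $\ftilt$-regret contributes $D^2/\eta + \eta L^2 (h{+}1)^2|I|$, which is absorbed into the $(h{+}1)^{5/2}$ term.
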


First we state and prove the following well-known fact about vanilla projected OGD over (memory-less) loss functions:

\begin{fact}\label{fact:ogd_adaptive}
Let $\{\tilde{f}_t : \mathcal{K} \rightarrow [0,1]\}_{t=1}^T$ be a sequence of convex loss functions with $\|\nabla \tilde{f}(x)\| \leq G$. Then, on any interval $I=[r,s] \subseteq [T]$, projected OGD enjoys the following guarantee:
$$\sum_{t=r}^s \tilde{f}_t(x_t) - \min_{x\in \mathcal{K}}\sum_{t=r}^s \tilde{f}(x) \leq \frac{D^2}{\eta} + \eta |I| G^2$$
where $D = \text{diam}(\calK)$.
\end{fact}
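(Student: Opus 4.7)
The plan is to carry out the standard potential-based analysis of projected gradient descent, applied to the per-interval comparator $x^\star = \argmin_{x \in \calK} \sum_{t=r}^s \tilde{f}_t(x)$. The key conceptual point is that OGD's single-step inequality telescopes cleanly over any contiguous window $[r,s]$, and never requires the iterate sequence to be reset at time $r$. This is precisely what yields \emph{adaptive} regret without any algorithmic modification. There is no fundamental obstacle in this proof; the only care needed is to observe that both $x_r$ and $x^\star$ belong to $\calK$, so $\|x_r - x^\star\| \leq D$.

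First, I would invoke the non-expansiveness of Euclidean projection onto the convex set $\calK$, which gives
\begin{equation*}
\|x_{t+1} - x^\star\|^2 \leq \|x_t - \eta \nabla \tilde{f}_t(x_t) - x^\star\|^2 = \|x_t - x^\star\|^2 - 2\eta \langle \nabla \tilde{f}_t(x_t), x_t - x^\star \rangle + \eta^2 \|\nabla \tilde{f}_t(x_t)\|^2.
\end{equation*}
Combining with the convexity inequality $\tilde{f}_t(x_t) - \tilde{f}_t(x^\star) \leq \langle \nabla \tilde{f}_t(x_t), x_t - x^\star \rangle$ and the gradient bound $\|\nabla \tilde{f}_t(x_t)\| \leq G$ yields the familiar per-step bound
\begin{equation*}
\tilde{f}_t(x_t) - \tilde{f}_t(x^\star) \leq \frac{\|x_t - x^\star\|^2 - \|x_{t+1} - x^\star\|^2}{2\eta} + \frac{\eta G^2}{2}.
\end{equation*}

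Next, I would sum this inequality from $t=r$ to $t=s$. The first group of terms telescopes to $\frac{\|x_r - x^\star\|^2 - \|x_{s+1} - x^\star\|^2}{2\eta} \leq \frac{D^2}{2\eta}$, using that $x_r, x^\star \in \calK$ and $\diam(\calK) = D$, while the second contributes $\frac{\eta |I| G^2}{2}$. The total $\frac{D^2}{2\eta} + \frac{\eta |I| G^2}{2}$ is dominated by the stated bound $\frac{D^2}{\eta} + \eta |I| G^2$, completing the argument. The adaptivity is automatic: nothing in the derivation uses $r = 1$, so the same bound holds simultaneously over every sub-interval $[r,s]$.
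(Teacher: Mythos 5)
Your proof is correct and follows the same potential-based telescoping argument as the paper's, including the key observation that the single-step inequality telescopes over any contiguous window $[r,s]$ without requiring a reset at time $r$. The paper likewise derives the factor-of-two-tighter bound $\tfrac{D^2}{2\eta} + \tfrac{\eta|I|G^2}{2}$ and states the looser version, exactly as you do.
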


\begin{proof} Consider an arbitary interval $I= [r,s]\subseteq [T]$
Let $\x^\star = \arg\min_{x\in\mathcal{K}} \sum_{t=r}^s \tilde{f}_t(x)$ and denote $\nabla_t \doteq \nabla \tilde{f}_t(x_t)$ for simplicity. By convexity we have

\begin{equation}\label{eq:convexity_ogd_analysis}
\ftilt(x_t) - \ftilt(x^\star) \leq \nabla_t^\top (x_t - x^\star) 
\end{equation}

By the Pythagorean theorem

\begin{align}
\norm{x_{t+1} - x^\star}^2 &= \norm{\Pi_\calK(x_t - \eta \nabla_t)  - x^\star}^2 \notag \\
&\leq \norm{x_t - \eta \nabla_t - x^\star}^2  \notag\\ 
&\leq \norm{x_t - x^\star}^2 + \eta^2 \norm{\nabla_t}^2 - 2\eta \nabla_t^\top (x_t - x^\star) \label{eq:pythagoras_ogd_analysis}\end{align}
Hence we can bound the interval regret as:
\begin{align*}
\Rightarrow 2 \sum_{t=r}^s \left( f_t(x_t) - f_t(x^\star) \right) & \leq 2 \sum_{t=r}^s \nabla_t^\top (x_t - x^\star) & \Cref{eq:convexity_ogd_analysis}\\
&\leq \sum_{t=r}^s \left( \frac{\norm{x_t - x^\star}^2 - \norm{x_{t+1} - x^\star}^2}{\eta} + \eta G^2 \right) & \Cref{eq:pythagoras_ogd_analysis}\\
&= \frac{\norm{x_r - x^\star}^2}{\eta} + |I| \eta G^2 \\
&\leq \frac{D^2}{\eta} + \eta |I| G^2
\end{align*}
which yields the desired adaptive regret bound.
\end{proof}

Using this simple fact and Lipschitzness we are able to easily prove the desired guarantee for \Cref{alg:Mem-OGD}.

\begin{proof}[Proof of \Cref{thm:ogd_oco_mem_adaptive}]
First note that \Cref{alg:Mem-OGD} is just doing gradient descent on the proxy convex loss functions $\tilde{f}_t:\calK \rightarrow [0,1]$. Hence, as long as we identify the gradient bound we can apply \Cref{fact:ogd_adaptive} to get a bound on $\tilde{f}$-regret. 
Observe that
\begin{align*}
|\ftilt(x) - \ftilt(y)| &= |f_t(x, \ldots, x) - f_t(y, \ldots, y)| \\
&\leq |f_t(x, \ldots, x) - f_t(y, \ldots, x)| + |f_t(y, x, \ldots,x) - f_t(y, y, \ldots, x)| + \\
& \ldots + |f_t(y, \ldots, y, x) - f_t(y, \ldots, y)|\\
&\leq L (h+1) \norm{x-y}
\end{align*}

so $\ftilt$ is $L(h+1)$-Lipschitz and hence has a gradient bound of $L(h+1)$. So we can apply \Cref{fact:ogd_adaptive} to get

\begin{equation}\label{eq:proxy_loss_regret}
    \sum_{t=r}^s \tilde{f}_t(x_t) - \min_{x\in \mathcal{K}}\sum_{t=r}^s \tilde{f}(x) \leq \frac{D^2}{\eta} + \eta |I| L^2 (h+1)^2
\end{equation}

We can use this to bound the adaptive policy regret. First note that

$$\norm{x_t - x_{t-1}} = \norm{\eta\nabla\ftilt(x_{t-1})} \leq \eta L (h+1)$$

and by the triangle inequality \begin{equation}\label{eq:iterate_distance}
\norm{x_t - x_{t-i}} \leq \sum_{j=1}^i \norm{x_{t-j+1} - x_{t-j}} \leq \eta L (h+1) \cdot i
\end{equation}

Using \Cref{eq:iterate_distance} and Lipschitzness we have:

\begin{align}
f_t(x_{t-h}, \ldots, x_t) - \ftilt(x_t) &\leq L\norm{(x_{t-h}, \ldots, x_t) - (x_t, \ldots, x_t)} \notag\\
&\leq L \sqrt{\sum_{i=1}^h \norm{x_t - x_{t-i}}^2} \notag\\
&\leq \eta L^2 (h+1) \cdot \sqrt{\sum_{i=1}^h i^2} \notag\\
&\leq \eta L^2 (h+1)^{5/2} \label{eq:lipscitz_relate}
\end{align}
\end{proof}
Combining everything we get
\begin{align*}
\sum_{t=r}^s f_t(x_{t-h}, \ldots, x_t) - \min_{x\in \calK}\sum_{t=r}^s\ftilt(x) &= \errtext{\sum_{t=r}^s f_t(x_{t-h}, \ldots, x_t) - \ftilt(x_t)}{dist. to proxy loss [\Cref{eq:lipscitz_relate}]} + \errtext{\sum_{t=r}^s \ftilt(x_t) - \min_{x\in\calK}\sum_{t=r}^s \ftilt(x_t)}{$\tilde{f}$-regret [\Cref{eq:proxy_loss_regret}]} \\
&\leq 2 \eta L^2(h+1)^{5/2} |I| + \frac{D^2}{\eta} 
\end{align*}

\subsection{Proof of \Cref{thm:drc_ogd_ltv}}

We first prove that the constructed loss function satisfies key properties for efficient optimization.

\begin{lemma}[Convexity]\label{lem:drc_ogd_convexity}
The loss functions $\ftilt$ constructed in \Cref{line:counterfact_loss_drc_ogd} of \Cref{alg:DRC-OGD} are convex in $M$. 
\end{lemma}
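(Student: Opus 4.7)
The plan is to show that $\tilde f_t$ is convex by exhibiting it as the composition of the convex cost $c_t$ with an affine map in $M$. The argument is structural and requires no numerical estimates.

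First I would unpack the definitions introduced in \Cref{line:counterfact_loss_drc_ogd}. The played control is $u_t(M_h) = \sum_{i=0}^{m-1} M_h^{[i]}\,\xnat_{t-i}$, which is manifestly linear in $M_h$ since the Nature's x's $\xnat_{1:T}$ are policy-independent. The counterfactual state $\hat x_t(M_{0:h-1})$ is the state one would obtain by rolling out the DRC parametrization with parameter $M_j$ at the step at which parameter $M_j$ is used, i.e., using the representation
\begin{equation*}
\hat x_t(M_{0:h-1}) \;=\; \xnat_t \;+\; \sum_{i=0}^{h-1} G_{t-1}^{[i]}\, u_{t-1-i}(M_i)\,,
\end{equation*}
together with the linearity of each $u_{t-1-i}(M_i)$ in $M_i$. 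Consequently the map $(M_0,\dots,M_h)\mapsto (\hat x_t(M_{0:h-1}),\,u_t(M_h))$ is affine.

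Next I would invoke \Cref{asm:cost}, which guarantees $c_t$ is convex on $\R^{\dimx}\times\R^{\dimu}$. Since the composition of a convex function with an affine function is convex, $f_t(M_0,\dots,M_h) = c_t(\hat x_t(M_{0:h-1}), u_t(M_h))$ is convex jointly in $(M_0,\dots,M_h)$ over the product space $\calM(m,R_{\calM})^{h+1}$.

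Finally, note that $\tilde f_t(M) = f_t(M,\dots,M)$ is the pullback of $f_t$ by the linear diagonal embedding $M \mapsto (M,\dots,M)$, which is itself an affine map $\calM(m,R_{\calM}) \to \calM(m,R_{\calM})^{h+1}$. Composition of a convex function with an affine map preserves convexity, so $\tilde f_t$ is convex in $M$, as claimed. No step here is a genuine obstacle; the only care needed is checking that the expression for $\hat x_t$ really is linear in the concatenation $(M_0,\dots,M_{h-1})$ (as opposed to the nonconvex feedback case), which is exactly the point of working with the \drc{} parametrization and the policy-independent sequence $\xnat_{1:T}$.
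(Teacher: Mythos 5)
Your proof is correct and follows essentially the same route as the paper's: express $\hat x_t$ and $u_t$ as affine (resp.\ linear) functions of the $M$-parameters via the Markov-operator unrolling, then appeal to the fact that composing the convex cost $c_t$ with an affine map preserves convexity. You are slightly more explicit than the paper in separating joint convexity of $f_t$ in $(M_0,\dots,M_h)$ from the convexity of $\tilde f_t$ via the diagonal embedding, but this is the same argument, just with one more step spelled out.
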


\begin{proof} By definition we have that:
\begin{align}
\hat{x}_t(M) &= \xnat_t + \sum_{i=1}^h G_t^{[i]} u_{t-i} \notag\\
&= \xnat_t + \sum_{i=1}^h G_t^{[i]} \left(\sum_{j=0}^{m-1} M^{[j]} \xnat_{t-i-j}\right) \label{eq:proxy_state_expansion}
\end{align}
which is affine in $M$. Even more simply, we have $u_t(M) = \sum_{i=0}^{m-1} M^{[i]} \xnat_{t-i}$.

Since $\hat{x}_t(M)$ and $u_t(M)$ are affine, and, respectively, linear functions of $M$ and composition with the convex cost $c_t$ preserves convexity we get the desired property.
\end{proof}

\begin{lemma}[Lipschitzness]\label{lem:drc_ltv_lipschitz}
The loss functions $f_t$ constructed in \Cref{line:counterfact_loss_drc_ogd} of \Cref{alg:DRC-OGD} are $L_f$ coordinate-wise Lipschitz for $L_f = 3LR_{\text{nat}}^2 R_G^2 R_\calM \sqrt{m}$.
\end{lemma}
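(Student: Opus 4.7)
The plan is to bound, for each block index $k \in \{0,\ldots,h\}$, the change in $f_t$ produced by perturbing only $M_k$, with the other blocks held fixed inside $\mathcal{M}(m,R_\calM)$. Since \Cref{lem:drc_ogd_convexity} writes $f_t = c_t \circ (\hat{x}_t, u_t)$ with both $\hat{x}_t$ and $u_t$ affine in $(M_0,\ldots,M_h)$, this reduces to chaining (i) a uniform bound on $\|\nabla c_t\|$ over the range of $(\hat{x}_t, u_t)$ as the arguments vary in $\mathcal{M}^{h+1}$, with (ii) the per-block Lipschitz constants of the affine maps $\hat{x}_t$ and $u_t$.

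For step (i), I would note $\|u_\tau\| \le R_\calM R_{\mathrm{nat}}$ via $\sum_j\|M^{[j]}\|_{\op}\le R_\calM$ and $\|\xnat\|\le R_{\mathrm{nat}}$, and then $\|\hat{x}_t\|\le R_{\mathrm{nat}}(1 + R_G R_\calM)$ from the expansion $\hat{x}_t = \xnat_t + \sum_{i=1}^{h} G_t^{[i]} u_{t-i}$ together with $\sum_i\|G_t^{[i]}\|_{\op}\le R_G$ from \Cref{asm:Gdecay}. Plugging into the subgradient bound in \Cref{asm:cost} gives $\|\nabla c_t(\hat{x}_t,u_t)\| \le L \max\{1,\|\hat{x}_t\|+\|u_t\|\} \le 3L R_{\mathrm{nat}} R_G R_\calM$, where constants are absorbed using $R_G \ge 1$ (and the bound reduces to $L$ trivially when all norms fall below $1$).

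For step (ii), I would split on whether $k = h$ or $k < h$. In the first case, $u_t(M_h) = \sum_{j=0}^{m-1} M_h^{[j]}\xnat_{t-j}$ is linear in $M_h$, and combining $\|M^{[j]}\|_{\op}\le \|M^{[j]}\|_F$ with Cauchy--Schwarz across the $m$ blocks yields $\|u_t(M_h)-u_t(M_h')\| \le R_{\mathrm{nat}}\sqrt{m}\,\|M_h-M_h'\|_F$ while $\hat{x}_t$ is unchanged. In the second case, $M_k$ enters $\hat{x}_t$ only through the single term $G_t^{[h-k]}\,u_{t-(h-k)}(M_k)$, so the same computation scaled by $\|G_t^{[h-k]}\|_{\op}\le R_G$ gives $\|\hat{x}_t-\hat{x}_t'\|\le R_G R_{\mathrm{nat}}\sqrt{m}\,\|M_k-M_k'\|_F$ while $u_t$ is unchanged.

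Applying the mean value inequality to $c_t$ along the segment from the unperturbed to the perturbed argument --- a segment which stays in the region covered by the norm bounds of step (i) by convexity of $\mathcal{M}$ --- combines the two pieces multiplicatively and yields a per-block Lipschitz constant of at most $3L R_{\mathrm{nat}} R_G R_\calM \cdot R_G R_{\mathrm{nat}}\sqrt{m} = 3L R_{\mathrm{nat}}^2 R_G^2 R_\calM \sqrt{m}$ in the case $k<h$, and a strictly smaller bound (lacking one factor of $R_G\ge 1$) in the case $k=h$. The only real bookkeeping nuisance is the re-indexing between the OCOwMem convention $(M_0,\ldots,M_h)$ and the actual time-indexed decisions $(M_{t-h},\ldots,M_t)$, which determines for each $k$ which past input $u_{t-i}$ the block $M_k$ influences; once this is spelled out, the rest is routine.
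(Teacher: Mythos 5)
Your argument is correct and follows essentially the same route as the paper: bound $\|\hat{x}_t\|$ and $\|u_t\|$ uniformly over $\mathcal{M}^{h+1}$, invoke the sub-quadratic growth of $\nabla c_t$ to obtain the prefactor $3LR_{\mathrm{nat}}R_GR_{\calM}$, then bound the per-block affine sensitivity of $\hat{x}_t$ (via $\|G_t^{[h-k]}\|_{\op}\le R_G$) and of $u_t$ using the $\|\cdot\|_{\op}\le\|\cdot\|_F$ comparison and Cauchy--Schwarz across the $m$ sub-blocks to get the $R_GR_{\mathrm{nat}}\sqrt{m}$ factor. The only stylistic difference is that you phrase the final chaining as a mean-value inequality while the paper folds it into a direct ``sub-quadratic Lipschitzness'' estimate; the constants and the $k=h$ versus $k<h$ split match.
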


\begin{proof} Observe that by \Cref{eq:proxy_state_expansion} we have $\norm{\hat{x}_t(M_{0:h)}} \leq R_{\text{nat}}(1+ R_G R_\calM)$. Straightforwardly, $\norm{u_t(M_h)} \leq R_{\text{nat}} R_\calM$ as well.

For an arbitrary $i\in \overline{0, h}$, denoting $M_{0:h} = (M_0, \ldots, M_i, \ldots M_h)$, $\tilde{M}_{0:h} = (M_0, \ldots,\tilde{M}_i, \ldots, M_h)$ and using the sub-quadratic lipschitzness of the costs we have:

\begin{align*}
|f_t(M_{0:h}) - f_t(\tilde{M}_{0:h})| &= |c_t(\hat{x}_t(M_{0:h}), u_t(M_{0:h})) - c_t(\hat{x}_t(\tilde{M}_{0:h}), u_t(\tilde{M}_{0:h}))| \\
&\leq 3 L R_{\text{nat}} R_G R_\calM \bigg| \bigg| G_t^{[h-i]} \left(\sum_{j=0}^{m-1} (M_{i}^{[j]} - \tilde{M}_{i}^{[j]}) \xnat_{t-i-j}\right) \bigg| \bigg| & (i < h)\\
& \text {or }\leq 3 L R_{\text{nat}} R_G R_\calM \left(\sum_{j=0}^{m-1} (M_{i}^{[j]} - \tilde{M}_{i}^{[j]}) \xnat_{t-i-j}\right) & (i = h)\\
&\leq 3L R_{\text{nat}}^2 R_G^2 R_\calM \sqrt{m} \|M_i - \tilde{M}_i\|_F
\end{align*}
so the function is coordinate-wise Lipschitz with constant $L_f = 3LR_{\text{nat}}^2 R_G^2 R_\calM \sqrt{m}$.
\end{proof}

\begin{lemma}[Euclidean Diameter]\label{lem:drc_ltv_diam} The euclidean diameter of $\mathcal{M}(m, R_\mathcal{M})$ is at most $D = 2 \sqrt{m \cdot \min\{d_x, d_u\}} R_\mathcal{M}$.
\end{lemma}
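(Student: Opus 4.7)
The plan is to bound the Frobenius norm of any element $M \in \calM(m, R_\calM)$ and then invoke the triangle inequality. Since the diameter is taken with respect to the Euclidean norm on the concatenated parameter vector, I interpret $\|M\|$ as $\|M\|_F = \sqrt{\sum_{i=0}^{m-1}\|M^{[i]}\|_F^2}$, the Frobenius norm of the stacked blocks.

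The key inequality to use is that for any matrix $A \in \R^{d_u \times d_x}$, we have $\|A\|_F \le \sqrt{\rank(A)} \cdot \|A\|_{\op} \le \sqrt{\min\{d_x,d_u\}} \cdot \|A\|_{\op}$. First I would apply this block-wise to each $M^{[i]}$. Combined with the constraint defining $\calM(m,R_\calM)$, namely $\sum_{i=0}^{m-1}\|M^{[i]}\|_{\op} \le R_\calM$, each individual block satisfies $\|M^{[i]}\|_{\op} \le R_\calM$, and therefore $\|M^{[i]}\|_F^2 \le \min\{d_x,d_u\} \cdot R_\calM^2$.

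Summing over the $m$ blocks gives the crude bound
\begin{equation*}
\|M\|_F^2 \;=\; \sum_{i=0}^{m-1} \|M^{[i]}\|_F^2 \;\le\; m \cdot \min\{d_x,d_u\} \cdot R_\calM^2,
\end{equation*}
so $\|M\|_F \le \sqrt{m \cdot \min\{d_x,d_u\}} \cdot R_\calM$. The triangle inequality then yields $\|M - \tilde M\|_F \le \|M\|_F + \|\tilde M\|_F \le 2\sqrt{m \cdot \min\{d_x,d_u\}} \cdot R_\calM$ for any $M,\tilde M \in \calM(m,R_\calM)$, which is exactly the claimed bound $D$.

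There is no real obstacle here: the argument is a two-line combination of the standard Frobenius-versus-operator-norm inequality and the defining constraint of $\calM(m,R_\calM)$. The only subtlety worth flagging is the choice of which norm witnesses the $\min\{d_x,d_u\}$ factor (i.e.\ that the rank of each block is bounded by $\min\{d_x,d_u\}$, which is what makes the bound dimension-adaptive rather than merely $\sqrt{d_x d_u}$). Note that a slightly tighter bound of $2\sqrt{\min\{d_x,d_u\}}\,R_\calM$ is achievable by instead using $\|M\|_F \le \sum_i \|M^{[i]}\|_F \le \sqrt{\min\{d_x,d_u\}}\sum_i \|M^{[i]}\|_{\op}$, but the claimed $\sqrt{m}$-looser version suffices for the downstream regret analysis and follows from the simpler per-block bound above.
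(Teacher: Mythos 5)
Your proof is correct and essentially the same as the paper's: both bound each block via $\|M^{[i]}\|_F \le \sqrt{\min\{d_x,d_u\}}\,\|M^{[i]}\|_{\op} \le \sqrt{\min\{d_x,d_u\}}\,R_\calM$, sum the squares over the $m$ blocks, and double for the diameter (the paper routes through $\max_i$ first, but that is cosmetically different, not substantively). Your aside is also correct: using $\|M\|_F \le \sum_i\|M^{[i]}\|_F \le \sqrt{\min\{d_x,d_u\}}\sum_i\|M^{[i]}\|_{\op} \le \sqrt{\min\{d_x,d_u\}}\,R_\calM$ removes the $\sqrt{m}$ factor entirely and is strictly tighter than what the lemma claims.
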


\begin{proof}
That for an arbitrary $M\in \mathcal{M}(m, R_\mathcal{M})$, we have

\begin{align*}
\|M\|_F &= \sqrt{\sum_{i=0}^{m-1} \|M^{[i]}\|_F^2} \\
&\leq \sqrt{m \cdot \max_{i\in[m-1]}\|M^{[i]}\|_F^2} \\
&= \sqrt{m} \max_{i\in[m-1]} \min\{d_x, d_u\} \|M^{[i]}\|_{op} \\
&\leq \sqrt{m \cdot \min\{d_x, d_u\}} \|M\|_{\ell_1, op} \\
&\leq \sqrt{m \cdot \min\{d_x, d_u\}} R_\mathcal{M}
\end{align*}

and the euclidean diameter is at most twice the maximal euclidean norm, concluding our statement.
\end{proof}

The three lemmas above will allow us to use the results in \Cref{sec:ada_ogd_mem} to obtain adaptive regret guarantees in terms of $f_t$ which truncated the effect on the state of actions further than $h$ in the past. To convert guarantees in terms of $f_t$ to ones in terms of $c_t$, we prove that the effect of the past is minimal:

\begin{lemma}[Truncation Error]\label{lem:truncation_error} For a changing DRC policy that acts according to $M_1, \ldots, M_t$ up to time $t$ we have that:
$$c_t(x_t, u_t) - c_t\left(\hat{x}_t(M_{t-h:t-1}), u_t(M_t)\right) \leq 3L R_{\text{nat}}^2 R_\calM^2 R_G \psi(h)$$
\end{lemma}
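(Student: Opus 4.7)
}

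The plan is to reduce the cost difference to a state-norm difference, and then bound that state-norm difference via the truncation tail of the Markov operator. Notice that both evaluations of $c_t$ on the left-hand side use the \emph{same} control input $u_t = u_t(M_t)$, so the only discrepancy lies in the state argument. Thus by the sub-quadratic gradient bound in \Cref{asm:cost} together with a mean-value/convexity-style bound, one has
\begin{equation*}
c_t(x_t,u_t) - c_t(\hat{x}_t, u_t) \le L \cdot \max\{1, \|x_t\| + \|\hat{x}_t\| + \|u_t\|\} \cdot \|x_t - \hat{x}_t\|,
\end{equation*}
so it suffices to bound the two factors on the right separately.

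For the state difference, I would observe that both $x_t$ and $\hat{x}_t$ admit the representation $\xnat_t$ plus a Markov-operator-weighted sum of past inputs $u_{t-i}$, where those past inputs are precisely the ones produced by the DRC policies $M_{t-i}\in\calM(m,\radM)$. The proxy $\hat{x}_t(M_{t-h:t-1})$ keeps only the indices $i=1,\dots,h$, so $x_t - \hat{x}_t$ equals exactly the tail sum $\sum_{i>h} G^{[i]} u_{t-i}$. Each played input satisfies $\|u_{t-i}\| \le \radM \cdot \radnat$ via \Cref{defn:policy_drc} combined with \Cref{asm:radnat}, and the tail of the Markov operator obeys $\sum_{i>h} \|G^{[i]}\|_{\op} \le \psi(h)$ by \Cref{asm:Gdecay}. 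Factoring the input bound out of the sum yields
\begin{equation*}
\|x_t - \hat{x}_t\| \;\le\; \radnat\,\radM\,\psi(h).
\end{equation*}

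For the ambient norms, essentially the same decomposition but over the \emph{full} sum gives $\|x_t\|, \|\hat{x}_t\| \le \radnat(1+R_G \radM)$ (using $\|G\|_{\ell_1,\op}\le R_G$), and $\|u_t\| \le \radnat \radM$ directly. Summing these three bounds and absorbing constants under $R_G \ge 1$ yields $\|x_t\| + \|\hat{x}_t\| + \|u_t\| \le 3\,\radnat R_G \radM$. Combining with the state-difference bound above produces the claimed $3L\,\radnat^2\,\radM^2\,R_G\,\psi(h)$ upper bound.

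I do not expect any serious obstacle here: the argument is essentially a bookkeeping exercise that was already rehearsed in the Lipschitzness computation of \Cref{lem:drc_ltv_lipschitz}. The only subtlety worth being careful about is making sure that we use the \emph{actual} played $u_{t-i}$ (so that both $x_t$ and $\hat{x}_t$ share the same past trajectory prefix) rather than the counterfactual inputs that would arise under a fixed $M$; this is what lets the tail difference collapse cleanly to $\sum_{i>h} G^{[i]} u_{t-i}$ with no cross-terms.
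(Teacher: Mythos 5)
Your proposal mirrors the paper's proof step for step: both observe that the two cost evaluations share the same control $u_t(M_t)$, reduce the cost gap to the state gap via the gradient bound in \Cref{asm:cost}, identify that state gap as the truncation tail $\sum_{i\ge h} G_{t-1}^{[i]} u_{t-1-i}$, and bound its norm by $\radnat R_\calM \psi(h)$ using \Cref{asm:Gdecay}. One small bookkeeping point: the mean-value argument actually yields a Lipschitz factor $L\max\{1,\max\{\|x_t\|,\|\hat{x}_t\|\}+\|u_t\|\}\le 3L\radnat R_G R_\calM$ (given $R_G\ge 1$, $R_\calM\ge 1$), whereas your coarser sum $\|x_t\|+\|\hat{x}_t\|+\|u_t\|\le\radnat(2+2R_G R_\calM+R_\calM)$ is only dominated by $3\radnat R_G R_\calM$ when $R_\calM(R_G-1)\ge 2$ --- replacing the sum of the two state norms by their maximum recovers the claimed constant without changing anything else.
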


\begin{proof} By the sub-quadratic Lipschitzness (and noting $\|u_t\|_2 \leq R_\mathcal{M} R_{\text{nat}}$, $\|x_t\|_2 \leq R_nat(1+R_G R_\mathcal{M})$, and $u_t = u_t(M_t) $) we have:

\begin{align*}
c_t(x_t, u_t) - c_t\left(\hat{x}_t(M_{t-h:t-1}), u_t(M_t)\right) &\leq
 3 L R_{\text{nat}} R_G R_\calM \norm{x_t(M_{1:{t-1}} - \hat{x}_t(M_{t-h:t-1})} \\
 &=  3 L R_{\text{nat}} R_G R_\calM \norm{\sum_{i=h}^{t-1} G_{t-1}^{[i]}u_{t-1-i}} \\
 &\leq 3L R_{\text{nat}}^2 R_\calM^2 R_G \psi(h)
\end{align*}
\end{proof}

Having proven all these preliminary results the proof of the main theorem is immediate:

\begin{proof}[Proof of \Cref{thm:drc_ogd_ltv}]
By the definition of the proxy loss in Line 7 of \Cref{alg:DRC-OGD}, we can expand the regret of \Cref{alg:DRC-OGD} over interval $I=[r,s]$ as:

\begin{align*}
\text{Regret} &= \sum_{t=r}^s c_t(x_t, u_t) - \min_{M \in \mathcal{M}} \sum_{t=r}^s c_t(x_t^M, u_t^M) \\
&= \errtext{\sum_{t=r}^s c_t(x_t, u_t) - \sum_{t=r}^s c_t(\hat{x}_t(M_{t-h:t-1}), u_t(M_t))}{truncation error I} + \errtext{\sum_{t=r}^s f_t(M_{t-h:t}) - \min_{M\in \mathcal{M}} \ftilt(M)}{f-regret} \\&+ \errtext{\min_{M\in \mathcal{M}} \sum_{t=r}^s c_t(\hat{x}_t(M), u_t(M)) - \min_{M \in \mathcal{M}} \sum_{t=r}^s c_t(x_t^M, u_t^M)}{truncation error II}
\end{align*}

The first truncation error is bounded directly by \Cref{lem:truncation_error}. For the second truncation error, let $M^\star = \arg \min_{M \in \mathcal{M}} \sum_{t=r}^s c_t(x_t^M, u_t^M)$. Clearly we have $$\min_{M\in \mathcal{M}} \sum_{t=r}^s c_t(\hat{x}_t(M), u_t(M)) \leq \sum_{t=r}^s c_t(\hat{x}_t(M^\star), u_t(M^\star))$$ and hence we can apply \Cref{lem:truncation_error} to bound:

\begin{align*}
\text{truncation error II} &= \min_{M\in \mathcal{M}} \sum_{t=r}^s c_t(\hat{x}_t(M), u_t(M)) - \min_{M \in \mathcal{M}} \sum_{t=r}^s c_t(x_t^M, u_t^M) \\
&\leq \sum_{t=r}^s c_t(\hat{x}_t(M^\star), u_t(M^\star)) -\sum_{t=r}^s c_t(x_t^{M^\star}, u_t^{M^\star}) \\
&\leq 3L R_{\text{nat}}^2 R_\calM^2 R_G \psi(h) |I|
\end{align*}

Finally, due to \Cref{lem:drc_ogd_convexity}, \Cref{lem:drc_ltv_lipschitz} and \Cref{lem:drc_ltv_diam} we can apply \Cref{thm:ogd_oco_mem_adaptive} to get

$$\text{f-regret} \leq \frac{D^2}{\eta} + 2\eta L_f^2(h+1)^{5/2}|I|$$

Summing everything up and plugging in the Lipschitz and diameter constants, we have:

\begin{align*}
\text{Regret} &\leq 6L R_{\text{nat}}^2 R_\calM^2 R_G \psi(h) |I| + \frac{4m\min\{d_x, d_u\}R_\calM^2}{\eta} + 18 \eta (L^2 R_{\text{nat}}^4 R_G^4 R_\calM^2)m (h+1)^{5/2} |I|
\end{align*}
Setting $\eta \doteq \frac{\sqrt{\min\{d_x, d_u\}}}{2LR_G^2 R_{\text{nat}}^2 (h+1)^{5/4} \sqrt{T}}$, we get

\begin{align*}
\text{Regret} &\leq 6L R_{\text{nat}}^2 R_\calM^2 R_G \psi(h) |I|  + 17 \sqrt{\min\{d_x, d_u\}} (LR_G^2 R_{\text{nat}}^2R_\calM^2) \cdot (h+1)^{5/4} m \cdot \sqrt{T} \\
&\leq 6L R_{\text{sys}}^2 \left(3 \sqrt{d_{\min}}m (h+1)^{5/4} \sqrt{T} + \psi(h) |I|\right)
\end{align*} 
where we denote $R_{\text{sys}}\doteq R_G R_\calM R_{\text{nat}}$ and $d_{\text{min}} \doteq \min\{d_x, d_u\}$.
\end{proof}

\section{Estimation of Time-Varying Vector Sequences}\label{sec:sys_est}

In this section we segway into the setting of online prediction under a partial information model. The goal is to estimate a sequence of vectors under limited noisy feedback where the feedback access is softly restricted via additional cost. As shown in the following section, this setting captures the \emph{system identification} phase of controlling an unknown time-varying dynamical system. We first extensively study the simplified setting as below, and afterwards transfer our findings into meaningful results in control.

Formally, consider the following repeated game between a learner and an \emph{oblivious} adversary: at each round $t \in [T]$, the adversary picks a target vector $\matzst_t \in \calK$ from a convex decision set $\calK$ contained in a $0$-centered ball of radius $R_z$; simultaneously, the learner selects an estimate $\matzhat_t \in \calK$ and suffers quadratic loss $f_t(\matzhat_t) = \| \matzhat_t - \matzst_t \|^2$. The only feedback the learner has access to is via the following noisy and costly oracle.

\begin{oracle}[Noisy Costly Oracle]\label{orac:noisy} At each time $t \in [T]$, the learner selects a decision $b_t \in \{0,1\}$ indicating whether a query is sent to the oracle. If $b_t = 1$, the learner receives an unbiased estimate $\matztil_t$ as response such that $\| \matztil_t \| \leq \tilde{R}_z$ and $\E[\matztil_t \mid \calF_t, b_t=1] = \matzst_t$ where $\calF_t$ is the filtration sigma algebra generated by the entire sequence $\matzst_{1:T}$ and the past $\matztil_{1:t-1}, b_{1:t-1}$. A completed query results in a unit cost for the learner denoted $b_t$ as well by abuse of notation.
\end{oracle}

The idea behind this setting is to model a general estimation framework for a time-varying system which focuses only on exploration. Committing to exploration, however, cannot realistically be free hence the additional cost for the number of calls to \Cref{orac:noisy}. Our goal is to design an algorithm $\calA$ that minimizes the quadratic loss regret along with the extra oracle cost, defined over each interval $I = [r, s] \subseteq [T]$ as
\begin{equation}\label{eq:est_regret}
    \mathrm{Regret}_I(\calA; \lambda) = \Exp\left[ \sum_{t \in I} f_t(\matzhat_t) \right] - \min_{\matz \in \calK} \sum_{t \in I} f_t(\matz) + \lambda \Exp \left[ \sum_{t \in I} b_t \right],
\end{equation}
where $\lambda \geq 0$ is a scaling constant independent of the horizon $T$. For the $I = [T]$ entire interval, we use $T$ as a subscript instead of $[T]$. The expectation above is taken over both the (potential) randomness of the algorithm and the stochasticity of the oracle responses; it is taken in the round order $t=1, \dots, T$ at each round conditioning on the past iterations. 

In terms of estimation itself, the metric to consider over interval $I$ is given by $\mathrm{Regret}_I(\calA; 0)$ that ignores the oracle call costs. Furthermore, we observe that the best-in-hindsight term in \eqref{eq:est_regret} is in fact a fundamental quantity of the vector sequence as defined below. This formulation will be used, and is more appropriate, when transferring our findings to the setting of control.

\begin{definition}\label{def:variance}
Define the variability of a time-varying vector sequence $\matz_{1:T}$ over an interval $I \subseteq [T]$ to be
\begin{equation*}
    \var_I(\matz_{1:T}) = \frac{1}{|I|}\min_{\matz \in \calK} \sum_{t \in I} \| \matz - \matz_t \|^2 = \frac{1}{|I|} \sum_{t\in I} ||\matzbar_I - \matz_t||^2,
\end{equation*}
where $\matzbar_I = |I|^{-1} \sum_{t\in I} \matz_t \in \calK$ is the empirical average of the members of the sequence that correspond to $I$.
\end{definition}

This definition concludes the setup of our abstraction to general estimation of vector sequences. Regarding algorithmic results, we first present a base method that achieves logarithmic regret over the entire trajectory $[1, T]$. The idea is for the learner to uniformly query \Cref{orac:noisy} with probability $p$: once an estimate $\matztil_t$ is received, construct a stochastic gradient with expectation equal to the true gradient, and perform a gradient update. The algorithm is described in detail in \Cref{alg:est_base}, and its guarantee given in the theorem below.

\begin{algorithm}
\caption{Base Estimation Algorithm}\label{alg:est_base}
\begin{algorithmic}[1]
    \State \textbf{Input:} $p$, $\matzhat_1 \in \calK$
    \For{$t=1, \ldots, T$}
    \State{}\textbf{Play} iterate $\matzhat_t$
    \State{}\textbf{Draw/Receive} $b_t \sim \Bernoulli(p)$ 
    \If{$b_{t} = 1$}
    \State{}\textbf{Receive} estimate $\matztil_t$ from \Cref{orac:noisy}
    \State{}\textbf{Construct} importance weighted gradient $\nabtil_t := \frac{1}{p}\left(\matzhat_t - \matztil_t\right)$
    \Else
    \State{}\textbf{Set} $\nabtil_t = 0$. 
    \EndIf
    \State{}\textbf{Update} $\matzhat_{t+1} = \Proj_{\calK}(\matzhat_t - \eta_t \nabtil_t)$, $\eta_t = \frac{1}{t}$.
    \EndFor
\end{algorithmic}
\end{algorithm}

\begin{theorem}\label{thm:est_base} Given access to queries from \Cref{orac:noisy}, with stepsizes $\eta_t = \frac{1}{t}$, \Cref{alg:est_base} enjoys the following regret guarantee:
\begin{equation}\label{eq:est_regret_bound}
\mathrm{Regret}_T(\Cref{alg:est_base}; \lambda) \leq \frac{(R_z+\tilde{R}_z)^2 (1+\log T)}{p} + \lambda pT ~.
\end{equation}
\end{theorem}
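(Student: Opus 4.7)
The structure is a standard stochastic projected gradient analysis on a strongly convex function, carefully augmented with the importance-weighting induced by the Bernoulli oracle queries. The main observation is that the ``gradient'' $\nabtil_t = \tfrac{1}{p}(\matzhat_t - \matztil_t) \cdot \mathbb{I}\{b_t=1\}$ is \emph{not} an unbiased estimator of $\nabla f_t(\matzhat_t) = 2(\matzhat_t - \matzst_t)$, but rather of the gradient of the rescaled loss $g_t(\matz) := \tfrac{1}{2}\|\matz - \matzst_t\|^2$, which is $1$-strongly convex. Since $f_t = 2 g_t$, any $g$-regret bound doubles to an $f$-regret bound.

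\textbf{Step 1: unbiasedness and variance of the stochastic gradient.} Conditioning on the filtration $\calF_t$ generated by $\matzhat_{1:t}$ and the outcomes of past Bernoulli draws and oracle responses, one has $\E[\nabtil_t \mid \calF_t] = p \cdot \tfrac{1}{p}(\matzhat_t - \matzst_t) + (1-p)\cdot 0 = \matzhat_t - \matzst_t = \nabla g_t(\matzhat_t)$. For the second moment, the triangle inequality and the oracle bounds $\|\matzhat_t\| \le R_z$ and $\|\matztil_t\| \le \tilde{R}_z$ give $\E[\|\nabtil_t\|^2 \mid \calF_t] \le (R_z + \tilde{R}_z)^2/p$.

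\textbf{Step 2: one-step descent inequality.} Using the Pythagorean property of Euclidean projection and expanding,
\begin{align*}
\|\matzhat_{t+1} - \matz\|^2 \le \|\matzhat_t - \matz\|^2 - 2\eta_t \langle \nabtil_t, \matzhat_t - \matz\rangle + \eta_t^2 \|\nabtil_t\|^2.
\end{align*}
Taking conditional expectation and applying $1$-strong convexity of $g_t$, i.e.\ $\langle \nabla g_t(\matzhat_t), \matzhat_t - \matz \rangle \ge g_t(\matzhat_t) - g_t(\matz) + \tfrac{1}{2}\|\matzhat_t - \matz\|^2$, yields
\begin{align*}
\E[g_t(\matzhat_t) - g_t(\matz) \mid \calF_t] \le \frac{1 - \eta_t}{2\eta_t}\|\matzhat_t - \matz\|^2 - \frac{1}{2\eta_t}\E[\|\matzhat_{t+1} - \matz\|^2 \mid \calF_t] + \frac{\eta_t (R_z + \tilde{R}_z)^2}{2p}.
\end{align*}

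\textbf{Step 3: telescoping with $\eta_t = 1/t$.} With this stepsize the coefficients $(1 - \eta_t)/(2\eta_t) = (t-1)/2$ and $1/(2\eta_t) = t/2$ telescope exactly across $t = 1, \ldots, T$, leaving only a nonpositive boundary term and $\sum_{t=1}^T \eta_t/2 \le (1 + \log T)/2$. This gives $\sum_t \E[g_t(\matzhat_t) - g_t(\matz^\star_I)] \le \tfrac{(R_z+\tilde{R}_z)^2(1+\log T)}{2p}$ for any fixed comparator, and doubling to pass from $g_t$ to $f_t = 2g_t$ produces the first term of the claimed bound. Finally, $\E[\sum_{t=1}^T b_t] = pT$ yields the additive $\lambda p T$ oracle-cost term.

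\textbf{Main obstacle.} There is no deep obstacle; the work is entirely bookkeeping. The two spots where one has to be careful are (i) noticing that the algorithmic ``gradient'' corresponds to $g_t = \tfrac{1}{2}f_t$ rather than $f_t$, so the stepsize $\eta_t = 1/t$ is the correct $1/(\mu t)$ choice for the $\mu = 1$ strongly convex surrogate, and (ii) handling the conditional expectations so that the Bernoulli randomness $b_t$ is integrated out cleanly at each step before the telescoping sum is taken.
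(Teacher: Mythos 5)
Your proposal is correct and follows essentially the same route as the paper: both recast the update as projected online stochastic gradient descent on the $1$-strongly convex surrogate $g_t(\matz) = \tfrac{1}{2}\|\matz - \matzst_t\|^2$, use unbiasedness of the importance-weighted gradient together with the second-moment bound $(R_z+\tilde{R}_z)^2/p$, apply the Pythagorean inequality, and telescope with $\eta_t = 1/t$ to collect the harmonic sum $1+\log T$ before doubling back to $f_t$ and adding the $\lambda p T$ query cost. The only cosmetic difference is that you integrate out the Bernoulli randomness via conditional expectations at each step, whereas the paper isolates the same two facts (unbiasedness in the inner product against the comparator, and the variance bound) into a standalone lemma before telescoping.
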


\begin{proof}[Proof of \Cref{thm:est_base}]
To prove the bound in the theorem, we construct the following proxy loss functions: if $b_t=1$ denote $\tilde{f}_t(\matz) = \frac{1}{2p} \| \matz - \matztil_t \|^2$, otherwise for $b_t=0$ denote $\tilde{f}_t(\matz) = 0$. The stochastic gradients of these functions at the current iterate can be written as $\nabla_{\matz} \tilde{f}_t(\matzhat_t) = \nabtil_t = \frac{\I\{b_t = 1\}}{p}(\matzhat_t - \matztil_t)$  and are used by the algorithm in the update rule. The idealized gradients are $\nabla_t = (\matzhat_t - \matzst_t)$ which we would use given access to the true targets $\matzst_t$. Recall that $\calF_t$ denotes the sigma-algebra generated by the true target sequence $\matz^{\star}_{1:T}$, as well as randomness of the past rounds $b_{1:t-1}$ and $\matztil_{1:t-1}$. Note then that  $\matzhat_t$ is $\calF_t$ measurable. We characterize two essential properties of the stochastic gradients: 
\begin{lemma}\label{lem:grad_ids} Let $\matzbar^{\star} \in \calK$ be the minimizer of $\sum_{t=1}^T \|\matz - \matzst_t\|^2$, i.e. empirical average of $\matzst_{1:T}$. Then,
\begin{equation*}
\Exp[ \langle \nabtil_t, \matzhat_t - \matzbar^{\star} \rangle ] = \Exp[\langle \nabla_t, \matzhat_t - \matzbar^{\star} \rangle] ~.
\end{equation*}
Moreover, $\Exp[\|\nabtil_t\|^2] \le (R_z+\tilde{R}_z)^2/p$.
\end{lemma}
\begin{proof}
Using the \Cref{orac:noisy} assumption on $\matztil_t$, we get
\begin{align*}
\Exp[\nabtil_t \mid \calF_t] &=\frac{1}{p}\Exp[ \I\{b_t = 1\} (\matzhat_t - \matztil_t) \mid \calF_t]\\
&=  \frac{1}{p}\Exp[ \I\{b_t = 1\} \cdot \matzhat_t  \mid \calF_t] - \frac{1}{p}\Exp[ \I\{b_t = 1\} \cdot \matztil_t \mid \calF_t] \\
&= \matzhat_t - \Exp[ \matztil_t \mid \calF_t, b_t = 1] \overset{(i)}{=} \matzhat_t - \matzstt  = \nabla_t,
\end{align*}
where $(i)$ uses the unbiasedness property of \Cref{orac:noisy}. 
Next, since $\matzbar^{\star}$ is determined by $\matz^{\star}_{1:T}$ it is therefore $\calF_t$ measurable for all $t$. Thus, $\matzhat_t - \matzbar^{\star}$ is $\calF_t$ measurable, so
\begin{align*}
\Exp[ \langle\nabtil_t, \matzhat_t - \matzbar^{\star} \rangle] &= \Exp\left[ \langle\Exp[\nabtil_t \mid \calF_t], \matzhat_t - \matzbar^{\star} \rangle \right] = \Exp[\langle \nabla_t, \matzhat_t - \matzbar^{\star} \rangle] ~.
\end{align*}
Finally, using the norm bound $\| \matzhat_t \| \leq R_z$ since $\matzhat_t \in \calK$ and the assumption that $\| \matztil_t \| \leq \tilde{R}_z$ from \Cref{orac:noisy}, we conclude
\begin{equation*}
\Exp[\|\nabtil_t\|^2] = \frac{1}{p^2}\Exp[\I\{b_t = p\} \|\matztil_t - \matzhat_t\|^2] \leq \frac{1}{p^2}\Exp[\I\{b_t = p\} (R_z+\tilde{R}_z)^2] = \frac{(R_z+\tilde{R}_z)^2}{p} ~.
\end{equation*}
\end{proof}

The rest of the theorem proof mirrors that of Theorem 3.3 in \cite{hazan2019introduction} but accounting for the stochastic gradient. We can view \Cref{alg:est_base} as running online stochastic gradient descent over strongly convex functions on losses  $\frac12 f_t(\matz) =  \frac{1}{2} \| \matz - \matzst_t \|^2$ with true gradient $\nabla_t$ and stochastic gradient $\nabtil_t$ at the iterate $\matzhat_t$. Since the losses $\frac12 f_t$ are $1$-strongly convex, $\Exp[b_t] = p$ and using the claim from \Cref{lem:grad_ids} we get,

\begin{align*}
\frac12 \mathrm{Regret}_T =  \frac{1}{2} \Exp\left[\sum_{t=1}^T (f_t(\matzhat_t) -  f_t(\matzbar^{\star}) + \lambda \cdot b_t) \right]
&\leq \Exp\left[\sum_{t=1}^T (\langle \nabla_t, \matzhat_t - \matzbar^{\star} \rangle - \frac12 \|\matzhat_t - \matzbar^{\star}\|^2) \right] + \frac12 \lambda p T \\
&= \frac12 \Exp\left[\sum_{t=1}^T ( 2 \langle \nabtil_t, \matzhat_t - \matzbar^{\star} \rangle - \|\matzhat_t - \matzbar^{\star}\|^2) \right] + \frac12 \lambda p T.
\end{align*}
The update rule is given as $\matzhat_{t+1} = \Proj_{\calK}(\matzhat_{t} - \eta_{t}\nabtil_t)$, so from the Pythagorean theorem for the projection
\begin{align*}
\|\matzhat_{t+1} - \matzbar^{\star}\|^2 &\leq \|\matzhat_{t} - \eta_{t} \nabtil_t  - \matzbar^{\star}\|^2 = \|\matzhat_{t} - \matzbar^{\star}\|^2 + \eta_{t}^2\|\nabtil_t\|^2 - 2 \eta_{t} \langle \nabtil_t, \matzhat_{t} - \matzbar^{\star}\rangle.\\
2\langle \nabtil_t, \matzhat_t - \matzbar^{\star} \rangle &\leq \frac{\|\matzhat_{t} - \matzbar^{\star}\|^2 - \|\matzhat_{t+1} - \matzbar^{\star}\|^2}{\eta_t} + \eta_t \|\nabtil_t\|^2.
\end{align*}
Combining the above bounds results in
\begin{equation*}
\frac12 \mathrm{Regret}_T \leq \frac{1}{2}\Exp\left[\sum_{t=1}^T \left( \frac{\|\matzhat_{t} - \matzbar^{\star}\|^2 - \|\matzhat_{t+1} - \matzbar^{\star}\|^2}{\eta_t} - \|\matzhat_t - \matzbar^{\star}\|^2 \right) + \sum_{t=1}^T \eta_t \|\nabtil_t\|^2  \right] + \frac12 \lambda p T.
\end{equation*}
The telescoping sum inside the parentheses is equal to $0$, the gradient term is bounded $\Exp[\|\nabtil_t\|^2] \leq \frac{(R_z+\tilde{R}_z)^2}{p}$ according to \Cref{lem:grad_ids} and the stepsize sum is bounded by $\sum_{t=1}^T \eta_t \leq 1 + \log T$, yielding the final result
\begin{equation*}
\mathrm{Regret}_T(\Cref{alg:est_base}; \lambda) \leq \frac{(R_z+\tilde{R}_z)^2 (1+ \log T)}{p} + \lambda p T.
\end{equation*}
\end{proof}



\subsection{Adaptive Regret Bound}\label{sec:app:est_adaptive}
The guarantee in \Cref{thm:est_base} ensures that the predicted sequence $\matzhat_{1:T}$ performs comparably to the empirical mean $\matzbar^{\star}$ of the entire target sequence $\matzst_{1:T}$. However, that doesn't imply much about the performance of \Cref{alg:est_base} on a given local interval $I\subseteq [T]$ since $\matzbar^{\star}_I$ can be very different from $\matzbar^{\star}$. Hence, we would like to extend our results to hold for any interval $I$, i.e. derive adaptive regret results as introduced in \cite{hazan2009efficient}. To do so we will use the approach of \cite{hazan2009efficient} using \Cref{alg:est_base} as a subroutine. The resulting algorithm, presented in \Cref{alg:adaptive_est}, suffers only a logarithmic computational overhead over \Cref{alg:est_base} with its performance guarantee stated in the theorem below.
\begin{algorithm}
\caption{Adaptive Estimation Algorithm}\label{alg:adaptive_est}
\begin{algorithmic}[1]
    \State \textbf{Input:} parameter $p$, decision set $\calK$, base estimation algorithm $\calA$, $\matzhat_1 \in \calK$
    \State \textbf{Initialize:}  $\calA_1 \leftarrow \calA(p, \matzhat_1)$, working set $\calS_1 = \{1\}$, $q_1^{(1)} = 1$, parameter $\alpha = \frac{p}{(R_z+\tilde{R}_z)^2}$
    \For{$t=1, \ldots, T$}
    \State{}\textbf{Compute} predictions $\matzhat_t^{(i)} \leftarrow \mathcal{A}_i$ for $i \in \calS_t$
    \State{}\textbf{Play} iterate $\matzhat_t = \sum_{i \in \calS_t} q_t^{(i)} \matzhat_t^{(i)}$
    \State{}\textbf{Draw/Receive} $b_t \sim \Bernoulli(p)$
    \If{$b_{t} = 1$}
    \State{}\textbf{Request} estimate $\matztil_t$ from \Cref{orac:noisy}
    \State{}\textbf{Let} $\tilde{\ell}_t(\matz) = \frac{1}{2p}||\matz - \matztil_t||^2$
    \Else
    \State{}\textbf{Let} $\matztil_t \leftarrow \emptyset$ and $\tilde{\ell}_t(\matz) = 0$
    \EndIf
    \State{}\textbf{Update} expert algorithms $\mathcal{A}_i(b_t, \matztil_t)$ for all $i \in \calS_{t}$
    \State{}\textbf{Form} new set $\tilde{\calS}_{t+1} = (i)_{i \in \calS_t}$
    \State{}\textbf{Construct} proxy new weights $\bar{q}_{t+1}^{(i)} =  \tfrac{t}{t+1} \cdot \tfrac{q_t^{(i)} e^{-\alpha \tilde{\ell}_t(\matzhat_t^{(i)})}}{\sum_{j\in \calS_t} q_t^{(j)} e^{-\alpha \tilde{\ell}_t(\matzhat_t^{(j)})}}$ for all $i\in \calS_t$ \label{line:weight_update_main_app}
    \State{}\textbf{Add} new instance $\tilde{\calS}_{t+1} \leftarrow \tilde{\calS}_{t+1} \cup {t+1}$ for arbitrary $\calA_{t+1} \leftarrow \calA(p, \matzhat_{1}^{(t+1)} = \matzhat_1)$ with $\bar{q}_{t+1}^{(t+1)} = \frac{1}{t+1}$
    \State{}\textbf{Prune}  $\tilde{\calS}_{t+1}$ to form $\calS_{t+1}$
    \State{}\textbf{Normalize} $q_{t+1}^{(i)} = \frac{\bar{q}_{t+1}^{(i)}}{\sum_{j \in \calS_{t+1}}\bar{q}_{t+1}^{(j)}}$
    \EndFor
\end{algorithmic}
\end{algorithm}

    

\begin{theorem}\label{thm:adaptive_est} Taking the base estimation algorithm $\mathcal{A}$ to be \Cref{alg:est_base} and given access to queries from \Cref{orac:noisy}, \Cref{alg:adaptive_est} enjoys the following guarantee:
\begin{equation}\label{eq:est_adaptive_bound}
\forall I = [r, s] \subseteq [T], \quad \mathrm{Regret}_I(\Cref{alg:adaptive_est}; \lambda) \leq \frac{2 (R_z+\tilde{R}_z)^2 (1 + \log{s} \cdot \log |I|)}{p} + \lambda p |I| ~.
\end{equation}
\end{theorem}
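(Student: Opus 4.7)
I will follow the classic Hazan-Seshadhri (\textsc{Flh}) meta-expert framework, decomposing the interval regret on $I = [r, s]$ into a \emph{meta-regret} term comparing the ensemble prediction $\matzhat_t$ to a single anchor expert $i \in \calS_t$ that survives throughout $I$, plus a \emph{base-regret} term comparing that expert's predictions $\matzhat_t^{(i)}$ to the best-in-hindsight $\matzbar^{\star}_I$. Both can be analyzed in expectation via the proxy quadratic losses $\elltil_t$: unbiasedness of $\matztil_t$ against any $\calF_t$-measurable $\matz$ (both $\matzbar^{\star}_I$ and $\matzhat_t^{(i)}$ qualify) yields
\begin{equation*}
2\,\E[\elltil_t(\matz) \mid \calF_t, b_t = 1] \;=\; f_t(\matz) + \E\bigl[\|\matztil_t - \matzst_t\|^2 \mid \calF_t, b_t = 1\bigr],
\end{equation*}
with the residual variance term identical for predictions and comparator and therefore canceling in the difference.

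The first step is to verify that the pruning rule (Line~16) realizes a \emph{geometric cover}: for every interval $I=[r,s]$ some expert index $i$ with $i \le r$ and $i \ge r - c|I|$ stays in $\calS_t$ for all $t \in [i, s]$, while $|\calS_t| = O(\log T)$. This is the standard binary-representation pruning from Hazan-Seshadhri, which I expect to re-use verbatim. For the base-regret term, I would apply \Cref{thm:est_base} to expert $i$'s trajectory, adapted to an interval comparator $\matzbar^{\star}_I$: redoing the Pythagorean telescoping of that proof but restricting the sum to $t \in [r,s]$ and using the diameter bound $\|\matzhat_r^{(i)} - \matzbar^{\star}_I\|^2 = O((R_z + \tilde R_z)^2)$ for the initial potential gives
\begin{equation*}
\E\Bigl[\sum_{t \in I}\bigl(\elltil_t(\matzhat_t^{(i)}) - \elltil_t(\matzbar^{\star}_I)\bigr)\Bigr] \;\le\; \frac{(R_z + \tilde R_z)^2 \log |I|}{p},
\end{equation*}
and the corresponding $f_t$-bound follows by the variance-cancellation identity above.

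For the meta-regret term, I would invoke Hedge for $\alpha$-exp-concave losses with $\alpha = p/(R_z + \tilde R_z)^2$. Since $\elltil_t \in [0, (R_z + \tilde R_z)^2/(2p)]$ when $b_t = 1$ and $= 0$ otherwise, this choice gives $\alpha \elltil_t \le 1/2$, so the standard exp-concave Hedge bound $\log(1/w_i)/\alpha$ applies. The combined effect of the per-round $t/(t+1)$ decay (Line~\ref{line:weight_update_main_app}) together with the newly-spawned weight $\bar q_{t+1}^{(t+1)} = 1/(t+1)$ makes the effective Hedge prior of expert $i$ at round $t$ equal to $1/t$, so meta-regret against expert $i$ on $I$ is at most $\log s / \alpha = (R_z + \tilde R_z)^2 \log s / p$. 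Summing meta and base and using $\log s + \log|I| \le 1 + \log s \cdot \log |I|$ gives the stated $2(R_z + \tilde R_z)^2(1 + \log s \cdot \log |I|)/p$, while the $\lambda p |I|$ term follows from $\E[\sum_{t \in I} b_t] = p|I|$. The main obstacle is the \emph{interval} (rather than full-horizon) base-regret analysis for SGD on $[r,s] \subseteq [i,s]$, handled by the modified telescoping above that exploits $\eta_t = 1/t$ so initial potentials telescope cleanly; a secondary subtlety is stating the precise pruning rule (retaining index $i$ at time $t$ iff the largest power of two dividing $i$ exceeds $\lfloor \log_2(t-i+1)\rfloor$), which gives both the $O(\log T)$ size bound and the covering property used above.
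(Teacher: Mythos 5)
Your high-level framework --- FLH-style meta-aggregation over $\alpha$-exp-concave proxy losses, a strongly-convex SGD base learner, and conversion from proxy to true losses in expectation via variance cancellation --- matches the paper's. But the covering argument you substitute for the FLH induction has a genuine gap.

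You claim the pruning rule yields a cover in which a single expert index $i$ with $i \le r$ and $i \ge r - c|I|$ survives the working set throughout $[i, s]$, and you then charge the base regret on $[r, s]$ to this pre-interval expert via a "modified telescoping" with initial potential $O\bigl((R_z+\tilde R_z)^2\bigr)$. This does not go through. For expert $\calA_i$ with step sizes $\eta_t^{(i)} = 1/(t-i+1)$, restricting the strongly-convex OGD telescope to $[r,s]$ leaves a boundary term
\begin{equation*}
\Bigl(\tfrac{1}{2\eta_r^{(i)}} - \tfrac12\Bigr)\,\|\matzhat_r^{(i)} - \matzbar^{\star}_I\|^2 \;=\; \tfrac{r-i}{2}\,\|\matzhat_r^{(i)} - \matzbar^{\star}_I\|^2,
\end{equation*}
which is $\Omega(|I| R_z^2)$ when $r-i = \Theta(|I|)$ --- linear in the interval length, not a constant. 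Equivalently: \Cref{thm:est_base} bounds expert $\calA_i$'s excess from its own start $i$, and says nothing about a tail sub-interval $[r,s]$ --- the excess on the prefix $[i, r-1]$ against $\matzbar^{\star}_I$ (the minimizer for $I$, not for the prefix) can be very negative, inflating the regret on $[r,s]$ arbitrarily. So there is no interval-local base bound available for an expert that predates $I$.

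The paper instead follows the Hazan--Seshadhri Lemma~3.2 induction: pick $q \in [r, (r+s)/2] \cap \calS_s$, an expert that starts \emph{inside} $I$ and survives to $s$; bound meta-regret against $\calA_q$ on $[q,s]$ via exp-concave Hedge and base regret of $\calA_q$ on $[q,s]$ from its start via \Cref{thm:est_base}; then recurse on the prefix $[r,q-1]$, whose length is at most $|I|/2$. The $O(\log|I|)$ recursion depth, each level costing $O((\log s)/\alpha)$, is precisely what generates the $\log s \cdot \log|I|$ product in the stated bound. Your single-cover argument --- even if the boundary term were fixable --- would yield an additive $\log s + \log|I|$, and you only obtain the theorem's product form by applying the lossy inequality $\log s + \log|I| \le 1 + \log s \cdot \log|I|$ at the end; the proof should produce the product form structurally, not by weakening. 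A secondary issue: your variance-cancellation display conditions on $b_t=1$, which leaves an uncancelled $1/p$; the intended identity is in unconditional expectation (so the $p$ from the importance weight cancels the Bernoulli probability), as in the paper's $\E[\elltil_t(\matzhat_t) - \elltil_t(\matzhat_t^{(i)})] = \E[\ell_t(\matzhat_t) - \ell_t(\matzhat_t^{(i)})]$. That is cosmetic; the missing induction is the substantive gap.
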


\begin{corollary}\label{cor:est_error}
The estimation error over each interval $I = [r, s] \subseteq [T]$ is bounded as follows,
\begin{equation*}
    \Exp \left[ \sum_{t \in I} \| \matzhat_t - \matzst_t \|^2 \right] \leq \var_I(\matzst_{1:T}) + \frac{2 (R_z+\tilde{R}_z)^2 (1 + \log{s} \cdot \log |I|)}{p} ~.
\end{equation*}
\end{corollary}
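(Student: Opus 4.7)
The plan is to derive this corollary as an immediate specialization of \Cref{thm:adaptive_est}. The regret functional $\mathrm{Regret}_I(\cdot;\lambda)$ defined in \Cref{eq:est_regret} is, by construction, the expected cumulative loss on $I$ minus the cumulative loss of the best fixed predictor in hindsight, plus the query-cost penalty $\lambda\sum_{t\in I} b_t$. Since the corollary concerns purely the estimation error and not the exploration cost, the natural move is to invoke \Cref{thm:adaptive_est} with $\lambda = 0$: this eliminates the $\lambda p|I|$ term on the right-hand side and isolates exactly the quantity we wish to bound.

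First I would spell out the specialization. With $f_t(\matz) = \|\matz - \matzst_t\|^2$ and $\lambda = 0$, the definition of regret yields
\begin{align*}
\mathrm{Regret}_I(\Cref{alg:adaptive_est};0)
= \Exp\left[\sum_{t\in I}\|\matzhat_t - \matzst_t\|^2\right]
 - \min_{\matz\in\calK}\sum_{t\in I}\|\matz - \matzst_t\|^2.
\end{align*}
Applying \Cref{thm:adaptive_est} with $\lambda = 0$ and rearranging gives
\begin{align*}
\Exp\left[\sum_{t\in I}\|\matzhat_t - \matzst_t\|^2\right]
\le \min_{\matz\in\calK}\sum_{t\in I}\|\matz - \matzst_t\|^2
 + \frac{2(R_z+\tilde{R}_z)^2(1+\log s\cdot\log|I|)}{p}.
\end{align*}
Finally, by \Cref{def:variance} the hindsight minimum equals $|I|\cdot\var_I(\matzst_{1:T})$, which (modulo the $|I|$ normalization convention used for $\var_I$) identifies the first term on the right-hand side above with the variance term in the corollary.

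There is essentially no obstacle beyond this bookkeeping: the genuine work, namely controlling the expert-aggregation weights, handling the importance-weighted stochastic gradients, and telescoping the strong-convexity inequality for the square loss, has already been completed inside the proof of \Cref{thm:adaptive_est}. The only point that demands a moment's care is to make sure the $\frac{1}{|I|}$ normalization in \Cref{def:variance} is treated consistently with its appearance in the corollary statement, so that the $|I|$ factor arising from $\min_{\matz}\sum_{t\in I}\|\matz-\matzst_t\|^2 = |I|\cdot\var_I(\matzst_{1:T})$ is accounted for correctly.
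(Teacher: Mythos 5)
Your proposal is correct and follows exactly the route the paper intends: the corollary is a direct specialization of \Cref{thm:adaptive_est} to $\lambda = 0$, followed by moving the hindsight minimum to the right-hand side and identifying it via \Cref{def:variance}. The paper does not write out a separate proof for this corollary, precisely because the argument is the one-line rearrangement you give.

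You are also right to flag the $|I|$-normalization point, and it is worth being explicit that this is an inconsistency in the paper's statement rather than a loose end in your argument. Since \Cref{def:variance} defines $\var_I(\matzst_{1:T}) = \frac{1}{|I|}\min_{\matz\in\calK}\sum_{t\in I}\|\matz-\matzst_t\|^2$, the hindsight minimum that appears after rearranging is $|I|\cdot\var_I(\matzst_{1:T})$, so the right-hand side of the corollary ought to read $|I|\cdot\var_I(\matzst_{1:T}) + \frac{2(R_z+\tilde{R}_z)^2(1+\log s\cdot\log|I|)}{p}$. As written, the corollary drops the $|I|$ factor; this mirrors a similar normalization slip later in the paper (the line ``using $\var_I(\Gseq)=|I|\var^{\mathrm{tot}}_I(\Gseq)$'' in the proof of \Cref{thm:control_drc_full} has the two quantities swapped, since by \Cref{def:variability,def:tot_variability} it is $\var^{\mathrm{tot}}_I$ that equals $|I|\var_I$). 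Your derivation recovers the correct bound; the discrepancy lies in the paper's bookkeeping, not in your proof.
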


\begin{proof}[Proof of \Cref{thm:adaptive_est}]

First observe that 
$\tilde{\ell}_t$ is $\alpha$-exp concave with $\alpha = \frac{p}{(R_z+\tilde{R}_z^2)}$. This is evident given its construction: $\tilde{\ell}_t(\matz) = \frac{1}{2p} \|\matz - \matztil_t\|^2$ with $\|\matz\| \leq R_z$ since $\matz \in \calK$ and $\| \matztil_t \| \leq \tilde{R}_z$ according to \Cref{orac:noisy}. The rest of the algorithm uses the approach of \cite{hazan2009efficient}, in particular Algorithm 1, over exp concave functions to derive the guarantee in the theorem statement. 

We note that Claim 3.1 in \cite{hazan2009efficient} holds identically in our case, i.e. for any $I = [r, s]$ the regret of \Cref{alg:adaptive_est} with respect to $\calA_r$ is bounded by $\frac{2}{\alpha} (\ln r + \ln |I|)$ if $\calA_r$ stays in the working set. We combine this fact with the bound given in \Cref{thm:est_base} to get that \Cref{alg:adaptive_est} enjoys regret $\frac{3}{\alpha} (\log r + \log |I|)$ over $I = [r, s]$ if $\calA_r$ stays in the working set $\calS_t$ throughout $I$. Finally, an induction argument along with the working set properties detailed in \Cref{sec:working} identical to that of Lemma 3.2 in \cite{hazan2009efficient} yields the desired result for $\tilde{\ell}_t$. Notice that this is our desired result in expectation,

\begin{observation}\label{obs:eq_in_exp} We have the following identity for any $t$ and $r\leq t$:
$$\E\left[\tilde{\ell}_t(\matzhat_t) - \tilde{\ell}_t(\matzhat_t^{(r)})\right] = \E\left[ {\ell}_t(\matzhat_t) - {\ell}_t(\matzhat_t^{(r)})\right]$$
\end{observation}

\begin{proof}[Proof of \Cref{obs:eq_in_exp}]
We can expand:
\begin{align*}
2\E\left[\tilde{\ell}_t(\matzhat_t) - \tilde{\ell}_t(\matzhat_t^{(r)})\right] &= 2 p\cdot\E\left[\tilde{\ell}_t(\matzhat_t) - \tilde{\ell}_t(\matzhat_t^{(r)}) | b_t = 1\right]
+ 2 (1-p)\cdot\E\left[\tilde{\ell}_t(\matzhat_t) - \tilde{\ell}_t(\matzhat_t^{(r)}) | b_t = 0\right]
\\&= p \cdot \E\left[\dfrac{1}{p} \cdot \left(\norm{\matzhat_t}^2 + \langle \matzhat_t, \matztil_t \rangle + \norm{\matztil_t}^2 - \norm{\matzhat_t^{(r)}}^2 -  \langle \matzhat_t^{(r)}, \matztil_t \rangle -  \norm{\matztil_t}^2\right)\right] + 0 \\
&= \E\left[\norm{\matzhat_t}^2  - \norm{\matzhat_t^{(r)}}^2\right] + \E\left[\langle \matzhat_t - \matzhat_t^{(r)}, \matztil_t\rangle \right]
\end{align*}
By the linearity of expectation, the fact that $\matzhat_t, \matzhat_t^{(r)}$ are completely determined given $\mathcal{F}_{t-1}$, and the law of total expectation we have that 
\begin{align*}
\E\left[\langle \matzhat_t - \matzhat_t^{(r)}, \matztil_t\rangle \right] &= \E\left[\E\left[\langle \matzhat_t - \matzhat_t^{(r)}, \matztil_t\rangle|\mathcal{F}_{t-1}\right] \right] \\
&= \E\left[\langle \matzhat_t - \matzhat_t^{(r)}, \matzstt\rangle \right]
\end{align*}
Plugging this in above we, adding and subtracting $\norm{\matzstt}^2$, and rearranging we have:
\begin{align*}
2\E\left[\tilde{\ell}_t(\matzhat_t) - \tilde{\ell}_t(\matzhat_t^{(r)})\right] 
&= \E\left[\norm{\matzhat_t}^2  - \norm{\matzhat_t^{(r)}}^2 + \langle \matzhat_t - \matzhat_t^{(r)}, \matzstt\rangle + \norm{\matz}^2 - \norm{\matz}^2 \right] \\
&= \E\left[\norm{\matzhat_t}^2  +  \langle \matzhat_t, \matzstt\rangle + \norm{\matz}^2 - \norm{\matzhat_t^{(r)}}^2 - \langle \matzhat_t^{(r)}, \matzstt\rangle  - \norm{\matz}^2 \right] \\
&=  2\E\left[{\ell}_t(\matzhat_t) - {\ell}_t(\matzhat_t^{(r)})\right] 
\end{align*}
as desired.
\end{proof}

Combining \Cref{obs:eq_in_exp} with the fact that $\tilde{\ell}_t$ are $\alpha$-exp concave for $\alpha = \frac{p}{(R_z+\tilde{R}_z)^2}$ we conclude the final statement of \Cref{thm:adaptive_est}.
\end{proof}




\subsubsection{Working Set Construction}\label{sec:working}
Our \Cref{alg:adaptive_est} makes use of the working sets $\{\calS_t\}_{t \in [T]}$ along with its properties in Claim \ref{claim:workingsets}. In this section, we show the explicit construction of these working sets as in \cite{hazan2009efficient} and prove the claim.

\begin{claim}\label{claim:workingsets}
    The following properties hold for the working sets $S_t$ for all $t \in [T]$: (i) $|\calS_t| = O(\log T)$; (ii) $[s, (s+t)/2] \cap \calS_t \neq \emptyset$ for any $s \in [t]$; (iii) $\calS_{t+1} \backslash \calS_t = \{t+1\}$; (iv) $|\calS_t \backslash \calS_{t+1}| \leq 1$.
\end{claim}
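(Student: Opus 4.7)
The plan is to make explicit the sleeping-experts geometric covering scheme of \cite{hazan2009efficient}. For each positive integer $i$, let $k(i) = \max\{k \ge 0 : 2^k \mid i\}$ denote its $2$-adic valuation, and assign it a lifespan $L(i) := 2^{k(i)+2} + 1$. Declare expert $i$ to be active at rounds $i, i+1, \ldots, i + L(i)$, and set
\[
\calS_t \;:=\; \bigl\{\, i \in [t] \,:\, i + L(i) \ge t \,\bigr\}.
\]
Under this definition, properties (iii) and (iv) reduce to bookkeeping about when experts are born and die, while (i) and (ii) follow from the balance between the geometric growth of lifespans and the geometric sparsity of integers at each $2$-adic level.

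Property (iii) is immediate: the only expert born at round $t+1$ is $t+1$ itself. For (iv), I would show that at most one expert dies per round by arguing that no two distinct $i < j$ can satisfy $i + L(i) = j + L(j)$; a short case split on $k(i)$ versus $k(j)$ suffices. When $k(i) = k(j)$ we have $L(i) = L(j)$, forcing $i = j$; when $k(i) > k(j)$, writing $j - i = L(i) - L(j) = 4(2^{k(i)} - 2^{k(j)})$ and extracting the exact $2$-adic valuation $k(j) + 2$ of the right-hand side forces $j$ to be divisible by $2^{k(j)+1}$, contradicting the definition of $k(j)$. For (i), observe that membership $i \in \calS_t$ requires $i \in [t - L(i), t]$, and for each level $k \in \{0, 1, \ldots, \lfloor \log_2 t \rfloor\}$ the level-$k$ integers form an arithmetic progression of step $2^{k+1}$ inside a window of length $2^{k+2} + 1$, contributing $O(1)$ members per level and $O(\log T)$ in total.

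The main obstacle is property (ii), because its resolution is what pins down the constant ``$+2$'' in the lifespan formula. Given $s \le t$ with $s < t$, I would set $d = (t-s)/2$ and choose $k_0 = \lfloor \log_2 d \rfloor$. This single choice must accomplish two things at once: first, the window $[s, (s+t)/2]$ has length $d \ge 2^{k_0}$, so it contains a multiple $r$ of $2^{k_0}$, which satisfies $k(r) \ge k_0$; second, since $2^{k_0+1} > d$ by the definition of the floor, $L(r) \ge 2^{k_0 + 2} + 1 > 2d \ge t - r$, placing $r \in \calS_t$. The edge case $s = t$ is handled by noting $t \in \calS_t$ trivially. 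Verifying both inequalities simultaneously for the same $k_0$ is the delicate balance at the heart of the construction; it is also what ultimately dictates why the lifespan must grow like $4 \cdot 2^{k(i)}$ rather than like $2^{k(i)}$ or $2 \cdot 2^{k(i)}$.
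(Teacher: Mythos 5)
Your construction (lifespan $L(i) = 2^{k(i)+2}+1$ with $\calS_t = \{i \le t : i + L(i) \ge t\}$) is identical to the paper's, and all four arguments match its proof in substance; the only superficial difference is in (iv), where you analyze the $2$-adic valuation of $j - i$ while the paper rewrites $i + m_i = 2^{k_i}(r_i+4) + 1$ and invokes uniqueness of the odd part, but both are one-line consequences of the same observation. This is a correct proof that essentially reproduces the paper's argument.
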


For any $i \in [T]$, let it be given as $i = r 2^k$ with $r$ odd and $k$ nonnegative. Denote $m = 2^{k+2}+1$, then $i \in S_t$ if and only if $t \in [i, i+m]$. This fully describes the construction of the working sets $\{S_t\}_{t \in [T]}$, and we proceed to prove its properties.

\begin{proof}[Proof of Claim \ref{claim:workingsets}]
    For all $t \in [T]$ we show the following properties of the working sets $S_t$. 
    
    (i) $|S_t| = O(\log T)$: if $i \in S_t$ then $1 \leq i = r 2^k \leq t$ which implies that $0 \leq k \leq \log_2{t}$. For each fixed $k$ in this range, if $r 2^k = i \in S_t$ then $i \in [t-2^{k+2}-1, t]$ by construction. Since $[t-2^{k+2}-1, t]$ is an interval of length $2^{k+2}+2 = 4 \cdot 2^k + 2$, it can include at most $3$ numbers of the form $r 2^k$ with $r$ odd. Thus, there is at most $3$ numbers $i = r 2^k \in S_t$ for each $0 \leq k \leq \log_2{t}$ which means that $|S_t| = O(\log t) = O(\log T)$.
    
    (ii) $[s, (s+t)/2] \cap S_t \neq \emptyset$ for all $s \in [t]$: this trivially holds for $s=t-1, t$. Let $2^l \leq (t-s)/2$ be the largest such exponent of $2$. Since the size of the interval $[s, (s+t)/2]$ is $\lfloor (t-s)/2 \rfloor$, then there exists $u \in [s, (s+t)/2]$ that divides $2^l$. This means that the corresponding $m \geq 2^{l+2}+1 > t-s$ for $u \geq s$ is large enough so that $t \in [u, u+m]$, and consequently, $u \in S_t$.
    
    (iii) $S_{t+1} \backslash S_t = \{t+1\}$: let $i \in S_{t+1}$ and $i \not \in S_t$, which is equivalent to $t+1 \in [i, i+m]$ and $t \not \in [i, i+m]$. Clearly, $i = t+1$ satisfies these conditions and is the only such number.
    
    (iv) $|S_t \backslash S_{t+1}| \leq 1$: suppose there exist two $i_1, i_2 \in S_t \backslash S_{t+1}$. This implies that $i_1 + m_1 = t = i_2 + m_2$ which in turn means $2^{k_1} (r_1+4) = 2^{k_2} (r_2+4)$. Since both $r_1+4, r_2+4$ are odd, then $k_1 = k_2$, and consequently, $r_1=r_2$ resulting in $i_1=i_2$. Thus, there can not exist two different members of $S_t \backslash S_{t+1}$ which concludes that $|S_t \backslash S_{t+1}| \leq 1$.
\end{proof}

\subsection{No Strong Adaptivity}\label{sec:app:lower_strongly}
Notice that even though the guarantee of \Cref{thm:adaptive_est} applies to all intervals $I$, it does not entail meaningful guarantees for all. The reason is the choice of parameter $p$: if one wishes to optimize $\mathrm{Regret}_T$ then $p = \mathcal{O}(T^{-1/2})$ implies $\mathcal{O}(\sqrt{T})$ regret, but this choice is meaningless for intervals with length $|I| << \sqrt{T}$; on the other hand, optimizing the bound for small intervals leads to large bounds for the entire horizon. One might then ask whether there exist methods with \emph{strongly adaptive} guarantees, and we answer this question with a negative.
\begin{theorem}\label{thm:no_strong_ada}
    For any $\gamma > 0$ and oracle cost $\lambda > 0$, there exists no online algorithm $\calA$ with feedback access to \Cref{orac:noisy} that enjoys the following strongly adaptive regret guarantee: $\mathrm{Regret}_I(\calA; \lambda) = \tilde{\mathcal{O}}(|I|^{1-\gamma})$.
\end{theorem}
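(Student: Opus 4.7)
The plan is to construct a random instance on which any algorithm with strongly adaptive regret $\tilde{\mathcal{O}}(|I|^{1-\gamma})$ must issue $\Omega(T/\mathrm{polylog}(T))$ queries in expectation, incurring an oracle cost that eventually dwarfs the hypothesized $\tilde{\mathcal{O}}(T^{1-\gamma})$ bound on $I=[T]$. Take $\calK = [-1,1]$ (so $R_z = 1$), partition $[T]$ into $K = T/\ell$ contiguous blocks $B_1,\dots,B_K$ of length $\ell$ (to be chosen as a sufficiently large polylog in $T$), and draw signs $\sigma_1,\dots,\sigma_K \in \{\pm 1\}$ i.i.d.\ uniform. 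Set $\matzst_t = \sigma_k$ for all $t\in B_k$, and have the oracle deterministically return $\matztil_t = \matzst_t$ on each successful query, which satisfies \Cref{orac:noisy} with $\tilde{R}_z = 1$. Since strong adaptivity is a pointwise guarantee, it persists in expectation under this distribution over instances, so it suffices to contradict such an expected bound.

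The key step is a per-block lower bound. Let $Q_k := \I\{\sum_{t\in B_k} b_t \ge 1\}$. Because $\sigma_k$ can influence the algorithm's observations \emph{only} through oracle responses inside $B_k$, on the event $\{Q_k=0\}$ both the indicator $\I\{Q_k=0\}$ and each prediction $\matzhat_t$ for $t\in B_k$ are measurable functions of $(\sigma_{k'})_{k'\ne k}$ and the algorithm's internal randomness, all independent of $\sigma_k$. Using $\E[\sigma_k] = 0$ and $\sigma_k^2 = 1$,
$$\E\!\left[\I\{Q_k = 0\}\,\textstyle\sum_{t\in B_k}(\matzhat_t - \sigma_k)^2\right] = \E\!\left[\I\{Q_k=0\}\,\textstyle\sum_{t\in B_k}(\matzhat_t^2 + 1)\right] \;\ge\; \ell \cdot \Pr[Q_k = 0],$$
and since the hindsight optimum on $B_k$ is $0$ and the query-cost term in $\Regret_{B_k}$ is nonnegative, we obtain $\E[\Regret_{B_k}(\calA;\lambda)] \ge \ell \, \Pr[Q_k=0]$.

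Combining with the hypothesized bound $\E[\Regret_{B_k}] \le \tilde{\mathcal{O}}(\ell^{1-\gamma})$ yields $\Pr[Q_k=0] \le \tilde{\mathcal{O}}(\ell^{-\gamma})$. Choosing $\ell$ as a sufficiently large polylog in $T$ (the required exponent is roughly $1/\gamma$ times the polylog exponent hidden in $\tilde{\mathcal{O}}$) forces $\Pr[Q_k=0] \le 1/2$ for every $k$, so the expected total number of oracle queries is at least $K/2 = T/(2\ell)$. Hence the oracle-cost summand in the total regret on $I=[T]$ is at least $\lambda T/(2\ell) = \Omega(\lambda T/\mathrm{polylog}(T))$. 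But strong adaptivity applied to $I=[T]$ upper-bounds the total regret by $\tilde{\mathcal{O}}(T^{1-\gamma})$; since $\lambda > 0$ and $\gamma > 0$ are fixed and $T$ can be taken arbitrarily large, $\Omega(\lambda T/\mathrm{polylog}(T))$ strictly exceeds $\tilde{\mathcal{O}}(T^{1-\gamma})$, yielding the desired contradiction. The main obstacle is the independence argument inside the per-block lower bound: one must verify that the entire algorithmic trace within $B_k$---not only the predictions $\matzhat_t$ but also the adaptive query decisions $b_t$ themselves---remains independent of $\sigma_k$ on $\{Q_k=0\}$. This will follow by a causal induction on $t\in B_k$, since under $\{Q_k=0\}$ no new $\sigma_k$-dependent feedback ever enters the algorithm's information state; everything else reduces to pigeonhole across blocks and a comparison of polynomial versus polylogarithmic growth in $T$.
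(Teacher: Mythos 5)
Your proposal follows the same basic blueprint as the paper's proof: partition $[T]$ into contiguous blocks, plant an i.i.d.\ Rademacher target on each block, use the fact that on an unqueried block the learner's predictions are independent of the block's sign to lower-bound the block loss by $\Omega(\ell)$, and then leverage the hypothesized per-interval bound to force a contradiction. Your version is in some ways cleaner than the paper's: you use blocks of polylogarithmic rather than polynomial ($T^{\gamma/2}$) length, and you argue directly at the level of each block that $\Pr[Q_k=0]\le C\ell^{-\gamma}\le 1/2$, giving an explicit lower bound $\E[\sum_t b_t]\ge T/(2\ell)$ on the expected number of queries. You also make the conditional-independence step (and the use of $\E[\sigma_k]=0$, $\sigma_k^2=1$) fully explicit, whereas the paper only asserts it. All of this is sound.

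The problem is the very last step, where you write that ``the oracle-cost summand in the total regret on $I=[T]$ is at least $\lambda T/(2\ell)$, but strong adaptivity applied to $I=[T]$ upper-bounds the total regret by $\tilde{\mathcal{O}}(T^{1-\gamma})$,'' and conclude a contradiction. This implicitly assumes $\Regret_T(\calA;\lambda)\ge\lambda\,\E[\sum_t b_t]$, i.e., that the ``simple regret'' summand $\E[\sum_t\ell_t(\matzhat_t)]-\E[\min_{\matz}\sum_t\ell_t(\matz)]$ on $[T]$ is nonnegative. On the Rademacher instance this is badly false: the \emph{global} best fixed comparator $\matz$ cannot track the alternating signs and suffers loss $\approx T$ in expectation (indeed $\E[\min_{\matz}\sum_t(\matz-\matzst_t)^2]=T(1-1/K)$), while a learner that queries once per block and then tracks $\sigma_k$ suffers only $O(K)$ loss. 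So the simple-regret part can be $-\Omega(T)$, and $\Regret_T$ can be made much smaller than the oracle-cost summand alone — the hypothesized bound $\tilde{\mathcal{O}}(T^{1-\gamma})$ on $\Regret_T$ places no useful constraint on the number of queries. (The paper's proof has the same issue at the point where it infers $\sum_t b_t\le (C/\lambda)T^{1-\gamma}$ from the $[T]$-level bound, but since the paper then pigeonholes at the block level the gap is less visible.) The natural fix is to bound queries \emph{only} via the block-level guarantees, where the block comparator achieves zero loss and so $\Regret_{B_k}\ge\lambda\E[\text{block queries}]$ genuinely holds; summing gives $\lambda\E[\sum_t b_t]\le\sum_k\E[\Regret_{B_k}]\le (T/\ell)\cdot C\ell^{1-\gamma}$, which together with your lower bound $\E[\sum_t b_t]\ge T/(2\ell)$ yields $\lambda/2\le C\ell^{1-\gamma}$. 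But this only yields a contradiction for large $\ell$ when $\gamma>1$; for $\gamma\le 1$ it is satisfied, so this route alone does not cover the full range claimed by the theorem. You should either restrict the claim, or find an instance on which the $[T]$-level simple regret is provably $\ge 0$ (e.g.\ by coupling with a constant target sequence to control the query rate) and port the argument there.
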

\begin{proof}
The proof of this impossibility results follows a simple construction: the idea behind it is that strongly adaptive guarantees imply both large and small amount of exploration. Let us suppose there exists such an algorithm $\calA$ and arrive at a contradiction: $\forall I = [r, s] \subseteq [T]$ algorithm $\calA$ has a regret bound $\mathrm{Regret}_I(\calA; \lambda) \leq C \cdot |I|^{1-\gamma} = \tilde{\mathcal{O}}(|I|^{1-\gamma})$ over any oblivious sequence $\matzst_{1:T}$ where $C$ depends on problem parameters, $\lambda$ and $\log T$.

Construct the following oblivious sequence: let $k = T^{1-\gamma/2}$ and $I_1, \dots, I_k$ be consecutive disjoint intervals such that $\cup_{j \in [k]} I_j = [T]$, $I_j \cap I_l = \emptyset$ for all $j \neq l$, and $|I_j| = T/k = T^{\gamma/2}$ for all $j \in [k]$ (w.l.o.g. we assume $T$ divides $k$). Now for each interval $I_j$, $j \in [k]$, sample a fresh $q_j \in \{\pm 1\} \sim Rad(1/2)$ and let $\matzst_t = q_j$ for all $t \in I_j$.

According to the assumed guarantee, the overall regret is bounded as $\mathrm{Regret}_T(\calA; \lambda) leq C \cdot T^{1-\gamma}$ which by definition implies that $\sum_{t=1}^T b_t \leq \frac{C}{\lambda} T^{1-\gamma} < k$ where the last inequality is true for sufficiently large horizon $T$. Since there are $k$ consecutive disjoint intervals $I_1, \dots, I_k$ and less than $k$ overall calls to \Cref{orac:noisy}, there exists an interval $I \in \{ I_1, \dots, I_k\}$ such that $\sum_{t \in I} b_t = 0$.

On the other hand, the assumed guarantee for $\calA$ implies that the interval $I$ of size $|I| = T^{\gamma/2}$ enjoys sublinear regret, i.e. $\mathrm{Regret}_I(\calA; \lambda) = o(|I|)$. We show that this is a contradiction given that $\sum_{t \in I} b_t = 0$. As there were no oracle calls for the interval $I$, the predictions of $\calA$, $\matzhat_t$ over $t \in I$, are independent from the Rademacher sample of the interval $q_I$: this is true since the samples for each interval in $I_1, \dots, I_k$ are independent. Therefore, $\matzhat_t \perp q_I$ for all $t \in I$ which means that since the best loss in hindsight over $I$ is equal to $0$ as $\matzst_t = q_I$ for all $t\in I$,
\begin{equation*}
    \mathrm{Regret}_I(\calA; \lambda) \geq \Exp_{q_I} \left[ \sum_{t \in I} \ell_t(\matzhat_t) \right] = \sum_{t \in I} \Exp_{q_I} [  \| \matzhat_t - q_I \|^2 ] = \Omega(|I|) ~. 
\end{equation*}
Hence, for the interval $I$, the regret of $\calA$ cannot be sublinear, which contradict the assumption that $\calA$ exhibits strongly adaptive guarantees. This concludes that no strongly adaptive online algorithm exists in the described partial information model.
\end{proof}

\section{Adaptive Regret for Control of Changing Unknown Dynamics}\label{sec:ctrl_red}

In this section we give our full control algorithm which attains sublinear regret with respect to $\Pi_\mathrm{drc}$ up to an additive system variability term. A key component is the system estimation for which we will use \Cref{alg:adaptive_est} and its guarantees from \Cref{sec:sys_est}. More specifically, our algorithm is based on the canonical explore-exploit approach: it explores with some probability $p$ by inputting random controls into the system, and otherwise outputs a control according to DRC-OGD (\Cref{alg:DRC-OGD}). Note that due to the long-term consequences which appear in control, we need to explore for $h$ consecutive steps in order to get an estimate for the $h$-truncation of the Markov operator. Hence, our algorithm will determine whether it explores or exploits in blocks of length $h$. Furthermore, we will define the set of Markov operator of length $h$ and $\ell_1,op$-norm bounded by $R_G$ as $\mathcal{G}(h, R_G)$:

$$\mathcal{G}(h, R_G) \doteq \{G = (G^{[0]}, \ldots, G^{[h-1]}) \in \mathbb{R}^{h\times d_x\times d_u} \text{ s.t. } \sum_{i=0}^{h-1} ||G||_{op} \leq R_G\}$$

\begin{remark}\label{remark:truncated_G_set_euc_radius}
Note that the radius of $\mathcal{G}(h, R_G)$ is bounded by $\bar{R}_G = \sqrt{h \cdot d_{\mathrm{min}}} R_G$ where $d_{\mathrm{min}} = \min\{d_x, d_u\}$.
\end{remark}
\begin{proof}[Proof of \Cref{remark:truncated_G_set_euc_radius}]
$\forall G \in \mathcal{G}(h, R_G) $, we have:
\begin{align*}
||G||_F &= \sqrt{\sum_{i=0}^{h-1} ||G^{[i]}||^2_F} \\
&\leq \sqrt{h} \sqrt{\max_i ||G^{[i]}||_F^2} \\
&\leq \sqrt{h\min\{d_x, d_u\}} \max_i ||G^{[i]}||_{op} \\
&\leq \sqrt{h\min\{d_x, d_u\}} R_G
\end{align*}
and denoting $d_{\mathrm{min}} = \min\{d_x, d_u\}$ yields the result.
\end{proof}

\begin{remark}
By abuse of notation, we will consider $G^{[i]} \doteq 0$ for $G \in \mathcal{G}(h, R_G)$.
\end{remark}

\begin{remark}
For simplicity, we assume $T$ divisible by $h$ (this is w.l.o.g. up to an extra $O(h)$ cost which for us is negligble).
\end{remark}

We spell out the full procedure in \Cref{alg:control_unknown} and give its guarantee below in \Cref{thm:control_drc_full}.

\begin{algorithm}
\caption{DRC-OGD with Exploration}\label{alg:control_unknown}
\begin{algorithmic}[1]
    \State \textbf{Input:} $p, h, \hat{G}_0, \mathcal{A} \leftarrow \Cref{alg:adaptive_est}(p, \hat{G}_0, \mathcal{G}(h, R_G))$, $\mathcal{C} \leftarrow \Cref{alg:DRC-OGD}(\eta, m, R_\calM)$
    \For{$\tau_1 = 0, \ldots, T/h - 1$}
    \State Request $\hat{G}_{\tau_1\cdot h + 1}  \leftarrow \mathcal{A}$ and set $\hat{G}_{\tau_1 \cdot h + 2}, \ldots, \hat{G}_{(\tau_1 +1 ) \cdot h} \leftarrow \hat{G}_{\tau_1\cdot h + 1}$\label{line:setting_G_hat}
    \State Draw $b_{\tau_1 + 1} \sim \text{Bernoulli}(p)$
    \For{$\tau_2 = 1, \ldots,h$}
    \Comment{let $t\doteq \tau_1 \cdot h + \tau_2$}
    \If{$b_{\tau_1} = 1$}
    \State Play control $u_{t} \sim \{\pm 1\}^{d_u}$
    \Else 
    \State Play control $u_{t} \leftarrow \mathcal{C}$
    \EndIf
    \State Suffer cost $c_t(x_{t}, u_{t})$ , observe new state $x_{t+1}$ 
    \State Extract $\hat{x}_{t+1}^{\mathrm{nat}} = \Proj_{\mathbb{B}_{R_{\mathrm{nat}}}}\left[x_{t+1} - \sum_{i=0}^{h-1} \hat{G}_t^{[i]} u_{t-i}\right]$
    \State Update $\mathcal{C} \leftarrow (c_t, \hat{G}_t, \hat{x}^{\mathrm{nat}}_{t+1})$
    \EndFor
    \If{$b_{\tau_1} = 1$}
    \State Let $\tilde{G}_{(\tau_1 + 1)\cdot h}^{[i]} = x_{(\tau_1 +1)\cdot h + 1} u_{(\tau_1 + 1)\cdot h - i}^\top$ for $i=\overline{0, h-1}$
    \Else
    \State Let $\tilde{G}_{(\tau_1 + 1) \cdot h} \leftarrow \emptyset$
    \EndIf
    \State Update $\mathcal{A} \leftarrow (b_{\tau_1 + 1}, \tilde{G}_{(\tau_1+1)\cdot h})$
    \EndFor
\end{algorithmic}
\end{algorithm}

\begin{theorem}\label{thm:control_drc_full}
For $h = \dfrac{\log{T}}{\log{\rho^{-1}}}$, $p=T^{-1/3}$ and $m\leq \sqrt{T}$, on any contiguous interval $I \subseteq [T]$, \Cref{alg:control_unknown_main} enjoys the following adaptive regret guarantee\footnote{For precise constants please see \Cref{eq:full_constant_thm}.}: 
\begin{align*} \E\left[\Regret_I(\adacontrol); \Pidrc(m, R_M)\right] \leq \widetilde{\mathcal{O}}^{\star}\left(Lm\left(|I| \sqrt{ \Exp[\var_{I}(\Gseq)]} + d_u T^{2/3}\right)  \right) 
\end{align*}
\end{theorem}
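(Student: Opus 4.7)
The plan is to decompose the gap between the algorithm's cost and the best $\Pidrc(m,R_M)$ policy on interval $I$ into: (i) an exploration penalty from the blocks in which $b_{[\tau]}=1$ and random Rademacher inputs override $\controlalg$; (ii) a system-misspecification penalty from the fact that $\controlalg$ is fed $\hat G_t$ and $\hat x_t^{\mathrm{nat}}$ in place of $\bar G_t^h$ and $\xnat_t$; and (iii) a truncation penalty of order $\psi(h)|I|$ from replacing $G_t$ by its $h$-truncation. The choice $h=\log T/\log\rho^{-1}$ immediately kills (iii), and under \Cref{asm:cost,asm:Gdecay} each exploratory block contributes only $O(h)$ additional cost so (i) is $\widetilde O^\star(p|I|)$.

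First I would analyze the estimator in \Cref{line:setting_G_hat1} of \Cref{alg:control_unknown_main}. A short computation using the state expansion $x_{t_\tau+h}=\xnat_{t_\tau+h}+\sum_{i}G_{t_\tau+h-1}^{[i]}u_{t_\tau+h-1-i}$, together with $\E[u_{t-i}u_{t-j}^\top \mid b_{[\tau]}=1]=\delta_{ij}I_{d_u}$, shows that $\tilde G_{[\tau]}$ is an unbiased estimate of $\bar G_{t_\tau+h-1}^h$ with Euclidean norm bounded by an $\Otilst(1)$ quantity. I can therefore instantiate \Cref{orac:noisy_main} with $\calK=\mathcal{G}(h,R_G)$, $\tilde R_z=\Otilst(d_u^{1/2})$, and $\lambda=\Otilst(h)$ (the per-epoch exploration cost), and apply \Cref{thm:adaptive_est_main} to $\predalg=\adapred$. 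Combined with \Cref{cor:est_error} and the fact that $\min_G\sum_{\tau\in I}\|G-\bar G_{t_\tau}^h\|_{\ell_2,F}^2\le |I|\,\var_I(\Gseq)$, this yields, for every interval $I$,
\[
\E\Bigl[\sum_{t\in I}\|\hat G_t-\bar G_t^h\|_{\ell_2,F}^2\Bigr]\;\le\; |I|\,\E[\var_I(\Gseq)]\;+\;\Otilst(p^{-1}d_u h\log^2 T).
\]
Via Cauchy--Schwarz and the dimension conversion from $\|\cdot\|_{\ell_2,F}$ to $\|\cdot\|_{\ell_1,\mathrm{op}}$, I convert this to a linear bound $\E[\sum_{t\in I}\|\hat G_t-\bar G_t^h\|_{\ell_1,\mathrm{op}}] \lesssim |I|\sqrt{\E[\var_I(\Gseq)]}+\Otilst(p^{-1/2}|I|^{1/2})$.

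Finally I would invoke the known-system guarantee \Cref{thm:drc_ogd_ltv} for $\drcogd$ run on the \emph{reported} operators $\hat G_t$ to control (ii). The remaining work is a sensitivity lemma that bounds the per-round cost difference between running $\drcogd$ with $(G_t,\xnat_t)$ versus $(\hat G_t,\hat x_t^{\mathrm{nat}})$ by the subquadratic-Lipschitz constant times $\|\hat G_t-\bar G_t^h\|_{\ell_1,\mathrm{op}}$ (the Nature's-$x$ error $\|\hat x_t^{\mathrm{nat}}-\xnat_t\|$ is driven by the same quantity through the extraction step, with the clipping to $\mathbb{B}_{\radnat}$ preventing any compounding). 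Summing the exploration, misspecification, truncation, and known-system terms gives an adaptive bound proportional to $Lm\bigl(|I|\sqrt{\E[\var_I(\Gseq)]}+p|I|+p^{-1/2}|I|^{1/2}+\sqrt{|I|}\bigr)$ up to polylog factors, and the choice $p=T^{-1/3}$ balances $p|I|$ and $p^{-1/2}|I|^{1/2}$ at $\Otilst(T^{2/3})$ in the worst case over $I\subseteq[T]$. The main obstacle I anticipate is the sensitivity analysis of $\drcogd$ on a time-varying misspecified system: the DRC parametrization has memory $m$, so small per-step errors in $\hat G_t$ accumulate into the counterfactual state, and I need to argue that summing $\|\hat G_t-\bar G_t^h\|_{\ell_1,\mathrm{op}}$ (rather than its square) is the correct statistic — after which Jensen's inequality on the squared-error estimation guarantee yields the $|I|\sqrt{\E[\var_I(\Gseq)]}$ dependence that matches the lower bound in \Cref{thm:main_lb}.
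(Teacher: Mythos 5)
Your plan matches the paper's own proof structure essentially step for step: showing the Rademacher-based estimator is an unbiased, bounded query to \Cref{orac:noisy_main} (the paper's Claim in \Cref{sec:markov_est}), invoking \Cref{thm:adaptive_est_main}/\Cref{cor:est_error} and converting Frobenius to $\ell_1,\mathrm{op}$ via Cauchy--Schwarz (the paper's \Cref{lem:f_error,lem:l1_op_sq_err,prop:l1_op_err}), a sensitivity analysis of $\drcogd$ with misspecified $(\hat G_t,\hat x^{\mathrm{nat}}_t)$ via the clipping/Lipschitz argument (the paper's \Cref{lem:xnat_diff,lem:realized_iterate_err} and comparator-error lemma), and balancing $p|I|$ against $p^{-1/2}|I|^{1/2}$ with $p=T^{-1/3}$. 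The only cosmetic difference is that you absorb the per-epoch exploration cost into the oracle's $\lambda$ parameter whereas the paper sets $\lambda=0$ in \Cref{cor:sys_est_control_primitive} and tracks the exploration penalty separately inside the realized-iterate-error term; this is equivalent bookkeeping and does not change the analysis.
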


The proof of this theorem will proceed in terms of a quantity which we call \emph{total system variability} which captures the total (rather than average) deviation from the mean operator for each interval. More precisely,

\begin{definition}\label{def:tot_variability}
Define the \emph{total} system variability of an LTV dynamical system with Markov operators $\Gseq = G_{1:T}$ over a contiguous interval $I \subseteq [T]$ to be
\begin{equation*}
    \var^{\mathrm{tot}}_I(\Gseq)= \min_{G} \sum_{t \in I} \| G - G_t \|_{\ell_2, F}^2 = \sum_{t\in I} \|G_I - G_t\|_{\ell_2, F}^2,
\end{equation*}
where $\| \cdot \|_{\ell_2, F}$ indicates the $\ell_2$ norm of the fully vectorized operator and $G_I = |I|^{-1} \sum_{t\in I} G_t$ is the empirical average of the operators that correspond to $I$.
\end{definition}

\subsection{Estimation of the Markov Operator}\label{sec:markov_est}

\newcommand{\Gtil}{\tilde{G}}

Note that the estimation component of \Cref{alg:control_unknown} directly operates in the setting of \Cref{sec:sys_est} and effectively solves the problem of adaptively estimating the sequence $\Gbar_{1 \cdot h}, \ldots, \Gbar_{(T/h) \cdot h}$ of the $h$-truncations of the true Markov operators $G_{1 \cdot h}, \ldots, G_{(T/h) \cdot h}$. To formally be able to apply \Cref{thm:adaptive_est}, we first show that the estimates sent to \Cref{alg:adaptive_est} satisfy the properties of \Cref{orac:noisy}.

\begin{claim}
The estimators $\Gtil_{\tau_1 \cdot h}, \; \tau_1=\overline{1, T/h}$ satisfy the properties of \Cref{orac:noisy} with $\tilde{R}_{G} = \sqrt{h\cdot d_u}(\radnat + R_G \max\{\sqrt{d_u}, R_{\mathrm{nat}} R_{\mathcal{M}}\})$.
\end{claim}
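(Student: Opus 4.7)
The plan is to verify the two requirements of \Cref{orac:noisy} separately: the almost-sure norm bound $\|\tilde{G}_{[\tau_1+1]}\|_{\ell_2,F} \le \tilde{R}_G$, and the unbiasedness $\E[\tilde{G}_{[\tau_1+1]} \mid \calF_{\tau_1+1}, b_{[\tau_1+1]}=1] = \Gbar_{t_0}^h$, where $t_0 := (\tau_1+1)h$. Throughout the argument I will fix a block index and condition on the event $b_{[\tau_1+1]}=1$, in which case \Cref{line:random_control1} of \Cref{alg:control_unknown} guarantees that the inputs $u_{t_0-h+1},\dots, u_{t_0}$ are i.i.d.\ Rademacher vectors in $\{\pm 1\}^{d_u}$, drawn freshly and independently of the past oracle history and of the oblivious target sequence $(\bar G_{\tau h})_{\tau}$.

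For the boundedness part, I would write $\|\tilde{G}_{[\tau_1+1]}\|_F^2 = \sum_{j=0}^{h-1} \|x_{t_0+1} u_{t_0-j}^\top\|_F^2 = \|x_{t_0+1}\|^2 \sum_{j=0}^{h-1}\|u_{t_0-j}\|^2 = h\,d_u\,\|x_{t_0+1}\|^2$, since every Rademacher vector has squared norm exactly $d_u$. It then suffices to bound $\|x_{t_0+1}\|$ deterministically. Using the state decomposition $x_{t_0+1} = \xnat_{t_0+1} + \sum_{i=0}^{t_0-1} G_{t_0}^{[i]} u_{t_0-i}$, \Cref{asm:radnat}, and the trivial bound $\|u_t\| \le \max\{\sqrt{d_u}, R_{\mathrm{nat}} R_{\calM}\}$ (valid both for Rademacher inputs and for \drc{} inputs produced by $\mathcal{C}$ since each $\hat x^{\mathrm{nat}}_t$ is clipped to $\mathbb{B}_{\radnat}$ and $M\in\calM(m,R_{\calM})$), together with \Cref{asm:Gdecay} to sum the $\|G_{t_0}^{[i]}\|_{\mathrm{op}}$ against $R_G$, yields $\|x_{t_0+1}\| \le \radnat + R_G \max\{\sqrt{d_u}, \radnat R_{\calM}\}$. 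Squaring and multiplying by $h\,d_u$ recovers exactly the constant $\tilde{R}_G$ in the claim.

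For the unbiasedness part, I would fix $j\in\{0,\dots,h-1\}$ and expand
\[
\tilde{G}_{[\tau_1+1]}^{[j]} = \xnat_{t_0+1} u_{t_0-j}^\top + \sum_{i=0}^{t_0-1} G_{t_0}^{[i]}\, u_{t_0-i} u_{t_0-j}^\top.
\]
Conditioning on $\calF_{\tau_1+1}$ together with all past controls and disturbances (all of which are independent of the fresh Rademacher block $u_{t_0-h+1:t_0}$), I would split the sum on $i$ into three regimes: (i) $i\ge h$, where $u_{t_0-i}$ lies strictly before the current block and is thus independent of $u_{t_0-j}$, so the term vanishes by $\E[u_{t_0-j}]=0$; (ii) $i< h$ with $i\ne j$, where the two Rademacher vectors are independent and again mean zero, so the term vanishes; and (iii) $i=j$, where $\E[u_{t_0-j} u_{t_0-j}^\top] = I_{d_u}$ contributes exactly $G_{t_0}^{[j]}$. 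The $\xnat_{t_0+1} u_{t_0-j}^\top$ term is zero in expectation by the same mean-zero/independence argument. Since $G_{t_0}^{[j]}$ for $j<h$ is precisely the $j$-th block of the truncated operator $\Gbar^h_{t_0}$, this gives the desired conditional mean.

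The main (and only) subtlety is keeping track of the independence structure in step (iii) above: one has to be careful that conditioning on $\calF_{\tau_1+1}$ and the past-block quantities does not pin down any coordinate of the current block's Rademacher vectors. Once this is made explicit, everything else is a direct computation and the claim follows by combining the two bullets for all blocks $\tau_1 = 0,\dots,T/h - 1$.
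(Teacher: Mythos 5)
Your proof is correct and follows essentially the same route as the paper's: bound the state norm deterministically via \Cref{asm:radnat,asm:Gdecay} and the uniform control norm bound $\|u_t\| \le \max\{\sqrt{d_u}, R_{\mathrm{nat}} R_{\mathcal{M}}\}$, then use the rank-one factorization $\|x u^\top\|_F = \|x\|\,\|u\|$; and for unbiasedness, plug in $x_{t_0+1} = \xnat_{t_0+1} + \sum_i G_{t_0}^{[i]} u_{t_0-i}$ and use mean-zero independence of the fresh Rademacher block. The paper is terser on both fronts (it bounds each coordinate block $\|\Gtil^{[i]}\|_F \le \sqrt{d_u}\|x\|$ and sums, rather than noting the product is exact for Rademacher vectors, and it collapses your three-regime case analysis into one sentence), but the decomposition, the bounds invoked, and the independence structure are identical. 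Your more explicit treatment of regimes (i)--(iii) and of why the truncation to $\Gbar^h_{t_0}$ falls out is a genuine improvement in rigor over the paper's one-liner, not a different argument.
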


\begin{proof} There are only two things to prove:
\begin{enumerate}
    \item \textbf{Boundedness.} Because we clip the nature's x estimates $\hat{x}^{\mathrm{nat}}_t$ to $R_{\mathrm{nat}}$ and when the DAC policy lies in $\mathcal{M}(m, R_\mathcal{M})$, we have that if $b_t = 0$, $||u_t|| \leq R_{\mathrm{nat}} R_{\mathcal{M}}$. If $u_t$ is an exploratory action then $||u_t|| = \sqrt{d_u} \leq \max\{\sqrt{d_u}, R_{\mathrm{nat}} R_{\mathcal{M}}\}$ by design. By the equation of the progression of the state, we have that
    \begin{equation} \label{eq:x_bdd}
    \|x_t\| \leq \radnat + R_G \max\{\sqrt{d_u}, R_{\mathrm{nat}} R_{\mathcal{M}}\}
    \end{equation}
    and by Cauchy Scwarz we get $$||\Gtil^{[i]}_{\tau_1 \cdot h}||_F = ||u^\top_{\tau_1 \cdot h -i} x_{\tau_1\cdot h + 1}|| \leq \sqrt{d_u}(\radnat + R_G \max\{\sqrt{d_u}, R_{\mathrm{nat}} R_{\mathcal{M}}\})$$
    and hence $$||\Gtil_{\tau_1 \cdot h}||_F \leq \sqrt{h\cdot d_u}(\radnat + R_G \max\{\sqrt{d_u}, R_{\mathrm{nat}} R_{\mathcal{M}}\})$$
    \item \textbf{Unbiasedness.} Plugging in $x_{\tau_1 \cdot h} = \xnat_{\tau_1 \cdot h} + \sum_{i=0}^{t} G_{\tau_1 \cdot h}^{[i]} u_{\tau_1 \cdot h - i}$, we get exactly that $\mathbb{E}[\tilde{G}_{\tau_1 \cdot h}^{[i]}] = G_{\tau_1 \cdot h}^{[i]}$. Since this holds for the selected truncation $h$, we have $\mathbb{E}[\tilde{G}_{\tau_1 \cdot h}] = \bar{G}_{\tau_1 \cdot h}$ for $\Gbar$ as defined.
\end{enumerate}
\end{proof}

Hence we can simply apply the guarantees of \Cref{sec:sys_est} to obtain the following guarantee:

\begin{corollary}\label{cor:sys_est_control_primitive}
On any interval $J = [k, l]\subseteq[T/h]$, we have that:
$$\mathbb{E}\left[\sum_{\tau=k}^l ||\Gtil_{\tau\cdot h} - \Gbar_{\tau\cdot h}||_F^2\right] \leq \var^{\mathrm{tot}}_{J\cdot h}({\Gbar_{1:T}}) + \mathrm{Regret}_J(\mathcal{A}(p, \bar{R}_G, \tilde{R}_G; 0)) $$
where we use $J\cdot h$ to denote the set $[k\cdot h, \ldots, l\cdot h]$.
\end{corollary}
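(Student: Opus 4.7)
The plan is to recognize this corollary as a direct instantiation of the adaptive prediction framework of Section~\ref{sec:sys_est}, applied to the sequence of block-endpoint truncated Markov operators $\{\bar{G}_{\tau h}\}_{\tau=1}^{T/h}$, with decision set $\calG(h, R_G)$ and oracle responses given by the estimators $\tilde{G}_{\tau h}$ that the control algorithm constructs from Rademacher exploration on blocks with $b_{[\tau]} = 1$.

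First, I would confirm that these estimators satisfy \Cref{orac:noisy} for the targets $\bar{G}_{\tau h}$, which is precisely the content of the claim immediately preceding the corollary: boundedness $\|\tilde{G}_{\tau h}\|_F \le \tilde{R}_G$ follows from the state bound~\eqref{eq:x_bdd} and Cauchy--Schwarz, while unbiasedness conditional on $b_{[\tau]}=1$ follows by expanding $x_{\tau h+1} = \xnat_{\tau h+1} + \sum_j G_{\tau h}^{[j]} u_{\tau h - j}$ and using both that the current block's Rademacher controls are independent of $\xnat$ and of controls from earlier blocks (oblivious adversary and i.i.d.\ fresh samples), and the identity $\E[u_s u_{s'}^\top] = I\cdot\mathbf{1}\{s=s'\}$. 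The convention $\tilde{G}_{\tau h}^{[j]} = 0$ for $j \ge h$ is exactly what turns the full operator $G_{\tau h}$ into its $h$-truncation $\bar{G}_{\tau h}$ in expectation.

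Given the oracle conditions, the bound reduces to unpacking the definition of prediction regret \Cref{eq:est_regret_main} with $\lambda=0$ and quadratic losses $\ell_\tau(G) = \|G - \bar{G}_{\tau h}\|_F^2$:
\[
\Regret_J(\calA; 0) = \E\!\left[\sum_{\tau \in J}\|\hat{G}_{\tau h} - \bar{G}_{\tau h}\|_F^2\right] - \min_{G \in \calG(h, R_G)} \sum_{\tau \in J}\|G - \bar{G}_{\tau h}\|_F^2 .
\]
The unconstrained minimizer of the second term is the empirical mean $|J|^{-1}\sum_{\tau \in J}\bar{G}_{\tau h}$, which lies in $\calG(h, R_G)$ because the $\ell_1$-operator norm obeys the triangle inequality and each $\bar{G}_{\tau h}$ has $\|\cdot\|_{\ell_1,\mathrm{op}}$-norm at most $R_G$ by \Cref{asm:Gdecay}. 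Hence the constrained minimum equals $\var^{\mathrm{tot}}_{Jh}(\bar{G}_{1:T})$ as defined in \Cref{def:tot_variability}, and rearranging produces the stated inequality. There is no real obstacle here: the only bookkeeping subtlety is that the left-hand side of the corollary should be read as the expected squared error of the algorithm's prediction $\hat{G}_{\tau h}$, which is what the prediction regret actually controls, rather than of the raw oracle samples $\tilde{G}_{\tau h}$ that are only defined on exploration epochs.
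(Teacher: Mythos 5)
Your proposal is correct and matches the paper's (implicit) derivation: the corollary is exactly the $\lambda=0$ instantiation of the regret definition in \eqref{eq:est_regret} together with \Cref{thm:adaptive_est}, with the constrained minimum $\min_{G\in\calG(h,R_G)}\sum_{\tau\in J}\|G-\bar{G}_{\tau h}\|_F^2$ equal to $\var^{\mathrm{tot}}_{J\cdot h}(\bar{G}_{1:T})$ because the empirical mean of the $\bar{G}_{\tau h}$ stays in $\calG(h,R_G)$ by the triangle inequality for $\|\cdot\|_{\ell_1,\op}$. You also correctly identify that the left-hand side should read $\hat{G}_{\tau h}$ (the predictor's iterate) rather than $\tilde{G}_{\tau h}$ (the raw oracle sample, which is undefined on non-exploration epochs); this is a typo in the statement, and $\hat{G}$ is what \Cref{lem:f_error} actually plugs the corollary into.
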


However, to properly analyze the additional regret introduced in the control framework by our estimation error, we need to convert \Cref{cor:sys_est_control_primitive} into a guarantee in terms of $\ell_1, op$ norm which holds for any contiguous interval $I=[r,s]\subseteq [T]$. This step is rather straightforward and only relies on a few basic properties/observations which we collect in \Cref{obs:reduction_props} below.

\begin{observation}\label{obs:reduction_props} We will make the following observations:
\begin{enumerate}
\item  $||A||_{op} \leq ||A||_F$ for any matrix $A$,
\item $\var^{\mathrm{tot}}_J(\matz_{1:T})\leq \var^{\mathrm{tot}}_I(\matz_{1:T})$ for any set of indices $J \subseteq I$, and any sequence $\matz_{1:T}$,
\item $||\hat{G}_t - \bar{G}_t||_F^2 = \sum\limits_{i=1}^h ||\hat{G}_t^{[i]} - \bar{G}_t^{[i]}||_F^2$,
\item $||G_t - \bar{G}_t||_{\ell_1, op} \leq \psi(h)$,
\item For any $I = [r,s]$, $\sum_{t=r+1}^s ||\bar{G}_t - \bar{G}_{t-1}||^2_F \leq 4 \var^{\mathrm{tot}}_I(\Gbar_{1:T}) \leq 4 \var^{\mathrm{tot}}_I(\Gseq)$.
\end{enumerate}
\end{observation}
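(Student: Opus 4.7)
The plan is to dispatch the five claims individually, since all of them reduce to elementary linear algebra together with the definitions of $\var^{\mathrm{tot}}$ and of the $h$-truncated Markov operator $\bar{G}_t$ (namely $\bar{G}_t^{[i]} = G_t^{[i]}$ for $i<h$ and $\bar{G}_t^{[i]}=0$ for $i \geq h$). Items (1)--(4) are one- or two-line each; the bulk of the effort goes into item (5).

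For (1), the operator norm is the largest singular value while the Frobenius norm is the $\ell_2$ norm of the singular-value vector, so the bound is immediate. For (2), let $G_I^\star$ realize the minimum in $\var^{\mathrm{tot}}_I$; plugging the same $G_I^\star$ as a feasible candidate in the minimization defining $\var^{\mathrm{tot}}_J$ yields $\var^{\mathrm{tot}}_J(\matz_{1:T}) \leq \sum_{t\in J}\|G_I^\star-\matz_t\|^2 \leq \sum_{t\in I}\|G_I^\star-\matz_t\|^2 = \var^{\mathrm{tot}}_I(\matz_{1:T})$, the second step using $J \subseteq I$ and non-negativity of each summand. For (3), this is just the definition of the Frobenius norm on the fully vectorized operator: $\|\hat{G}_t - \bar{G}_t\|_F^2$ equals the sum of per-block squared Frobenius norms. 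For (4), the truncation convention gives $\|G_t - \bar{G}_t\|_{\ell_1,\op} = \sum_{i \geq h}\|G_t^{[i]}\|_{\op}$, which is at most $\psi(h)$ by \Cref{asm:Gdecay}.

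Item (5) splits into two inequalities. For the first, set $G_I^\star = |I|^{-1}\sum_{t\in I}\bar{G}_t$, the Frobenius-mean realizing $\var^{\mathrm{tot}}_I(\Gbar_{1:T})$. Applying the elementary inequality $(a+b)^2 \leq 2a^2 + 2b^2$ to the triangle-inequality estimate $\|\bar{G}_t - \bar{G}_{t-1}\|_F \leq \|\bar{G}_t - G_I^\star\|_F + \|\bar{G}_{t-1} - G_I^\star\|_F$ gives $\|\bar{G}_t - \bar{G}_{t-1}\|_F^2 \leq 2\|\bar{G}_t - G_I^\star\|_F^2 + 2\|\bar{G}_{t-1} - G_I^\star\|_F^2$. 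Summing over $t = r+1,\dots,s$ and recognizing that each of the two resulting sums is bounded by $\sum_{t\in I}\|\bar{G}_t - G_I^\star\|_F^2 = \var^{\mathrm{tot}}_I(\Gbar_{1:T})$ yields the factor-$4$ constant. For the second inequality, let $G^\star$ be the $\ell_2,F$-minimizer for $\var^{\mathrm{tot}}_I(\Gseq)$ and use its $h$-truncation as a feasible candidate in $\var^{\mathrm{tot}}_I(\Gbar_{1:T})$: each summand then satisfies $\|\text{trunc}(G^\star) - \bar{G}_t\|_F^2 = \sum_{i<h}\|(G^\star)^{[i]} - G_t^{[i]}\|_F^2 \leq \|G^\star - G_t\|_{\ell_2,F}^2$, and summing over $t\in I$ finishes the bound.

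I do not anticipate any real obstacle. The only subtle points are keeping the distinction between the truncated $\bar{G}_t$ and the ambient $G_t$ straight in the second inequality of (5), and applying $(a+b)^2 \leq 2a^2+2b^2$ (rather than the tighter but wrong $a^2+b^2$) in the first inequality of (5) so as to land on the stated constant $4$.
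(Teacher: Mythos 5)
Your proposal is correct and follows essentially the same route as the paper: items (1)--(4) are dispatched from the definitions and standard facts, and item (5) is proved by exactly the paper's argument, namely the triangle inequality combined with $(a+b)^2 \leq 2(a^2+b^2)$ and the choice of $G_I^\star$ as the Frobenius mean for the first inequality, and the observation that truncation can only decrease variance for the second. You simply spell out what the paper leaves implicit (the feasible-candidate arguments for (2) and for the second half of (5)), which is fine but not a different approach.
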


\begin{proof}
Properties $1$-$4$ follow from the definitions of the relevant quantities, or are general well-known facts. For $5$, by the triangle inequality, we have that, for any $\bar{G}$,
$$||\bar{G}_t - \bar{G}_{t-1}||_F \leq ||\bar{G}_t - \bar{G}||_F +||\bar{G}_{t-1} -\bar{G}||_F$$
$$\Rightarrow ||\bar{G}_t - \bar{G}_{t-1}||_F^2 \leq 2 (||\bar{G}_t - \bar{G}||_F^2 +||\bar{G}_{t-1} -\bar{G}||_F^2)$$
summing the above from $r+1$ to $s$ and taking $\bar{G}$ to be the sample mean over $I$ yields the desired result. The second inequality simply follows by the fact that truncation can only decrease variance.
\end{proof}

As a first step, we first bound the Frobenius norm error of the truncated operators over an arbitrary contiguous interval $I=[r,s]\subseteq[T]$.

\begin{lemma}\label{lem:f_error} On any interval $I=[r,s]\subseteq[T]$ with $r-1, s$ divisible by $h$\footnote{This is w.l.o.g. and only assumed for simplicity of presentation.}, we can bound the Frobenius estimation error of the truncated operators as:
\[\E\left[\sum_{t=r}^s ||\hat{G}_t - \bar{G}_t||^2_F\right] \leq 10 h^2 \var^{\mathrm{tot}}_I(\Gseq) + 2h\mathrm{Regret}_I(\mathcal{A}(p, \bar{R}_G, \tilde{R}_G; 0))\]
\end{lemma}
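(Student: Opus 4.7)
The plan is to reduce the step-wise Frobenius error $\|\hat{G}_t - \bar{G}_t\|_F^2$ to a block-wise error $\|\hat{G}_{\tau h} - \bar{G}_{\tau h}\|_F^2$, exploiting the fact that \Cref{alg:control_unknown} fixes its Markov-operator estimate for all $h$ steps inside a block (Line~\ref{line:setting_G_hat}), and then to invoke \Cref{cor:sys_est_control_primitive} to control the total block-wise error. The divisibility of $r-1$ and $s$ by $h$ is what lets me identify $I$ cleanly with a union of complete blocks indexed by $J = [(r-1)/h+1,\, s/h]$.

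The first step is a $(a+b)^2 \le 2a^2 + 2b^2$ split: for $t$ in block $\tau$ with $\hat{G}_t = \hat{G}_{\tau h}$, write $\|\hat{G}_t - \bar{G}_t\|_F^2 \le 2\|\hat{G}_{\tau h} - \bar{G}_{\tau h}\|_F^2 + 2\|\bar{G}_{\tau h} - \bar{G}_t\|_F^2$. Summing over $t \in I$ produces an \emph{estimation} term (each block endpoint counted $h$ times) and a \emph{stale-estimate drift} term. The estimation contribution is at most $2h \sum_{\tau \in J} \|\hat{G}_{\tau h} - \bar{G}_{\tau h}\|_F^2$; taking expectation and applying \Cref{cor:sys_est_control_primitive} bounds it by $2h \bigl[\var^{\mathrm{tot}}_{J \cdot h}(\Gbar_{1:T}) + \mathrm{Regret}_J(\calA)\bigr]$. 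Monotonicity of $\var^{\mathrm{tot}}$ in the index set (property~2 of \Cref{obs:reduction_props}) together with the fact that truncation does not increase variability then yields $\var^{\mathrm{tot}}_{J \cdot h}(\Gbar) \le \var^{\mathrm{tot}}_I(\Gseq)$, which folds into the final bound; here the lemma's use of $\mathrm{Regret}_I$ is interpreted as the prediction-algorithm regret over the corresponding block indices.

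The drift term is where the combinatorics sit. I would telescope $\bar{G}_{\tau h} - \bar{G}_t = \sum_{s=t+1}^{\tau h}(\bar{G}_s - \bar{G}_{s-1})$ and apply Cauchy--Schwarz on at most $h$ summands to get $\|\bar{G}_{\tau h} - \bar{G}_t\|_F^2 \le h \sum_{s \in \mathrm{block}(\tau)} \|\bar{G}_s - \bar{G}_{s-1}\|_F^2$. Summing over the $h$ values of $t$ inside the block contributes another factor of $h$, and summing over all blocks in $I$ gives $h^2 \sum_{t \in I} \|\bar{G}_t - \bar{G}_{t-1}\|_F^2$. Property~5 of \Cref{obs:reduction_props} bounds this last sum by $4\var^{\mathrm{tot}}_I(\Gseq)$, so the drift contributes at most $8h^2 \var^{\mathrm{tot}}_I(\Gseq)$. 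Combining with the $2h \var^{\mathrm{tot}}_I(\Gseq)$ from the estimation step and using $2h + 8h^2 \le 10 h^2$ for $h \ge 1$ delivers the stated inequality.

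The main obstacle I anticipate is not the arithmetic but the careful bookkeeping that lets us trade a step-wise quantity against a block-wise one without paying extra dimension factors: one must verify that the block telescoping produces $h^2$ (and not $h^3$) by pairing each Cauchy--Schwarz loss of $h$ with a matching reuse of the same within-block differences, and that replacing $\var^{\mathrm{tot}}_{J \cdot h}(\Gbar)$ with $\var^{\mathrm{tot}}_I(\Gseq)$ costs nothing because truncation is a contraction and the index set only grows. The rest is straightforward application of the identities listed in \Cref{obs:reduction_props}.
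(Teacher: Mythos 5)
Your proposal is correct and follows essentially the same argument as the paper's proof: the identical $(a+b)^2 \le 2a^2 + 2b^2$ split into a block-endpoint estimation term and a within-block drift term, the same reduction of the former to \Cref{cor:sys_est_control_primitive} plus monotonicity of $\var^{\mathrm{tot}}$ (\Cref{obs:reduction_props}), the same telescope-then-Cauchy--Schwarz computation producing the $h^2$ factor and the $4\var^{\mathrm{tot}}_I$ bound via \Cref{obs:reduction_props}(5), and the same closing arithmetic $2h + 8h^2 \le 10h^2$.
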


\begin{proof} By \Cref{line:setting_G_hat}, we can write
\begin{align*}
\sum_{t=r}^s||\hat{G}_t - \bar{G}_t||_F^2 &= \sum_{\tau_1 = (r-1)/h}^{s/h - 1} \sum_{\tau_2 = 1}^{h}||\hat{G}_{\tau_1\cdot h + \tau_2} - \bar{G}_{\tau_1 \cdot h + \tau_2}||_F^2 \\
&= \sum_{\tau_1 = (r-1)/h}^{s/h - 1} \sum_{\tau_2 = 1}^{h}||\hat{G}_{(\tau_1 + 1)\cdot h} - \bar{G}_{\tau_1 \cdot h + \tau_2}||_F^2 \\
&\leq 2 \sum_{\tau_1 = (r-1)/h}^{s/h - 1} \sum_{\tau_2 = 1}^{h}\left(||\hat{G}_{(\tau_1 + 1)\cdot h} - \bar{G}_{(\tau_1 + 1)\cdot h}||_F^2 +||\bar{G}_{(\tau_1 + 1)\cdot h} - \bar{G}_{\tau_1 \cdot h + \tau_2}||_F^2\right) \\
&= 2 h \errtext{\sum_{\tau = r/h}^{s/h} ||\hat{G}_{\tau \cdot h} - \bar{G}_{\tau \cdot h}||_F^2}{\Cref{alg:adaptive_est} estimation error} +  2 \errtext{\sum_{\tau_1 = (r-1)/h}^{s/h - 1} \sum_{\tau_2 = 1}^{h} ||\bar{G}_{(\tau_1 + 1)\cdot h} - \bar{G}_{\tau_1 \cdot h + \tau_2}||_F^2}{$\bar{G}$ movement}
\end{align*}
For the first term, we will simply apply the above corollary after taking expectation. We will therefore focus on bounding the $\bar{G}$ movement term.

\begin{align*}
\forall \tau_2: \;\;\; ||\bar{G}_{(\tau_1 + 1)\cdot h} - \bar{G}_{\tau_1 \cdot h + \tau_2}||_F^2 &\leq \left(\sum_{i = 1}^{h-1} ||\bar{G}_{\tau_1 \cdot h + i + 1} - \bar{G}_{\tau_1 \cdot h + i}||_F\right)^2 & (\triangle\text{-ineq.}) \\
&\leq h \sum_{i = 1}^{h-1} ||\bar{G}_{\tau_1 \cdot h + i + 1} - \bar{G}_{\tau_1 \cdot h + i}||_F^2 & \text{(C.S.)}
\end{align*}
This implies that
\begin{align*}\sum_{\tau_2 = 1}^{h} ||\bar{G}_{(\tau_1 + 1)\cdot h} - \bar{G}_{\tau_1 \cdot h + \tau_2}||_F^2 \leq h^2 \sum_{i = 1}^{h-1} ||\bar{G}_{\tau_1 \cdot h + i + 1} - \bar{G}_{\tau_1 \cdot h + i}||_F^2 \end{align*}

So finally we have that

\begin{align*}
\bar{G}\text{-movement} &\leq h^2\sum_{t = r+1}^s ||\bar{G}_{t} - \bar{G}_{t}||_F^2 \\
&\leq 4 h^2 \var^{\mathrm{tot}}_I(\Gseq) & \Cref{obs:reduction_props} \;(5)
\end{align*}

Finally, taking expectation, plugging in \Cref{cor:sys_est_control_primitive} and noting that for $J= [r/h, s/h]$, $\var^{\mathrm{tot}}_{J\cdot h}({\Gbar_{1:T}}) \leq \var^{\mathrm{tot}}_I(\Gseq)$ (\Cref{obs:reduction_props} (2)) and $\mathrm{Regret}_J(\mathcal{A}(p, \bar{R}_G, \tilde{R}_G; 0)) \leq \mathrm{Regret}_I(\mathcal{A}(p, \bar{R}_G, \tilde{R}_G; 0))$, we get:

\begin{align*}
\E\left[\sum_{t=r}^s||\hat{G}_t - \bar{G}_t||_F^2\right] &\leq 10 h^2 \E[\var^{\mathrm{tot}}_I(\Gseq)] + 2h\mathrm{Regret}_I(\mathcal{A}(p, \bar{R}_G, \tilde{R}_G; 0))
\end{align*}
\end{proof}

\begin{lemma}\label{lem:l1_op_sq_err}
On any interval $I = [r, s]\subseteq [T]$ with $r-1, s$ divisible by $h$, we can bound the squared $\ell_1, op$ estimation error of the truncated operators as:
\[\E\left[\sum_{t=r}^s ||\hat{G}_t - \bar{G}_t||_{\ell_1, op}^2\right] \leq 20h^3 \E[\var^{\mathrm{tot}}_I(\Gseq)] + 4h^2 \mathrm{Regret}_I(\mathcal{A}(p, \bar{R}_G, \tilde{R}_G; 0))\]
\end{lemma}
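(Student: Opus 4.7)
The plan is to reduce directly to the Frobenius-norm bound already proved in \Cref{lem:f_error} by invoking a simple norm-comparison inequality pointwise in $t$.

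First, I would establish the per-round conversion: for any truncated operator $G = (G^{[0]}, \dots, G^{[h-1]})$, the Cauchy--Schwarz (equivalently power-mean) inequality gives
\[
\|G\|_{\ell_1,\mathrm{op}}^2 = \Bigl(\sum_{i=0}^{h-1}\|G^{[i]}\|_{\mathrm{op}}\Bigr)^2 \le h\sum_{i=0}^{h-1}\|G^{[i]}\|_{\mathrm{op}}^2 \le h\sum_{i=0}^{h-1}\|G^{[i]}\|_F^2 = h\,\|G\|_F^2,
\]
where the second inequality uses property (1) of \Cref{obs:reduction_props} ($\|\cdot\|_{\mathrm{op}} \le \|\cdot\|_F$) and the final equality uses property (3). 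Applied to the per-round error $G := \hat G_t - \bar G_t$, this yields $\|\hat G_t - \bar G_t\|_{\ell_1,\mathrm{op}}^2 \le h\,\|\hat G_t - \bar G_t\|_F^2$.

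Next, I would sum this inequality over $t \in [r,s]$ and take expectations, which reduces the left-hand side of the target to $h$ times the quantity bounded in \Cref{lem:f_error}:
\[
\E\Bigl[\sum_{t=r}^s \|\hat G_t - \bar G_t\|_{\ell_1,\mathrm{op}}^2 \Bigr] \le h\,\E\Bigl[\sum_{t=r}^s \|\hat G_t - \bar G_t\|_F^2\Bigr] \le h \bigl(10 h^2\,\E[\var^{\mathrm{tot}}_I(\Gseq)] + 2h\,\mathrm{Regret}_I(\mathcal{A})\bigr).
\]
This is already a bound of the form $10h^3 \E[\var^{\mathrm{tot}}_I(\Gseq)] + 2h^2\,\mathrm{Regret}_I(\mathcal{A})$, which is stronger than (in particular, implies) the claimed $20h^3 \E[\var^{\mathrm{tot}}_I(\Gseq)] + 4h^2\,\mathrm{Regret}_I(\mathcal{A})$ bound; the extra factor of $2$ in the statement can be harmlessly absorbed into the constants.

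There is essentially no obstacle here: the content of the lemma is entirely that the $\ell_1,\mathrm{op}$ squared error is controlled by $h$ times the Frobenius squared error, and all the substantive work (estimation regret plus system movement) has already been done in \Cref{lem:f_error}. The only care needed is to keep the assumption that $r-1, s$ are divisible by $h$ so that the reduction through \Cref{lem:f_error} applies without boundary adjustments.
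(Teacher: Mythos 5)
Your proof is correct and follows the same approach as the paper: convert the $\ell_1,\mathrm{op}$ squared error to Frobenius squared error via Cauchy--Schwarz (paying a factor of $h$), then invoke \Cref{lem:f_error}. You are right that your constants are tighter ($10h^3$ and $2h^2$); the paper's own chain of inequalities introduces an extraneous factor of $2$ at one step, which is where the looser $20h^3$ and $4h^2$ in the lemma statement come from.
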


\begin{proof} We have that
\begin{align*}
\sum_{t=r}^s ||\hat{G}_t - \bar{G}_t||_{\ell_1, op}^2 & \leq \sum_{t=r}^s \left(\sum_{i=0}^{h-1} ||\hat{G}_t^{[i]} - \bar{G}_t^{[i]}||_{op} \right)^2 & \triangle\text{-ineq.} \\
&\leq 2 \sum_{t=r}^s \left(\sum_{i=0}^{h-1} ||\hat{G}_t^{[i]} - \bar{G}_t^{[i]}||_{op}\right)^2\\
&\leq 2h \sum_{t=r}^s \sum_{i=0}^{h-1} ||\hat{G}_t^{[i]} - \bar{G}_t^{[i]}||_F^2 & \Cref{obs:reduction_props} \; (1) \text{ \& C.S.} \\
&= 2h \sum_{t=r}^s ||\hat{G}_t - \bar{G}_t||_F^2 &\Cref{obs:reduction_props} \; (3)
\end{align*}
Taking expectation and plugging in the bound in \Cref{lem:f_error} yields the promised result.
\end{proof}

Finally we can use \Cref{lem:l1_op_sq_err} and Cauchy-Schwarz to get a result in terms of the linear (rather than squared) $\ell_1, op$ error accumulated over an interval:

\begin{proposition}\label{prop:l1_op_err}
On any interval $I = [r, s]\subseteq [T]$ with $r-1, s$ divisible by $h$, we can bound the squared $\ell_1, op$ estimation error of the truncated operators as:
\[\E\left[\sum_{t=r}^s ||\hat{G}_t - \bar{G}_t||_{\ell_1, op}\right] \leq 5 h^{3/2} |I|^{1/2} \sqrt{\var^{\mathrm{tot}}_I(\Gseq)} + 2h |I|^{1/2} \mathrm{Regret}_I^{1/2}(\mathcal{A}(p, \bar{R}_G, \tilde{R}_G; 0)) \]
\end{proposition}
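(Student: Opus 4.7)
The plan is to reduce the linear $\ell_1,op$ error to the squared one already controlled by \Cref{lem:l1_op_sq_err}, via a standard Cauchy--Schwarz plus Jensen argument, and then decouple the two additive terms using $\sqrt{a+b}\le\sqrt a+\sqrt b$.

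First, for each realization of the randomness, I would apply Cauchy--Schwarz across time to get
\[
\sum_{t=r}^{s}\|\hat G_t-\bar G_t\|_{\ell_1,op}
\;\le\;
\sqrt{|I|}\cdot\sqrt{\sum_{t=r}^{s}\|\hat G_t-\bar G_t\|_{\ell_1,op}^{2}}.
\]
Taking expectations and using Jensen's inequality on the concave function $\sqrt{\cdot}$ then gives
\[
\E\!\left[\sum_{t=r}^{s}\|\hat G_t-\bar G_t\|_{\ell_1,op}\right]
\;\le\;
\sqrt{|I|}\cdot\sqrt{\E\!\left[\sum_{t=r}^{s}\|\hat G_t-\bar G_t\|_{\ell_1,op}^{2}\right]}.
\]

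Next I would substitute the bound from \Cref{lem:l1_op_sq_err}:
\[
\E\!\left[\sum_{t=r}^{s}\|\hat G_t-\bar G_t\|_{\ell_1,op}\right]
\;\le\;
\sqrt{|I|}\cdot\sqrt{\,20h^{3}\,\E[\var^{\mathrm{tot}}_I(\Gseq)]+4h^{2}\,\Regret_I(\calA(p,\bar R_G,\tilde R_G;0))\,}.
\]
Finally, using the subadditivity of $\sqrt{\cdot}$ to split the two contributions and $\sqrt{20}\le 5$, $\sqrt{4}=2$, yields
\[
\E\!\left[\sum_{t=r}^{s}\|\hat G_t-\bar G_t\|_{\ell_1,op}\right]
\;\le\;
5\,h^{3/2}|I|^{1/2}\sqrt{\E[\var^{\mathrm{tot}}_I(\Gseq)]}\;+\;2h\,|I|^{1/2}\Regret_I^{1/2}(\calA(p,\bar R_G,\tilde R_G;0)),
\]
which is exactly the claimed bound (interpreting $\sqrt{\var^{\mathrm{tot}}_I(\Gseq)}$ as $\sqrt{\E[\var^{\mathrm{tot}}_I(\Gseq)]}$ consistently with how this quantity is used in \Cref{thm:control_drc_full}).

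There is no real obstacle here: the argument is a purely mechanical combination of Cauchy--Schwarz across the time index, Jensen's inequality to pull the expectation inside the square root, and concavity/subadditivity of $\sqrt{\cdot}$ to separate the variability term from the estimation-regret term. The only small subtlety worth flagging is the order in which expectation and square root are applied, since $\var^{\mathrm{tot}}_I(\Gseq)$ and the algorithm's regret can both be random; using Jensen in the direction above keeps the final expression in the exact form needed by the downstream reduction in \Cref{sec:ctrl_red}.
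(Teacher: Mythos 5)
Your proof is correct and matches the paper's proof essentially line for line: Cauchy--Schwarz across the time index, Jensen's inequality on $\sqrt{\cdot}$ to pull the expectation inside, substitution of \Cref{lem:l1_op_sq_err}, and subadditivity of $\sqrt{\cdot}$ with $\sqrt{20}\le 5$ and $\sqrt{4}=2$. Your observation that the statement should be read as $\sqrt{\E[\var^{\mathrm{tot}}_I(\Gseq)]}$ is also consistent with what the paper actually derives.
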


\begin{proof} By Cauchy-Schwarz and Jensen (since $\sqrt{\cdot}$ is concave) we have:
\begin{align*}
\E\left[\sum_{t=r}^s ||\hat{G}_t - \bar{G}_t||_{\ell_1, op}\right] &\leq |I|^{1/2}\E\left[\left(\sum_{t=r}^s ||\hat{G}_t - \bar{G}_t||_{\ell_1, op}^2\right)^{1/2}\right] \\
&\leq |I|^{1/2} \left(\E\left[\sum_{t=r}^s ||\hat{G}_t - \bar{G}_t||_{\ell_1, op}^2\right]\right)^{1/2} \\
&\leq |I|^{1/2} \sqrt{20h^3 \E[\var^{\mathrm{tot}}_I(\Gseq)] + 4h^2 \mathrm{Regret}_I(\mathcal{A}(p, \bar{R}_G, \tilde{R}_G; 0))} \\
&\leq 5 h^{3/2} |I|^{1/2} \sqrt{\E[\var^{\mathrm{tot}}_I(\Gseq)}] + 2h |I|^{1/2} \mathrm{Regret}_I^{1/2}(\mathcal{A}(p, \bar{R}_G, \tilde{R}_G; 0))
\end{align*}
\end{proof}

\subsection{Error Sensitivity}\label{sec:err_sens}

We now analyze concretely how the $G_t$ estimation errors induce additional regret over the case of \emph{known} systems. We can decompose the expected regret over an interval $I=[r,s]$ as:

	\begin{align*}
	\Regret_I &= \sum_{t=r}^s c_t(x_t, u_t) - \min_{\pi\in \Pi_{\mathrm{drc}}} \sum_{t=r}^s c_t(x_t^\pi, u_t^\pi)\\
	&= \errtext{\sum_{t=r}^s c_t(x_t, u_t) - c_t(\hat{x}_t(M_{t-h:t-1}), \hat{u}_t(M_{t}))}{realized iterate error}\\
	&\quad+ \errtext{\sum_{t=r}^s c_t(\hat{x}_t(M_{t-h:t-1}), \hat{u}_t(M_{t}) - \inf_{M \in \calM}\sum_{t=r}^s c_t(\hat{x}_t(M), \hat{u}_t(M))}{regret $:= \widehat{\Regret}_I$}\\
	&\quad+\errtext{\inf_{M \in \calM}\sum_{t=r}^s c_t(\hat{x}_t(M), \hat{u}_t(M)) - \inf_{M \in \calM}\sum_{t=r}^s c_t(x_t(M), u_t(M))}{comparator error}
	\end{align*}

First let us bound the realized iterate error which bounds the difference between what actually happened and what would have happened in the fictive $(\hat{G}_t, \hat{x}_t^{\mathrm{nat}})$ system (without exploration).

\begin{lemma}\label{lem:xnat_diff}
We can bound the difference between the true $\xnat_t$ and the extracted $\hat{x}_t^{\mathrm{nat}}$ as:

$$\|\xnat_t - \hat{x}_t^{\mathrm{nat}} \| \leq R_{\mathrm{nat}}R_\calM \left(\|\bar{G}_{t-1} - \hat{G}_{t-1}\|_{\ell_1, op} + \psi(h)\right)$$
\end{lemma}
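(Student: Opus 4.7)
The plan is to directly unroll the definitions of $\xnat_t$ and $\hat{x}_t^{\mathrm{nat}}$, match them term by term using the $h$-truncation identity, and handle the tail with the decay assumption \Cref{asm:Gdecay}.

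First, from the identity $x_t = \xnat_t + \sum_{i=0}^{t-2} G_{t-1}^{[i]} u_{t-1-i}$ (shifting the Markov representation by one index), and from the algorithm's extraction step applied at time $t$,
\[
\hat{x}_t^{\mathrm{nat}} \;=\; \Proj_{\mathbb{B}_{R_{\mathrm{nat}}}}\!\Bigl(x_t - \sum_{i=0}^{h-1} \hat{G}_{t-1}^{[i]} u_{t-1-i}\Bigr),
\]
I will substitute the first identity into the second. Since $\|\xnat_t\| \leq R_{\mathrm{nat}}$ by \Cref{asm:radnat}, $\xnat_t$ lies in the ball onto which we are projecting, and projection is $1$-Lipschitz; hence $\|\hat{x}_t^{\mathrm{nat}} - \xnat_t\|$ is no larger than the norm of the pre-projection deviation.

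Next, I would split this deviation into the ``truncation-matched'' part and the ``tail'' part:
\[
\sum_{i=0}^{h-1}\bigl(G_{t-1}^{[i]} - \hat{G}_{t-1}^{[i]}\bigr) u_{t-1-i} \;+\; \sum_{i=h}^{t-2} G_{t-1}^{[i]}\, u_{t-1-i}.
\]
Using the fact that $\bar{G}_{t-1}^{[i]} = G_{t-1}^{[i]}$ for $i < h$ (by definition of the $h$-truncated operator), the first sum becomes $\sum_{i<h}(\bar{G}_{t-1}^{[i]} - \hat{G}_{t-1}^{[i]})u_{t-1-i}$. Taking norms, applying the triangle inequality, and bounding $\|u_{t-1-i}\| \leq R_{\mathrm{nat}} R_\calM$ (which follows from the DRC parametrization with iterates $M_t \in \calM(m, R_\calM)$ acting on clipped natural states of norm at most $R_{\mathrm{nat}}$), the first sum is bounded by $R_{\mathrm{nat}} R_\calM \cdot \|\bar{G}_{t-1} - \hat{G}_{t-1}\|_{\ell_1,\mathrm{op}}$.

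For the tail sum, the same input norm bound combined with \Cref{asm:Gdecay} gives
\[
\Bigl\|\sum_{i=h}^{t-2} G_{t-1}^{[i]} u_{t-1-i}\Bigr\| \;\leq\; R_{\mathrm{nat}} R_\calM \sum_{i\geq h} \|G_{t-1}^{[i]}\|_{\mathrm{op}} \;\leq\; R_{\mathrm{nat}} R_\calM\, \psi(h).
\]
Adding the two contributions yields the claimed inequality. I do not anticipate a real obstacle here; the only subtle point is ensuring the input-norm bound $\|u_t\| \leq R_{\mathrm{nat}} R_\calM$ also covers exploration rounds, which is handled by the earlier convention that $R_{\mathrm{nat}} R_\calM \geq \sqrt{d_u}$ (else one should replace the prefactor by $\max\{\sqrt{d_u}, R_{\mathrm{nat}} R_\calM\}$, as done in \eqref{eq:x_bdd}).
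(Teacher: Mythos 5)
Your proof follows the same route as the paper: substitute the Markov-operator identity $x_t = \xnat_t + \sum_{i} G_{t-1}^{[i]} u_{t-1-i}$ into the extraction step, drop the projection by noting $\xnat_t \in \mathbb{B}_{R_{\mathrm{nat}}}$ and that projection onto a convex set is nonexpansive, split into the $i<h$ mismatch term and the $i\geq h$ tail, and bound each with $\|u_{t-1-i}\| \le R_{\mathrm{nat}} R_\calM$. The one place you add something the paper leaves implicit is the caveat about exploratory inputs: the paper's own handling is simply that this lemma is only invoked in the case $b_{\tau_1} = b_{\tau_1-1} = 0$ (case 3 of the realized-iterate-error lemma), so every $u_{t-1-i}$ entering the sum is a DRC input with $\|u_{t-1-i}\| \le R_{\mathrm{nat}} R_\calM$; your alternative of assuming $R_{\mathrm{nat}} R_\calM \ge \sqrt{d_u}$ (or switching to the $\max$ prefactor) would also work but is not the route the paper takes.
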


\begin{proof} Since $\xnat \in \mathbb{B}_{R_{\mathrm{nat}}}$ and because (as argued earlier) $\|u_t\| \leq R_{\mathrm{nat}} R_\calM$, we have:
\begin{align*}
\|\xnat_t - \hat{x}_t^{\mathrm{nat}} \| &= \bigg|\bigg| \xnat_t - \Proj_{\mathbb{B}_{R_{\mathrm{nat}}}}\left[ x_{t} - \sum_{i=0}^{h-1} \hat{G}_{t-1}^{[i]} u_{t-1-i}\right]\bigg|\bigg| \\
&\leq \bigg|\bigg| \xnat_t - x_{t} + \sum_{i=0}^{h-1} \hat{G}_{t-1}^{[i]} u_{t-1-i}\bigg|\bigg| & \text{(Pythagoras)} \\
&= \bigg|\bigg| \sum_{i=0}^{h-1} (G_{t-1}^{[i]} - \hat{G}_{t-1}^{[i]}) u_{t-1-i} + \sum_{i=h}^{t-2} G_{t-1}^{[i]}u_{t-1-i}\bigg|\bigg| \\
&\leq R_{\mathrm{nat}} R_\calM \left(\|\bar{G}_{t-1} - \hat{G}_{t-1}\|_{\ell_1, op} + \psi(h)\right)
\end{align*}
\end{proof}

\begin{lemma}[Realized Iterate Error]\label{lem:realized_iterate_err}
For $I=[r,s]$ with $r-1, s$ divisible by $h$, we can bound the realized iterate error as:
$$\sum_{t=r}^s c_t(x_t, u_t) - c_t(\hat{x}_t(M_{t-h:t-1}), \hat{u}_t(M_{t})) \leq 2 h \sum_{\tau_1 = (r-1)/h}^{s/h} b_{\tau_1 - 1} + 4 L \sqrt{d_u} \frac{R_{\mathrm{sys}}^2}{R_G} \left(\sum_{t=r}^{s}\|\bar{G}_{t-1} - \hat{G}_{t-1}\|_{\ell_1, op} + \psi(h) |I|\right) $$ 
where we denote $R_\mathrm{sys} \doteq R_G \radnat R_\calM$.
\end{lemma}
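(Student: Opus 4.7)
The plan is to split the sum over $t \in I$ by epoch and analyze each epoch depending on whether it is an exploration epoch or an exploitation epoch, further refining based on whether the previous epoch was itself an exploration epoch. Concretely, for each epoch $\tau_1$ let $t_{\tau_1} = \tau_1 \cdot h + 1$ and consider the block of times $\{t_{\tau_1}, \dots, t_{\tau_1}+h-1\}$. We partition these epochs into three classes: (i) $b_{\tau_1}=1$, i.e.\ the current epoch is exploratory; (ii) $b_{\tau_1}=0$ but $b_{\tau_1-1}=1$, i.e.\ the current epoch exploits but uses "polluted" random inputs in its $h$-window; (iii) $b_{\tau_1}=0$ and $b_{\tau_1-1}=0$, i.e.\ a clean exploitation epoch. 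Classes (i) and (ii) are both controlled by $b_{\tau_1-1}=1$ up to at most one boundary epoch (since class (i) epochs are themselves indexed by $b_{\tau_1}=1$, and they feed into an epoch of class (ii) one step later, so the count $\sum_{\tau_1} b_{\tau_1-1}$ dominates the number of "dirty" epochs within $I$).

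For classes (i) and (ii) we invoke \Cref{asm:cost} together with the a priori norm bounds on inputs (Rademacher inputs have $\|u_t\|=\sqrt{d_u}$, \drc{} inputs satisfy $\|u_t\|\le R_{\mathrm{nat}}R_\calM$) and states (\Cref{eq:x_bdd}), so that each per-step cost difference $c_t(x_t,u_t)-c_t(\hat{x}_t(M_{t-h:t-1}),\hat{u}_t(M_t))$ is bounded by an absolute constant. Summing over the at most $h$ steps in the block yields a contribution of at most $2h$ per dirty epoch, i.e.\ $2h\sum_{\tau_1=(r-1)/h}^{s/h}b_{\tau_1-1}$ across $I$. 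Here we are deliberately absorbing the $L$-dependent constants into the overall constant (the writeup can check that the numerical constant $2h$ is achievable under the normalizations chosen; if not, standard adjustment inflates only the universal constant).

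For class (iii) epochs, every realized input in the length-$h$ memory window equals the controller's intended input, i.e.\ $u_{t-i}=\hat{u}_{t-i}(M_{t-i})$ for $i=0,\dots,h-1$. Thus we may expand
\[
x_t-\hat{x}_t(M_{t-h:t-1}) = (\xnat_t-\hat{x}^{\mathrm{nat}}_t) + \sum_{i=1}^{h-1}(G_t^{[i]}-\hat{G}_t^{[i]})\hat{u}_{t-i}(M_{t-i}) + \sum_{i\ge h}G_t^{[i]}u_{t-i},
\]
and similarly $u_t=\hat{u}_t(M_t)$ exactly. Using $\|\hat{u}_{t-i}(M_{t-i})\|\le R_{\mathrm{nat}}R_\calM$ together with $\|G_t-\bar{G}_t\|_{\ell_1,\mathrm{op}}\le\psi(h)$ from \Cref{obs:reduction_props}(4), the middle term is bounded by $R_{\mathrm{nat}}R_\calM(\|\bar{G}_{t-1}-\hat{G}_{t-1}\|_{\ell_1,\mathrm{op}}+\psi(h))$ (passing from $\hat{G}_t,\hat{G}_{t-1}$ to the bound over $t-1$ via the constancy of $\hat{G}$ within an epoch, cf.\ Line~\ref{line:setting_G_hat1}), the truncation tail by $R_{\mathrm{nat}}R_\calM\psi(h)$, and the $\xnat$ discrepancy by \Cref{lem:xnat_diff}. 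Finally, applying the sub-quadratic Lipschitz property of $c_t$ from \Cref{asm:cost} (which contributes the $L$ and one extra $R_{\mathrm{sys}}$ factor) and telescoping the epoch-wise constancy of $\hat{G}$ yields a per-step bound proportional to $L\,R_{\mathrm{sys}}\cdot R_{\mathrm{nat}}R_\calM(\|\bar{G}_{t-1}-\hat{G}_{t-1}\|_{\ell_1,\mathrm{op}}+\psi(h)) \le 4L\sqrt{d_u}(R_{\mathrm{sys}}^2/R_G)(\|\bar{G}_{t-1}-\hat{G}_{t-1}\|_{\ell_1,\mathrm{op}}+\psi(h))$; summing over $t\in I$ produces the claimed term.

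The main obstacle is bookkeeping at the epoch boundaries: the counterfactual state $\hat{x}_t(M_{t-h:t-1})$ mixes $M$-values across two epochs and $\hat{G}_t$ is only piecewise constant by \Cref{line:setting_G_hat1}, so a naive decomposition would produce an $h^2$ factor unless one is careful to charge all "interaction with exploration" costs to the dirty-epoch count $\sum b_{\tau_1-1}$ exactly once. The index shift (using $b_{\tau_1-1}$ rather than $b_{\tau_1}$) is precisely the device that lets the bound on class (ii) absorb the delayed effect of an exploration epoch on its successor, leaving class (iii) cleanly governed by the estimation error and truncation terms.
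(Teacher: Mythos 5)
Your proposal mirrors the paper's own proof: both partition epochs into the same three cases ($b_{\tau_1}=1$; $b_{\tau_1}=0,b_{\tau_1-1}=1$; both zero), bound the two dirty cases by an absolute per-step constant (with the factor $2$ arising from the $b_{\tau_1}+b_{\tau_1-1}$ accounting), and in the clean case expand $x_t-\hat{x}_t$ into the $\xnat$ discrepancy, the $G$-misspecification term, and the truncation tail before applying sub-quadratic Lipschitzness of $c_t$. Your explicit acknowledgment of the absorbed constants in the dirty-epoch bound is in fact more candid than the paper's bare "$\leq 1$" step, and the rest of the constant bookkeeping ($R_{\mathrm{sys}}^2/R_G$, $\sqrt{d_u}$ via the state bound) matches.
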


\begin{proof}
Consider the cost difference on a $h$-block indexed by $t = \overline{\tau_1\cdot h+1, \tau_1 \cdot h + h}$. Consider the following three cases:

\begin{enumerate}
    \item $b_{\tau_1} = 1$: in this case we are exploring and cannot give a better guarantee than $$c_t(x_t, u_t) - c_t(\hat{x}_t(M_{t-h:t-1)}, \hat{u}_t(M_t)) \leq 1$$
    \item $b_{\tau_1} = 0, b_{\tau_1 - 1} = 1$: while we are not exploring during the current round, and hence $u_t = \hat{u}_t(M_t)$ for $t = \overline{\tau_1\cdot h+1, \tau_1 \cdot h + h}$, we have explored in the previous round and therefore $u_t$ and $\hat{u}_t(M_t)$ may be arbitrarily far for $t \leq \tau_1 \cdot h$. This can induce $x_t$ and $\hat{x}_t(M_{t-h:t-1})$ to be quite far, especially the closer we get to $\tau_1 \cdot h + 1$. As such, in this event, we will also simply bound
    $$c_t(x_t, u_t) - c_t(\hat{x}_t(M_{t-h:t-1)}, \hat{u}_t(M_t)) \leq 1$$
    \item $b_{\tau_1} = 0, b_{\tau_1 - 1} = 0$: finally, in this case we have that $u_t = \hat{u}_t(M_t)$ for $t = \overline{\tau_1\cdot h - h + 1, \tau_1 \cdot h + h}$. We can expand:
    $$x_t = \xnat_t + \sum_{i=0}^{h-1} G_{t-1}^{[i]} u_{t-1-i} + \sum_{i=h}^{t-2} G_{t-1}^{[i]} u_{t-1-i}$$
    and $$\hat{x}_t(M_{t-h:t-1}) = \hat{x}_t^{\mathrm{nat}} + \sum_{i=0}^{h-1} \hat{G}_{t-1}^{[i]} \hat{u}_{t-1-i}(M_{t-1-i})$$
    By the observation above, for $t = \overline{\tau_1\cdot h + 1, \tau_1 \cdot h + h}$, we have
    \begin{align*}
    \|x_t - \hat{x}_t(M_{t-h:t-1})\| &\leq \| \xnat_t - \hat{x}_t^{\mathrm{nat}}\| + R_{\mathrm{nat}} R_\calM \left(\|\bar{G}_{t-1} - \hat{G}_{t-1}\|_{\ell_1, op} + \psi(h)\right) \\
    &\leq 2R_{\mathrm{nat}} R_\calM \left(\|\bar{G}_{t-1} - \hat{G}_{t-1}\|_{\ell_1, op} + \psi(h)\right) & \Cref{lem:xnat_diff}
    \end{align*}
    Hence, by the sub-quadratic Lipschitzness of the cost and \Cref{eq:x_bdd} we have
    \begin{align*}
     c_t(x_t, u_t) - c_t(\hat{x}_t(M_{t-h:t-1)}, \hat{u}_t(M_t))   &\leq 4L \sqrt{d_u} R_G \radnat^2 R_\calM^2 \left(\|\bar{G}_{t-1} - \hat{G}_{t-1}\|_{\ell_1, op} + \psi(h)\right)
    \end{align*}
\end{enumerate}

So for any $\tau_1 = \overline{0, T/h}$, we have

\begin{align*}
\sum_{t=\tau_1 \cdot h+1}^{\tau_1 \cdot h+h} c_t(x_t, u_t) - c_t(\hat{x}_t(M_{t-h:t-1}), \hat{u}_t(M_t)) &\leq \mathbf{1}_{b_{\tau_1} = 1} \cdot h + \mathbf{1}_{b_{\tau_1} = 0, b_{\tau_1 - 1} = 1} \cdot h \\ 
& + \mathbf{1}_{b_{\tau_1} = 0, b_{\tau_1 - 1} = 0} 4 L \sqrt{d_u} R_G \radnat^2 R_\calM^2 \sum_{t= \tau_1\cdot h + 1}^{\tau_1\cdot h+h}\|\bar{G}_{t-1} - \hat{G}_{t-1}\|_{\ell_1, op} \\
& + \mathbf{1}_{b_{\tau_1} = 0, b_{\tau_1 - 1} = 0} 4 L \sqrt{d_u} R_G \radnat^2 R_\calM^2 h\cdot \psi(h) \\
&\leq (b_{\tau_1} + b_{\tau_1 - 1}) \cdot h \\
&+ 4 L \sqrt{d_u} R_G \radnat^2 R_\calM^2 \left(\sum_{t= \tau_1\cdot h}^{(\tau_1 +1)\cdot h - 1}\|\bar{G}_{t} - \hat{G}_{t}\|_{\ell_1, op} + h\psi(h)\right) 
\end{align*}
summing over $\tau_1$ yields the desired result.

\end{proof}

\begin{lemma}[Comparator Error] We can bound the comparator error as:
\[\text{(comparator error)} \leq 8L \sqrt{d_u} \radnat^2 R_G^2 R_\calM^3 h \left(m \sum_{t=r-h-m}^s\|\bar{G}_t - \hat{G}_t\|_{\ell_1, op} + \psi(h) \right)\]
\end{lemma}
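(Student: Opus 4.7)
The plan is to bound the comparator error one-sidedly by evaluating the $\hat{\cdot}$-sum at the minimizer of the true sum, then reduce to a perturbation bound on the DRC iterates. Let $M^\star \in \calM(m, R_\calM)$ attain $\min_{M\in \calM} \sum_{t=r}^s c_t(x_t(M), u_t(M))$. Then
\[
\text{(comparator error)} \leq \sum_{t=r}^s \bigl[ c_t(\hat{x}_t(M^\star), \hat{u}_t(M^\star)) - c_t(x_t(M^\star), u_t(M^\star)) \bigr].
\]
Since $\|u_t(M^\star)\|, \|\hat{u}_t(M^\star)\| \le R_\calM \radnat$ and $\|x_t(M^\star)\|, \|\hat{x}_t(M^\star)\| \le \radnat (1 + R_G R_\calM)$, the sub-quadratic Lipschitzness from \Cref{asm:cost} (as used in the proof of \Cref{lem:drc_ltv_lipschitz}) reduces the task to bounding $\sum_{t=r}^s (\|\hat{x}_t(M^\star) - x_t(M^\star)\| + \|\hat{u}_t(M^\star) - u_t(M^\star)\|)$, with a prefactor of order $L R_{\mathrm{sys}}$.

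The next step is to decompose the iterate deviations. For the controls,
\[
\hat{u}_t(M^\star) - u_t(M^\star) = \sum_{j=0}^{m-1} M^{\star[j]} \bigl(\hat{x}^{\mathrm{nat}}_{t-j} - x^{\mathrm{nat}}_{t-j}\bigr),
\]
which I would bound in norm via the DRC budget $\sum_j \|M^{\star[j]}\|_{\mathrm{op}} \le R_\calM$ and \Cref{lem:xnat_diff}. For the states, I would write
\begin{align*}
\hat{x}_t(M^\star) - x_t(M^\star)
&= \bigl(\hat{x}^{\mathrm{nat}}_t - x^{\mathrm{nat}}_t\bigr) + \sum_{i=0}^{h-1}\bigl(\hat{G}_{t-1}^{[i]} - G_{t-1}^{[i]}\bigr)\hat{u}_{t-1-i}(M^\star) \\
&\quad + \sum_{i=0}^{h-1} G_{t-1}^{[i]} \bigl(\hat{u}_{t-1-i}(M^\star) - u_{t-1-i}(M^\star)\bigr) - \sum_{i \ge h} G_{t-1}^{[i]} u_{t-1-i}(M^\star),
\end{align*}
and bound each piece using, respectively, \Cref{lem:xnat_diff}, the control-norm bound, the nat-error bound propagated through $G_{t-1}$, and \Cref{asm:Gdecay} which gives $\sum_{i \ge h} \|G_{t-1}^{[i]}\|_{\mathrm{op}} \le \psi(h)$.

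Finally, I would sum over $t \in [r,s]$ and interchange the orders of summation. The advertised factor of $m$ arises from the cascaded nat-error term: for each index $k$, the triple $(t,i,j)$ with $t-1-i-j = k$ has up to $m$ admissible values of $j$ contributing $\|M^{\star[j]}\|_{\mathrm{op}}$, and the inner sum over $i \in [0,h-1]$ of $\|G_{k+i+j}^{[i]}\|_{\mathrm{op}}$ is controlled crudely by $h R_G$ (or more tightly by $R_G/(1-\rho)$), which combined with $\sum_j \|M^{\star[j]}\|_{\mathrm{op}} \le R_\calM$ yields the $m R_G R_\calM$ weight on each $\|\bar{G}_k - \hat{G}_k\|_{\ell_1,\mathrm{op}}$ after applying \Cref{lem:xnat_diff}. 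The index range $[r-h-m, s]$ reflects that the nat errors inside $\hat{u}_{t-1-i}$ reach back $h+m$ steps from $t$, hence the $\hat{G}$ errors contributing live in $[r-h-m, s-1]$.

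The main obstacle is the bookkeeping: three layers of memory (the DRC parameter of length $m$, the Markov operator truncation of length $h$, and the time index $t$) must be interchanged so as to collect the $\|\bar{G}_t - \hat{G}_t\|_{\ell_1,\mathrm{op}}$ terms with the correct multiplicity while simultaneously converting cascaded nat errors (via \Cref{lem:xnat_diff}) into $\|\bar{G} - \hat{G}\|_{\ell_1,\mathrm{op}}$ terms, without double-counting the $\psi(h)$ truncation contributions that arise both from \Cref{lem:xnat_diff} and from the tail sum in the state decomposition. Once this is done, multiplying by the Lipschitz constant $\sim L \sqrt{d_u} R_{\mathrm{sys}}$ and collecting powers of $R_G, R_\calM, \radnat$ produces the stated bound.
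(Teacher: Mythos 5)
Your proposal is correct and follows the paper's proof essentially step for step: you evaluate the proxy-state objective at the true minimizer $M^\star$, apply the sub-quadratic Lipschitzness of \Cref{asm:cost} to reduce to bounding $\sum_t (\|\hat{x}_t(M^\star) - x_t(M^\star)\| + \|\hat{u}_t(M^\star) - u_t(M^\star)\|)$, and then expand both the control deviation (through the DRC budget $\sum_j\|M^{\star[j]}\|_{\mathrm{op}} \le R_\calM$ and \Cref{lem:xnat_diff}) and the state deviation (through the nat error, the Markov-operator error, the propagated control error, and the $\psi(h)$ truncation tail), exactly as the paper does. The only cosmetic difference is which side of the telescope $\hat{G}\hat{u} - Gu$ carries the hat: the paper writes $(G - \hat{G})u + \hat{G}(u - \hat{u})$ while you write $(\hat{G} - G)\hat{u} + G(\hat{u} - u)$, which is immaterial. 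Your stated multiplicity accounting (the factor $m$ and the index range $[r-h-m, s]$) is the same calculation the paper carries out when it sums $\sum_{i=1}^h \|\hat{u}_{t-i} - u_{t-i}\|$ and collects $\|\bar{G}_\tau - \hat{G}_\tau\|_{\ell_1,\mathrm{op}}$ with multiplicity at most $m$.
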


\begin{proof}
Let $M^\star = \arg\min_{M\in\mathcal{M}} \sum_{t=r}^s \sum_{t=r}^s c_t(x_t(M), u_t(M))$. We have that:

\begin{align*}
\text{(comparator error)} &\leq \sum_{t=r}^s c_t(\hat{x}_t(M^\star), \hat{u}_t(M^\star)) - c_t(x_t(M^\star), u_t(M^\star)) \\
&\leq 2L \sqrt{d_u} \radnat R_G R_{\calM} \sum_{t=r}^s \left(\|\hat{x}_t(M^\star) - x_t(M^\star)\| + \|\hat{u}_t(M^\star)) - u_t(M^\star)\|\right) 
\end{align*}
We have that 
\begin{align*}
\|\hat{u}_t(M^\star) - u_t(M^\star)\| &= \| \sum_{i=0}^{m-1} M^{\star, [i]} (\hat{x}^\mathrm{nat}_{t-i} - \xnat_{t-i})\| \\
&\leq \radnat R_{\calM}^2 \left(\sum_{\tau = t-m}^{t-1} \|\bar{G}_\tau - \hat{G}_\tau\|_{\ell_1, op} + \psi(h)\right) 
\end{align*}
With a bit more computation, we can also bound the difference in the states. First, let us expand the expression of the states:

\[x_t(M^\star) = \xnat_t + \sum_{i=0}^{h-1} G_{t-1}^{[i]} u_{t-1-i}(M^\star) + \sum_{i=h}^{t-2} G_{t-1}^{[i]} u_{t-1-i}(M^\star)\]
\[\hat{x}_t(M^\star) = \hat{x}_t^{\mathrm{nat}} + \sum_{i=0}^{h-1} \hat{G}_{t-1}^{[i]} \hat{u}_{t-1-i}(M^\star)\]

The only new thing we need to bound is:

\begin{align*}
\sum_{i=0}^{h-1} G_{t-1}^{[i]} u_{t-1-i}(M^\star) - \sum_{i=0}^{h-1} \hat{G}_{t-1}^{[i]} \hat{u}_{t-1-i}(M^\star) &= \sum_{i=0}^{h-1} G_{t-1}^{[i]} u_{t-1-i}(M^\star) - \sum_{i=0}^{h-1} \hat{G}_{t-1}^{[i]} u_{t-1-i}(M^\star) \\
&\quad + \sum_{i=0}^{h-1} \hat{G}_{t-1}^{[i]} u_{t-1-i}(M^\star) - \sum_{i=0}^{h-1} \hat{G}_{t-1}^{[i]} \hat{u}_{t-1-i}(M^\star) \\
&\leq \radnat R_{\calM} \| \bar{G}_{t-1} - \hat{G}_{t-1}\|_{\ell_1, op} + R_G \sum_{i=1}^h\|u_{t-i}(M^\star) - \hat{u}_{t-i}(M^\star)\|\\
&\leq 2 R_G \radnat R_{\calM}^2  h \left(m \sum_{\tau = t-h-m}^{t-1}\|\bar{G}_\tau -\hat{G}_\tau\|_{\ell_1, op} + \psi(h)\right)
\end{align*}
Plugging this into our expressions for $x_t(M^\star)$, and using previous bounds we have 
\begin{align*}
\|\hat{x}_t(M^\star) - x_t(M^\star)\| &\leq 3 R_G \radnat R_{\calM}^2 h \left(m \sum_{\tau = t-h-m}^{t-1} \|\bar{G}_\tau -\hat{G}_\tau\|_{\ell_1, op} + \psi(h)\right)    
\end{align*}

Hence we can finalize that:

\[\text{(comparator error)} \leq 8L \sqrt{d_u} \radnat^2 R_G^2 R_\calM^3 h \left(m \sum_{t=r-h-m}^s\|\bar{G}_t - \hat{G}_t\|_{\ell_1, op} + \psi(h) \right)\]
\end{proof}

\subsection{Proof of \Cref{thm:control_drc_full}}
\begin{proof}[Proof of \Cref{thm:control_drc_full}]
We have that 

\begin{align*}
\Regret_I &\leq \errtext{18L \sqrt{d_{\mathrm{min}}}R_G^2 R_{\calM}^2 \radnat^2 m (h+1)^{5/4}\sqrt{T}}{Known System Regret} + \errtext{2h \sum_{\tau=(r-1)/h}^{s/h}b_{\tau - 1}}{Exploration Penalty}  \\
&\quad + \errtext{12L\radnat^2 R_G^2 R_{\calM}^3 hm \left(\sum_{t=r}^s\|\bar{G}_t -\hat{G}_t \|_{\ell_1, op}+ 2R_G(h+m)\right)}{System Misspecification Induced Error} \\
&\quad + \errtext{18L \sqrt{d_u} R_G^2 R_{\calM}^3 \radnat^2 \psi(h) h (|I| + h + m)}{Truncation Error}
\end{align*}
Taking expectation and plugging in \Cref{prop:l1_op_err} we get: 
\begin{align}
\E[\Regret_I] &\leq \widetilde{\mathcal{O}}\bigg(L\sqrt{d_u} R_\mathrm{sys}^2 m \sqrt{T} + p|I| \notag\\
&\quad+ L R_\mathrm{sys}^2 R_\calM m \left(|I|^{1/2} \sqrt{\Exp[\var^{\mathrm{tot}}_I(\Gseq)]} + d_u R_{\mathrm{sys}} |I|^{1/2}p^{-1/2} + R_G m\right) \notag\\
&\quad + L \sqrt{d_u} R_{\mathrm{sys}}^2 R_\calM R_G m\bigg) \label{eq:full_constant_thm} \\
&= \widetilde{\mathcal{O}}^{\star}\left(Lm\left(|I| \sqrt{ \Exp[\var_{I}(\Gseq)]} + d_u T^{2/3}\right)  \right) \notag
\end{align}
where $R_\mathrm{sys} = R_G R_\calM R_{\mathrm{nat}}$, using $\var_{I}(\Gseq) = |I| \var^{\mathrm{tot}}_{I}(\Gseq)$, and that for the chosen $h$ we have $\psi(h) \leq R_G T^{-1}$.
\end{proof}

\newcommand{\calV}{\mathcal{V}}
\newcommand{\Z}{\mathbb{Z}}
\newcommand{\ubar}{\bar{u}}
\newcommand{\rhostar}{\rho_{\star}}
\newcommand{\cstar}{\mathcal{C}_{\star}}
\newcommand{\Rstar}{R_{\star}}
\newcommand{\Rcalk}{R_{\calK}}
\newcommand{\expthree}{\textsf{Exp}\text{3}}

\section{Sublinear Regret for State Feedback}\label{sec:sublinear_reg_state_feedback}
We demonstrate that it is (information-theoretically) possible to achieve sublinear (though large) regret against a benchmark of stabilizing static feedback control policies. 

We suppose there is a subset $\calK \subset \R^{\dimu \times \dimx}$ of feedback policies $K$, and our goal is to obtain regret compared to the best $K \in \calK$:
\begin{align*}
\Reg_T(\calK) := \sum_{t=1}^T c_t(x_t,u_t) - \inf_{K \in \calK}\sum_{t=1}^Tc_t(x_t^K, u_t^K),
\end{align*}
where $(x_t^K, u_t^K)$ are the iterates arising under the control law $u_t = K x_t$. 

For this setting, we propose an algorithm the classic \expthree  exponential weights algorithm (see, e.g. Chapter 3 of \citep{bubeck2012regret}) on an $\veps$-cover $\calK_{\veps}$ of $\calK$ in the operator norm. We maintain a constant controller $K$ on intervals of length $H$, and feed the losses on those intervals to the \expthree{} algorithm. Pseudocode is given in \Cref{alg:K_exp_weights}. 

\begin{algorithm}
\caption{Exponentially Weighted Control}\label{alg:K_exp_weights}
\begin{algorithmic}[1]
    \State\textbf{Input:} window length $H$, step size $\eta > 0$, finite $\veps$-cover $\calK_{\veps} \subset \calK$, initial estimate $K_{1} \in \calK_{\veps}$
    \State{}\textbf{Initialize} $\mathcal{L}_{1}(K) = 0$ and $p_1(K) = 1/|\calK_{\veps}|$,  $~\forall K \in \calK_{\veps}$,
    \For{$t=1, \ldots, T$}
    \State{}\textbf{Play} select $u_t = K_t x_t$.
    \State{}\textbf{Recieve} $\hat{c}_t = c_t(x_t,u_t)$
    \If{$t \mod H = 0$}
    \State{}\textbf{Set} $n = t/H$, $\ell_n = \sum_{i=t-H+1}^t \hat{c}_i$
    \State{}\textbf{Set} $\mathcal{L}_{n+1}(K_t) = \frac{1}{p_n(K_t)}\ell_n + \mathcal{L}_{n}(K_t)$
    \State{}\textbf{Set} $\mathcal{L}_{n+1}(K) = \mathcal{L}_{n}(K)$ for all $K \in \calK_{\veps} / \{K_t\}$
    \State{}\textbf{Set} $p_{n+1}(K) = \frac{\exp( \eta \mathcal{L}_{n+1}(K))}{\sum_{K' \in \calK_{\veps}}\exp( \eta \mathcal{L}_{n+1}(K'))}$
    \State{}\textbf{Sample} $K_{t+1} \sim p_{n+1}(\cdot)$.
    \Else
    \State{}Set $K_{t+1} = K_t$
    \EndIf
    \EndFor
\end{algorithmic}
\end{algorithm}

We state our regret bound under the (quite restrictive) assumption that all policies $K \in \calK$ are sequentially stabilizing. Formally, given a sequence of controllers $K \in \calK$, we define
\begin{align*}
\Phi_{s:t}(K) := \prod_{i=s}^t (A_i + K_i B_i) = (A_t + K B_t) \cdot (A_{t-1} + K B_{t-1})\cdot \dots \cdot (A_s + K B_s).
\end{align*}
We assume that $\Phi_{s:t}(K)$ exhibits geometric decay uniformly over all times for any fixed $K$:
\begin{assumption}\label{asm:Kstab} There exists $c_{\star} \ge 1$ and $\rho_{\star} \in (1/2,1)$ such that for any indices $s \le t$ and any fixed $K \in \calK$, $\|\Phi_{s:t}(K)\|_{\op} \le c_{\star} \rho_{\star}^{t-s}$.  We define the constant 
\begin{align*}
 \Rcalk := 1+\max\{\|K\|: K \in \calK\}
\end{align*}
\end{assumption}

\begin{theorem}\label{thm:Kstab_regret} Suppose \Cref{asm:Kstab,asm:cost} holds,  and for some $R_w \ge 1$ and $R_B \ge 0$, $\max_t\|w_t\| \le R_w$, and $\max_t \|B_t\| \le R_B$. In addition, suppose $T$ is large enough that $\cstar \rhostar^{T/4} \le 1/2$. 
Then, \Cref{alg:K_exp_weights} with horizon $H = \ceil{T^{1/4}}$, appropriate an step size $\eta$ and minimal $\veps$-covering $\calK_{\veps}$ of $\calK$ enjoys the following regret bound:
\begin{align*}
\Exp[\Reg_T(\calK)] \le L \mathcal{C}_1 (5 \Rcalk)^{\dimx \dimu/2}\cdot T^{1-\frac{1}{2(\dimx \dimu +3)}},
\end{align*}
where $\mathcal{C}_1 = \BigOh{\frac{\cstar^5}{(1-\rhostar)^3}\Rcalk^3 R_w^2(1+R_B)\sqrt{\dimu \dimx}}$. 
\end{theorem}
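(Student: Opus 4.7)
The plan is a reduction to adversarial bandits with $\mathsf{Exp3}$ on a discretization of $\calK$, handling three sources of error: (i) bandit regret against the best fixed arm in the cover, (ii) discretization error from the cover, and (iii) transient/switching error incurred when \Cref{alg:K_exp_weights} changes controllers between windows of length $H$.

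First, I would build a minimal $\veps$-net $\calK_{\veps} \subset \calK$ in operator norm. Since $\calK$ is contained in an operator-norm ball of radius $\Rcalk$ inside $\R^{\dimu \times \dimx}$, a standard volume argument gives $|\calK_{\veps}| \le (5\Rcalk/\veps)^{\dimx \dimu}$. On this cover, I would bound the norm of the state and input reached by \emph{any} fixed $K \in \calK$: by \Cref{asm:Kstab}, iterating $x_{t+1}^K = (A_t + B_t K) x_t^K + w_t$ and using $\|\Phi_{s:t}(K)\|_{\op} \le \cstar \rhostar^{t-s}$ gives $\|x_t^K\| \le \frac{\cstar R_w}{1-\rhostar}$ and hence $\|u_t^K\| \le \Rcalk \frac{\cstar R_w}{1-\rhostar}$. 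Combined with \Cref{asm:cost}, the per-step cost of any fixed $K$ is bounded by some $\Rstar \lesssim L \cstar^2 \Rcalk^2 R_w^2/(1-\rhostar)^2$, so window losses $\ell_n$ lie in $[0, H \Rstar]$.

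Next, I would relate the regret of \Cref{alg:K_exp_weights} to three components. \emph{Bandit regret:} feeding the windowed losses $\ell_n$ to $\mathsf{Exp3}$ over $N = T/H$ windows with arms $\calK_{\veps}$ yields $\Exp[\sum_n \ell_n - \min_{K \in \calK_{\veps}} \sum_n \ell_n(K)] \le H \Rstar \cdot \mathcal{O}(\sqrt{N |\calK_{\veps}| \log |\calK_{\veps}|})$ after tuning $\eta$. \emph{Switching/transient error:} when window $n$ starts with state $x_{nH+1}$ inherited from a different controller, the deviation from the ``stationary'' trajectory of the current $K$ satisfies $\|x_{nH+j}^{\mathrm{alg}} - x_{nH+j}^{K,\mathrm{stat}}\| \le \cstar \rhostar^{j-1}\|x_{nH+1}^{\mathrm{alg}} - x_{nH+1}^{K,\mathrm{stat}}\|$, which summed over $j \in [H]$ contributes $\mathcal{O}(\cstar/(1-\rhostar))$ per window and thus $\mathcal{O}(N \cdot \cstar/(1-\rhostar))$ in total after multiplying by the Lipschitz cost constant. \emph{Discretization error:} for $K \in \calK$ and its nearest $K' \in \calK_{\veps}$ with $\|K - K'\|_{\op} \le \veps$, I would show that the closed-loop trajectories satisfy $\|x_t^K - x_t^{K'}\| \le \veps \cdot \mathrm{poly}(\cstar,(1-\rhostar)^{-1},R_B,R_w)$ by expanding $(A_t + B_t K) - (A_t + B_t K') = B_t(K - K')$ and summing the perturbation via the stability bound. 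Composition with \Cref{asm:cost} gives a total discretization cost of $\veps \cdot L \mathcal{C}_1 T$ for an appropriate $\mathcal{C}_1$.

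Putting the pieces together, the expected regret is bounded by
\[
\Exp[\Reg_T(\calK)] \lesssim L \mathcal{C}_1 \Bigl( \veps T + H \Rstar \sqrt{(T/H)\,|\calK_{\veps}|\,\log|\calK_{\veps}|} + (T/H)\cdot \tfrac{\cstar}{1-\rhostar} \Bigr).
\]
With $|\calK_{\veps}| = (5\Rcalk/\veps)^{\dimx\dimu}$, $H = \lceil T^{1/4}\rceil$, and the choice $\veps = T^{-1/(2(\dimx\dimu+3))}$ that balances the covering cost against the bandit cost, routine algebra yields the stated bound $L\mathcal{C}_1 (5\Rcalk)^{\dimx\dimu/2} T^{1 - 1/(2(\dimx\dimu+3))}$. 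The main obstacle I anticipate is the state-perturbation lemma controlling $\|x_t^K - x_t^{K'}\|$ for close $K,K'$ under \emph{time-varying} dynamics: one must be careful that the stability constant $\cstar$ degrades only quantitatively, not catastrophically, when substituting $K'$ for $K$ in the product $\Phi_{s:t}$, which is plausible from $\|K-K'\|_{\op}\le \veps$ together with a Duhamel/variation-of-parameters expansion, but requires bookkeeping to ensure the perturbed closed loop remains stable on the relevant horizon $H$ (the assumption $\cstar \rhostar^{T/4} \le 1/2$ ensures that transients have died out within each window and is exactly what enables this step).
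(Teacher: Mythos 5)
Your proposal mirrors the paper's proof almost exactly: the same three-part decomposition into Exp3 bandit regret over windows, transient/switching error between windows, and $\veps$-net discretization error, the same perturbation argument via a telescoping product identity for $\|x_t^K - x_t^{K'}\|$, and the same choices $H = \lceil T^{1/4}\rceil$ and $\veps = T^{-1/(2(\dimx\dimu+3))}$. The one concern you flag---that stability might degrade when substituting $K'$ for $K$---is in fact already covered by \Cref{asm:Kstab}, which assumes $\|\Phi_{s:t}(K)\|_{\op} \le \cstar\rhostar^{t-s}$ uniformly over all $K \in \calK$, so both $K$ and $K' \in \calK_{\veps} \subset \calK$ enjoy this decay and the telescoping expansion closes without any additional bootstrapping of stability.
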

The theorem is established by a reduction to online multi-arm bandits in \Cref{sec:thmKstab_reg_proof} below.
\begin{remark}[Extensions of \Cref{thm:Kstab_regret}] The following analysis extends to policies of the form $u_t = (\Kstab_t + K)x_t + v$, where $(\Kstab_t)$ is a fixed sequence of control policies determined a priori, $K \in \calK \subset \R^{\dimu \times \dimx}$ is a feedback parameter, and $v \in \calV \subset \R^{\dimu}$ is a bounded affine term. Letting $(x_t^{K,c},u^{K,c}_t)$ denote the iterates produced by such a policy, our notion of regret is
\begin{align*}
\Reg_T(\calK\times \calV) := \sum_{t=1}^T c_t(x_t,u_t) - \inf_{(K,c) \in \calK \times \calV}\sum_{t=1}^T c_t(x_t^{K,c}, u_t^{K,c}),
\end{align*}
The only assumptions we require in general is that $\calV$ is bounded, and that $\calK$, combined with $(\Kstab_i)$, are sequentially stabilizing in the sense that, for any $s \le t$, the fixed $(\Kstab_i)$ sequence, and any $\Kstab_{s:t} \in \calK^{t-s+1}$, it holds that the products
\begin{align*}
\Phistab_{s:t}(K_{s:t}) := \prod_{i=s}^t (A_i + (\Kstab_i + K_i) B_i) 
\end{align*}
exhibit geometric decay. \qed
\end{remark}
\newcommand{\uualg}{u^{\alg}}
\newcommand{\xalg}{x^{\alg}}
\newcommand{\xbarn}[2]{\bar{x}_{n;#1}(#2)}
\newcommand{\ubarn}[2]{\bar{u}_{n;#1}(#2)}

\subsection{Proof of \Cref{thm:Kstab_regret} \label{sec:thmKstab_reg_proof}}

In what follows, assume that $H = T^{1/4}$ evenly divides $T$. For every index $n \in \N$, define $t_n = 1 + (n-1)H$. To avoid confusion, we $\xalg_t,\uualg_t$ denote the iterates produced by the algorithm. We define the sequence which begins at state $\xalg_{t_n}$ at time $t_n$, and rolls forward under controller $K$ for future times:
\begin{align*}
\xbarn{t_n}{K} &= x^{\alg}_{t_n}, \quad \xbarn{t+1}{K} = (A_t + B_t K) \xbarn{t}{K} + w_t, \quad t \ge t_n\\
\ubarn{t_n}{K} &= K\xbarn{t_n}{K}.
\end{align*}
Observe that, since we select a new controller $K_{t_n}$ just before each time $t_n$, we have
\begin{align*}
(\xbarn{t_n}{K_{t_n}},\ubarn{t_n}{K_{t_n}}) = (\xalg_t,\uualg_t), \quad \forall t \in [t_n,t_{n+1}-1].
\end{align*}
Therefore, defining the losses,
\begin{align*}
\ell_n(K) = \sum_{t= t_n}^{t_{n+1}-1} c_t(\xbarn{t_n}{K},\ubarn{t_n}{K}),
\end{align*}
we have
\begin{align*}
\ell_n(K_{t_n}) = \sum_{t= t_n}^{t_{n+1}-1} c_t(\xalg_t,\uualg_t).
\end{align*}
Therfore, we may decompose the regret as 
\begin{align*}
\Exp[\Reg_T(\calK)] &= \underbrace{\Exp\left[\sum_{n=1}^{T/H}\ell_n(K_{t_n})\right] - \inf_{K \in \calK_{\veps}}\Exp\left[\sum_{n=1}^{T/H}\ell_n(K)\right]}_{R_1} \\
&+\underbrace{\inf_{K \in \calK_{\veps}}\Exp\left[\sum_{n=1}^{T/H}\ell_n(K)\right] - \inf_{K \in \calK_{\veps}}\sum_{t=1}^T c_t(x_t^K,u_t^K)}_{R_2}\\
&+ \underbrace{\inf_{K \in \calK_{\veps}}\sum_{t=1}^T c_t(x_t^K,u_t^K)] - \inf_{K \in \calK}\sum_{t=1}^T c_t(x_t^K,u_t^K)}_{R_3}.
\end{align*}
Here, $R_1$ is the \emph{simple regret} on the $\ell_n$ sequence, $R_2$ is the extend to which the $\ell_n$ sequence approximates regret against controller $K \in \calK_{\veps}$ in the covering, and finally $R_3$ bounds the regret of the covering against the full set $\calK$. Here, expectations are over the randomness in the algorithm, and  due to the obliviousness of the adversary, we may assume that $(c_t,x_t^K,u_t^K)$ are deterministic and chosen in advance. We bound each of the three terms in sequence. Before proceeding, we use the following estimates:
\begin{lemma}[Key Term Bounds]\label{lem:key_K_stab_bounds} Suppose that $H = T^{1/4}$ is sufficiently large that $\cstar \rhostar^H \le 1/2$. Moreover, let $\Rstar = \frac{\cstar}{1-\rhostar}$. Then, 
\begin{enumerate}
	\item[(a)] For all $K \in \calK$ and $t \in [T]$, $\|x^K_t\| \le \Rstar R_w$
	\item[(b)] For any $t \ge t_n$ and $K \in \calK$, $\|\xbarn{t}{K}\| \le 2\cstar\Rstar R_w$
	\item[(c)] $\|(x^K_t,u^K_t)\| \le \Rcalk\Rstar R_w$ and $\|(\xbarn{t}{K},\ubarn{t}{K})\| \le 2\Rcalk\cstar\Rstar R_w$. 
\end{enumerate}
\end{lemma}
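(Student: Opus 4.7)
The plan is to handle each part in sequence, with (a) being a direct unrolling under closed-loop dynamics, (b) reducing to a bound on $\|\xalg_{t_n}\|$ at window boundaries, and (c) following by a norm-on-$K$ argument.

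For part (a), observe that $x^K_t$ is generated by the fixed closed-loop recursion $x^K_{t+1} = (A_t + B_t K) x^K_t + w_t$ with $x^K_1 = 0$. Unrolling from $s=1$ to $t$ gives
\[
x^K_{t+1} = \sum_{s=1}^{t} \Phi_{s+1:t}(K)\, w_s,
\]
with the convention $\Phi_{t+1:t}(K) = I$. Applying \Cref{asm:Kstab} to each factor and summing the geometric series yields $\|x^K_{t+1}\| \le \cstar R_w \sum_{j\ge 0} \rhostar^{j} = \Rstar R_w$, which is exactly part (a).

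For part (b), the key step is to control $\|\xalg_{t_n}\|$ at window boundaries by induction on $n$. The algorithm fixes $K_{t_n}$ over the window $[t_n,t_{n+1}-1]$, so a single-window unrolling yields
\[
\xalg_{t_{n+1}} \;=\; \Phi_{t_n:t_{n+1}-1}(K_{t_n})\, \xalg_{t_n} + \sum_{s=t_n}^{t_{n+1}-1}\Phi_{s+1:t_{n+1}-1}(K_{t_n})\, w_s.
\]
The window length is $H$, and the hypothesis on $T$ in \Cref{thm:Kstab_regret} ensures $\cstar \rhostar^{H} \le 1/2$, so $\|\Phi_{t_n:t_{n+1}-1}(K_{t_n})\|_{\op} \le 1/2$ by \Cref{asm:Kstab}. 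Bounding the noise sum by $\Rstar R_w$ as in (a), this gives the one-step recursion $\|\xalg_{t_{n+1}}\| \le \tfrac{1}{2}\|\xalg_{t_n}\| + \Rstar R_w$. Since $\xalg_1 = 0$, induction yields $\|\xalg_{t_n}\| \le 2\Rstar R_w$ for every $n$. Now for any $t \ge t_n$, unrolling $\xbarn{t}{K}$ from $\xalg_{t_n}$ using the same identity and \Cref{asm:Kstab} gives $\|\xbarn{t}{K}\| \le \cstar \rhostar^{t-t_n}\|\xalg_{t_n}\| + \Rstar R_w \le 2\cstar \Rstar R_w$, establishing (b) (with the mild cosmetic use of $\cstar,\Rstar \ge 1$).

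For part (c), since $u^K_t = K x^K_t$ and $\ubarn{t}{K} = K \xbarn{t}{K}$, we have $\|(x,Kx)\|^2 \le (1+\|K\|^2)\|x\|^2 \le \Rcalk^2 \|x\|^2$ by the definition $\Rcalk = 1 + \max_{K \in \calK}\|K\|$. Combining this with parts (a) and (b) yields the two bounds in (c).

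I expect the main (modest) obstacle is the window-boundary induction in part (b): one has to notice that the contraction $\cstar \rhostar^H \le 1/2$ makes the different controllers $K_{t_n}$ played in successive windows irrelevant for the uniform-in-$n$ bound on $\|\xalg_{t_n}\|$. Everything else is a direct application of \Cref{asm:Kstab} and geometric summation.
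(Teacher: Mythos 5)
Your overall plan matches the paper's: part (a) by direct unrolling, then a window-boundary induction for $\|\xalg_{t_n}\|$ (the paper folds this into its part (a)), then part (b) by unrolling $\xbarn{t}{K}$ from the boundary, and part (c) by the $\Rcalk$ norm bound. Parts (a) and (c) and the induction step for $\|\xalg_{t_n}\|$ are all correct.

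The one genuine slip is the final line of part (b). You bound the noise sum by the \emph{full} geometric tail $\Rstar R_w$ and the initial-state contribution by $\cstar\cdot 2\Rstar R_w$, but these two worst cases occur at opposite ends of the window: at $t=t_n$ the noise sum is empty and at large $t-t_n$ the initial-state term has decayed. Taking both worst cases simultaneously gives
\[
\cstar\rhostar^{t-t_n}\,\|\xalg_{t_n}\| + \Rstar R_w \;\le\; (2\cstar+1)\,\Rstar R_w,
\]
and no amount of $\cstar,\Rstar\ge 1$ closes the gap to the claimed $2\cstar\Rstar R_w$, since $2\cstar+1>2\cstar$. The fix is exactly what the paper does: keep the noise sum truncated at $t-t_n$ terms, so it equals $\cstar R_w\frac{1-\rhostar^{t-t_n}}{1-\rhostar}$, and note that the decayed initial-state term contributes $2\cstar^2 R_w\frac{\rhostar^{t-t_n}}{1-\rhostar}$. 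Using $\cstar\le 2\cstar^2$ these two pieces telescope to $\frac{2\cstar^2 R_w}{1-\rhostar} = 2\cstar\Rstar R_w$. This is only a constant-factor matter and does not affect the downstream theorem, but your chain as written does not establish the lemma's stated bound.
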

\begin{proof} \emph{Part a:} Unfolding the dynamics, and bounding $\|w_t\| \le R_w$ and $\Phi$ via \Cref{asm:Kstab},
\begin{align*}
\|x^K_t\| &= \left\|\sum_{s=1}^{t-1} \Phi_{s+1:t}(K) w_t\right\| \le R_w \cstar  \sum_{s \ge 0} \rhostar^s = \frac{\cstar R_w }{\rhostar} := \Rstar R_w.
\end{align*}
Next, we bound $\|\xalg_{t_n}\|$ for some $n$,
\begin{align*}
\|\xalg_{t_n}\| &= \left\|\sum_{i=1}^H \Phi_{t_{n-1}+i:t_n -1}(K_{t_{n-1}})w_{t_n - i} + \Phi_{t_{n-1}:t_n} (K_{t_{n-1}}) \xalg_{t_{n-1}} \right\|\\
&\le \Rstar R_w + \cstar \rhostar^H \|\xalg_{t_n-1}\|. 
\end{align*}
If $H$ is sufficiently large that $\cstar \rhostar^H \le 1/2$, then the above is just
\begin{align*}
\|\xalg_{t_n}\| \le \Rstar R_w + \frac{1}{2}\|\xalg_{t_n-1}\|,
\end{align*}
yielding the bound $\|\xalg_{t_n}\|  \le 2\Rstar R_w $ for all $n$. 
\emph{Part b:} Next, let us bound $\|\xbarn{t}{K}\|$ for some $t \ge t_n$.  We have
\begin{align*}
\|\xbarn{t}{K}\| &= \left\|\sum_{i = t_{n}+1}^t \Phi_{i:t}(K) w_tt + \Phi_{t_n:t}(K)\xalg_{t_n}\right\|\\
&\le R_w (\sum_{0}^{t - t_n - 1} \cstar \rhostar^i) + \cstar \rhostar^{t_n}\|\xalg_{t_n}\|\\
&\le R_w\sum_{0}^{t - t_n - 1} \cstar \rhostar^i +  2\rhostar^{t_n} \cstar R_{\star}  R_w.
\end{align*}
Using $R_{\star} = \frac{\cstar}{\rhostar}$, $\cstar \ge 1$, and $\sum_{0}^{t - t_n - 1} + \frac{\rhostar^{t_n}}{1-\rhostar} = \frac{1}{1-\rhostar}$, the above simplifies to $2\frac{\cstar^2 R_w}{1-\rhostar} = 2\cstar R_w\Rstar$. 

\emph{Part c:} This follows from the fact that, for any $x \in \R^{\dimx}$ and $K \in \calK$, $\|(x,Kx)\| \le (1+\|K\|)\|x\| \le \Rcalk\|x\|$. 
\end{proof}

\paragraph{Bounding $R_1$}
The term $R_1$ corresponds to the simple regret on the sequence of losses $\ell_n(K)$ over the discrete enumeration of controllers $K \in \calK_{\veps}$. Examining $\Cref{alg:K_exp_weights}$, we simply run the \expthree{} algorithm on these losses. By appealing to a standard regret bound for this algorithm with appropriate step size $\eta$, we ensure that
\begin{align*}
R_1 \le 2B\sqrt{\frac{T}{H}|\calK_{\veps}|\log |\calK_{\veps}|},
\end{align*}
provided that, for all $n$ and $K \in \calK_{\veps}$, $\ell_n(K) \in [0,B]$. To find the appropriate bound $B$, we note that from the growth condition on the costs, \Cref{asm:cost}, we have
\begin{align*}
0 &\le \ell_n(K) =  \sum_{t= t_n}^{t_{n+1}-1} c_t(\xbarn{t_n}{K},\ubarn{t_n}{K}) \le H \max_{t \ge t_n} c_t(\xbarn{t_n}{K},\ubarn{t_n}{K})\\
&\le LH \max\{1,\|(\xbarn{t_n}{K},\ubarn{t_n}{K})\|^2\} \le  8LH(\Rcalk\cstar\Rstar R_w)^2,
\end{align*}
where the last inequality uses \Cref{lem:key_K_stab_bounds}. Hence, 
\begin{align*}
R_1 \le 16L(\Rcalk\cstar\Rstar R_w)^2\sqrt{TH|\calK_{\veps}|\log |\calK_{\veps}|}.
\end{align*}
\paragraph{Bounding $R_2$:}
To bound $R_2$, it suffices to find a probability-one upper bound on 
\begin{align*}
\sup_{K \in \calK_{\veps}}\left|\sum_{n=1}^{T/H}\ell_n(K) - \sum_{t=1}^T c_t(x_t^K,u_t^K)\right| &= \sup_{K \in \calK_{\veps}}\left|\sum_{n=1}^{T/H}\sum_{t=t_{n}}^{t_{n+1}-1}c_t(\xbarn{t}{K},\ubarn{t}{K}) - c_t(x_{t}^K,u_{t}^K)\right|\\
&\le \frac{T}{H}\sup_{K \in \calK_{\veps}}\max_n\sum_{t=t_{n}}^{t_{n+1}-1}\left|c_t(\xbarn{t}{K},\ubarn{t}{K}) - c_t(x_{t}^K,u_{t}^K)\right|.
\end{align*}
Using the Lipschitz conditions on $c_t$, the bounds from \Cref{lem:key_K_stab_bounds}, and the bound $1+\|K\| \le \Rcalk$,
\begin{align*}
\sum_{t=t_{n}}^{t_{n+1}-1}\left|c_t(\xbarn{t}{K},\ubarn{t}{K}) - c_t(x_{t}^K,u_{t}^K)\right| &\le L(\Rcalk\cstar\Rstar R_w)\sum_{t=t_{n}}^{t_{n+1}-1}\|(\xbarn{t}{K},\ubarn{t}{K}) - (x_{t}^K,u_{t}^K)\|\\
&= L(\Rcalk\cstar\Rstar R_w)\sum_{t=t_{n}}^{t_{n+1}-1}\|(\xbarn{t}{K},K\xbarn{t}{K}) - (x_{t}^K,Kx_{t}^K)\|\\
&= L(\Rcalk\cstar\Rstar R_w)\Rcalk\sum_{t=t_{n}}^{t_{n+1}-1}\|\xbarn{t}{K} - x_{t}^K\|.
\end{align*}
Finally, we can compute that the difference  $\xbarn{t}{K} - x_{t}^K = \Phi_{t_n}^t (\xalg_{t_n} - x_{t_n}^K)$ depends only on the response to the state difference at time $t_n$. Hence, using \Cref{asm:Kstab} and \Cref{lem:key_K_stab_bounds},the above is at most
\begin{align*}
L(\Rcalk\cstar\Rstar R_w)\Rcalk\cdot \sum_{i \ge 0} \cstar \rhostar^i \|(\xalg_{t_n} - x_{t_n}^K)\| &\le L(\Rcalk\cstar\Rstar R_w) \Rcalk \cdot \underbrace{\frac{\cstar}{1-\rhostar}}{=\Rstar} \cdot \cstar\Rstar R_w\\
&\le L(\Rcalk\cstar\Rstar R_w)^2 \Rstar.
\end{align*} 
Concluding, we find
\begin{align*}
R_2 \le L(\Rcalk\cstar\Rstar R_w)^2 \cdot \frac{T\Rstar}{H}.
\end{align*}

\paragraph{Bounding $R_3$.} We now turn to bounding $R_3$, which captures the approximation error of approximating $\calK$ with $\calK_{\veps}$. We require the following technical lemma:
\begin{lemma}\label{lem:approx_quality_K} Let $K,K' \in\calK$ satisfy $\|K - K'\|_{\op} \le \veps$. Then, for all $t \ge 1$,
\begin{align*}
\|x_t^K - x_t^{K'}\| \le 4 \veps R_w \Rstar^2 R_B. 
\end{align*}
Hence, 
\begin{align*}
\|c_t(x_t^K,u_t^K) - c_t(x_t^{K'},u_t^{K'})\| \le  4\veps L R_w^2 \Rcalk^2 \Rstar^3 (   1+R_B).
\end{align*}
\end{lemma}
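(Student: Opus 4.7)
The plan is a standard perturbation-of-dynamics argument, reducing the state difference to a linear recursion driven by $(K-K')x_t^{K'}$ and then absorbing everything via the sequential-stability assumption \Cref{asm:Kstab}.

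First, I would set $\Delta_t := x_t^K - x_t^{K'}$, with $\Delta_1 = 0$, and derive the recursion
\begin{align*}
\Delta_{t+1} = (A_t + B_t K)\Delta_t + B_t(K-K')\, x_t^{K'},
\end{align*}
obtained by subtracting the closed-loop dynamics under $K'$ from those under $K$ and adding and subtracting $B_t K x_t^{K'}$. Unrolling from $\Delta_1 = 0$ gives
\begin{align*}
\Delta_t = \sum_{s=1}^{t-1} \Phi_{s+1:t-1}(K)\, B_s (K-K')\, x_s^{K'},
\end{align*}
where $\Phi_{\cdot:\cdot}(K)$ is as in \Cref{asm:Kstab}. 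Taking norms, using $\|\Phi_{s+1:t-1}(K)\|_{\op} \le \cstar \rho_{\star}^{t-s-1}$ from \Cref{asm:Kstab}, $\|B_s\|_{\op} \le R_B$, $\|K-K'\|_{\op} \le \veps$, and the bound $\|x_s^{K'}\| \le \Rstar R_w$ from \Cref{lem:key_K_stab_bounds}(a), and finally summing the geometric series with $\sum_{s=1}^{t-1}\rho_{\star}^{t-s-1} \le \frac{1}{1-\rho_{\star}} \le \frac{\Rstar}{\cstar}$, one obtains $\|\Delta_t\| \le \veps R_B R_w \Rstar^2$ (the factor $4$ in the lemma is just headroom).

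For the second bound, I would apply the sub-quadratic Lipschitzness from \Cref{asm:cost}, which yields
\begin{align*}
|c_t(x_t^K,u_t^K) - c_t(x_t^{K'},u_t^{K'})| \le L\max\{1,\|(x_t^K,u_t^K)\| + \|(x_t^{K'},u_t^{K'})\|\}\cdot \|(x_t^K - x_t^{K'},u_t^K - u_t^{K'})\|,
\end{align*}
and then use the size bound $\|(x_t^K,u_t^K)\| \le \Rcalk \Rstar R_w$ from \Cref{lem:key_K_stab_bounds}(c). For the input difference I would split $u_t^K - u_t^{K'} = K(x_t^K - x_t^{K'}) + (K-K') x_t^{K'}$ and bound the two terms separately, using the state bound from the previous paragraph for the first and $\|x_t^{K'}\| \le \Rstar R_w$ for the second. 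Combining the three estimates and collecting powers of $\Rcalk, \Rstar, R_B$ yields the stated bound $4\veps L R_w^2 \Rcalk^2 \Rstar^3 (1+R_B)$ up to the absorbed numerical constant.

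I do not anticipate a real obstacle here: every step is either a one-line algebraic identity or an invocation of the already-established norm bounds in \Cref{lem:key_K_stab_bounds} and the geometric decay in \Cref{asm:Kstab}. The only mild bookkeeping issue is that the constant $4$ and the $(1+R_B)$ factor have to be chosen to simultaneously absorb the contribution of the ``$+(K-K')x_t^{K'}$'' term in $u_t^K - u_t^{K'}$, which does not carry the factor $R_B$; I would simply upper-bound $1 \le 1+R_B$ to unify the two contributions at the end.
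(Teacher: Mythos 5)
Your proof is correct, and for the first inequality it actually takes a genuinely different route from the paper. The paper expands $x_t^K = \sum_{s<t}\Phi_{s+1:t}(K)w_s$ and bounds $\bigl\|\prod X_i - \prod Y_i\bigr\|$ (with $X_i = A_i + B_i K$, $Y_i = A_i + B_i K'$) via a matrix-product telescoping identity, which produces a double sum and an extra factor $(t-s+1)$; summing $\sum_k k\,\rho_\star^{k-1}$ gives the $R_\star^2$ and they absorb a spare $1/\rho_\star^2 \le 4$ into the leading constant. You instead form the recursion $\Delta_{t+1} = (A_t + B_t K)\Delta_t + B_t(K-K')x_t^{K'}$ and unroll it, which avoids telescoping matrix products entirely and yields a single geometric sum; this gives $\|\Delta_t\| \le \veps R_B R_w R_\star^2$, tighter than the paper's $4\veps R_B R_w R_\star^2$. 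Both are standard perturbation arguments, but your ``variation of constants'' decomposition is cleaner and produces a sharper constant. For the second inequality (the cost bound), your approach is essentially identical to the paper's: apply \Cref{asm:cost}, use the size bound from \Cref{lem:key_K_stab_bounds}(c), split $u_t^K - u_t^{K'} = K\Delta_t + (K-K')x_t^{K'}$, and collect constants using $R_\calK, R_\star \ge 1$. No gaps.
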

Using the fact that $\calK_{\veps}$ is an $\veps$-covering of $\calK$ in the operator norm means that for any $K \in \calK$, we can find a $K' \in \calK_{\veps}$ for which $\|K - K'\|_{\op} \le \veps$. Hence, from the above lemma
\begin{align*}
|\sum_{t=1}^T c_t(x_t^K,u_t^K) - c_t(x_t^{K'},u_t^{K'})| \le 4T\veps L R_w^2 \Rcalk^2 \Rstar^3 (1+R_B).
\end{align*}
Since $R_3 \le \sup_{K \in \calK}\inf_{\calK' \in \calK_{\veps}}|\sum_{t=1}^T c_t(x_t^K,u_t^K) - c_t(x_t^{K'},u_t^{K'})|$, we conclude
\begin{align*}
R_3 \le 4T\veps L R_w^2 \Rcalk^2 \Rstar^3 (1+R_B).
\end{align*}

\paragraph{Concluding the proof}
In sum, we found
\begin{align*}
\Exp[\Reg_T(\calK)] &= R_1 + R_2 + R_3 \\
&\le \BigOh{L(\Rcalk\cstar\Rstar R_w)^2}\left(\sqrt{TH|\calK_{\veps}|\log |\calK_{\veps}|} + \frac{T\Rstar}{H} + (1+R_B)T\veps \right),\\
&\le \BigOh{L(\Rcalk\cstar\Rstar R_w)^2(1+R_B)\Rstar}\left(\sqrt{TH|\calK_{\veps}|\log |\calK_{\veps}|} + \frac{T}{H} + T\veps \right),
\end{align*}
where in the last line, we use $\Rstar \ge 1$ and $1+R_B \ge 1$.
Setting $H = T^{1/4}$,
\begin{align*}
\Exp[\Reg_T(\calK)] &\le \BigOh{L(\Rcalk\cstar\Rstar R_w)^2(1+R_B)\Rstar}T^{3/4}\cdot \left(\sqrt{|\calK_{\veps}|\log |\calK_{\veps}|} + T^{1/4}\veps \right),
\end{align*}
We bound the cardinality of $\calK_{\veps}$. It suffices to ensure $\calK_{\veps}$ is an $\veps$-covering in the larger Frobenius norm, which is just the Euclidean norm on $R^{\dimx \dimu}$. Since $\calK$ is a bounded subset of this space, with radius at most $\Rcalk$, we can find a covering such that $|\calK_{\veps}| \le (\frac{5\Rcalk}{\veps})^{\dimx \dimu}$ (see, e.g. Chapter 4.2 in \cite{vershynin2018high}). This yields
\begin{align*}
|\calK_{\veps}| \log |\calK_{\veps}| \le \dimx \dimu \left(\frac{5\Rcalk}{\veps}\right)^{\dimx \dimu}  \log \left(\frac{5\Rcalk}{\veps}\right) \le \dimx \dimu \left(\frac{5\Rcalk}{\veps}\right)^{\dimx \dimu+1} = 5\Rcalk \cdot (5\Rcalk)^{\dimx\dimu} \veps^{-\dimx \dimu+1}
\end{align*}
where we use $\log x \le x$. Hence, we can bound
\begin{align*}
\Exp[\Reg_T(\calK)] \le \BigOh{L\Rcalk\Rstar(\Rcalk\cstar\Rstar R_w)^2(1+R_B)\sqrt{\dimx \dimu}}(5 \Rcalk)^{\dimx \dimu/2}\cdot T^{3/4}\left(\left(\frac{1}{\veps}\right)^{(1+\dimx \dimu)/2} + \veps T^{1/4}\right)
\end{align*}
Setting $\veps = T^{-\frac{1}{2(\dimx \dimu +3)}}$ gives
\begin{align*}
\Exp[\Reg_T(\calK)] &\le \BigOh{L\Rstar(\Rcalk\cstar\Rstar R_w)^2(1+R_B)\sqrt{\dimx \dimu}}(5 \Rcalk)^{\dimx \dimu+1}\cdot T^{1-\frac{1}{2(\dimx \dimu +3)}}\\
&= \BigOh{L\frac{\cstar^5}{(1-\rhostar)^3}\Rcalk^3 R_w^2(1+R_B)\sqrt{\dimx \dimu}}(5 \Rcalk)^{\dimx \dimu/2}\cdot T^{1-\frac{1}{2(\dimx \dimu +3)}},
\end{align*}

\subsection{Ommited Proofs}

\begin{proof}[Proof of \Cref{lem:key_K_stab_bounds}] \emph{Part a:} All such iterates can be realized by dynamics of the form $x_1 = 0$, $x_{t+1} = (A_t + B_t K_t) x_t + w_t$ and $u_t = K_t x_t$ for any appropriate sequence $(K_1,K_2,\dots)$ of elements of $\calK$. For such dynamics, we find
\begin{align*}
x_{t} = \sum_{s=1}^{t-1} \left(\prod_{i=s+1}^t (A_i + B_i K_i) \right)w_s = \sum_{s=1}^{t-1} \Phi_{s+1;t}(K_{s+1:t}) w_t.
\end{align*} 
Using $\|w_s\| \le R_w$ and the assumption $\Phi_{s+1;t}(K_{s+1:t}) w_t \le c_{\star} \rho_{\star}^{t-s -1}$ from \Cref{asm:Kstab}, we find 
\begin{align*}
\|x_t\| &\le R_w c_{\star}\sum_{s=1}^{t-1}  \rho_{\star}^{t-s -1} \le \frac{c_{\star}R_w}{1-\rho_{\star}} = \Rstar R_w\\
\|u_t\| +\|x_t\| &= \|x_t\| +\|K_t x_t\| \le \|x_t\|(1+\|K_t\|) \le \Rstar \Rcalk R_w 
\end{align*}

\emph{Part b:} Since the closed-loop dynamics for $\xbar_{k;t}^K$ and $x_t^K$ concincide for $t \ge t_k$ and are given by $x_{t+1} = (A_t + B_t K) x_t + w_t$, we can compute
\begin{align*}
\xbar_{k;t}^K - x_t^K = \left(\prod_{i=t_k}^t(A_t + B_t K)\right)(\xbar_{t_k;k}^K - x_t^K).
\end{align*}
Bounding $\|\xbar_{k;t}^K - x_t^K)\| \le 2R_w R_K$ from part (a) and $\|(A_t + B_t K)^{t - t_k}\| \le c_{\star} \rho_{\star}^{t-t_k}$ from \Cref{asm:Kstab} yields $\|\xbar_{k;t}^K - x_t^K)\| \le 2 R_x c_{\star} \rho_{\star}^{t-t_k}$. Summing over $t \ge t_k$  yields $\sum_{t \ge t_k}\|\xbar_{k;t}^K - x_t^K\| \le 2 R_w \Rstar^2$. Finally, using $\ubar_{k;t}^K = K\xbar_{k;t}^K$ and $u_{t}^K = K x_{t}^K$ gives
\begin{align*}
\sum_{t \ge t_k}\|\xbar_{t;k}^K - x_t^K\| + \|\ubar_{t;k}^K - u_t^K\|  \le (1+\|K\|) \sum_{t \ge t_k}\|\xbar_{t;k}^K - x_t^K\| \le 2 R_w \Rcalk \Rstar^2.
\end{align*}
\end{proof}

\begin{proof}[Proof of \Cref{lem:approx_quality_K}]
 Introducing the short hand $X_i = A_i + B_i K$ and $Y_i = A_i + B_i K'$, and expanding the dynamics, and introducing the short hand 
\begin{align*}
\|x_t^K - x_t^{K'}\| &= \left\|\sum_{s=1}^{t-1} \left(\prod_{i=s+1}^{t}(\underbrace{A_i + B_i K}_{=X_i}) - \prod_{i=s+1}^{t}(\underbrace{A_i + B_i K'}_{=Y_i}) \right) w_s\right\|\\
&\le R_w \sum_{s=1}^{t-1} \left\|\prod_{i=s+1}^{t}X_i - \prod_{i=s+1}^{t}Y_i \right\|_{\op} 
\end{align*}
Using an elementary matrix telescoping identiy,  
\begin{align*}
\prod_{i=s+1}^t X_i - \prod_{i=s+1}^t Y_i = \sum_{j=s+1}^{t} \left(\prod_{i=j+1}^{t} X_j\right)(X_j - Y_j)\prod_{i=s+1}^{j-1} Y_j.
\end{align*}
Thus, invoking stability assumption, \Cref{asm:Kstab}, and setting $R_B \ge \max_t \|B_t\|_{\op}$,
\begin{align*}
\left\|\prod_{i=s+1}^t X_i - \prod_{i=s+1}^t Y_i\right\|_{\op} &\le \sum_{j=s+1}^{t} \left\|\prod_{i=j+1}^{t} X_j\right\|_{\op}\left\|\prod_{i=s+1}^{j-1} Y_j\right\|_{\op} \|X_j - Y_j\|_{\op}\\
 &= \sum_{j=s+1}^{t} c_{\star} \rho_{\star}^{t - j+1}c_{\star} \rho_{\star}^{j - 1 - (s+1)} \|B_j (K-K')\|_{\op}\\
 &=  \frac{c_{\star}^2}{\rho_{\star}^2} \sum_{j=s+1}^{t} \rho_{\star}^{t  - (s+1)} \|B_j (K-K')\|_{\op}\\
 &\le \veps R_B \frac{c_{\star}^2}{\rho_{\star}^2} (t-s+1) \rho_{\star}^{t  - (s+1)} 
\end{align*}
Thus, we find
\begin{align*}
\|x_t^K - x_t^{K'}\| &\le  \veps R_w R_B \frac{c_{\star}^2}{\rho_{\star}^2} \sum_{s=1}^{t-1} (t-s+1) \rho_{\star}^{t  - (s+1)}\\
&\le  \veps R_w R_B \frac{c_{\star}^2}{\rho_{\star}^2(1-\rho_{\star})^2} \le  4\veps R_w R_B \frac{c_{\star}^2}{(1-\rho_{\star})^2} = 4 \veps R_w \Rstar^2 R_B, 
\end{align*}
where in the last step, we use $\rho_{\star} \ge 1/2$. Thus, applying \Cref{asm:cost} and \Cref{lem:key_K_stab_bounds},
\begin{align*}
\|c_t(x_t^K,u_t^K) - c_t(x_t^{K'},u_t^{K'})\| &\le L \max\{1,\|(x_t^K,u_t^K)\|,\|(x_t^{K'},u_t^{K'})\|\}\cdot\|(x_t^{K},u_t^{K}) - (x_t^{K'},u_t^{K'})\|\\
&\le L R_w \Rcalk \Rstar \cdot\|(x_t^{K},u_t^{K}) - (x_t^{K'},u_t^{K'})\|.
\end{align*}
Continuing, we bound
\begin{align*}
\|(x_t^{K},u_t^{K}) - (x_t^{K'},u_t^{K'})\| &= \|(x_t^{K},Kx_t^{K}) - (x_t^{K'},K'x_t^{K'})\|\\
 &\le \|(x_t^{K},Kx_t^{K}) - (x_t^{K'},Kx_t^{K'})\| + \|(K - K')x_t^{K'}\|\\
&\le \Rcalk\|x_t^{K'} - x_t^{K'}\| + \|(K - K')x_t^{K'}\|.\end{align*}
Finally, using the bound $\|x_t^{K'} - x_t^{K'}\| \le 4 \veps R_w \Rstar^2 R_B$ derived above, and bounding $\|(K - K')x_t^{K'}\| \le \veps\|x_t^{K'}\| \le \veps R_w \Rstar$ in view of \Cref{lem:key_K_stab_bounds}. Hence, 
\begin{align*}
\|c_t(x_t^K,u_t^K) - c_t(x_t^{K'},u_t^{K'})\| &\le  L R_w \Rcalk \Rstar (  4 \veps \Rcalk R_w \Rstar^2 R_B + \veps R_w \Rstar)\\
&\le  4\veps L R_w^2 \Rcalk^2 \Rstar^3 (   R_B + 1),
\end{align*}
where above we use that $\Rcalk, \Rstar \ge 1$ by assumption.
\end{proof}

\newcommand{\seq}{\mathcal{Z}}
\newcommand{\Picalk}{\Pi_{\mathcal{K}}}

\newcommand{\calE}{\mathcal{E}}
\newcommand{\rmd}{\mathrm{d}}
\newcommand{\lquad}{c_{\mathrm{q}}}
\newcommand{\lquadst}{c_{\mathrm{q};\star}}
\newcommand{\calFbar}{\overline{\calF}}
\newcommand{\calEtil}{\tilde{\calE}}

\newcommand{\calC}{\mathcal{C}}
\section{Lower Bounds and Separations}

\subsection{Separation between policy classes}\label{sec:app:separation}
	Let $\seq = (c_t,w_t,A_t,B_t)_{t \ge 1}$ denote sequences over costs,disturbances, and dynamics. We let $J_T(\pi;\seq)$ denote the cost of policy $\pi$ on the sequence $\seq$. Our lower bounds hold even against sequences which enjoy the following \emph{regularity} condition.
	\begin{definition}
	We say that $\seq$ is \emph{regular} if, for all $t$, $c_t(\cdot,\cdot)$ satisfies \Cref{asm:cost} with $L \le 1$,  and that for all $t$, $\|w_t\| \le 1$, $\|B_t\|_{\op} \le 1$ and  $\|A_t\|_{\op} \le 1/2$. 
	\end{definition}

	We define the policy classes
		\begin{align*}
		\Pidrc^+(h) &:= \left\{\pi:  u^\pi_t = u_0 + \sum_{i=0}^{h-1} M^{[i]}\xnat_{t-i} ,~ \forall t\right \}\\
		\Pidac^+(h) &:= \left\{\pi:  u^\pi_t = u_0 + \sum_{i=0}^{h-1} M^{[i]}w_{t-i-1} ,~ \forall t\right\}\\
		\Pifeed^+(h) &:= \left\{\pi:  u^\pi_t = u_0 + \sum_{i=0}^{h-1} M^{[i]} x^{\pi}_{t-i},~  \forall t\right\}\\
		\Pifeed(h,R) &:= \left\{\pi:  u^\pi_t = u_0 + \sum_{i=0}^{h-1} M^{[i]} x^{\pi}_{t-i}, ~\forall t, \quad \sum_{i=0}^{h-1}\|M^{[i]}\|_{\op} \le R.  \right\}.
		\end{align*}
		That is, $\Pidrc^+(h)$ are all length $h$ \drc{} policies of unbounded norm and allowing affine offsets, $\Pidac^+(h)$ are all length $h$ \dac{} policies of unbounded norm allowing affine offsets, and $\Pifeed^+(h)$ are all static feedback policies of unbounded norm allowing affine offsets, and $\Pifeed(h,R)$ are feedback policies of bounded norm and horizon. 

		The following theorem demonstrates that the \dac{}, \drc{}, and feedback parametrizations are fundamentally incommensurate. 

		\begin{theorem} Let $\calC_0 > 0$ denote a universal constant. There exists three regular sequences $\seq_1,\seq_2,\seq_3$ in $\dimu = \dimx = 1 $ which separate $\dac$, $\drc$, and feedback controllers, in the following sense:
		\begin{itemize}
			\item[(a)] Under $\seq_1$, the static feedback policy $\pi$ selecting $u^\pi_t = \frac{1}{4}x^\pi_t$ satisfies $J_T(\pi;\seq_1) = 0$, but
			\begin{align*}
			\inf_{\pi \in \Pidrc^+(h) \cup \Pidac^+(h)} J_T(\pi;\seq_1) \ge \calC_0 (T-h-2)
			\end{align*}
			\item[(b)]Under $\seq_2$, the \dac{} policy $\pi$ selecting $u^\pi_t = w_t$ satisfies $J_T(\pi;\seq_2) = 0$, but 
			\begin{align*}
			\inf_{\pi \in \Pidrc^+(h) \cup \Pifeed^+(h)} J_T(\pi;\seq_2) \ge \calC_0 (T-h-2)
			\end{align*}
			\item[(c)] Under $\seq_3$, the \drc{} policy $\pi$ selecting $u^{\pi}_t = \xnat_t$ satifies $J_T(\pi;\seq_3) = 0$, but 
			\begin{align*}
			\inf_{\pi \in \Pidac^+(h) } J_T(\pi;\seq_3) \ge \calC_0 (T-h-3)
			\end{align*} Moreover, for any $h \in N$, $R > 0$ and $T \ge 10h$, we have
			\begin{align*}
			\inf_{\pi \in \Pifeed(h,R) } J_T(\pi;\seq_3) \ge \calC_0 \frac{T}{h\max\{R,1\}}. 
			\end{align*}
		\end{itemize}
		\end{theorem}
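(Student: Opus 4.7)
For each of the three parts I would construct an explicit sequence $\seq_j = (c_t, w_t, A_t, B_t)$ in $\dimx = \dimu = 1$ satisfying the regularity conditions, then verify both bounds by direct computation of the closed-loop trajectory. The key mechanism in every case is the same: choose $(A_t, B_t, w_t)$ so that the information required to achieve zero cost (current state for feedback, current $w_t$ for \dac, specific LTV-weighted past $w$'s for \drc) is \emph{not} recoverable from the feedback channel of the other two policy classes.

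\paragraph{Part (a): feedback beats \drc/\dac.}
Take $B_t = (-1)^t$, $A_t = -B_t/4$, $w_t = 1$. Under $u_t = \tfrac14 x_t$ the closed-loop matrix is $A_t + \tfrac14 B_t = 0$, so $x_{t+1} = w_t = 1$ and $(x_t, u_t) = (1, \tfrac14)$ for all $t \ge 2$. Set $c_1 \equiv 0$ and $c_t(x,u) = \tfrac14\bigl((x-1)^2 + (u-\tfrac14)^2\bigr)$ for $t \ge 2$; a routine check shows this satisfies \Cref{asm:cost} with $L \le 1$. Since both $\xnat_t$ and $w_t$ are identically $1$ for $t \ge 2$, any $\pi \in \Pidrc^+(h) \cup \Pidac^+(h)$ produces a \emph{constant} $u_t \equiv c$ for $t \ge h+1$. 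Under constant input the closed loop $x_{t+1} = -(B_t/4)x_t + B_t c + 1$ oscillates with the alternating $B_t$, so no single $c$ can simultaneously bring $(x_{t+1}-1)^2$ and $(c-\tfrac14)^2$ below an absolute constant in both parities; summing over rounds gives the $\calC_0(T-h-2)$ bound.

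\paragraph{Part (b): \dac{} beats \drc{} and feedback.}
Take $A_1 = 0$, $A_t \in \{\tfrac14, \tfrac12\}$ alternating for $t \ge 2$, $B_t = 1$, $w_1 = 1$ and $w_t = 1 - A_t$ for $t \ge 2$. By induction $\xnat_t = 1$ for all $t \ge 2$. Under the \dac{} policy $u_t = w_t$ (given by $M^{[0]} = 1$), one checks $x_t \equiv 2$ for $t \ge 2$, so $c_t(x,u) := (u - w_t)^2/4$ is zero along the whole trajectory. For \drc{}, $u_t$ is constant past time $h$ (since $\xnat_t \equiv 1$), so it cannot track the alternating target $1-A_t \in \{\tfrac34, \tfrac12\}$ and pays $\ge \calC_0$ per pair of rounds. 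For $\Pifeed^+(h)$, any periodic steady state must have period $2$; the target $u^{\mathrm{odd}} = \tfrac34, u^{\mathrm{even}} = \tfrac12$ forces $x^{\mathrm{odd}} = x^{\mathrm{even}} = 2$ via the closed-loop fixed-point equations, but then the feedback rule $u_t = u_0 + \sum_i M^{[i]} x_{t-i}$ necessarily yields $u^{\mathrm{odd}} = u^{\mathrm{even}}$, a contradiction. Turning the contradiction into a quantitative lower bound (via a perturbation analysis of the two periodic fixed-point equations) gives the $\calC_0(T-h-2)$ bound.

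\paragraph{Part (c): \drc{} beats \dac{} and bounded feedback.}
Take $A_t = \tfrac12 \cdot \mathbb{I}[t \text{ odd}]$, $B_t = 1$, $w_t = 1$. Direct computation shows $\xnat_t$ eventually alternates between $1$ and $\tfrac32$. Set $c_t(x,u) = (u - \xnat_t)^2/4$; the \drc{} policy $u_t = \xnat_t$ achieves zero cost. For \dac, $w_t \equiv 1$ forces $u_t$ to be constant for $t > h$, giving $\calC_0(T-h-3)$ by the same alternating-target argument. For $\Pifeed(h, R)$, the closed-loop analysis is the core of this part: expressing the steady-state periodic $u_t$ as a linear combination of the period-$2$ states with coefficients of total operator-norm at most $R$, I would show by a Cauchy-Schwarz argument on the two fixed-point equations that the discrepancy $|u^{\mathrm{odd}} - u^{\mathrm{even}}|$ is at most $O(R \cdot |x^{\mathrm{odd}} - x^{\mathrm{even}}|)$ while the target discrepancy is $\tfrac12$, giving a per-round lower bound scaling as $\Omega(1/(h\max\{R,1\}))$ and hence the claimed $\calC_0 T/(h\max\{R,1\})$.

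The main obstacle is the feedback lower bound in part (b) and the bounded-feedback lower bound in part (c): feedback policies implement arbitrary linear functions of the past closed-loop state, so ruling out near-optimal tracking requires a quantitative analysis of the periodic fixed-point equations rather than a simple information-theoretic argument. I would handle this by solving explicitly for the period-$2$ orbit under an arbitrary $(u_0, M)$, writing the cost as a quadratic form in $M$, and lower-bounding its minimum using the algebraic incompatibility between matching $u^{\mathrm{odd}} \ne u^{\mathrm{even}}$ and the forced equality $x^{\mathrm{odd}} = x^{\mathrm{even}}$ (part (b)) or the norm constraint $\|M\|_{\ell_1,\op} \le R$ (part (c)).
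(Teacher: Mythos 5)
Your high-level intuition --- choose the adversary so that the information needed for zero cost is invisible to the competing policy class --- is the right one and matches the paper's. But two of your three constructions break in ways that are not just ``more work to do,'' and it is worth naming the structural device the paper uses that you are missing.

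\paragraph{Part (a).} You set $A_t = -B_t/4 \neq 0$ and then assert $\xnat_t \equiv 1$ for $t \ge 2$. That is false: $\xnat_{t+1} = A_t \xnat_t + w_t$ with $A_t$ alternating $\pm \tfrac14$ gives $\xnat_2 = 1$, $\xnat_3 = \tfrac34$, $\xnat_4 = \tfrac{19}{16}$, etc. Only $w_t$ is constant, so your ``constant input past $h$'' argument applies to $\Pidac^+$ but not $\Pidrc^+$. The paper sets $A_t \equiv 0$, which makes $\xnat_t = w_{t-1}$ constant, collapses $\drc{} = \dac{}$, and makes the argument go through for both classes at once. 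Your construction is fixable but, as written, wrong.

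\paragraph{Part (b).} The paper's key trick is $B_t \equiv 0$: then the control has no effect on the state, so $x_t^\pi = \xnat_t$ for every policy, and $\Pifeed^+(h)$ literally coincides with $\Pidrc^+(h)$ --- no closed-loop fixed-point analysis is needed at all. Your choice $B_t = 1$ opens exactly the door you then struggle to close: a feedback policy can steer $x_t^\pi$ into a nonconstant periodic orbit and use that orbit to track $w_t$. You correctly observe that an \emph{exact} match is impossible because perfect tracking forces $x_t^\pi \equiv 2$ and hence constant $u_t$, but the quantitative step you defer is genuinely nontrivial: as the feedback gain grows, the closed-loop orbit oscillation grows, so one must show that every \emph{stable} gain (and longer-memory feedback where stability is a constraint on a characteristic polynomial, not on a scalar) still cannot drive the per-round error below a constant. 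The paper side-steps all of this.

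\paragraph{Part (c).} Here your construction is unsalvageable as stated, because your cost $c_t(x,u) = (u - \xnat_t)^2/4$ penalizes only $u$, leaving the feedback policy free to steer $x$. Take your own $\seq_3$ ($A_t = \tfrac12 \cdot \mathbb{I}[t\ \text{odd}]$, $B_t = 1$, $w_t = 1$; then $\xnat_t$ alternates $1$ for $t$ odd and $\tfrac32$ for $t$ even). Consider the state feedback $u_t = -\tfrac23 + \tfrac23 x_t$. The closed loop is
\begin{align*}
x_{t+1} = \bigl(A_t + \tfrac23\bigr) x_t + \tfrac13,
\end{align*}
with alternating coefficients $\tfrac76$ and $\tfrac23$; the product is $\tfrac79 < 1$, so the orbit is stable and converges to $x^{\mathrm{odd}} = \tfrac52$, $x^{\mathrm{even}} = \tfrac{13}{4}$, giving $u^{\mathrm{odd}} = -\tfrac23 + \tfrac23 \cdot \tfrac52 = 1$ and $u^{\mathrm{even}} = -\tfrac23 + \tfrac23 \cdot \tfrac{13}{4} = \tfrac32$. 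That is an \emph{exact} match to $\xnat_t$ in steady state, so a policy with $\|M\|_{\ell_1,\op} = \tfrac23$ incurs only $O(1)$ transient cost --- contradicting your claimed $\Omega(T/(h\max\{R,1\}))$ for any $R \ge \tfrac23$. The paper avoids this by making the cost $\tfrac14\bigl((u - \tfrac12 u^{\star}_t)^2 + |x - x^{\star}_t|\bigr)$ with $x^{\star}_t \equiv 1$: the $|x - x^{\star}_t|$ term is precisely what forbids the feedback policy from ``paying'' with state excursions, so tracking the periodic $u^{\star}_t$ via a periodic $x$ orbit now costs linearly in $T$. This state-deviation penalty is the missing ingredient you need for any construction where $B_t \neq 0$.

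\paragraph{Takeaway.} The paper's separations all rely on one of two devices you did not use: set $B_t = 0$ to freeze the state (part (b)), or charge for state deviations directly in the cost (part (c)). Without one of these, the freedom $B_t \neq 0$ gives to feedback policies lets them cancel the very asymmetry you built the instance around.
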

			\begin{proof} We establish the separations for each part with different constant factors. One can choose $\calC_0$ to be the minimum of all constants which arise.

			\paragraph{Proof of part a.} We set $\seq_1$ to be the sequence with $A_t = 0$ for all $t$, $w_t = 1$ for all $t$, $c_t(x,u) = \frac{1}{8}(u - \frac{1}{4}x)^2$, and 
			\begin{align*}
			B_t = \begin{cases} 1 & t \text{ odd }\\
			-1 & t \text{  even}.
			\end{cases}
			\end{align*}
			This sequence is clearly regular, and is clear that the policy $u_t^\pi = \frac{1}{4}x_t^\pi$ has $J_T(\pi;\seq_1) = 0$. On the other hand, let $\pi \in \Pidrc^+(h) \cup \Pidac^+(h)$, $J_T(\pi;\seq_1)$. Since $A_t \equiv 0$, $\xnat_t = w_t$ so $\Pidrc^+(h) = \Pidac^+(h)$. Moreover, since $w_t = 1$ for all $t \ge 1$, any $\pi \in \Pidac^+(h)$ has $u^\pi_t = \bar{u}$ for some fixed $\bar{u}$ for all $t > h$. Then, for all $t > h+1$, $x_{t} = 1 + B_{t-1}\bar{u}$. Thus, $c_t(x^\pi_t,u^\pi_t) = \frac{1}{8}( \bar{u}- \frac{1+B_{t-1}\bar{u}}{4})^2$. Using the definition of $B_t$, 
			\begin{align*}
			c_t(x^\pi_t,u^\pi_t) + c_{t+1}(x^\pi_{t+1},u^\pi_{t+1}) = \frac{1}{8}( \bar{u}- \frac{1 - \bar{u}}{4})^2 + \frac{1}{8}( \bar{u}- \frac{1 + \bar{u}}{4})^2 = \frac{1}{128}\left( (3\bar{u}+1)^2 + (5\bar{u}+1) \right)^2 = \Omega(1).
			\end{align*}
			The bound follows.

			\paragraph{Proof of part b.} Set $c_t(x,u) = (u - w_{t-1})^2$. Then the \dac{} policy $u^\pi_t = w_{t-1}$ has zero cost. Further, set $B_t \equiv 0$, thus, $x_t \equiv \xnat_t$, so $\Pifeed^+(h)$ and $ \Pidrc^+(h)$ are equivalent on this system. 
			Finally, let $n = 2m+1$, and set $w_1 = 1$, and for $t \ge 1$, set
			\begin{align*}
			(A_{t},w_{t}) = \begin{cases} 
			(\frac{1}{2},\frac{1}{2}) & t \text{ is even }\\
			(\frac{1}{4},\frac{3}{4}) & t \text{ is odd } 
			\end{cases}.
			\end{align*}
			 Then, one can verify via induction that $x_t = \xnat_t = 1$ for all $t \ge 2$. Hence, for all $t \ge h+1$, any $\pi \in \Pifeed^+(h)\cup \Pidrc^+(h)$ has a constant input $u_t^\pi = \bar{u}$. However, $c_{t}(x^\pi_t,u^\pi_t) + c_{t+1}(x^\pi_t,u^\pi_t) = (\bar{u} - \frac{1}{2})^2 + (\bar{u} - \frac{3}{4})^2$, which is greater than a universal constant. Hence, the regret must spaces as $\Omega(T - (h+2))$.\\

			 \paragraph{Proof of part c.} Fix policity $\pi_{\star}$ to select $u^{\pi_\star}_t = \xnat_{t-1}$. Denote the sequences that arise from this policy as $(x^{\star}_t,u^{\star}_t)$. We set
			 \begin{align*}
			 c_t(x,u) = \frac{1}{4}\left((u - \frac{1}{2}u^{\star}_t)^2+|x - x^{\star}_t|\right)
			 \end{align*}
			By construction $\pi_{\star}$ has zero cost on $c_t$. Now, set $w_t = 1$ for all $t$, and
			\begin{align*}
			A_{t} = \begin{cases} \frac{1}{4} & t \mod 3 = 1\\
			\frac{1}{4} & t \mod 3 = 2\\
			0 & t \mod 3 = 0.
			\end{cases}, \quad B_t = \begin{cases} -\frac{1}{4} & t \mod 3 = 1\\
			-\frac{1}{20} & t \mod 3 = 1\\
			0 & t \mod 3 = 0.
			\end{cases}
			\end{align*}
			\begin{align*}
			u^{\star}_{3k+1} &= \xnat_{3k + 1} = 1\\
			u^{\star}_{3k+2} &= \xnat_{3k+2} = \frac{5}{4}\\
			u^{\star}_{3k+3} &= \xnat_{3k+3} =  \frac{21}{16}.
			\end{align*}
			Hence, a similar argument as in part (b), using the fact that that $w_t$ is constant but $u^{\star}_t$ is periodic, shows that any $\pi \in \Pidac^+(h)$ suffers cost $\Omega(T - h -3)$. 

			Let  us now analyze the performance of policies $\pi \in \Pifeed(m,R_M)$. First, observe that $x^{\star}_{t} = 1$ for all $t \ge 2$. 
			\begin{align*}
			x^{\star}_{3k+1} &= 1\\
			x^{\star}_{3k+2} &= \frac{5}{4} - \frac{1}{4}u^{\star}_{3k+1} = 1.\\
			x^{\star}_{3k+2} &= \frac{21}{16} - \frac{1}{4}u^{\star}_{3k+1} - \frac{1}{20}u^{\star}_{3k+2} \\
			&= \frac{21}{16} - \frac{1}{4} - \frac{1}{20}\cdot\frac{5}{4} = 1.
			\end{align*}
			Next, observe that for $\pi \in \Pifeed(h,R)$, and $t \ge h+1$,
			\begin{align*}
			u^{\pi}_t = c+ \sum_{i=0}^{h-1} M^{[i]} x^{\pi}_t = c+ \sum_{i=0}^{h-1} M^{[i]} x^{\star}_t + \sum_{i=0}^{h-1} M^{[i]} ( x^{\pi}_t - x^{\star}_t) &= \underbrace{(c + \sum_{i=0}^{h-1}M^{[i]})}_{:= \bar{u}} + \sum_{i=0}^{h-1} M^{[i]} ( x^{\pi}_t - x^{\star}_t)
			\end{align*}
			Defining $\epsilon_t = u^{\pi}_t  - \bar{u}$, we have
			\begin{align*}
			|\epsilon_t| \le \left|\sum_{i=0}^{h-1} M^{[i]} ( x^{\pi}_t - x^{\star}_t)\right|\le R \max_{i=0}^{h-1}| x^{\pi}_{t-i} - x^{\star}_{t-i}|.
			\end{align*} 
			Hence, for integers $k$,
			\begin{align}
			\max_{i\in [3]}|\epsilon_{3k+i}|  \le R \max_{t=3k-h+2}^{3k+3}| x^{\pi}_{t} - x^{\star}_{t}|. \label{eq:eps_k_bound_thing}
			\end{align}
			We now argue a dichotomoty on the size of $\max_{i\in [3]}|\epsilon_{3k+i}|$. First, we show that if the epsilons are large, the costs incurred on a past window of $h$ must be as well. This is \Cref{eq:eps_k_bound_thing} would necessitate that $x^\pi_t$ differs from $x^{\star}_t$ over the previous window.
			\begin{claim} Suppose $\max_{i\in [3]}|\epsilon_{3k+i}| \ge \frac{1}{32}$. Then, $\sum_{t=3k-h+2}^{3k+3}c_t(x^\pi_t,u^\pi_t) \ge \frac{1}{2^7 R}$.
			\end{claim}
			\begin{proof} By \Cref{eq:eps_k_bound_thing}, we have that if $\max_{i\in [3]}|\epsilon_{3k+i}| \ge \frac{1}{32}$, then $\max_{t=3k-h+2}^{3k+3}| x^{\pi}_{t} - x^{\star}_{t}| \ge \frac{1}{32 R}$. Since $c_t(x^\pi_t,u^\pi_t) \ge \frac{1}{4}| x^{\pi}_{t} - x^{\star}_{t}|$, the bound follows by upper bounding the maximum with the sum.
			\end{proof}
			On the other hand, we show that if the $\epsilon$-terms are small, then the costs on $t\in\{3k+1,3k+2,3k+3\}$ are at least a small constant. This is because the inputs selected by $\pi$, $\bar{u} + \epsilon_t$, are close to constant, and therefore can fit the periodic values of $u^{\pi_\star}_t$. 
			\begin{claim} Suppose $\max_{i\in [3]}|\epsilon_{3k+i}| \le \frac{1}{32}$. Then, $\sum_{i=1}^3 c_{3k+i}(x^\pi_t,u^\pi_t) \ge 2^{-12}$.
			\end{claim}
			\begin{proof}
			We expand
			\begin{align*}
			\sum_{i=1}^3 c_{3k+i}(x^\pi_t,u^\pi_t) &\ge \sum_{i=1}^3 \frac{1}{4}\left(u^{\star}_t - u^{\pi}_t\right)^2\\
			&\ge \sum_{i=1}^3 \frac{1}{4}\left(u^{\star}_{3k+i} - \bar{u} - \epsilon_{3k+i}\right)^2.\\
			&= \frac{1}{4}\left(\left(1 - \bar{u} - \epsilon_{3k+1}\right)^2 + \left(1 + \frac{1}{4} - \bar{u} - \epsilon_{3k+1}\right)^2 + \left(1 + \frac{1}{4} + \frac{1}{16}- \bar{u} - \epsilon_{3k+i}\right)^2\right).
			\end{align*}
			In particular, suppose $\max_{i\in [3]}|\epsilon_{3k+i}| \le \frac{1}{32}$. Then unless $|\bar{u} - 1| \le \frac{1}{16}$, the above is at least $\frac{1}{4}\left(\frac{1}{32}\right)^2 = 2^{-12}$. On the other hand, if $|\bar{u} - 1| \le \frac{1}{16}$. Then, $\frac{1}{4}\left(1 + \frac{1}{4} - \bar{u} - \epsilon_{3k+1}\right)^2 \ge (\frac{1}{4} - \frac{1}{16} - \frac{1}{32})^2 \ge \frac{1}{4}(\frac{1}{8})^2 \ge 2^{-12}$. 
			\end{proof}
			Combining both cases, we find that, for all $k$ such that $3k-h+2 \ge 1$,
			\begin{align*}
			\sum_{t=3k-h+2}^{3k+3}c_t(x^\pi_t,u^\pi_t) \ge \frac{2^{-17}}{\max\{R,1\}}.
			\end{align*}
			In particular, we find that for  $J_T(\pi;\seq_3) \ge\Omega(\frac{T}{h \max\{1,R\}})$ provided (say) $T \ge 10h$.
	\end{proof}
\subsection{Linear Regret Against \drc{} and \dac{} }\label{sec:app:drc_lower}
In this section, we demonstrate linear regret against \drc{} and \dac{}. We consider a distribution over instances of the following form
\begin{align}
    A_t =0, \quad B_t = \begin{bmatrix} 1 & 0 & 0\\
    0 & \beta_t & 0\\ 0 & 0 & 1\end{bmatrix}, \quad w_t = -\begin{bmatrix}  \omega_{t-1}\\
     \omega_t \\
     1
    \end{bmatrix}, t \ge 0 . \quad
\end{align}
Define the constant $\alpha = 1/4$. For a given $\sigma \in (0,1/8)$, let $\mathcal{D}_{\sigma}$ denote the distribution over $(A_t,B_t,w_t)$ induced by drawing
\begin{align*}
\beta_t \iidsim [1-\sigma,1+\sigma], \quad \omega_t \iidsim \{1-\alpha\sigma,1+\alpha\sigma\}.
\end{align*}
Note that these instances are (a) controllable, (b) stable, and (c) have variance scaling like $T\sigma^2$, and (d) the \drc{} and \dac{} parametrizations coincide. Letting $v[i]$ denote the $i$-th coordinate of vectors $v$, we consider cost of the form
\begin{align}
c_f(x,u) = x[2]^2 + u[2]^2 + f(x[1]). \label{eq:cf}
\end{align}
for either $f(z) =|z|$ or $f(z) = z^2$. Note that both choices of $f$ ensure that $c_f$ satisfies \Cref{asm:cost}, and the latter choice ensures that $c_f(x,u)$ is second order smooth.

\begin{theorem} Let $\alg$ be any online learning algorithm. Let $c_f(x,u)$ as in \eqref{eq:cf}. Then, for any $\sigma > 0$, there exist a \drc{} policy $\pi_{\star} \in \Pidrc(1,1)$\footnote{Equivalently, in $\Pidac(1,1)$ since $A_t \equiv 0$} such that expected regret incurred by $\alg$ under the distribution $\calD_{\sigma}$ and cost $c_f(x,u)$ is at least
\begin{align*}
\Exp_{\calD_{\sigma},\alg}  [J_T(\alg) - J_T(\pi_{\star})] \ge   \cdot \begin{cases} C_1T\sigma^2 & \text{for } f(z) =z^2, ~~T \ge \tfrac{C_0 }{\sigma^2}\\
C_1T\sigma & \text{for } f(z) = z, ~~T \ge \tfrac{C_0 }{\sigma},
\end{cases}
\end{align*}
where above, $C_0,C_1$ are universal, positive constants.
\end{theorem}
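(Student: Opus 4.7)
The construction is designed so that $A_t \equiv 0$ makes $\xnat_t = w_t = -[\omega_{t-1}, \omega_t, 1]^\top$, which means a \drc{} policy parametrized by $M$ applied to $\xnat_t$ gets ``noncausal'' access to $\omega_t$ through the second coordinate of $\xnat_t$. In contrast, the online algorithm, at the moment it picks $u_t$, has seen only $x_{1:t}$, and the identity $x_{s+1}[1] = u_s[1] - \omega_{s-1}$ lets it recover $\omega_0, \dots, \omega_{t-2}$ but \emph{not} $\omega_{t-1}$ (since $\omega_{t-1}$ is first read off from $x_{t+1}[1]$). The lower bound is just a clean quantification of this one-step information asymmetry on the $f(x[1])$ term, after verifying that the comparator is not hurt on the coord-2 term.

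\textbf{Choice of comparator.} I will take $\pi_\star$ to be the length-1 \drc{} policy with diagonal parameter $M_\star = \mathrm{diag}(-1, -c^\star, 0)$, where $c^\star = 1/(2+\sigma^2/3)$ is the Bayes-optimal coefficient that I compute below. Then $\|M_\star\|_{\op}=1$ so $\pi_\star \in \Pidrc(1,1)$, and $u_t^{\pi_\star} = M_\star \xnat_t = (\omega_{t-1},\, c^\star \omega_t,\, 0)^\top$. This gives $x_{t+1}^{\pi_\star}[1] = u_t^{\pi_\star}[1] - \omega_{t-1} = 0$, killing the $f$-term for the comparator at every round.

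\textbf{Per-round lower bound on the algorithm (coord 1).} I will let $\calF_t$ be the $\sigma$-algebra generated by the algorithm's randomness and $x_{1:t}$, and show by induction that $\omega_{t-1}, \omega_t, \dots$ and $\beta_t, \beta_{t+1}, \dots$ are independent of $\calF_t$. Since $u_t[1]$ is $\calF_t$-measurable,
\begin{align*}
\E[f(x_{t+1}[1]) \mid \calF_t] = \E[f(u_t[1] - \omega_{t-1}) \mid \calF_t] \;\ge\; \min_{c \in \R} \E[f(c - \omega_{t-1})],
\end{align*}
and this minimum equals $\alpha^2\sigma^2$ when $f(z) = z^2$ (it is the variance of the two-point distribution) and $\alpha\sigma$ when $f(z)=|z|$ (the $L^1$-median on a symmetric two-point distribution is unique up to the interval and attains this value). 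Summing over $t$ gives the dominant term of the lower bound; the comparator contributes zero on this coordinate.

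\textbf{Controlling coord 2 and concluding.} I then must check the $x[2]^2 + u[2]^2$ piece does not undo the gain. Using independence of $(\beta_t,\omega_t)$ from $\calF_t$ I compute that the Bayes-optimal deterministic choice for the algorithm at every round is $u_t[2]=c^\star$, with per-round expected cost $1+\alpha^2\sigma^2 - c^\star$; and for $\pi_\star$, a direct calculation using independence and the choice $c^\star = 1/(2+\sigma^2/3)$ yields expected cost $(1+\alpha^2\sigma^2)(1-c^\star)$. Subtracting, the algorithm is worse by exactly $\alpha^2\sigma^2 c^\star \ge 0$ per round, so the coord-2 contribution only helps the lower bound. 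Summing the coord-1 gap $\alpha\sigma$ (resp.\ $\alpha^2\sigma^2$) over $T-1$ rounds and absorbing the $O(1)$ boundary/start-up terms into the hypothesis $T \ge C_0/\sigma$ (resp.\ $C_0/\sigma^2$) gives the claimed $C_1 T\sigma$ (resp.\ $C_1 T\sigma^2$) bound. The main subtle point — and the one worth writing out carefully — is the measurability/independence argument in the coord-1 step, which must rule out that a (possibly randomized) algorithm extracts any information about $\omega_{t-1}$ from $x_t[2]=\beta_{t-1}u_{t-1}[2] - \omega_{t-1}$ without knowing $\beta_{t-1}$; this is where the construction's coupling of $\beta$ and the second coordinate is essential.
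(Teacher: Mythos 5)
Your overall strategy matches the paper's: choose a \drc{} comparator that noncausally cancels the first coordinate of the state (so the $f(x[1])$ penalty vanishes for $\pi_\star$), and then argue that the online algorithm incurs $\Omega(\alpha\sigma)$ or $\Omega(\alpha^2\sigma^2)$ per round on that term. Your coord-2 calculation (Bayes-optimal constant $u[2]=c^\star$, with $\pi_\star$ matching or beating that benchmark) is also correct and essentially what the paper does.

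However, there is a genuine gap in the coord-1 step. You assert that you will ``show by induction that $\omega_{t-1}, \omega_t, \dots$ and $\beta_t, \beta_{t+1}, \dots$ are independent of $\mathcal{F}_t$.'' For the \emph{future} quantities $\omega_t, \beta_t, \dots$ this is true, and that is all you need for the coord-2 per-round lower bound. But for $\omega_{t-1}$ it is simply false: at time $t$ the algorithm has seen $x_t[2] = \beta_{t-1} u_{t-1}[2] - \omega_{t-1}$, and $u_{t-1}[2]$ is chosen by the algorithm. If, for instance, the algorithm plays $u_{t-1}[2]=0$, then $x_t[2] = -\omega_{t-1}$ and $\omega_{t-1}$ becomes $\mathcal{F}_t$-measurable; the per-round lower bound $\min_c \E[f(c-\omega_{t-1})]$ on $\E[f(x_{t+1}[1])\mid\mathcal{F}_t]$ then collapses to $0$. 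More generally, even when $u_{t-1}[2]\ne 0$, the conditional law of $\omega_{t-1}$ given $x_t[2]$ is \emph{not} the unconditional uniform two-point distribution whenever one of the two candidate values of $\beta_{t-1}=(x_t[2]+\omega^\pm)/u_{t-1}[2]$ falls outside $[1-\sigma,1+\sigma]$. So the inequality $\E[f(x_{t+1}[1])\mid\mathcal{F}_t]\ge \min_c\E[f(c-\omega_{t-1})]$ does not hold as stated.

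What the paper does to close this hole, and what your sketch is missing, is a dichotomy/conditioning argument (Lemma~\ref{lem:Ef_lb} and the events $\calE_t$, $\calEtil_{t-1}$). Either the algorithm plays $u_{t-1}[2]$ bounded away from $c^\star$ (specifically, the paper conditions on $u[2]\ge 1/6$), in which case it pays an $\Omega(1)$ excess on the coord-2 term; or $u_{t-1}[2]$ is bounded below, in which case there is an event $\calEtil_{t-1}$ of probability at least $\tfrac12$ on which \emph{both} candidate values of $\beta_{t-1}$ are admissible and the conditional law of $\omega_{t-1}$ given $x_{1:t}$ really is uniform two-point, so the algorithm still pays $\tfrac12 f(\alpha\sigma)$ on the coord-1 term. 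Combining the two branches yields the claimed per-round gap. Without this case split your argument supplies only the coord-2 excess $\alpha^2\sigma^2 c^\star$ per round, which suffices for $f(z)=z^2$ (the $T\sigma^2$ rate) but is an order of $\sigma$ too weak for $f(z)=|z|$, which is precisely the case the coord-1 term must carry. You should fix the faulty independence claim by replacing it with this conditional argument.

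(Minor note: the paper and your sketch use slightly different indexing conventions for $\xnat_t$ -- strictly, $\xnat_t = w_{t-1}$ when $A_t\equiv 0$, and the paper's displayed comparator matrix $M$ appears to contain sign typos -- but these are cosmetic and do not change the structure of the argument. The independence issue above is the substantive gap.)
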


\begin{proof} In what follows, $\Exp[\cdot]$ denotes expectation under the nstance from $\calD_{\sigma}$, and any algorithmic randomness. 

We expand $x_t$ and $u_t$ into its coordinates $x_t = (x_{t;1},x_{t;2},x_{t;3})$ and of $u_t = (u_{t;1},u_{t;2},u_{t;3})$.  For any policy $\pi$, we can decompose its cost as 
\begin{align*}
J_T(\pi) &= \sum_{t=1}^T c_f(x_{t}^\pi,u_{t}^\pi)  = (u_{T;2}^\pi)^2 +  f(x_{T;1}^\pi) + \sum_{t=1}^{T-1} \lquad(x_{t+1;2}^\pi,u_{t;2}^\pi) + f(x_{t;1}^\pi) \\
&\quad \text{ where } \lquad(x_{t+1;2}^\pi,u_{t;2}^\pi) =  (x_{t+1;2}^\pi)^2 +(u_{t;2}^\pi)^2.
\end{align*}
Note that $x_1 = 0$ in the above.  The following lemma characterizes the conditional expectation of the $\lquad$
\begin{lemma}\label{lem:lquad_lem} There exist a constant $\lquadst$ such that 
\begin{align*}\Exp[\lquad(x_{t+1;2},u_{t;2})]  - \lquadst = (2+\sigma^2/3)\Exp[(u_{t;2} - \ubar)^2],
\end{align*} 
where $\ubar = \frac{1}{2(1+\sigma^2/6)}$.
\end{lemma}
The proof of \Cref{lem:lquad_lem} is deferred to the end of the section. 
\paragraph{Bounding the cost of $\pi^{\star}$}
We select $\pi^{\star}$ to be \dac{} (or equivalently, $\drc{}$ since $A_t \equiv 0$) policy given by 
\begin{align*}
M = \begin{bmatrix} 0 & 1 & 0 \\ 
0 & 0 & \bar{u} \\ 
0 & 0 & 0
\end{bmatrix}, \quad u_t^M = Mw_{t-1}.
\end{align*}

We find that
\begin{align*}
&\Exp[\lquad(x_{t+1;2}^{\pi_{\star}},u_{t;2}^{\pi_{\star}}) ] - \lquadst = (2+\sigma^2/3)\Exp[(u^{M}_{t;2} - \bar{u})^2] = 0, \quad 2 \le t \le T-1\\
&\Exp[f(x_{t;1}^M)] =  \Exp[f(u_{t-1;1}^M -\omega_{t-2} )] = 0, \quad 3 \le t \le T.
\end{align*}
Since the noise is uniformly bounded independent for all $\sigma, \alpha \le 1$, we conclude
\begin{align}
\Exp[J_T(\pi^M)] - (T-1)\lquadst &= \underbrace{\Exp[f(x_{t;2}^\pi)]}_{\Exp[f(\omega_t)]} + (u_{T;2}^{\pi_{\star}})^2 + \underbrace{\Exp[f(x_{T;1}^\pi)]}_{=0} + \underbrace{\Exp[c_f(x_1,u_1)]}_{=0}\nonumber\\
&= \frac{f(1-\alpha \sigma) + f(1+\alpha \sigma)}{2} + \underbrace{(\ubar \cdot w_{t-1;3})^2}{=\ubar^2}  \le 2. \label{eq:pistar_cost},
\end{align}
where we use $\alpha \le 1/24$, $\sigma \le 1/8$, and $\ubar \le \frac{1}{2(1+\sigma^2/6)}$, and $f(z) \le \max\{|z|,|z|^2\}$ to achieve the bound \Cref{eq:pistar_cost}. 

\paragraph{Bounding the cost of adaptive policies} Fix any online learning algorithm $\alg$; we  lower bound its performance. Because $B_t$ and $w_t$ are drawn from a fixed probability distribution, and are therefore oblivious to the learner's actions, we may assume without loss of generality that $\alg$ is deterministic. 

The first step is to argue that any algorithm with small cost must select inputs where are bounded away from zero. Specifically,  define the devent $\calE_t:= \{u_{t;2}^{\alg} \ge 1/6\}$. Using \Cref{lem:lquad_lem} together with $\ubar\ge 1/3$,
\begin{align}
\Exp[J_T(\alg)] - T\lquadst &\ge 2\sum_{t=1}^T \Exp\left[f(x_{t;1} + (u^{\alg}_{t;2} -\ubar)^2] \right] \nonumber\\
&\ge 2\sum_{t=1}^T \Exp\left[f(x_{t;1}^\alg)\right] +  (\frac{1}{3} - \frac{1}{6})^2\Pr\left[\calE_t^c\right]\nonumber \\
&= \sum_{t=1}^T  \Exp\left[f(x_{t;1}^\alg)\right] + \frac{1}{18}\Pr\left[\calE_t^c\right]\nonumber\\
&\ge \sum_{t=3}^T  \Exp\left[f(x_{t;1}^\alg) \mid \calE_{t-1}\right]\Pr\left[\calE_{t-2}\right] + \frac{1}{18}\sum_{t=1}^3\Pr\left[\calE_t^c\right].
\end{align}
We now lower bound $\Exp\left[f(x_{t;1}^\alg) \mid \calE_{t-2}\right]$, again deferring the proof to the end of the section.
\begin{lemma}\label{lem:Ef_lb} For any executable policy $\pi$
$\Exp[f(x_{t;1}^\pi) \mid \calE_{t-2}] \ge \frac{1}{2}f(\alpha \sigma)$, provided $\alpha \le 1/24$.
\end{lemma}
Combining \Cref{lem:Ef_lb} with the above bound, we have
\begin{align*}
\Exp[J_T(\alg)] - (T-1)\lquadst  &\ge \sum_{t=1}^T  \frac{f(\alpha \sigma)}{2}\Pr\left[\calE_{t-2}\right] + \frac{1}{18}\sum_{t=1}^T\Pr\left[\calE_t^c\right]\\
&\ge \min\left\{\frac{f(\alpha \sigma)}{2},\frac{1}{18}\right\}\sum_{t=1}^{T-2}  \Pr\left[\calE_{t}\right] + \Pr\left[\calE_t^c\right]\\
&\ge (T-2)\min\left\{\frac{f(\alpha \sigma)}{2},\frac{1}{18}\right\}. 
\end{align*}
With $\sigma \le 1$ and $\alpha = \frac{1}{24}$, $\frac{f(\alpha \sigma)}{2} \le \frac{1}{18}$. Combining with \Cref{eq:pistar_cost}, we have
\begin{align*}
\Exp[J_T(\alg)] -\Exp[J_T(\pi^{\star})] \ge (T-2)\frac{f(\alpha \sigma)}{2} -2.
\end{align*}
The bound follows.
\end{proof}
\subsubsection{Omitted proofs}

\begin{proof}[Proof of \Cref{lem:lquad_lem}]
Let $(\calFbar_t)_{t\ge 1}$ denote the filtration induced by setting $\calFbar_t$ to be the sigma-algebra generated by  $(\beta_1,\dots,\beta_{t-1},\omega_1,\dots,\omega_t\}$. We have
\begin{align*}
\Exp[\beta_t^2 \mid \calFbar_t] &= \frac{1}{2\sigma}\int_{u=1-\sigma}^{1+\sigma}u^2 \rmd u = \frac{(1+\sigma)^3 - (1-\sigma)^3}{6\sigma}\\
&= \frac{1 + 3\sigma + 3 \sigma^2 + \sigma^3 - (1-  3\sigma^2 +3 \sigma^2 - \sigma^3)}{6\sigma}\\
&= 1 + \sigma^3/3.
\end{align*}
Set $\lquad(x,u) = x^2 + u^2$. 
\begin{align*}
\Exp[\lquad(x_{t+1;2},u_{t;2}) \mid \calFbar_t] &= u_{t;2}^2 +  \Exp[(\beta_t u_{t;2} - \omega_t)^2 \mid \calFbar_t] \\
&= u_{t;2}^2 (1+\Exp[\beta_t^2 \mid \calFbar_t]) - 2u_{t;2}\Exp[\beta_t w_t \mid \calFbar_t] + \Exp[\omega_t^2 \mid \calFbar_t]\\
&= u_{t;2}^2 (2+\sigma^2/3) - 2u_{t;2} + (1+c\sigma^2)
\end{align*}
at $u_{\star;2} = \frac{1}{2(1+\sigma^2/6)}$. Define $\lquadst := \min_{u_{t;2}}\Exp[\ell_2(x_{t;2},u_{t;2}) \mid \calFbar_t]$, we then have
\begin{align}
\Exp[\lquad(x_{t+1;2},u_{t;2}) \mid \calFbar_t] - \ell_{2;\star} =  (2+\sigma^2/3)(u_{t;2} - u_{\star;2})^2.
\end{align}
\end{proof}
\begin{proof}[Proof of \Cref{lem:Ef_lb}]
Let us introduce a second event, $\calEtil$, defined as
\begin{align*}
\calEtil_{t-1} := \left\{(1 + \alpha \sigma) + u^{\alg}_{t;2} (1-\sigma)\le x_{t-1;2}^\alg \le (1 - \alpha \sigma) + u^\alg_{t;2} (1+\sigma)\right\}.
\end{align*}
Then, since $f$ is non-negative,
\begin{align*}
\Exp[f(x_{t;1}^\pi) \mid \calE_{t-2}] &\ge\Exp[\I\{\calEtil_{t-1}\}f(x_{t;1}^\pi) \mid \calE_{t-2}]\\
&=\Pr[\calEtil_{t-1}\mid \calE_2] \Exp[\cdot\Exp[f(x_{t;1}^\pi) \mid \calEtil_{t-1}\cap \calE_{t-2}].
\end{align*}
Let's first lower bound $\Pr[\calEtil_{t-1}\mid \calE_2]$. Writing $x_{t-1;2}^\alg = \beta_{t-2}u^{\alg}_{t;2} + \omega_{t-2}$, $\calEtil_{t-1}$ occurs as soon as
\begin{align*}
(1 + \alpha \sigma) + u^{\alg}_{t-2;2} (1-\sigma) &\le \beta_{t-2}u^{\alg}_{t-2;2} + \omega_{t-2}\\
(1 -\alpha \sigma) + u^{\alg}_{t-2;2} (1+\sigma) &\ge \beta_{t-2}u^{\alg}_{t-2;2} + \omega_{t-2}.
\end{align*}
Using that $1-\alpha \sigma \le \omega_{t-2} \le 1+\alpha \sigma $ and rearranging the above, it is enough that
\begin{align*}
2\alpha \sigma  &\le (\beta_{t} - (1-\sigma) )u^{\alg}_{t-2;2}\\
-2\alpha \sigma &\ge -(( 1+\sigma) -\beta_{t}) u^{\alg}_{t-2;2}.
\end{align*}
Now, if $\calE_{t-2}$ holds, that $u^{\alg}_{t;2} \ge 1/6$. Furthermore, by construction $1-\sigma \le \beta_{t} \le 1+\sigma$. Therefore, if $\calE_{t-2}$ holds, then $\calEtil_{t-1}$ holds as long as 
\begin{align*}
\beta_{t} - (1-\sigma) \ge 12 \alpha \sigma \quad \text{and} \quad 
(1+\sigma) - \beta_t \ge 12 \alpha \sigma.
\end{align*}
In particular, for $\alpha = \frac{1}{24}$, then 
\begin{align}
\Pr[\calEtil_{t-1}\mid \calE_2] \ge \Pr[\beta_t \in [1 - \tfrac{2}{\sigma},1 + \tfrac{2}{\sigma}]] = \frac{1}{2}. \label{eq:Etil_midEtwo}
\end{align}
Next, we lower bound $\Exp[f(x_{t;1}^\pi) \mid \calEtil_{t-1}\cap \calE_{t-2}]$. To do so, we observe that $x_{t;1} = \omega_{t-2} - u^\alg_{t-1;1}$. Moreover, since $\alg$ is deterministic (see discussion above), $u^{\alg}_{t-1;1}$ is a deterministic function of $x^{\alg}_{1:t-1}$, and moreover, $\calEtil_{t-1}, \calE_{t-2}$ are determinied by $x^{\alg}_{1:t-1}$. Hence, 
\begin{align*}
\Exp[f(x_{t;1}^\pi) \mid \calEtil_{t-1}, \calE_{t-2}] &= \Exp[\Exp[f(x_{t;1}^\pi) \mid x^{\alg}_{1:t-1}] \mid \calEtil_{t-1} \cap \calE_{t-2}]\\
&= \Exp[\Exp[f(\omega_{t-2} - u^\alg_{t-1;1}) \mid x^{\alg}_{1:t-1}] \mid \calEtil_{t-1} \cap \calE_{t-2}]\\
&\ge \Exp[\min_{u \in \R}\Exp[f(\omega_{t-2} - u) \mid x^{\alg}_{1:t-1}] \mid \calEtil_{t-1} \cap \calE_{t-2}].
\end{align*}
Thus, it suffices to characterize the distribution of $\omega_{t-1} \mid x^{\alg}_{1:t-1}$ whenever the events $ \calEtil_{t-1} \cap \calE_{t-2}$. Indeed, we claim that $\omega_{t-2} \mid x^{\alg}_{1:t-1}$ is \emph{uniform} on $\{1- \alpha \sigma, 1+ \alpha \sigma\}$ on whenever $\calEtil_{t-1}$ holds.
\begin{claim} Let $\omega^{-} = 1 - \alpha \sigma$ and $\omega^+ = 1 + \alpha \sigma$. then (with probability one) $\Pr[\omega_{t-2} = \omega^- \mid x^{\alg}_{1:t-1}] = \Pr[\omega_{t-2} = \omega^+ \mid x^{\alg}_{1:t-1}] = \frac{1}{2}$ if $\calEtil_{t-1}$ holds.
\end{claim}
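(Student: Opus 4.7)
The approach is to identify which components of $x^\alg_{1:t-1}$ actually depend on $\omega_{t-2}$, and then apply Bayes' rule to the single coordinate where the dependence lives. Concretely, unfolding the dynamics with $A_t \equiv 0$, $B_t = \mathrm{diag}(1,\beta_t,1)$, and $w_t = -(\omega_{t-1},\omega_t,1)^\top$, the noise term $\omega_{t-2}$ enters the trajectory only through $w_{t-2;2}$ (hence via $x_{t-1;2}$) and through $w_{t-1;1}$ (hence via $x_{t;1}$). In particular, $x^\alg_{1:t-2}$ is a deterministic function of $(\beta_s,\omega_s)_{s\le t-3}$, so it is independent of $\omega_{t-2}$. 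Since the algorithm is (WLOG) deterministic, $u^\alg_{t-2}$ is a deterministic function of $x^\alg_{1:t-2}$, and therefore also independent of $\omega_{t-2}$.

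The key observation is that, among $x^\alg_{1:t-1}$, only the single coordinate $x^\alg_{t-1;2} = \beta_{t-2}u^\alg_{t-2;2} \pm \omega_{t-2}$ retains any information about $\omega_{t-2}$. Conditioned on $x^\alg_{1:t-2}$, $u^\alg_{t-2;2}$ is a fixed scalar, $\beta_{t-2}$ is uniform on $[1-\sigma,1+\sigma]$, and $\omega_{t-2}$ is uniform on $\{\omega^-,\omega^+\}$, with $\beta_{t-2} \perp \omega_{t-2}$. Writing the conditional density of $x^\alg_{t-1;2}$ and applying Bayes' rule,
\begin{align*}
\Pr[\omega_{t-2}=\omega^+ \mid x^\alg_{1:t-1}] \;=\; \frac{p(x^\alg_{t-1;2}\mid x^\alg_{1:t-2},\omega^+)}{p(x^\alg_{t-1;2}\mid x^\alg_{1:t-2},\omega^+)+p(x^\alg_{t-1;2}\mid x^\alg_{1:t-2},\omega^-)}.
\end{align*}
Each conditional density is that of an affine image of a uniform law on an interval of length $2\sigma$, and the densities therefore equal the \emph{same} constant $\tfrac{1}{2\sigma|u^\alg_{t-2;2}|}$ on their respective supports, and zero elsewhere.

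It remains to check that, on $\calEtil_{t-1}$, the observed value $x^\alg_{t-1;2}$ lies in \emph{both} supports simultaneously, so that both numerator and denominator terms equal this common positive constant and the ratio is exactly $1/2$. The two supports are the intervals $[(1-\sigma)u^\alg_{t-2;2}-\omega^{\pm},(1+\sigma)u^\alg_{t-2;2}-\omega^{\pm}]$ (up to the sign convention of the dynamics), and their intersection is precisely the interval defining $\calEtil_{t-1}$ once one substitutes the correct index on the control, matching the event's bounds $(1+\alpha\sigma)+(1-\sigma)u^\alg_{\cdot;2} \le x^\alg_{t-1;2} \le (1-\alpha\sigma)+(1+\sigma)u^\alg_{\cdot;2}$. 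By symmetry of the argument in $\pm$, both conditional probabilities equal $1/2$.

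The main obstacle is purely bookkeeping: one must carefully identify which $\omega_s$ enters which state coordinate under the given sign convention, and verify that the boundaries in the definition of $\calEtil_{t-1}$ coincide exactly with the intersection of the two supports (so that \emph{both} values of $\omega_{t-2}$ remain posterior-feasible). Once that alignment is confirmed, the uniformity of $\beta_{t-2}$ does all the work: equal-length support intersection with equal density yields an exact $1/2$-$1/2$ posterior, with no approximation.
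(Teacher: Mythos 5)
Your proof is correct and follows essentially the same route as the paper: identify that, given $x^\alg_{1:t-2}$ (and hence $u^\alg_{t-2;2}$), the only information about $\omega_{t-2}$ in $x^\alg_{1:t-1}$ lives in $x^\alg_{t-1;2}=\beta_{t-2}u^\alg_{t-2;2}\pm\omega_{t-2}$, observe that each of $\omega^{\pm}$ is consistent with exactly one $\beta\in[1-\sigma,1+\sigma]$ on the event $\calEtil_{t-1}$, and conclude equality of the posterior by symmetry of the $\beta$-law. Your phrasing in terms of the two uniform conditional densities sharing the constant height $\tfrac{1}{2\sigma|u^\alg_{t-2;2}|}$ on the overlap region is a slightly more careful rendering of what the paper compresses into ``$\beta$ and $\beta'$ have the same probability mass,'' but it is the same argument.
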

\begin{proof} If $\calEtil_{t-1}$ holds, then there exists exactly two values $\beta$ and $\beta'$ in $[1 -\sigma, 1+\sigma]$ such that
\begin{align*}
\omega^+ + \beta u^\alg_{t-2;2} = \omega^- + \beta' u^\alg_{t-2;2} = x^{\alg}_{t-1;2}.
\end{align*}
Even conditioned on $x^{\alg}_{1:t-2}$, $\omega_{t-2}$ is uniformly distributed on $\{\omega^-,\omega^+\}$, and since $\beta$ and $\beta'$ have the same probability mass under the uniform distribution of $\beta_{t-2}$, it follows that $\Pr[\omega_{t-2} = \omega^+ \mid x^{\alg}_{1:t-1}] = [\omega_{t-2} = \omega^- \mid x^{\alg}_{1:t-1}]$ when $\calEtil_{t-1}$ holds. 
\end{proof}
Hence, 
\begin{align*}
\Exp[f(x_{t;1}^\pi) \mid \calEtil_{t-1}, \calE_{t-2}] \ge \min_{u \in \R}\frac{1}{2}(f(1 + \alpha \sigma - u) + f(1 - \alpha \sigma - u)).
\end{align*}
For $f(z) = z^2$ or $f(z) = |z|$, the minimum is attained at $u = 1$, with values $\alpha^2 \sigma^2$ and $\alpha \sigma$, respectively, both equal to $f(\alpha \sigma)$. Thus,combining with \Cref{eq:Etil_midEtwo},  we conclude
\begin{align*}
\Exp[f(x_{t;1}^\pi) \mid \calE_{t-2}] \ge \frac{1}{2}f(\alpha \sigma).
\end{align*}
\end{proof}

\subsection{Lower Bound without Stability}

	\begin{theorem} Consider a scalar LTV system with $A = \rho \in [0,1]$, and $B_t$ drawn independently and uniformly at random from $\{-1\}$. Suppose that $w_1 = 1$, and $w_t = 0$ for all $t > 1$. Finally, let $c_t(x,u) = x^2$ be a fixed costs. Then, 
	\begin{enumerate}
		\item[(a)] There then $\drc{}$ policy $u_t^\pi = -\rho B_2 \xnat_{t}$, \dac{} policy $u_t^\pi = -\rho B_2  w_{t-1}$, and static-feedback policy $u_t^\pi = - \rho B_2 A $ (all chosen with foreknowledge of the $(B_t)_{t \ge 1}$ sequence) all enjoy:
		\begin{align*}
		J_T(\pi) = 1, \quad \text{with probability 1}.
		\end{align*}
		\item[(b)] Any online learning algorithm without foreknowledge of $(B_t)_{t \ge 1}$ must suffer expected cost 
		\begin{align*}
		\Exp[J_T(\pi)] = \Omega\left(\min\left\{T,\frac{1}{1-\rho}\right\}\right).
		\end{align*}
	\end{enumerate} 
	\end{theorem}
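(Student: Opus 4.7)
Part (a) reduces to a direct verification that, with foreknowledge of $B_2$, each candidate policy cancels the sole initial disturbance exactly where it lands and then keeps the state at zero, so only the unavoidable cost $x_2^2 = 1$ is paid. Part (b) rests on a single conditional-variance identity: the mean-zero Rademacher noise $B_t$ is independent of the algorithm's $\sigma$-algebra at time $t$, so no control $u_t$ can reduce $\E[x_{t+1}^2 \mid \calF_t]$ below $(\rho x_t + w_t)^2$. Unrolling this through $t = 2, \dots, T$ yields a geometric-sum lower bound that simplifies to $\Omega(\min\{T, 1/(1-\rho)\})$.

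\textbf{Sketch for (a).} First compute Nature's sequence: $\xnat_1 = 0$ and $\xnat_t = \rho^{t-2}$ for $t \geq 2$, since the only non-zero $w$ is $w_1 = 1$ and it decays through $A = \rho$. For the \dac{} class, the memory-$1$ parameter $M^{[0]} = -\rho B_2$ gives $u_2 = M^{[0]} w_1 = -\rho B_2$ and $u_t = 0$ for $t \geq 3$ (since $w_{t-1} = 0$ then), so $x_2 = 1$, $x_3 = \rho + B_2 \cdot(-\rho B_2) = 0$, and $x_t = 0$ afterward, i.e.\ $J_T(\pi) = 1$ almost surely. The feedback policy $u_t = -\rho B_2 \, x_t$ produces the identical trajectory: $u_1 = 0$, $u_2 = -\rho B_2$, then $x_t = 0$ and $u_t = 0$ for $t \geq 3$. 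For the \drc{} class, a single memory slot is inadequate because $\xnat_t \neq 0$ for $t \geq 3$, but the two-memory choice $M^{[0]} = -\rho B_2,\ M^{[1]} = \rho^2 B_2$ gives $u_t = M^{[0]}\rho^{t-2} + M^{[1]}\rho^{t-3} = 0$ for $t \geq 3$ and $u_2 = -\rho B_2$, again cost $1$.

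\textbf{Sketch for (b).} Let $\calF_t = \sigma(B_{1:t-1}, w_{1:t-1}, x_{1:t}, u_{1:t-1})$, so $u_t$ is $\calF_t$-measurable while $B_t \perp \calF_t$ with $\E[B_t] = 0,\ \E[B_t^2] = 1$. Expanding,
\begin{align*}
\E[x_{t+1}^2 \mid \calF_t] = (\rho x_t + w_t)^2 + 2(\rho x_t + w_t)\, u_t\, \E[B_t] + u_t^2\, \E[B_t^2] = (\rho x_t + w_t)^2 + u_t^2.
\end{align*}
At $t = 1$ this gives $\E[x_2^2] \geq 1$; for $t \geq 2$ we have $w_t = 0$, hence $\E[x_{t+1}^2] \geq \rho^2 \E[x_t^2]$ and by induction $\E[x_t^2] \geq \rho^{2(t-2)}$ for $t \geq 2$. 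Summing,
\begin{align*}
\E[J_T(\pi)] = \sum_{t=1}^T \E[x_t^2] \;\geq\; \sum_{k=0}^{T-2} \rho^{2k}.
\end{align*}

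\textbf{Main obstacle.} The single substantive ingredient is the conditional-variance identity above; it is exactly where the randomness in $B_t$ bites, capturing the idea that online learning cannot cancel a Rademacher multiplicative noise whose sign it has not yet observed. The remaining step --- reducing the geometric sum to $\Omega(\min\{T,\, 1/(1-\rho)\})$ --- is a case split: when $T \leq (1-\rho)^{-1}$, $\rho^{2k} \geq e^{-2k(1-\rho)} \geq 1/e$ for $k \lesssim (1-\rho)^{-1}$, so the first $\Omega(T)$ terms each contribute $\Omega(1)$; when $T \geq (1-\rho)^{-1}$, the geometric tail gives $\sum_{k \geq 0} \rho^{2k} = 1/(1-\rho^2) = \Omega(1/(1-\rho))$. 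Everything past the identity is routine algebra.
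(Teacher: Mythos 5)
Your proof is correct, and it tracks the paper's argument closely. Part~(a) is the same direct verification: the chosen policies set $u_2 = -\rho B_2$ so that $x_3 = \rho - \rho B_2^2 = 0$, and thereafter the state stays at the origin; the only cost incurred is $x_2^2 = 1$. Part~(b) is a mild variant. The paper uses the unbiasedness of $B_t$ to compute $\Exp[x_t] = \rho^{t-2}$ for $t \ge 2$ and then applies Jensen to get $\Exp[x_t^2] \ge \rho^{2(t-2)}$; you instead track $\Exp[x_t^2]$ directly via the conditional second-moment identity $\Exp[x_{t+1}^2 \mid \calF_t] = (\rho x_t + w_t)^2 + u_t^2$, which is arguably cleaner since it makes plain that the $u_t^2$ term can only hurt and avoids the Jensen step entirely. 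Both yield the identical geometric-sum lower bound $\sum_{k=0}^{T-2}\rho^{2k}$, and the final case split is the same (though the inequality $\rho^{2k} \ge e^{-2k(1-\rho)}$ as written goes the wrong direction for general $\rho$ — you need $\log\rho \ge -(1-\rho)/\rho$, which gives the right conclusion for $\rho \ge 1/2$, with the small-$\rho$ regime handled trivially since there $\min\{T,1/(1-\rho)\} = O(1)$).

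One genuinely valuable observation in your write-up: the memory-1 \drc{} policy $u_t = -\rho B_2\,\xnat_t$ given in the theorem statement does \emph{not} actually achieve $J_T(\pi) = 1$ when $\rho > 0$, since $\xnat_t = \rho^{t-2} \ne 0$ for $t \ge 3$ keeps injecting nonzero controls (e.g., $x_4 = -\rho^2 B_2 B_3 \ne 0$). The paper's proof elides this by claiming ``$0$ is an equilibrium point,'' which is only true for policies that map zero state to zero control. Your memory-2 fix $M^{[0]} = -\rho B_2,\ M^{[1]} = \rho^2 B_2$, which kills $u_t$ identically for $t \ge 3$, patches this correctly; the static-feedback entry $u_t = -\rho B_2 A$ in the statement likewise appears to be a typo for $u_t = -\rho B_2 x_t$, which you implicitly fixed as well.
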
  
	\begin{proof} 
	In part (a), all policies choose $u_2 = - B_2 \rho w_1$, so that $x_3 = \rho x_2 - \rho w_1 = 0$. Since $0$ is an equilibrium point and $w_t = 0$ for all $t \ge 2$, the system remains at zero. Hence, the only cost incurred is at times $1$ and $2$, which are costs of zero and $1$ respectively. In part (b), we use the unbiasedness of $B_t$ to recurse
	\begin{align*}
	\E[x_2] &= Ax_1 + \E[B_1]\E[u_1] + w_1 = 1 \\
	\E[x_{t+1}] &= A\E[x_t] + \E[B_t]\E[u_t] + \underbrace{w_t}_{= 0} = \rho \E[x_t],\quad \forall t \ge 2, 
	\end{align*}
	yielding $\E[x_{t}] = \rho^{t-2}$ for all $t \ge 2$. Hence, the expected cost of the policy is
	\begin{align*}
	\Exp[J_T(\pi)]  = \sum_{t=2}^{T} \rho^{(2t - 2)}.
	\end{align*}
	Considering the cases where $\rho \ge \frac{1}{T-2}$ and $\rho \le \frac{1}{T-2}$, we find the above is $\Omega\left(\min\left\{T,\frac{1}{1-\rho}\right\}\right)$.
	\end{proof}

\subsection{Hardness of Computing Best State Feedback Controller}\label{sec:NP_hardness}

\def\bb{\mathbf{b}}
\def\bc{\mathbf{c}}
\def\bB{\mathbf{B}}
\def\bK{\mathbf{K}}
\def\bM{\mathbf{M}}
\def\bC{\mathbf{C}}
\def\be{\mathbf{e}}
\def\etil{\tilde{\mathbf{e}}}
\def\bk{\mathbf{k}}
\def\bu{\mathbf{u}}
\def\bx{\mathbf{x}}
\def\bh{\mathbf{h}}
\def\w{\mathbf{w}}
\def\uv{\mathbf u}
\def\R{\mathbb{R}}

Consider a time-varying linear dynamical system with no noise:
$$x_{t+1} = A_t x_t + B_t u_t + w_t, \quad \forall t, \, w_t \equiv 0 $$
subject to changing convex costs $c_t(x, u)$.
We show that even in the no-noise setting properly learning the {\it optimal state feedback} policy is computationally hard. This statement holds with the control agent having full prior knowledge of the dynamics $(A_t, B_t, c_t)$. It relies on a reduction to the MAX-3SAT problem which is $\NPtime$-hard. Our lower bound is inspired by the analogous one for discrete MDPs by \cite{even2005experts}.

\begin{theorem}\label{thm:comp_lb}
There exists a reduction from \maxsat{} on $m$-clauses and $n$-literals to the problem of finding the state feedback policy $K$ {\it optimal} for the cost $\sum_{t=1}^T c_t(x_t^K, u_t^K)$, over sequentially stable dynamics given by $(A_t, B_t, c_t)$: a solution to \maxsat{} with $k$ value implies optimal cost of at most $-k$, and a solution $K$ to the control problem with $-k-\epsilon$ value implies optimal value of $\maxsat{}$ at least $k$ for any known $\epsilon>0$.
\end{theorem}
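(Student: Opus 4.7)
The plan is to reduce from \maxsat{}. Given an instance on $n$ variables $x_1,\dots,x_n$ and $m$ clauses $C_1,\dots,C_m$, I construct an LTV system of state dimension $n+1$, input dimension $2$, initial state $x_1 = e_{n+1}$, no disturbance, and horizon $T=\Theta(mn)$, in which $e_1,\ldots,e_n$ index the variables and $e_{n+1}$ serves as an anchor direction. The feedback matrix $K\in\R^{2\times(n+1)}$ will encode a truth assignment through its first row: $x_i$ is set to true iff $K_{1,i}=+1$. The horizon splits into $mn$ variable-discretization gadgets of $O(1)$ steps each, plus $m$ clause-evaluation gadgets.

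\textbf{Gadget construction.} The variable gadget for $i$ exploits the polynomial dependence of $x_t^K$ on $K$ through the closed-loop dynamics: the transitions $A=e_i e_{n+1}^{\top},\,B=0$ then $A=0,\,B=e_i(1,0)^{\top}$ drive the state from (a scaled copy of) the anchor to $K_{1,i}e_i$, so that at the ensuing step the input's first coordinate is $u_1=K_{1,i}^2$. The convex cost $c(x,u)=1-u_1+M\max(0,u_1-1)$ (for large $M$) is minimized at $u_1=1$ with value $0$, penalizing $K_{1,i}\notin\{\pm 1\}$. Each variable gadget is repeated $m$ times. The clause gadget for $C_j$ with literals $(i_k,s_k)_{k=1}^3$ drives the state (from the anchor) to $v_j=\sum_k s_k e_{i_k}$, yielding $u_1=\sum_k s_k K_{1,i_k}$; the convex clause cost $\max(-1,-(u_1+3)/2)$ equals $-1$ for binary $K$ satisfying $C_j$ and $0$ when all three literals are false.

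\textbf{Correctness.} Reparameterize $p_i=(K_{1,i}+1)/2\in[0,1]$. A binary $K$ realizing an assignment satisfying $k$ clauses yields total cost exactly $-k$, giving the completeness bound $\mathrm{OPT}\le -k^\star$ where $k^\star=\maxsat(\phi)$. For soundness, the net variable-gadget contribution is $4m\sum_i p_i(1-p_i)$ (modulo scaling corrections discussed next) with second partial $-8m$ in each $p_i$; each clause's contribution is piecewise-linear in $p$ with zero second partial. Hence the total objective is coordinate-wise strongly concave, so its global minimum over $[0,1]^n$ is attained at a vertex $p^\star\in\{0,1\}^n$ with value $-k^\star$. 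Given any $K$ with cost $\le -k-\varepsilon$, coordinate-wise greedy rounding (each step non-increasing by concavity) produces a binary $K'$; its integer cost $-k'\le -k-\varepsilon$ forces $k'\ge k+1$, hence $\maxsat(\phi)\ge k$.

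\textbf{Main obstacle.} The principal technical wrinkle is that the state does not automatically reset to $e_{n+1}$ between gadgets: after each variable gadget it equals $K_{1,i}^2 e_{n+1}$, so a multiplicative scale $c_t=\prod_{\text{past}}K_{1,\cdot}^2\in[0,1]$ propagates into every subsequent input, replacing each variable-gadget cost $1-K_{1,i}^2$ by $1-c_t K_{1,i}^2$. At binary $K$ one has $c_t\equiv 1$ and the completeness calculation is exact; for non-binary $K$ a direct computation shows the $m$-fold per-variable contribution becomes $m-\gamma\sum_{k=1}^{m}(2p_i-1)^{2k}$ for some $\gamma\in[0,1]$, which is still concave in $p_i$, and the $\Omega(m)$ penalty incurred per non-binary variable dominates the at most $m$ aggregate clause-cost saving, preserving the coordinate-wise rounding argument. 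A routine verification then handles \Cref{asm:Gdecay}: each $A_t$ is rank-one with target direction distinct from the source of the next nonzero $A_{t+1}$, so $\Phi_t^{[h]}=0$ for $h\ge 2$, and uniform damping of all matrices by $1-\eta$ yields the required geometric decay at only an $O(\eta T)$ additive perturbation of the objective, absorbable into $\varepsilon$.
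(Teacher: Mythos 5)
Your construction is genuinely different from the paper's. The paper encodes truth assignments by having $K$ map basis vectors $e_i$ to probability-simplex vectors in $\R^2$, softly enforces this via distance-to-simplex penalties in the running cost, uses a linear reward $-u(1)(1-x(n+1))$ whose saturation is built in through an absorbing sink coordinate $x(n+1)$, and proves soundness by a \emph{probabilistic} rounding argument based on the inequality $(1-v)^2 \le (1-v)$ for $v\in[0,1]$. Your construction instead encodes assignments in the sign pattern of the first row of $K$, forces $K_{1,i}^2=1$ via a separate variable gadget, and caps the clause reward externally via $\max(-1,\cdot)$.

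The central step of your soundness argument has a genuine gap. You claim the total objective is coordinate-wise concave so that greedy rounding to a vertex of $[0,1]^n$ is non-increasing. But the clause cost $\max\bigl(-1,\,-(u_1+3)/2\bigr)$ with $u_1=\sum_k s_k K_{1,i_k}$ is a maximum of affine functions of $K_{1,i}$, hence \emph{convex}, not concave, and its kink at $u_1=-1$ sits in the interior of the feasible range for three-literal clauses. Thus the total cost, being a sum of concave variable-gadget terms and convex clause terms, is neither concave nor convex, and coordinate-wise rounding is not justified. The failure is not merely cosmetic: the analogous probabilistic rounding (round $K_{1,i}$ to $\pm 1$ with mean $K_{1,i}$) does \emph{increase} expected cost on clause gadgets because $\max$ is convex and Jensen points the wrong way. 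Concretely, for a clause with $K_{1,i_1}=K_{1,i_2}=K_{1,i_3}=0$ the continuous cost is $\max(-1,-3/2)=-1$, while the expected rounded cost is $-1+\Pr[\text{all three literals false}]=-7/8$, so the fractional policy strictly beats every integral one on that clause. Any rounding argument must therefore exploit the variable-gadget penalty to compensate, but your asymptotic comparison ``$\Omega(m)$ penalty dominates at most $m$ clause saving'' is too loose to do so --- both terms are $\Theta(m)$ in the worst case, and the accumulated scale $c_t=\prod_{\text{past}}K_{1,\cdot}^2$ decays geometrically, making the clause cost a max of a high-degree polynomial in each $K_{1,i}$, which further destroys any piecewise-concavity structure. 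To repair the argument you would need either a quantitative comparison sharp enough to dominate the convex kinks for every partial rounding, or a cost design in which the reward saturation is endogenous (as in the paper's $(1-x(n+1))$ factor) so that rounding provably does not hurt. The reduction to sequential stability and the uniform damping by $1-\eta$ are fine and absorbable into $\varepsilon$ as you say.
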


Let us first describe the construction of the dynamics that reduce the optimal control problem to the MAX-3SAT problem. Consider a 3-CNF formula $\phi$ with $m$ clauses $C_1, \dots, C_m$ and $n$ literals $y_1, \dots, y_n$. The state space is of dimensionality $d_x=n+1$ and the action space is of dimensionality $d_u=2$. The control problem is given as a sequence of $m$ episodes corresponding to the clauses of the formula $\phi$.

For a single clause $C_j$ with $j \in [m]$, let the dynamics $(A_t, B_t, c_t)_{t=1}^{n+2}$ be an episode of length $n+2$ constructed as follows. The initial state is $x_1 = [1, \bm{0}_n]^{\top}$. The state transitions are independent of the clause itself given by the following $A_t \in \R^{n+1 \times n+1}$:
\begin{itemize}
    \item for $1 \leq t < n$, $A_t(t+1) = [\bm{1}_n, 0]^{\top}$, $A_t(n+1) = [\bm{0}_n, 1]^{\top}$, $A_t(i) = [\bm{0}_{n+1}]^{\top}$ for all other $i \neq t+1, n+1$.
    \item for $t = n$, it becomes $A_t(n+1) =  [\bm{1}_{n+1}]^{\top}$ and $A_t(i) =  [\bm{0}_{n+1}]^{\top}$ for all other $i \neq n+1$. 
    \item for $t = n+1$ take $A_t(1) = [\bm{1}_{n+1}]^{\top}$ and $A_t(i) = [\bm{0}_{n+1}]^{\top}$ for all other $i \neq 1$; for $t=n+2$ take $A_t = \bm{0}_{n+1 \times n+1}$ to ensure sequential stability.
\end{itemize}
The action matrices $B_t$ along with the costs $c_t$, on the other hand, depend on the content of the clause $C_j$ itself. In particular, let $I_j$ be the set of indices of the literals that are in clause $C_j$. We define the regularity cost to be $c(x, u) = S_x(x) + (1-x(n+1))^2 \cdot S_u(u)$, where $S_x(\cdot) = \mathrm{dist}(\cdot, \Delta_{n+1})$ and $S_u(\cdot) = \mathrm{dist}(\cdot, \Delta_2)$ are the distance functions to the simplex sets of corresponding dimensionality. The action matrices and costs are given as follows:
\begin{itemize}
\item for $1 \leq t \leq n$ and $t \not \in I_j$, $B_t = \bm{0}_{n+1 \times 2}$ and $c_t(x, u) = c(x, u)$.
\item for $1 \leq t \leq n$ and $t \in I_j$, if $y_t \in C_j$: then $B_t(t+1) = [-1, 0]^{\top}$, $B_t(n+1) = [1, 0]^{\top}$ and $B_t(i) = [0, 0]^{\top}$ for all the other $i \neq t+1, n+1$; the cost is $c_t(x, u) = c(x, u) - u(1) \cdot (1-x(n+1))$ rewarding the action $[1, 0]$ which corresponds to assigning the literal a value $y_t=1$.
\item for $1 \leq t \leq n$ and $t \in I_j$, if $\neg y_t \in C_j$: then $B_t(t+1) = [0, -1]^{\top}$, $B_t(n+1) = [0, 1]^{\top}$ and $B_t(i) = [0, 0]^{\top}$ for all the other $i \neq t+1, n+1$; the cost is $c_t(x, u) = c(x, u) - u(2) \cdot (1-x(n+1))$ rewarding the action $[0, 1]$ which corresponds to assigning the literal a value $y_t=0$.
\item for $t=n+1$, $B_t = \bm{0}_{n+1 \times 2}$ and for $t=n+2$, $B_t(1) = [1, 1]^{\top}$ and $B_t(i) = [0, 0]^{\top}$ for all other $i \neq 1$; for both $t = n+1, n+2$, the costs are $c_t(x, u) = c(x, u)$.
\end{itemize}
Note that the last two rounds $t=n+1, n+2$ for a clause ensure sequential stability and identical starting state $x_1 = [1, \bm{0}_n]^{\top}$.
\begin{lemma}\label{lem:stable_convex}
The described system $(A_t, B_t)$ is sequentially stable and the costs $c_t$ are convex in $x, u$.
\end{lemma}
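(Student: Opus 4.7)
The proof splits naturally into two independent pieces: a structural argument for sequential stability, and a term-by-term verification of convexity.

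\textbf{Sequential stability of $(A_t,B_t)$.} The plan is to exploit the explicit block/shift structure of $A_t$ and the fact that the constructor planted a "reset" round in every episode. Specifically, within a single episode of length $n+2$, the matrices $A_1,\dots,A_{n-1}$ are elementary shift-plus-accumulator maps, $A_n$ collapses everything into the $(n+1)$-st coordinate, $A_{n+1}$ moves that coordinate back into the first, and crucially $A_{n+2}=\bm{0}_{(n+1)\times(n+1)}$. Consequently any product $\Phi_{s:t}=\prod_{i=s}^{t} A_i$ whose index range crosses the end of an episode contains a factor equal to $\bm{0}$, which kills the entire product. I would formalize this with two observations: (i) for any $s,t$ inside one episode, $\|\Phi_{s:t}\|_{\op}$ is bounded by an absolute constant $C_0$ depending only on the fixed block pattern (just counting how many unit entries accumulate over at most $n+2$ steps); and (ii) once $t-s\ge n+2$, $\Phi_{s:t}=0$. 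Combining these gives $\|\Phi_{s:t}\|_{\op}\le C_0\rho^{\,t-s}$ for any $\rho\in(0,1)$ chosen so that $\rho^{n+2}\ge 1$ formally holds while the true bound is eventually zero; concretely one can take $C_0$ and $\rho$ independent of $m,n$, certifying both sequential stability in the sense used throughout the paper and \Cref{asm:Gdecay} with $R_G,\rho$ not depending on the \maxsat{} instance size.

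\textbf{Convexity of $c_t$.} I would argue convexity separately for each additive term. The base cost $c(x,u)=S_x(x)+(1-x(n+1))^2\cdot S_u(u)$ decomposes as follows. First, $S_x(x)=\mathrm{dist}(x,\Delta_{n+1})$ is the distance to a convex set, hence convex in $x$ (and trivially jointly convex in $(x,u)$ since it does not depend on $u$). The delicate piece is $(1-x(n+1))^2\cdot S_u(u)$: it is a product of a convex nonnegative function of $x$ and a convex nonnegative function of $u$. This is the place I expect the main obstacle. The cleanest route is to verify joint convexity by a direct Hessian/second-difference argument using that $S_u$ is the distance to a convex set (and therefore $1$-Lipschitz with $S_u(u)\ge |\nabla S_u(u)|\cdot\mathrm{dist}$-style bounds), or alternatively to rewrite the product as a supremum of affine functions via its Fenchel dual representation. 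A backup plan, should joint convexity fail in the strict sense, is to verify that the lemma requires only the weaker property actually used in the reduction (e.g.\ convexity in $u$ for each fixed $x$ along any feasible trajectory where $x(n+1)\in\{0,1\}$, which makes the coefficient $(1-x(n+1))^2$ a constant $\in\{0,1\}$ and trivializes the issue).

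\textbf{Convexity of the clause-dependent modifications.} For $t\in I_j$ the extra term is $-u(1)(1-x(n+1))$ or $-u(2)(1-x(n+1))$. These are bilinear in $(x,u)$ and hence not convex on $\R^{n+1}\times\R^2$ in isolation. I would handle them by lumping them into $(1-x(n+1))^2 S_u(u)$ and completing a square-like expression: since $S_u(u)$ grows at least linearly in $|u(i)|$ for $u$ outside $\Delta_2$, one can write $(1-x(n+1))^2 S_u(u) - u(i)(1-x(n+1))$ as a Moreau-like envelope of a jointly convex function, or invoke the perspective transform $(x,u)\mapsto x^2 \cdot g(u/x)$, which is jointly convex whenever $g$ is convex. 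Matching our coefficient $(1-x(n+1))^2$ with the perspective construction applied to $S_u(u)$ minus the appropriate linear reward is the lynchpin computation and the step I expect to be the most technically involved; everything else reduces to the standard convex-analysis toolkit.
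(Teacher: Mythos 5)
Your stability argument matches the paper's: both rely on the fact that $A_{n+2}=\bm{0}_{(n+1)\times(n+1)}$ is placed at the end of each episode, so any $\Phi_t^{[h]}$ with $h \ge n+2$ contains a zero factor and vanishes identically. That half is correct and essentially identical to the paper's.

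On convexity you have correctly located the obstacle, but none of your proposed routes closes it — and, importantly, the paper's own proof has the same gap. The paper verifies only that $S_x$ and $S_u$ are individually convex (distances to convex sets) and then asserts that the composed cost is convex because it is ``based on these two functions as well as linear components.'' But the regularity term $(1-x(n+1))^2\,S_u(u)$ is a \emph{product} of two nonnegative convex functions of separate variables, which is not jointly convex: writing $s=x(n+1)$ and $r=S_u(u)$, the Hessian of $(1-s)^2 r$ on $\{r\ge 0\}$ is $\left(\begin{smallmatrix}2r&-2(1-s)\\-2(1-s)&0\end{smallmatrix}\right)$, which is indefinite whenever $s\neq 1$; for instance at $(s,r)=(0,1)$ the second directional derivative along $(1,2)$ is $2r-8(1-s)=-6<0$. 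The bilinear clause-reward terms $-u(i)\,(1-x(n+1))$ are likewise indefinite saddle forms. None of the tools you name repairs this: the perspective construction produces $t\,f(u/t)$ rather than $t\,f(u)$; a supremum-of-affine / Fenchel biconjugate representation presupposes convexity and cannot create it; and a Moreau envelope preserves but does not manufacture convexity. Your ``backup plan'' — treating $(1-x(n+1))^2$ as a fixed nonnegative coefficient along feasible trajectories where $x(n+1)\in\{0,1\}$ — is the right instinct about what the downstream reduction actually uses, but it establishes convexity in $u$ at each fixed feasible $x$, not joint convexity of $c_t$ on $\R^{n+1}\times\R^2$ as the lemma claims. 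In short, the convexity half of the lemma is a genuine gap shared by your proposal and the paper's own write-up, and repairing it would require either a modified cost construction or a weakened statement of what is proved.
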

\begin{proof}
    By the construction of the state matrices $A_t$, we know that for any $t \geq n+2$ the operator $\Phi_t^{[n+2]} = 0$ implying sequential stability of the system. To show convexity, note that the distance function $g(\cdot) = \mathrm{dist}(\cdot, \mathcal{S})$ for any convex and compact set $\mathcal{S}$ is a convex function. More specifically, for $z \in \R^{d_z}$ it is given by $g(z) = \min_{w \in \mathcal{S}} \| z - w \|$. This is straightforward to show: take any $z_1, z_2 \in \R^{d_z}$, and let $w_1, w_2 \in \mathcal{S}$ be the closest points to $z_1, z_2$ respectively, i.e. $\| w_1 - z_1 \| = g(z_1)$ and $\| w_2  - z_2 \| = g(z_2)$. For any $\lambda \in [0, 1]$, given the convexity of the set $\mathcal{S}$, we know that $\lambda w_1 + (1-\lambda) w_2 \in \mathcal{S}$, which concludes the convexity proof for $g$:
    \begin{align*}
    \lambda g(z_1) + (1-\lambda) g(z_2) &= \lambda \|z_1-w_1\| + (1-\lambda) \| z_2 - w_2 \| \\
    &\geq \| \lambda (z_1-w_1) + (1-\lambda) ( z_2 - w_2 )\| \\
    &= \| \lambda z_1 + (1-\lambda) z_2 - (\lambda w_1 + (1-\lambda) w_2) \| \\
    &\geq g(\lambda z_1 + (1-\lambda) z_2) ~.
    \end{align*}
    This means that both $S_x(\cdot)$ and $S_u(\cdot)$ are convex functions since the simplex is convex in any dimension. The construction of the costs $c_t$ is based on these two functions as well as linear components in $x, u$, hence all $c_t$ costs are convex in $x, u$. 
\end{proof}

\begin{lemma}\label{lem:lb_sound}
If there exists an assignment of literals $y \leftarrow v$ s.t. the formula $\phi$ has $k \in [1, m]$ satisfied clauses, then there is a corresponding linear policy $K \in \R^{2 \times n+1}$ that suffers the exact cost of $-k$.
\end{lemma}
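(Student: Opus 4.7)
The plan is to exhibit, for a given assignment $v \in \{0,1\}^n$ satisfying $k$ clauses, an explicit time-invariant state feedback $K \in \R^{2 \times (n+1)}$ and to analyze its trajectory clause by clause. The natural choice is to set the $i$-th column of $K$ to $[v_i,\, 1-v_i]^\top$ for $i \le n$ and the $(n+1)$-th column to zero; then whenever the state is a basis vector $e_i$ with $i \le n$ the action $Kx$ is the simplex vertex that ``votes'' for the assigned value of literal $y_i$, and whenever the state is $e_{n+1}$ (the ``clause already satisfied'' flag) the policy returns zero. This $K$ automatically lies in a bounded operator-norm ball, so it is a legitimate state feedback policy.

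The core of the argument will be a per-episode invariant. Within the $(n+2)$-step block encoding clause $C_j$, I would induct on $t$ to show that $x_t = e_t$ until the first time a satisfying literal is encountered, at which point $x_t$ jumps to $e_{n+1}$ and remains there for the rest of the block. This reduces to three cases for the update at times $t \in [n]$: (i) $t \notin I_j$, where $B_t = 0$ and $A_t$ simply shifts $e_t \mapsto e_{t+1}$; (ii) $t \in I_j$ with the literal assigned ``false'' by $v$, where $B_t u_t$ vanishes on the simplex vertex chosen by $K$ so the shift $e_t \mapsto e_{t+1}$ still holds; (iii) $t \in I_j$ with the literal assigned ``true'', where $B_t u_t$ cancels the $e_{t+1}$ contribution from $A_t$ and deposits a $1$ in coordinate $n+1$, producing $x_{t+1} = e_{n+1}$. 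The terminal steps $t = n+1, n+2$ are then checked by direct computation against the prescribed $A_{n+1}, B_{n+2}$ to confirm the state resets to $e_1$ in time for the next episode, regardless of which branch occurred.

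With this invariant in hand, the cost accounting becomes routine: the state is always a basis vector so $S_x(x_t) = 0$; the action lies on $\Delta_2$ whenever $x_t(n+1) = 0$ so the $(1-x_t(n+1))^2 S_u(u_t)$ term vanishes; and the linear reward $-u_t(\cdot)(1-x_t(n+1))$ contributes exactly $-1$ at the first satisfying literal of each satisfied clause and $0$ elsewhere. Summing over the $m$ episodes then yields total cost $-k$. The step I expect to require the most care is the interaction of the accumulator $x_t(n+1)$ with possibly several satisfying literals in a single clause: I must verify that once $x_t(n+1) = 1$ the $(1-x_t(n+1))$ gate simultaneously suppresses further reward contributions \emph{and} switches off the simplex penalty on $u_t$, so that the zero action produced by $Ke_{n+1} = 0$ is cost-free and each satisfied clause contributes exactly $-1$ rather than accumulating extra $-1$'s.
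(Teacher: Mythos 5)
Your proof is correct and follows essentially the same route as the paper's: construct $K$ column-by-column from the assignment, establish by induction within each episode that the state remains a basis vector (first $e_t$, then absorbing at $e_{n+1}$ upon the first satisfying literal), and account for the per-episode cost of $-1$ vs.\ $0$. Your explicit handling of the $e_{n+1}$ case — noting that $Ke_{n+1} = 0$ is permitted because the gate $(1 - x(n+1))$ simultaneously kills the simplex penalty and the reward — is in fact slightly more careful than the paper's proof, which loosely claims $Kx_t$ is always a basis vector of $\R^2$.
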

\begin{proof}
    This should be evident from the construction itself. Let $v_i = 1$ assignment correspond to $\bar{v}_i = [1, 0]^{\top}$ and $v_i = 0$ to $\bar{v}_i = [0, 1]^{\top}$. Then consider the linear policy $K = [\bar{v}_1, \dots, \bar{v}_n, \bm{0}_2^{\top}]$. Denote $e_1, \dots, e_{n+1} \in \R^{n+1}$ to be the basis vectors of the space. Note that according to the defined $K$, if $x_t$ is a basis vector of $\R^{n+1}$, then $u_t = K x_t$ is a basis vector of $\R^2$. It is straightforward to check by our construction that $u_t$ being a basis vector of $\R^2$ implies $x_{t+1}$ is a basis vector of $\R^{n+1}$. Since $x_1 = e_1$, then the state-action pairs when following policy $K$ are both basis vectors, and satisfy the regularity conditions of $c(x, u)$. This means that the policy $K$ plays $\bar{v}_t$ if $x_t(t)=1$ and plays $\bm{0}_2$ if $x_t(n+1)=1$. Hence, if the clause $C_j$ is satisfied by the assignment $y \leftarrow v$, then the cost of $K$ over the episode is  exactly $-1$, i.e. once the clause is satisfied, $-1$ is accrued and the state moves to the sink $e_{n+1}$. If the clause is not satisfied, then the cost is $0$ since $c(x, u)$ is $0$ throughout. This means that the constructed linear policy $K$ over the whole control sequence suffers cost $-k$.i k 
\end{proof}

\begin{lemma}\label{lem:lb_complete}
If there exists a state feedback policy $K \in \R^{2 \times n+1}$ s.t. following the actions $u_t = K x_t$ results in cost at most $-k-\epsilon$ for any $k \in [1, m]$ and any $\epsilon \in (0, 1)$, then there is a literal assignment $y \leftarrow v$ s.t. the formula $\phi$ has at least $k$ satisfied clauses.
\end{lemma}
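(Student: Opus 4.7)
The plan is to show that any state-feedback policy $K$ can be analyzed episode-by-episode, and that the cost of each episode is lower-bounded by a quantity determined by rounding $K$ into a Boolean assignment; combining these bounds yields the stated correspondence.

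\textbf{Step 1 (Clean episode decomposition).} I would first verify that, up to penalty terms that can only increase the cost, the $m$ episodes are independent. The construction sets $A_{n+2} = 0$ and arranges $A_{n+1}$ so that any state on $\Delta_{n+1}$ is mapped to $e_1$; moreover the regularity cost $S_x(x_t)$ penalizes off-simplex states. Consequently $J(K) = \sum_{j=1}^m J_j(K) + (\text{nonnegative residual})$, where $J_j(K)$ is the cost incurred on the $n+2$ time steps associated with clause $C_j$, starting from $x_1 = e_1$.

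\textbf{Step 2 (Rounded assignment).} Define $a_t := K e_t \in \R^2$ for $t = 1,\dots,n$, and set
\[
v_t = \begin{cases} 1 & a_t(1) \geq a_t(2), \\ 0 & \text{otherwise.} \end{cases}
\]
This gives a Boolean assignment $v = v_K \in \{0,1\}^n$ determined solely by $K$.

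\textbf{Step 3 (Per-episode lower bound).} The key step is to show
\[
J_j(K) \;\geq\; -\,\mathbb{I}\!\left[v_K\text{ satisfies }C_j\right].
\]
Tracing the dynamics from $x_1 = e_1$, at each intermediate step $t$ the state is a convex combination $x_t = (1-\xi_t) e_{\tau(t)} + \xi_t e_{n+1}$, so by linearity $u_t = (1-\xi_t) a_{\tau(t)} + \xi_t a_{n+1}$. The per-step reward at $t \in I_j$ takes the form $-u_t(i^\star)(1-x_t(n+1))$ for the appropriate coordinate $i^\star \in \{1,2\}$ determined by the literal's polarity, while the $S_u$ term contributes $(1-x_t(n+1))^2 \cdot \mathrm{dist}(u_t,\Delta_2)$. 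The crucial algebraic identity to exploit is
\[
 -u(i^\star)(1-\xi) + (1-\xi)^2\,\mathrm{dist}(u,\Delta_2) \;\geq\; -(1-\xi),
\]
valid for all $u \in \R^2$ and $\xi \in [0,1]$, with equality precisely when $u = e_{i^\star}$. Summing this bound along the episode and using that $x_t(n+1)$ is monotonically non-decreasing (any push toward the sink sticks, since $A$ preserves the $e_{n+1}$ component and the non-sink coordinates shrink), the total reward within the episode is at least $-1$. A case analysis then connects the rounded assignment $v_K$ to this bound: if $v_K$ does not satisfy $C_j$, then at every $t \in I_j$ the coordinate $a_t(i^\star)$ is at most $a_t(3-i^\star)$, which combined with the penalty yields $J_j(K) \geq 0$; if $v_K$ satisfies $C_j$, then the minimum achievable is $-1$, attained by the pure assignment.

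\textbf{Step 4 (Rounding.)} Summing over $j$, $J(K) \geq -\#\{j : v_K\text{ satisfies }C_j\}$. Since $J(K) \leq -k - \epsilon$ for some $\epsilon \in (0,1)$ by hypothesis, it follows that $\#\{j : v_K\text{ satisfies }C_j\} \geq k + \epsilon > k$, and integrality gives $\#\{j : v_K\text{ satisfies }C_j\} \geq k$.

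\textbf{Main obstacle.} The technical heart lies entirely in Step 3, specifically in verifying the displayed inequality and propagating it through the linear (and hence potentially fractional) state evolution induced by $K$. The subtlety is that $S_u$ is only convex, not strongly convex, so the penalty must be shown to \emph{exactly} cancel the linear reward gain rather than dominate it; any gap would allow a clever $K$ to accumulate arbitrary reward by pushing $u_t(i^\star)$ above $1$. The coefficient $(1-x_t(n+1))^2$ multiplying the simplex distance, rather than $(1-x_t(n+1))$, is precisely what forces this balance in the regime $\xi_t \to 1$; the tightness of this calibration is what makes the reduction faithful.
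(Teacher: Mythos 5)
Your proof has a genuine gap in Step 3, and the gap is substantive rather than cosmetic.

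First, the ``crucial algebraic identity''
\[
-u(i^\star)(1-\xi) + (1-\xi)^2\,\mathrm{dist}(u,\Delta_2) \;\geq\; -(1-\xi)
\]
is false for general $u$. Take $u = (2,0)$ (with $i^\star = 1$) and $\xi = 1/2$: the left-hand side is $-1 + \tfrac14 = -\tfrac34$, while the right-hand side is $-\tfrac12$. The factor $(1-\xi)^2$ on the penalty suppresses it too much relative to the linear reward when $\xi$ is not small. It does hold in the special case where $u = (1-\xi)\bar v$ with $\bar v \in \Delta_2$ (so $u(i^\star) \le 1-\xi$), but you invoke it as a universal statement, and even restricting to that case does not rescue Step 3, as the next point shows.

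Second, and more importantly, the per-episode bound $J_j(K) \geq -\mathbb{I}[v_K\text{ satisfies }C_j]$ fails for deterministic majority rounding. Take $K$ whose first $n$ columns all equal $(\tfrac12-\delta,\,\tfrac12+\delta)$ for small $\delta>0$ and $\bar v_{n+1} = \bm 0_2$, and suppose $C_j = y_1 \lor y_2 \lor y_3$. Majority rounding gives $v_K(i)=0$ for all $i$, so $C_j$ is \emph{not} satisfied. Yet at $t=1$ the state is exactly $e_1$ (on-simplex), $u_1 = \bar v_1 \in \Delta_2$ (on-simplex), so the regularity cost is exactly zero, while the reward term contributes $-u_1(1)\approx -\tfrac12$. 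Later steps similarly leak negative reward not fully offset by the $(1-\xi_t)^2 S_u(u_t)$ penalty. So $J_j(K)$ is strictly negative for an unsatisfied clause, contradicting your case analysis. Summing over clauses, you cannot conclude $J(K) \ge -\#\{j: v_K\text{ satisfies }C_j\}$.

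The paper avoids this by replacing deterministic majority rounding with \emph{probabilistic} rounding: each column $\bar v_i$ is rounded to $[1,0]^\top$ with probability $\bar v_i(1)$ and to $[0,1]^\top$ with probability $\bar v_i(2)$, and the key fact is that the \emph{expected} per-episode cost of the rounded policy is at most the cost of $K$. Concretely, for a clause $(y_1\lor y_2\lor y_3)$, $K$'s linear cost is $-a - (1-a)^2 b - (1-a)^2(1-b)^2 c$ while the rounded policy's expected cost is $-a - (1-a)b - (1-a)(1-b)c$, and the latter is $\le$ the former because the survival probabilities $(1-a)$ dominate $(1-a)^2$. The paper also separately disposes of off-simplex $K$ via the scaling constant $M_\epsilon$ on the regularity cost (step (i) there), an argument you gesture at in Step 1 but do not carry out. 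The moral: no fixed deterministic rounding tightly controls the episode cost pointwise, but averaging over a randomized rounding does, and that is the missing idea in your Step 3.
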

\begin{proof}
    Let the linear policy matrix be given as $K = [\bar{v}_1, \dots, \bar{v}_n, \bar{v}_{n+1}]$. The proof consists of two main components: (i) we argue that the policy $K^*$ with $\bar{v}_i^* = \argmin_{v \in \Delta_2} \|v-\bar{v}_i\|$ for $1 \leq i \leq n$ and $\bar{v}_{n+1}^* = \bm{0}_2$ is at least as good as $K$ in terms of cost up to an approximation factor $\epsilon$; (ii) we show that for $K$ that satisfies the constraints, the randomized policy $\hat{K}$ that has $\hat{K}(i) = [1, 0]^{\top}$ w.p. $\bar{v}_i(1)$ and $\hat{K}(i) = [0, 1]^{\top}$ w.p. $\bar{v}_i(2)$ (as well as $\hat{K}(n+1) = \bm{0}_2$) suffers expected cost at most that of $K$ itself.
    
    Suppose these two claims are true, then the described randomized linear policy $\hat{K}$ has expected cost at most $-k$, which means that there exists a deterministic linear policy with first $n$ columns as basis vectors, i.e. $[1, 0]^{\top}$ or $[0, 1]^{\top}$, that suffers cost at most $-k$. It follows that the corresponding assignment of literals given by the first $n$ columns of the linear policy $y \leftarrow v$ satisfies at least $k$ out of the $m$ clauses, so $\phi$ has at least $k$ satisfied clauses.
    
    To prove (i), first notice that the policy $K^*$ suffers non-positive cost over the entire horizon since it satisfies the necessary constraints given by the regulatory cost $c(x, u)$. Note also that said $c(x, u)$ can be scaled by any constant $M_{\epsilon}>0$. Now suppose that under the condition $\min_i \| \bar{v}_i-\bar{v}_i^* \| \leq \ell_{\epsilon}$ the cost difference of $K^*$ and $K$ is bounded by $\epsilon$: the choice of $\ell_{\epsilon}$ can depend on any problem parameters, and since the construction is over $T=\Theta(mn)$ overall rounds (finite), such a choice is always possible. Hence, for all such $K^*$ we automatically infer that it suffers cost at most $-k$ and satisfies the necessary constraints. On the other hand, if the condition does not hold, then the distance of $\bar{v}_i$ from $\Delta_2$ is bounded from below by $\ell_{\epsilon}$, meaning that for a sufficiently large choice of $M_{\epsilon} > 0$ (given knowledge of $\epsilon$ and all other parameters), the overall cost suffered by $K$ will be positive due to $c(x, u)$, i.e. it will have a higher cost than $K^*$. Therefore, any state feedback policy $K$ can be approximately replaced by $K^*$ that satisfies the constraints ensured by $c(x, u)$.
    
    To show (ii), for a policy $K$ that does satisfy these constraints, i.e. $\bar{v}_i \in \Delta_2$ for $1 \leq i \leq n$ and $\bar{v}_{n+1} = \bm{0}_2$, we show that its randomized version $\hat{K}$ is at least just as good in terms of expected cost. Proving this claim is a matter of unrolling the dynamics for a single clause $C_j$. The order, the indices and negation or not of the literals in $C_j$ does not affect the cost, so w.l.o.g. assume we have $C_1 = y_1 \lor y_2 \lor y_3$. The cost of a general policy $K$ over first $3$ iterations is given by
    $$-\bar{v}_1(1)-(1-\bar{v}_1(1))^2 \cdot \bar{v}_2(1) - (1-\bar{v}_1(1))^2 \cdot (1-\bar{v}_2(1))^2 \cdot \bar{v}_3(1)$$
    The alternative randomized linear policy instead suffer expected cost given by
    $$-\bar{v}_1(1)-(1-\bar{v}_1(1)) \cdot \bar{v}_2(1) - (1-\bar{v}_1(1)) \cdot (1-\bar{v}_2(1)) \cdot \bar{v}_3(1)$$
    which is straightforward to show to be not larger than the original cost.
\end{proof}
\begin{proof}[Proof of \Cref{thm:comp_lb}] \Cref{lem:stable_convex} above show that the given LTV system construction along with the costs satisfies the theorem conditions. \Cref{lem:lb_sound,lem:lb_complete} indicate that the \maxsat{} problem can be reduced to the LTV control, in particular proper learning of state feedback policies in this setting. Given that \maxsat{} is $\NPtime$-Hard even in its decision form implies the computational hardness of the offline optimization of LTV optimal state feedback policies.
\end{proof}


\ignore{

\subsection{computational hardness}

\eh{reduction from even-dar to changing lds, linear policy. this is very sketchy, can explain in meeting...}

\begin{lemma}
It is NP-hard to obtain competitive ration better than $0.85$ vs. compete the best linear policy for a given changing LDS, even without noise, and even if the system is known.
\end{lemma}
\begin{proof}[sketch]

We reduce from the same 3-SAT MDP of \cite{even2005experts}.
Let $\phi$ be a given 3-SAT over $m+1$ clause and $n$ variables. 

The LDS we construct is over dimension $n+1$. The transitions are as follows: we round-robin choose a clause, and create transitions for it. 

Let the clause chose at a certain iteration be $( x_1 \wedge \not x_2 \wedge x_3)$. Then the transition in these three iterations are as follows. 

\begin{enumerate}
    \item The first transition moves us to the state $x_1$ from any other state, so the matrix is simply $A_t = [e_1, ..., e_1]$ and $B_t$ is zero. 
    
    \item
    Now we move to $e_2$ if action is 1 and move to $e_{n+1}$ if action is zero. To encode this, take $A_t$ to be 
    $$ A_t = [e_2, 0,0,...,0, e_{n+1} ] $$
    Now take $B_t$ to encode movement to $e_2$ according to the action taken:
    $$ B_t = [-e_2 + e_{n+1} ; 0] $$
    
    \item the next iteration, we have a similar transition according to the  clause. 
    Take $A_t$ to be 
    $$ A_t = [0, e_3,0,...,0, e_{n+1} ] $$
    Now take $B_t$ to encode movement to $e_3$ according to the action taken:
    $$ B_t = [0 ; -e_3 + e_{n+1} ] $$
    
    \item next iteration, the same, 
    $$ A_t = [0, 0 , e_4,0,...,0, e_{n+1} ] $$
    Now take $B_t$ to encode movement to $e_4$ according to the action taken:
    $$ B_t = [-e_4 + e_{n+1} ; 0] $$
    
    \item
    The costs are take to ensure that the are zero if the clause is satisfied, they are 1 if the clause is not satisfied, but the linear policy is legal, and they are hugely negative otherwise. So we penalize any non-simplex vector $u_t$ simply by taking:
    $$ c_t(x_t,u_t)  =  f(x_t) + c_1 (u_t(1) + u_t(2) -1)^2 - c_1 u_t(1) - c_1 u_t(2) $$
    for some huge $c_1$. This ensures $u_t \in \Delta_2$.
    The component $f(x_t)$ also is huge if $x_t$ is not in the $n+1$-dim simplex. Every time a clause is "completed" it also penalizes if we didn't satisfy it.

    Also, $f(x_t)$ penalizes $e_{n+1}$ heavily! this ensure that the reward is only obtained if the correct actions are taken that satisfy the clause.

    So specifically, we take, 
    $$ f(x) = c_1( \sum_i x_i -1)^2 - M \sum_i x_i + c_2 \sum_{i \neq n+1,j_t} x_i^2 . $$

\end{enumerate}

Now we claim the two crucial parts of the reduction.
\begin{lemma}[Soundness]
If the original formula is satisfiable, then there exists a policy that has cost of zero. 
\end{lemma}
\begin{lemma}[Completeness]
Given a stationary linear policy with average cost at most 2, we can find an assignment to the  original formula that satisfies a fraction $x$ of the clauses. 
\end{lemma}

\end{proof}

\subsection{lower bound on the cost} 
Denote the system variability by 
$$ V = \min_{A^*, B^*} \sum_t ( \| A_t - A^*\|^2 + \|B_t + B^*\|^2) $$

\begin{lemma}
The regret of any online control algorithm is lower bounded by 
$$ J(\A) = \Omega({V}) $$ 
\end{lemma}
\begin{proof}[sketch]
Consider a sequence of systems $A_1,...A_k$, $B_t = I$ such that every $d/2$ iterations, the system $A_t$ is draw from the i.i.d. Gaussian distribution. The lower bound given in \cite{chen2020black} implies that the state at every iteration has a component of dimension at least $d/2$ that is distributed i.i.d. gaussian. 
\end{proof}

}

\ignore{

We need two components: the GPC algorithm that attains an {\bf adaptive} regret bound, and an estimation procedure.

\subsection{Part 1 - GPC with imprecise system gets adaptive regret}

\begin{algorithm}
 \caption{Gradient Perturbation Controller(GPC)}
 \label{algo:GPC}
\begin{algorithmic}[1] 
\State {Input:}  $H$, $\eta$, initialization $\bM_{1:H}^1$.
\For{$t$ = $1 ... T$}
        \State Observe (imprecise estimates of the system) $\bA_t,\bB_t$, compute a stabilizing linear controller for the current system $\mathbf{K}_t$ such that it is sequentially stabilizing
        \State  $\mbox{Use Control }\mathbf{u}_t = \mathbf{K}_t \mathbf{x}_t + \sum_{i=1}^{H} \mathbf{M}_i^t \mathbf{w}_{t-i} $
        \State  Observe state $\mathbf{x}_{t+1}$, compute noise $\mathbf{w}_t = \mathbf{x}_{t+1} - \mathbf{A}_t \mathbf{x}_t - \mathbf{B}_t \mathbf{u}_t$
        \State  Construct loss $\ell_t(\mathbf{M}_{1:H}) = c_t(\mathbf{x}_t(\mathbf{M}_{1:H}), \mathbf{u}_t(\mathbf{M}_{1:H}))$\\
       Update $\mathbf{M}_{1:H}^{t+1} \leftarrow  \mathbf{M}_{1:H}^t -\eta \nabla \ell_t(\mathbf{M}_{1:H}^t)$
\EndFor
\end{algorithmic}
\end{algorithm}

\begin{theorem}
Assuming that 
\begin{itemize}
    \item[a]
    The costs $c_t$ are convex, bounded and have bounded gradients w.r.t. both arguments $\mathbf{x}_t$ and $\mathbf{u}_t$.
    
    \item[b]
     The matrices $\{\mathbf{A}_t , \mathbf{B}_t\}$ have bounded $\ell_2$ norm.
     
     \item[c]
     The system estimates $\bA_t,\bB_t$ are $\delta$-precise, meaning that 
     $$ \| \w_t - \hat{\w}_t\|\leq \delta $$

\end{itemize}
Then the GPC algorithm~(\ref{algo:GPC}) ensures that 
\begin{align*}
    \max_{\mathbf{w}_{1:T}: \norm{\mathbf{w}_t} \le 1} \left(\sum_{t=r}^{r} c_t (\mathbf{x}_t, \mathbf{u}_t) - \min_{\pi \in \Pi^{DAC}_{r,s}} \sum_{t=r}^{s} c_t (\hat{\mathbf{x}}_t,  \pi(\hat{\mathbf{x}}_t)) \right) \le \mathcal{O}(\sqrt{T} + \delta (r-s) ).
\end{align*}
Furthermore, the time complexity of each loop of the algorithm is polynomial in the number of system parameters and logarithmic in T.
\end{theorem}

\begin{proof}[Proof sketch only]
Online gradient descent (OGD) on convex loss functions ensures that the regret is $\mathcal{O}(\sqrt{T})$ (please refer to Theorem 3.1 in \cite{hazan2019introduction} for more details). However, to directly apply the OGD algorithm to our setting, we need to make sure that the following 3 conditions hold true:
\begin{itemize}
    \item The loss function should be a convex function in the variables $\mathbf{M}_{1:H}$.

    \item The loss function is time dependent because the loss depends on the current state, which further depends on the previous states. The state at each time step depends on the current variable estimates. Hence, $\mathbf{x}_t$ can be shown to be a function of $\left\{\mathbf{M}^{i}_{1:H}\right\}_{i=1}^{t}$, where $\mathbf{M}^{i}_{1:H}$ denotes the variables at step $i$. Thus, we need to make sure that optimizing the loss function is similar to optimizing a function which doesn't have such a dependence on time.
\end{itemize}

First, we show that the loss function is convex w.r.t. to the variables $\mathbf{M}_{1:H}$. This follows since the states and the controls are linear transformations of the variables.
\begin{lemma}\label{lemma:convex}
The loss functions $\ell_t$ are convex functions in the variables $\mathbf{M}_{1:H}$.
\end{lemma}

\begin{proof}
    The loss function $\ell_t$ is given by
    \begin{equation*}
        \ell_t(\mathbf{M}_{1:H}) = c_t(\mathbf{x}_t, \mathbf{u}_t).
    \end{equation*}
    Since, we assume that the cost function $c_t$ is a convex function w.r.t. its arguments, we simply need to show that $\mathbf{x}_t$ and $\mathbf{u}_t$ depend linearly on $\mathbf{M}_{1:H}$.
    The state is given by
    \begin{align*}
        \mathbf{x}_{t+1} = \mathbf{A}_t \mathbf{x}_{t} + \mathbf{B}_t \mathbf{u}_t + \mathbf{w}_t &= \mathbf{A}_t \mathbf{x}_{t} + \mathbf{B}_t\left( \mathbf{K}_t \mathbf{x}_t + \sum_{i=1}^{H} \mathbf{M}_i \mathbf{w}_{t-i}\right) + \mathbf{w}_t \\&
        = \tilde{\bA}_t  \mathbf{x}_t + \left(\mathbf{B}\sum_{i=1}^{H} \mathbf{M}_i \mathbf{w}_{t-i} + \mathbf{w}_t\right),
    \end{align*}
    where we denote $\tilde{\bA}_t  = \bA_t + \bB_t \bK_t$.
By induction, we can further simplify
    \begin{align*}
        \mathbf{x}_{t+1} &= \sum_{i=0}^{t} \left( \prod_{j=1}^i \tilde{\bA}_t \right) \left(\bB_t \sum_{j=1}^{H} \mathbf{M}_j \mathbf{w}_{t-i-j} + \mathbf{w}_{t-i}\right), 
    \end{align*}
    which is a linear function of the perturbations. 
    In the above computation, we have assumed that the initial state $\mathbf{x}_0$ is $\mathbf{0}$, otherwise this introduces just another small constant. 
    
    Similarly, the control $\mathbf{u}_t$ is given by
    \begin{align*}
        \mathbf{u}_t = \mathbf{K}_t \mathbf{x}_t + \sum_{i=1}^{H} \mathbf{M}_{i} \mathbf{w}_{t-i} ,
    \end{align*}
    and since $\x_t$ was shown to be linear in the perturbations, so is $\uv_t$. Thus, we have shown that $\mathbf{x}_t$ and $\mathbf{u}_t$ are linear transformations in $\mathbf{M}_{1:H}$ and hence, the loss function is convex in $\mathbf{M}_{1:H}$.
\end{proof}

Finally, the actual loss at time $t$ is not determined by $\x_t(\bM_{1:H})$, but rather different parameters $\bM_{1:H}^i$ for various historical times $i < t$. The way to argue about loss functoins that change over time is using the framework of online convex optimization with memory. 

In short, as we have argued before, we have that $\ell_t = f_t(\mathbf{M}^{(t-H)}_{1:H}, \cdots, \mathbf{M}^{(t)}_{1:H})$ for some convex function $f_t$.  The crux of the argument is that for any  sequence of $L$-lipschitz functions with memory $q$,  $f_1, \cdots, f_T$ , the following holds:
\begin{align*}
    \sum_{t=1}^{T} f_t(\mathbf{M}^{t-q}_{1:H}, \cdots, \mathbf{M}^{t}_{1:H}) -  \sum_{t=1}^{T}  f_t(\mathbf{M}^{t}_{1:H}, \cdots, \mathbf{M}^{t}_{1:H}) \le \mathcal{O}(\sqrt{q^2 LT}),
\end{align*}
due to the small learning rate parameter. More detailes are referred to in the bibliographic section.

\end{proof}

Recall the estimation algorithm for the system, 

\begin{algorithm}
\caption{Base Estimation Algorithm}\label{alg:base}
\begin{algorithmic}[1]
    \State $\hat{\X}_1 \in_R \mathcal{K}$
    \For{$t=1, \ldots, T$}
    \State let $b_t \sim Bernoulli(p)$
    \If{$b_t = 1$}
    \State play $\hat{\X}_t$
    \State receive $\tilde{\X}_t$ with $\E[\tilde{\X}_t] = \X_t$
    \State incur true but unknown loss $\ell_t(\X) = \dfrac{1}{2} ||\X - \X_t||^2_F$
    \State let $\tilde{\ell}_t(\hat{\X}_t) = \dfrac{1}{2 \cdot p} ||\hat{\X}_t - \tilde{\X}_t||^2_F$
    \State compute gradient estimate $\widetilde{\nabla}_t \doteq \nabla\tilde{\ell}_t(\hat{\X}_t) = \dfrac{1}{p} \left( \hat{\X}_t - \tilde{\X}_t \right)$
    \State update $\hat{\X}_{t+1} = \Pi_\mathcal{K}\left(\hat{\X}_t - \eta \widetilde{\nabla}_t\right)$
    \EndIf
    \If{$b_t = 0$}
    \State incur true but unknown loss $\ell_t(\hat{\X}_t) = \dfrac{1}{2} ||\hat{\X}_t - \X_t||^2_F$ based on true $\X_t$
    \State let $\tilde{\ell}_t(\hat{\X}_t) = 0$
    \State update $\hat{\X}_{t+1} = \hat{\X}_t$
    \EndIf
    \EndFor
\end{algorithmic}
\end{algorithm}

\begin{theorem} \label{thm:basic_exp_old} Under the assumptions put forth, Algorithm \ref{alg:base} achieves the following standard reget:
\begin{align*}
\max_{[i_1, i_2]\subseteq[T]} \E\left[ \sum_{i=i_1}^{i_2}||\hat{\X}_i - \X_t||^2_F - \min_{\X\in \mathcal{K}} \sum_{i=i_1}^{i_2}||\X - \X_t||^2_F\right] \leq 2 D^2 \left( \dfrac{1}{\eta} + \dfrac{T}{p^2} \cdot \eta \right) 
\end{align*} 
\end{theorem}
}


\end{document}